%% file: trip_arxiv.tex
\documentclass[11pt]{article}

\usepackage[margin=1in]{geometry}
\usepackage[utf8]{inputenc}
\usepackage[T1]{fontenc}
\usepackage{microtype}
\usepackage{xurl}
\usepackage{url}

\usepackage{amsmath,amssymb,amsfonts,amsthm}
\usepackage{bm}
\usepackage{nicefrac}

\usepackage{graphicx}
\usepackage{subcaption}
\usepackage{booktabs}
\usepackage{multirow}
\usepackage{makecell}
\usepackage[table]{xcolor}
\usepackage{float}
\usepackage{placeins}
\usepackage{algorithm}
\usepackage{algpseudocode}

\usepackage{enumitem}
\usepackage[normalem]{ulem}

\usepackage[
    colorlinks=true,
    linkcolor=blue!55!black,
    citecolor=blue!55!black,
    urlcolor=blue!55!black
]{hyperref}

\newtheorem{theorem}{Theorem}[section]
\newtheorem{informaltheorem}[theorem]{Informal Theorem}
\newtheorem{lemma}[theorem]{Lemma}
\newtheorem{proposition}[theorem]{Proposition}
\newtheorem{remark}[theorem]{Remark}
\newtheorem{definition}[theorem]{Definition}
\newtheorem{corollary}[theorem]{Corollary}

\newcommand{\R}{\mathbb{R}}

\newcommand{\diag}{\operatorname{diag}}

\newcommand{\norm}[1]{\left\lVert #1\right\rVert}
\newcommand{\abs}[1]{\left\lvert #1\right\rvert}

\title{\vspace{-0.75em}
\textbf{TriP: A Triangle Puzzle Approach to Robust\\ Translation Averaging}
\vspace{-0.5em}
}

\author{
Zhekai Fan$^{1}$\thanks{Equal contribution.}
\quad
Wanze Li$^{2}$\footnotemark[1]
\quad
Jinxin Wang$^{2}$
\quad
Yunpeng Shi$^{1}$
\\[0.5em]
{\small $^{1}$UC Davis \qquad $^{2}$University of Chicago}
\\[0.25em]
{\small
zkfan@ucdavis.edu \quad
wanzeli@uchicago.edu \quad
wangjinxin68@gmail.com \quad
ypshi@ucdavis.edu}
}

\date{}

\begin{document}

\maketitle

\vspace{-1.0em}

\begin{abstract}
Translation averaging aims to recover camera locations from pairwise relative translation directions and is a fundamental component of global Structure-from-Motion pipelines. The problem is challenging because direction measurements contain no distance information, making the estimation problem highly ill-conditioned and highly sensitive to corrupted observations. In this paper, we propose \textbf{TriP}, a triangle-based framework for robust translation averaging. TriP first infers local relative edge scales from triangle geometry, and then synchronizes the scales of overlapping triangles in the logarithmic domain to recover globally consistent edge lengths and camera locations. By leveraging higher-order consistency across triangles, the proposed method is robust to adversarial, cycle-consistent, and other structured corruptions. In addition, TriP avoids the collapse issue without requiring any extra anti-collapse constraints, since log-scale synchronization excludes the degenerate zero-scale solution by construction. These structural advantages enable a particularly strong theory for exact location recovery. On the practical side, TriP is fully parallelizable, computationally efficient, and naturally scalable to graphs with millions of cameras. Moreover, it outperforms all previous translation averaging methods by a large margin on both synthetic and real datasets.
\end{abstract}

\section{Introduction}

Structure-from-Motion (SfM) \cite{ozyecsil2017survey} is a fundamental problem in 3D vision that aims to recover camera poses and scene structure from a collection of 2D images. Given multiple images of the same scene, SfM typically establishes feature correspondences across images and estimates pairwise geometric relationships between cameras. A central step in this pipeline is global camera pose estimation, namely recovering camera rotations and camera locations from noisy pairwise measurements.

In this work, we focus on camera location estimation from pairwise relative translation directions, also known as translation averaging (TA). Mathematically, TA solves the following problem: given a viewing graph $G=([n],E)$, where each node $i\in [n]=\{1,2,3,\dots, n\}$ is assigned underlying camera location $\bm{x}_i^* \in\mathbb{R}^3$. Here, we use star superscript to emphasize the ground truth. In this graph, each edge $(i,j)\in E$ is associated with a relative direction measurement $\bm{d}_{ij}$, whose clean counterpart is $\bm{d}_{ij}^* ={(\bm{x}_i^*-\bm{x}_j^*)}
/{\|\bm{x}_i^*-\bm{x}_j^*\|}$.
The goal of TA is to recover the unknown camera locations
$\{\bm{x}_1^*\}_{i\in [n]}$ from these possibly noisy and corrupted direction measurements $\{\bm{d}_{ij}\}_{(i,j)\in E}$.
Since these relative measurements contain no distance information, the camera locations can only be determined up to a global translation and scaling. If reliable relative scales were also available, the problem would become significantly easier and closer in spirit to a synchronization problem over translations \cite{huang2017translation}. In practice, however, relative direction measurements are often severely corrupted due to mismatched feature correspondences, RANSAC \cite{bolles1981ransac} failures, and error propagation from earlier stages, especially rotation estimation \cite{govindu2004lie, shi2020message,liu2023unified}. 
In real data, these corrupted directions can be self-consistent \cite{wilson2014robust, ozyesil2015robust}, typically due to symmetry and repetitive patterns (e.g. windows on a building) in a 3D scene.

The adverse effects of structured corruption can be mitigated by robust formulations, commonly written as
\begin{equation}\label{eq:robust}
\begin{aligned}
\min_{\{\bm{x}_i\}_{i\in [n]},\ \{\ell_{ij}\}_{(i,j)\in E}}
\quad
& \sum_{(i,j)\in E} \rho\!\left(\|\bm{x}_i-\bm{x}_j-\ell_{ij}\bm{d}_{ij}\|\right),
\end{aligned}
\end{equation}
where $\rho$ is a user-specified robust loss function. However, without suitable constraints on the camera locations $\bm{x}_i$ and distances $\ell_{ij}$, such formulations admit the trivial collapsed solution $\bm{x}_i=0$ and $\ell_{ij}=0$ for all cameras and distances. Anti-collapse constraints can alleviate this issue, but they often introduce bias and lead to slow large-scale constrained optimization. In view of these challenges, an effective method should be robust to severe structured corruption, avoid collapse by design, and scale efficiently to large viewing graphs.

\subsection{Related work}

Following the general framework in~\eqref{eq:robust}, LUD~\cite{ozyesil2015robust} adopts the $\ell_1$ loss, $\rho(x)=|x|$, together with uniform lower-bound constraints on pairwise distances, and solves the resulting problem via iteratively reweighted least squares (IRLS). Similar reweighting ideas have been successfully applied to synchronization problems over $\mathrm{SO}(3)$ \cite{lerman2022robust, shi2020message, chatterjee2017robust, chatterjee2013efficient} and SE(3)~\cite{govindu2004lie,arie2012global,rosen2019se}. However, in translation averaging, such formulations are much more sensitive to corrupted direction measurements. Therefore, LUD requires imposing uniform lower bounds on pairwise distances. These constraints are overly strong: they introduce significant bias, especially for short edges, and can therefore distort the global distribution of the estimated locations. ShapeFit~\cite{goldstein2016shapefit,hand2018shapefit} also uses an unsquared loss, but measures the projection of $\bm{x}_i-\bm{x}_j$ onto the orthogonal complement of $\bm{d}_{ij}$. It uses a single linear anti-collapse constraint, which leads to faster computation. However, this constraint is too weak: in real-data experiments, the estimated solution often exhibits partial collapse \cite{shi2018estimation}. Both methods may overweight long edges and become unstable when edge lengths vary significantly or when long edges are corrupted.

A related line of work seeks to reduce this sensitivity by using angle-based objectives. BATA~\cite{zhuang2018baseline} minimizes the sine of the angle between $\bm{x}_i-\bm{x}_j$ and $\bm{d}_{ij}$, thereby improving robustness to edge-length variation, although it may underutilize informative clean long edges. Fused Translation Averaging (Fused-TA)~\cite{manam2024fusing} combines LUD-type and BATA-type objectives using uncertainty estimates, while TranSync~\cite{he2025robust} further reformulates an angle-based objective as a generalized eigenvalue problem for efficient computation. 
Nevertheless, both methods still rely on a single linear anti-collapse constraints, making them especially vulnerable to self-consistent corruption that arises in real-world SfM datasets. 

Another line of work exploits cycle consistency \cite{wilson2014robust,shi2018estimation,li2025cycle,shi2020message} to improve robustness. AAB~\cite{shi2018estimation} uses 3-cycle coplanarity for outlier detection and improves over earlier cycle-based filtering methods, but it is unstable for near-collinear configurations and has limited ability to handle genuinely cycle-consistent corruption. Cycle-Sync~\cite{li2025cycle} further combines a Welsch-type objective with message-passing least squares~\cite{shi2020message}, improving robustness to edge-length variation as well as adversarial and cycle-consistent corruption. However, its theoretical guarantees apply only to the message-passing-based initialization, while no guarantees are established for the subsequent nonconvex optimization framework. 

On the theoretical side, under Gaussian location priors and Erd\H{o}s--R\'enyi measurement graphs, and as the number of nodes tends to infinity, LUD~\cite{ozyesil2015robust}, ShapeFit~\cite{goldstein2016shapefit,hand2018shapefit}, and, more recently, Cycle-Sync~\cite{li2025cycle} provide recovery guarantees for location estimation under adversarial corruption, given suitable assumptions on the connection probability and the number of corrupted incident edges per node. These results are encouraging, but the existing phase-transition guarantees are tied to very specific location distributions, and their tolerated fraction of adversarial outliers vanishes as $n\to\infty$. Truncated least squares (TLS)~\cite{huang2017translation} can tolerate a nonvanishing fraction of adversarial corruption, but it requires distance information in the algorithm and therefore addresses a substantially easier problem. This motivates us to seek a different formulation that exploits distance information more directly.

\subsection{Our Method and Contributions}

To address the above challenges, we propose TriP, a triangle-puzzle framework for robust translation averaging. Unlike existing cycle-based filtering methods, TriP does not use cycles only for outlier detection; instead, it uses triangles as the core units for estimating and synchronizing latent edge scales. More specifically, each triangle provides local relative edge-length information, which is then synchronized across overlapping triangles in the logarithmic domain to produce globally consistent edge-length estimates. These synchronized local estimates are subsequently aggregated for final camera location recovery. The main contributions of this work are as follows:
\begin{itemize}
\item We propose \emph{TriP}, a translation averaging framework designed to handle arbitrary generic camera configurations and adversarially corrupted direction measurements, in contrast to many existing solvers whose guarantees and designs are tailored primarily to stylized synthetic models, such as Gaussian locations, Erd\H{o}s--R\'enyi graphs, or uniformly corrupted directions.

\item TriP introduces a log-scale synchronization formulation for estimating pairwise distances. By parameterizing distances in the logarithmic domain, the formulation rules out the trivial zero-distance collapse by construction, without requiring additional anti-collapse constraints. This directly addresses a long-standing open issue in translation averaging.

\item We establish deterministic exact-recovery guarantees for TriP under arbitrary generic locations and arbitrary adversarial corruptions of direction vectors, provided that the maximum degree of the corrupted subgraph is bounded by a positive absolute constant. To our knowledge, this is the first translation averaging theory that tolerates a nonvanishing level of adversarial corruption as $n\to\infty$.

\item TriP is decentralized, parallelizable, and scalable to graphs with over one million cameras. On sparse large-scale graphs, it is up to three orders of magnitude faster than existing fastest methods, while achieving strong accuracy. We demonstrate its effectiveness on realistic synthetic benchmarks and real-world SfM datasets.
\end{itemize}

\section{The TriP Framework}
The key idea of TriP is to estimate local camera configurations from many triangles, each determined only up to a scale and a shift, and then stitch and aggregate these local estimates into a globally consistent location estimate. Ideally, clean triangles are mutually consistent after appropriate rescaling and form a large connected component, whereas corrupted triangles, although they may be locally consistent, are typically inconsistent with the majority of clean triangles. Thus, TriP can be viewed as solving a \textbf{triangle jigsaw puzzle}: each triangle piece may be freely rescaled and shifted, while some malicious ``junk'' pieces attempt to mislead the reconstruction.
\subsection{Step 1: Triangle Prefiltering}
\label{sec:triangle_shape}
This step first removes triangles whose direction measurements are clearly inconsistent; ideally, the three directions in each retained triangle should be coplanar.
For a graph triangle $t=(i,j,k)$ with $i<j<k$, let
$\bm a=\bm d_{ij}$, $\bm b=\bm d_{jk}$, and $\bm c=\bm d_{ki}$.
In the noiseless case, an oriented triangle satisfies $$\ell_1\bm a+\ell_2\bm b+\ell_3\bm c=\bm{0}$$ for some positive side lengths $(\ell_1,\ell_2,\ell_3)$. We estimate their relative
ratios by
$\bm h_t=
\bigl(\|\bm b\times\bm c\|,\ \|\bm c\times\bm a\|,\ \|\bm a\times\bm b\|\bigr)$,
which is proportional to the positive null vector of
$[\bm a\ \bm b\ \bm c]$ in the noiseless, consistently oriented case.
We then score the triangle by the normalized closure residual
\[
r_t=
\|h_{t,1}\bm a+h_{t,2}\bm b+h_{t,3}\bm c\|/
\|\bm h_t\|.
\]
Small $r_t$ indicates that the three directions are compatible with a
positive oriented triangle closure. This is stronger than the simple
volume test
$|\det[\bm a\ \bm b\ \bm c]|=|\bm a^\top(\bm b\times\bm c)|$,
which only checks coplanarity. We also discard nearly collinear triangles
by thresholding
$\min\{\|\bm b\times\bm c\|,\|\bm c\times\bm a\|,\|\bm a\times\bm b\|\}$.
For efficiency, TriP keeps a small pool of low-residual triangles incident to each camera edge.

\subsection{Step 2: ``Stitching" Triangles by Log-Scale Triangle Synchronization}
\label{sec:triangle_sync}

The side-length ratio tuple $\boldsymbol h_t$ determines only the shape of triangle $t$ up to an absolute scale. To stitch all triangles together, we rescale each triangle so that clean triangles are mutually compatible, i.e., their common edges have the same length. A natural approach is to synchronize triangle scales up to a global scale. Assign each triangle a scale $s_t>0$, so that triangle $t$ predicts the edge length
$\ell_{t,e}=s_t h_{t,e}$. If two triangles $t$ and $u$ share an edge $e$, their length predictions should agree:
\begin{equation}\label{eq:scale}
  s_t h_{t,e}\approx s_u h_{u,e}.  
\end{equation}

We synchronize scales in the logarithmic domain to avoid the trivial solution $s_t=0$. Let $z_t=\log s_t$. Taking logarithms in~\eqref{eq:scale} gives the additive constraint on each shared edge
\[
z_u-z_t\approx g_{tu},
\qquad
g_{tu}=\log\left({h_{t,e}}/{h_{u,e}}\right).
\]
Therefore, estimating $\{z_t\}$ exactly reduces to an one-dimensional translation synchronization problem on a triangle graph: each node is a triangle in the original viewing graph, and two nodes are connected if and only if the corresponding triangles share an edge. Let $\mathcal C$ be the edge set of this graph. Each constraint $(t,u)\in\mathcal C$ connects two triangles sharing a camera edge $e$ and uses the log-ratio $g_{tu}$. The initial weight is chosen as
$w_{tu}=\sqrt{\pi_t\pi_u}$,
where $\pi_t$ is a monotone reliability score computed from the closure residual $r_t$; smaller closure residuals give larger $\pi_t$. We solve
\[
\min_z\sum_{(t,u)\in\mathcal C}
w_{tu}\rho(|z_u-z_t-g_{tu}|).
\]
By default, we use the Cauchy loss $\rho(x)=\frac{c^2}{2}\log(1+x^2/c^2)$ with fixed $c=0.1$. The problem is solved by IRLS: at each iteration, we solve a weighted least-squares problem and update the edge weights from the previous residuals. Our implementation supports two least-squares solvers:
\textbf{(i) exact}: preconditioned conjugate gradient (PCG), avoiding explicit inversion of large matrices;
\textbf{(ii) fast}: a fully parallelizable local averaging scheme adapted from the Weiszfeld algorithm for rotation averaging~\cite{hartley2013rotation}. Since the fast updates can be sensitive to initialization, we initialize log-scales by propagating relative log-scales along a spanning tree with minimal average closure score $\sqrt{r_t r_u}$ from Step~1.

After synchronization, we compute the pair residual
$r_{tu}^{\mathrm{sync}}=|z_u-z_t-g_{tu}|$. Let $\mathcal N_t$ denote the neighboring triangles of $t$ in the sparse triangle graph. The average incident residual is
\[
\bar r_t=\frac{1}{|\mathcal N_t|}\sum_{u\in\mathcal N_t}r_{tu}^{\mathrm{sync}}.
\]
This quantity ranks the quality of triangles, with lower $\bar r_t$ indicating higher reliability.
\subsection{Step 3: Triangle Selection, Distance Estimation, and Location Recovery}
\label{sec:edge_recovery}

Triangles are sorted by increasing $\bar r_t$ and added sequentially. Let $S_k$ be the first $k$ triangles. An edge becomes active once it is supported by a prescribed minimum number of triangles in $S_k$. Let $C(S_k)$ be the largest connected component of the active-edge graph. TriP stops when $|C(S_k)|/n\ge\gamma$, where $\gamma$ controls the final node coverage; we use $\gamma=1$ by default.

For each active edge \(e\), the selected incident triangles provide length
proposals
\[
\mathcal P_e=\{\exp(z_t)h_t(e): t\in S_k,\ e\in t\}.
\]
Ideally, if \(e\) is clean and is supported by many clean triangles, then all
elements of \(\mathcal P_e\) should be nearly identical, since every clean
triangle proposes the same underlying edge length. Thus the dispersion of
\(\mathcal P_e\) provides a natural indicator of edge quality, giving another
advantage of the triangle-based framework. TriP estimates the edge length by $\hat\ell_e=\operatorname{median}(\mathcal P_e)$, defines the relative proposal dispersion $\delta_e=\operatorname{median}_{\lambda\in\mathcal P_e}|\lambda/\hat\ell_e-1|$, and sets (with $c=0.1$ in Cauchy reweighting function):
\[
w_e=\frac{1}{1+(\delta_e/c)^2}\cdot \frac{|\mathcal P_e|}{|\mathcal P_e|+1}.
\]
The two factors favor consistent length proposals and sufficient triangle support, respectively. The relative displacement on edge $e=(i,j)$ is then $\hat\ell_{ij}\bm d_{ij}$. Finally, locations are recovered by the same Cauchy objective for
\[
\min_{\bm{x}_1,\ldots,\bm{x}_n}
\sum_{(i,j)} w_{ij}\,
\rho\!\left(\|\bm{x}_i-\bm{x}_j-\hat\ell_{ij}\bm d_{ij}\|\right).
\]
Each least-squares subproblem is solved either by PCG, or by a faster Weiszfeld-type local averaging scheme. The exact-recovery theorem in Section~\ref{app:trip-scale-theory} analyzes the corresponding
annealed redescending version of parameter $c$ in reweighting. The fixed-Cauchy
implementation $c=0.1$ used in experiments is a practical variant in the presence of both outlier and additive noise. All the steps of TriP is summarized in Algorithm~\ref{alg:trip}.
\begin{algorithm}[t]
\caption{TriP}
\label{alg:trip}
\small
\begin{algorithmic}[1]
\Require Camera graph $G=([n],E)$, directions $\{\bm d_e\}_{e\in E}$, target coverage $\gamma$
\Ensure Estimated camera locations $\{\bm x_i\}_{i=1}^n$

\State Enumerate graph triangles $t=(i,j,k)$.
\State Compute local side ratios $\bm h_t$ by cross-product cofactors, compute closure residuals $r_t$, and keep a capped low-residual triangle pool.
\State Build the sparse shared-edge triangle graph. For each shared edge $e$ between triangles $t$ and $u$, set $g_{tu}=\log(h_{t,e}/h_{u,e})$.
\State Synchronize triangle log-scales: $z=\arg\min_z
\sum_{(t,u)\in \mathcal{C}}
w_{tu}
\rho\!\left(|z_u-z_t-g_{tu}|\right)$.
\State Rank triangles by average incident residual $\bar r_t$ and select the smallest prefix $S_k$ whose support-filtered graph reaches coverage $\gamma$.
\State Aggregate edge lengths by
$\hat \ell_e=
\operatorname{median}\{\exp(z_t)h_{t,e}: t\in S_k,\ e\in t\}$,
and compute edge weights $w_e$ from proposal support and dispersion.
\State Set $\bm v_e=\hat \ell_e\bm d_e$ and solve fixed-Cauchy displacement averaging:
\[
\{\bm x_i\}_{i=1}^n
=
\mathop{\arg\min}_{\{\bm x_i\}_{i=1}^n}
\sum_{e=(i,j)}
w_e \rho\!\left(\|\bm x_i-\bm x_j-\bm v_e\|\right).
\]
\end{algorithmic}
\end{algorithm}

\section{Theoretical Guarantees for TriP}
\label{sec:theory}

We give two complementary deterministic exact-recovery guarantees for the
ideal noiseless, full-triangle log-scale synchronization problem on the complete
observation graph.  The first is an algorithmic result for warm-started proximal
annealing with Cauchy and other redescending losses.  The second is a global
optimality result for a convex \(\ell_1\) synchronization objective.  These two
results concern different estimators and therefore have different admissible
corruption constants.

Let \(G=K_n\), and decompose the camera edges as
\[
    E(K_n)=E_g\sqcup E_b,
\]
where \(E_g\) and \(E_b\) are the clean and corrupted edges.  For
\(ij\in E_g\), assume
\[
    \bm d_{ij}=\bm d_{ij}^{\star}
    =
    \frac{\bm x_i^\star-\bm x_j^\star}
    {\|\bm x_i^\star-\bm x_j^\star\|},
\]
whereas for \(ij\in E_b\), the direction \(\bm d_{ij}\) is arbitrary and may
be adversarial.  Write
\[
    \ell_{ij}^\star=\|\bm x_i^\star-\bm x_j^\star\|.
\]
Let
\[
    \Delta_E=\max_{i\in[n]}\deg_{E_b}(i)
\]
be the maximum corrupted degree in the original camera graph.  In the ideal
full-triangle model, every nondegenerate graph triangle is retained and every
shared-edge triangle constraint is used.  Let \(\mathcal T_0\) denote the clean
triangles, namely the triangles whose three camera edges belong to \(E_g\).
We assume that every clean graph triangle is nondegenerate.  For the
reliability-weighted formulation, the normalized triangle scores satisfy
\[
    0\le \pi_t\le1,
    \qquad
    \pi_t=1\quad(t\in\mathcal T_0),
    \qquad
    w_{tu}=\sqrt{\pi_t\pi_u}.
\]
The unweighted formulation is obtained by taking \(\pi_t\equiv1\).

\begin{informaltheorem}[Proximal annealed-Cauchy recovery on the complete graph]
\label{thm:complete-main}
Assume in addition that the clean camera locations are generic.  Consider the
ideal full-triangle Cauchy objective
\[
    F_{\sigma}(z)
    =
    \sum_{(t,u)\in\mathcal C}
    w_{tu}\,
    \frac{\sigma^2}{2}
    \log\!\left(
        1+
        \frac{(z_u-z_t-g_{tu})^2}{\sigma^2}
    \right).
\]
Initialize by weighted least squares and choose the initial scale
\(\sigma_0\) sufficiently large as specified in the appendix.  At scale
\(\sigma_k\), run the exact proximal Cauchy-IRLS iteration with
\[
    \mu_n=\frac{n}{28},
    \qquad
    \sigma_{k+1}=\frac12\sigma_k,
\]
warm-started from the preceding stage.  Each quadratic proximal WLS subproblem
is solved exactly, and the stage output is any cluster-point fixed point of the
corresponding warm-started exact inner orbit.  There is a universal constant
\(C_{\rm Cau}<\infty\) such that, for all sufficiently large \(n\),
\[
    \Delta_E\le10^{-5}n
\]
implies
\[
    \min_{\alpha\in\mathbb R}
    \max_{t\in\mathcal T_0}
    |z_t^{(k)}-z_t^\star-\alpha|
    \le
    C_{\rm Cau}\frac{\Delta_E}{n}\sigma_k
    =
    C_{\rm Cau}\frac{\Delta_E}{n}\sigma_0\,2^{-k}.
\]
Consequently, the clean triangle log-scales converge uniformly to their true
values up to the unavoidable global additive gauge.
\end{informaltheorem}

The formal proof is given in Section~\ref{app:trip-scale-theory}.  The proximal
term is used to control the non-clean triangle variables during each frozen
weighted least-squares step; it vanishes from the first-order condition at a
fixed point.  The result is therefore a guarantee for the stated ideal
warm-started proximal annealing procedure, not for an arbitrary fixed point of
the nonconvex Cauchy objective and not for the fixed-Cauchy implementation with
loss scale \(0.1\) used in the experiments.

Section~\ref{app:trip-general-redescending-losses} extends the same argument to
a class of redescending losses of the form
\(\rho_\sigma(r)=q_\sigma(r^2)\).  The conditions include concavity of
\(q_\sigma\), a positive MM-weight floor on the clean residual window,
a selected score
\(\psi_\sigma(r)\) satisfying
\[
    |\psi_\sigma(r)|\le K\sigma,
\]
and a redescending score.  At nonsmooth points, \(\psi_\sigma(r)\) is an
admissible selected subgradient.  If \(a\) is the clean-basin radius and
\(\mathsf G_a\) is the corresponding clean Green-response constant, then,
for any degree fraction \(c\) satisfying the one-step smallness conditions in
the appendix, an annealing schedule is admissible whenever
\[
    \frac{6\mathsf G_a Kc}{a}
    <
    \frac{\sigma_{k+1}}{\sigma_k}
    <1,
    \qquad
    \Delta_E\le cn.
\]
For the common choice \(a=1/20\), the half-scale schedule
\(\sigma_{k+1}=\sigma_k/2\) is valid with the following explicit,
non-optimized constants:
\[
\begin{array}{c|cccc}
\rho
& \text{Cauchy}
& \text{Welsch}
& \text{Tukey}
& \text{truncated least squares}
\\
\hline
c
& 1.0\times10^{-5}
& 1.2\times10^{-5}
& 8.0\times10^{-6}
& 1.0\times10^{-5}
\end{array}
\]
The appendix also gives larger thresholds of order \(10^{-3}\) for a
conditional stationary-continuation statement in which a clean-basin
stationary point is supplied at every scale.  Those conditional thresholds are
not the unconditional proximal-MM constants above.

The convex \(\ell_1\) formulation permits a substantially larger deterministic
max-degree bound because its analysis concerns every global minimizer and does
not require a basin-invariance argument.  Consider
\[
    F_1(z)
    =
    \sum_{(t,u)\in\mathcal C}
    w_{tu}|z_u-z_t-g_{tu}|.
\]
For a clean-clean shared-edge row, suppose \(w_{tu}\ge w_{\rm g}>0\), and for
a row involving at least one non-clean triangle, suppose
\(0\le w_{tu}\le w_{\rm b}\).

\begin{informaltheorem}[Global \(\ell_1\) recovery on the complete graph]
\label{thm:complete-main-l1}
Assume \(n\ge4\).  If
\[
    \frac{\Delta_E}{n}
    <
    \frac{w_{\rm g}}
    {12(w_{\rm g}+w_{\rm b})},
\]
then every global minimizer of \(F_1\) recovers all clean triangle log-scales
up to one common additive gauge.  In particular, for the unweighted objective,
and for the TriP reliability weights above, one may take
\(w_{\rm g}=w_{\rm b}=1\), and the sufficient condition becomes
\[
    \Delta_E<\frac{n}{24}.
\]
Thus every global minimizer \(z^\sharp\) satisfies
\[
    z_t^\sharp=z_t^\star+\alpha,
    \qquad
    t\in\mathcal T_0,
\]
for some \(\alpha\in\mathbb R\).
\end{informaltheorem}

The proof in Section~\ref{app:trip-l1-scale-theory} is a direct robust-nullspace
argument.  The clean triangle-overlap graph has edge expansion at least
\(n/2-6\Delta_E\), whereas at most \(6\Delta_E\) rows involving non-clean
triangles can touch any clean triangle.  Consequently, every nonconstant
perturbation of the clean log-scales increases the clean \(\ell_1\) term more
than the profiled non-clean term can decrease it.  The weighted inequality
above is the resulting strict domination condition.  The \(\ell_1\) objective
is convex and can be written as a linear program, but the theorem requires an
exact global minimizer, or an independently certified global optimum; it is not
a guarantee for an arbitrarily terminated numerical iteration.

Both the annealed-redescending and global-\(\ell_1\) results are deterministic.
They impose no probability distribution on the camera locations, no
ShapeFit-style well-distribution condition, and no quantitative lower bound on
clean triangle angles.  The locations are deterministic; the Cauchy statement
uses genericity, and both theories require qualitative nondegeneracy so that the
clean side ratios and exact log-scale equations are well-defined.  None of these
assumptions introduces a quantitative location or angle parameter.  The
constants above are therefore independent of any location distribution,
well-distribution, or triangle-angle constant.

Exact clean-scale recovery gives globally consistent clean edge-length
proposals.  For a global \(\ell_1\) minimizer, there is a common positive scale
\(\lambda\) such that
\[
    \exp(z_t^\sharp)h_{t,e}=\lambda\ell_e^\star
\]
for every clean triangle \(t\) and clean edge \(e\subset t\).  For the
annealed redescending solutions, there is a sequence of positive global scale
factors \(\lambda_k\) such that
\[
    \max_{\substack{t\in\mathcal T_0\\ e\subset t}}
    \left|
       \frac{\exp(z_t^{(k)})h_{t,e}}
            {\lambda_k\ell_e^\star}-1
    \right|
    \longrightarrow0.
\]
If the downstream displacement stage is supplied with these clean edges and
lengths, the connected clean camera graph determines the camera locations up
to global translation and global scale.  The theory proves recovery of the
clean scale block; it does not require or assert recovery of the absolute
scales assigned to non-clean triangles.

Table~\ref{tab:complete-phase} compares the complete-graph consequences of the
available direction-only exact-recovery results.  Following the convention in
Cycle-Sync, \(\epsilon_b=pq\), where \(p\) is the Erd\H{o}s--R\'enyi edge
probability and \(q\) is the independent edge-corruption probability.  Setting
\(p=1\) gives the complete-graph bounds in the first three rows.  For the TriP
rows, if camera edges are independently corrupted with a fixed probability
strictly below the displayed deterministic max-degree constant, then the
corresponding condition on \(\Delta_E\) holds with high probability as
\(n\to\infty\).  The two TriP constants quantify different guarantees: the
redescending result is an algorithmic theorem for a nonconvex warm-started
proximal-MM path, whereas the \(1/24\) result concerns every global minimizer of
a convex \(\ell_1\) objective.  In the table, \(c_\rho\) denotes the
loss-specific constant displayed above.

\begin{table}[t]
\centering
\small
\caption{
Complete-graph corruption regimes implied by existing direction-only theory.
The first three rows are obtained by setting \(p=1\) in the
\(\epsilon_b=pq\) convention used in Cycle-Sync; their admissible independent
corruption probabilities vanish as \(n\to\infty\).  The TriP rows follow from
deterministic bounds on the maximum corrupted degree in the original camera
graph.}
\begin{tabular}{p{0.29\linewidth}p{0.34\linewidth}p{0.29\linewidth}}
\toprule
Theory
& Location model
& \makecell{Complete graph \(p=1\):\\ corruption probability \(q\)}
\\
\midrule
ShapeFit \cite{hand2018shapefit}
&
i.i.d. Gaussian camera locations in \(\mathbb R^3\)
&
\(q=O(\log^{-3}n)\to0\)
\\[0.65em]

LUD theory \cite{lerman2018exact}
&
i.i.d. Gaussian camera locations in \(\mathbb R^3\)
&
\(q=O(\log^{-9/2}n)\to0\)
\\[0.65em]

Cycle-Sync \cite{li2025cycle}
&
i.i.d. Gaussian camera locations in \(\mathbb R^3\)
&
\(q=O(\log^{-1/2}n)\to0\)
\\[0.65em]

TriP log-scale, proximal annealed redescending
&
deterministic generic, nondegenerate camera locations
&
\(q<c_\rho\), e.g., \(q<10^{-5}\) for Cauchy
\\[0.65em]

TriP log-scale, global \(\ell_1\)
&
deterministic nondegenerate camera locations
&
\(q<1/24\)
\\
\bottomrule
\end{tabular}
\label{tab:complete-phase}
\end{table}
\section{Experiments on Synthetic Data}
\label{sec:synthetic}
We first evaluate TriP on synthetic instances designed to stress-test
translation averaging under structured, coherent corruption. Unlike classical
synthetic benchmarks that often use i.i.d. Gaussian camera locations,
Erd\H{o}s--R\'enyi viewing graphs, and independently sampled random outliers
\cite{ozyesil2015robust,shi2018estimation}, our benchmark keeps the ground truth
fully controlled while reintroducing three structures common in real SfM:
low-dimensional camera trajectories, geometry-dependent viewing graphs, and
correlated corrupted measurements. This is important because independent random
outliers are usually inconsistent with local cycles, whereas real outliers may
be self-consistent due to repeated structures, wrong loop closures, or locally
bad pose estimates.

\paragraph{Synthetic scene model.}
We generate camera locations from two structured geometries. The first is a
planar grid. This models common ground-level or indoor captures, where camera
centers vary much more horizontally than vertically. For example, a photographer
walking around a room, along a street, or across a building facade often produces
a trajectory that is close to a two-dimensional surface. The second geometry is a torus. This abstracts a closed-loop capture, such as a
photographer walking around a building or an object, with small but nonzero
variation in height and radial distance. The torus is not simply another planar
case: it introduces a nontrivial loop structure.

\paragraph{Viewing graph.}
For each scene, we construct a clean viewing graph using a symmetric
$k_{\mathrm{good}}$-nearest-neighbor graph. This is closer to the visibility
pattern in SfM than an Erdős--Rényi graph, since nearby cameras are more likely
to share visual overlap and produce reliable relative measurements. We require
the clean graph to be connected, and every clean edge to participate in at least
one all-good triangle. This condition ensures that clean edges have local cycle
witnesses; without such witnesses, the status of an edge is statistically
ambiguous for any triangle- or cycle-based consistency test. This requirement
does not reveal the solution to TriP: all competing methods receive the same
observed graph and measurements, and corrupted edges may also participate in
coherent false triangles.

\paragraph{Coherent corruption models.}
We consider two ways of placing corrupted edges. In the
\emph{spatially uniform coherent corruption} setting, corrupted edges are spread
broadly across the geometry-dependent viewing graph, playing a role similar to
adversarial or cycle-consistent corruption models in prior synthetic studies.
They are sampled from local non-clean nearest-neighbor edges and far-range
candidate edges, capturing both local ambiguities and spurious long-range links.
In the \emph{clustered coherent corruption} setting, corrupted edges concentrate
around a subset of bad nodes, modeling localized failures such as unreliable
poses, repeated facades, motion blur, or systematically wrong matches. This is
more challenging because a few bad nodes can create many mutually reinforcing
incorrect measurements.

For clean edges, the observed direction is generated by tangent-plane Gaussian
perturbation, $\bm{d}_{ij}
=
\operatorname{Normalize}
\left(
{(\bm{x}_i-\bm{x}_j)}/{\|\bm{x}_i-\bm{x}_j\|}
+
\sigma \bm{\epsilon}_{ij}^{\perp}
\right)$,
where $\bm{\epsilon}_{ij}^{\perp}$ is orthogonal to the ground-truth direction.
For corrupted edges, all measurements are generated from a shared incorrect
latent layout $\bm{y}_i=(a_i,b_i,0)$, $a_i,b_i\sim\mathcal{N}(0,1)$, with
$\bm{d}_{ij}^{\mathrm{bad}}=
(\bm{y}_i-\bm{y}_j)/\|\bm{y}_i-\bm{y}_j\|$.
Thus, outliers form a coherent geometric distractor rather than arbitrary noise.
Additional uniform/clustered synthetic curves are in Appendix~\ref{sec:app-synthetic-curves}.

\paragraph{Evaluation and results.}
We compare TriP with representative translation averaging baselines, including
Cycle-Sync~\cite{li2025cycle},
LUD~\cite{ozyesil2015robust},
BATA~\cite{zhuang2018baseline},
and FusedTA~\cite{manam2024fusing}.
All estimated locations are aligned to the ground truth by robust similarity
alignment. We report median translation error in the main paper, with additional
mean and percentile errors in the supplement. Figure~\ref{fig:synthetic_runtime_lines_overlay_five} shows that TriP is both
robust and scalable. On the quantitative curves, TriP maintains near-zero median
translation error over a wider range of corruption levels on both grid and torus
geometries, whereas the baselines undergo sharp degradation as the corrupted
measurements become denser. The visualizations at $q=0.4$ explain this behavior:
the corrupted directions form coherent alternative structures that can dominate
full-graph solvers, while TriP still recovers a geometry closely aligned with
the ground truth. The runtime plot further shows that TriP-Fast preserves this
pipeline while scaling to substantially larger camera graphs than the baseline
methods.

\begin{figure*}[t]
    \centering
    \includegraphics[width=1\textwidth]{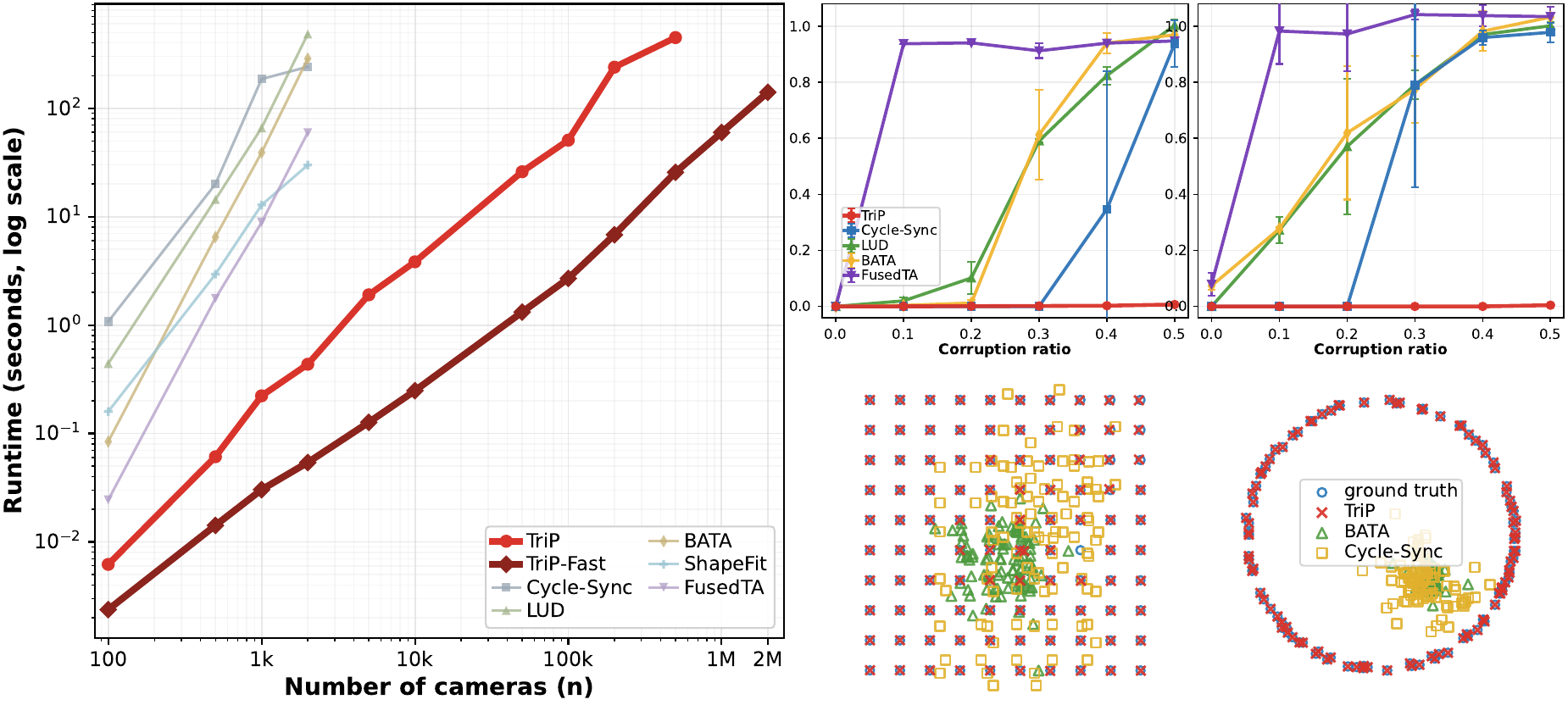}

    \caption{\textbf{Synthetic-data runtime scaling, quantitative robustness,
    and qualitative behavior.}
    Left: runtime scaling on torus instances with uniform corruption $q=0.1$ and
    $\sigma=0$, shown on logarithmic axes as the number of cameras $n$ increases.
    Right-top: median translation error v.s. corruption probability $q$ (horizontal axis) on grid and torus
    geometries as a function of the corruption ratio under uniform corruption with
    $\sigma=0$. Right-bottom:
    camera-location visualizations for grid and torus layouts at uniform
    corruption $q=0.4$, $\sigma=0$, and seed $0$, comparing ground truth with
    TriP, BATA, and Cycle-Sync. This panel demonstrates TriP's advantage in avoiding collapsed solutions.}
    \label{fig:synthetic_runtime_lines_overlay_five}
\end{figure*}

\section{Experiments on Real Data}
\label{sec:realdata}

We further evaluate TriP on real-world Structure-from-Motion
datasets, where measurement noise, graph sparsity, and outlier
patterns are substantially more complex than in the synthetic
setting. Our benchmark contains twelve image collections:
\texttt{courtyard}, \texttt{delivery\_area}, \texttt{electro},
\texttt{facade}, \texttt{kicker}, \texttt{meadow},
\texttt{office}, \texttt{pipes}, \texttt{relief},
\texttt{relief\_2}, \texttt{terrace}, and \texttt{terrains}.
These datasets cover diverse scene geometries and graph
structures, providing a challenging test bed for evaluating
robustness and localization accuracy.

\paragraph{Evaluation Protocol}

Our real-data evaluation is coverage-conditioned. For each target
coverage level
$\gamma \in \{0.7,0.8,0.9,1.0\}$, TriP returns a selected node
set $S_\gamma$. All methods are then evaluated on the same
node set $S_\gamma$. For the competing methods, we keep the input graph unchanged:
each baseline is solved on the full measurement graph, and only
the final alignment and error computation are restricted to
$S_\gamma$. This gives the baselines access to all available
measurements while ensuring that all methods are compared on
the same cameras. When $\gamma=1.0$, $S_\gamma$ is the full
node set, so the protocol reduces to the standard full-graph
evaluation. We compare against Cycle-Sync, LUD, BATA, ShapeFit, and
FusedTA. For each method, we align the estimated camera
locations to the ground truth by similarity transformation and
report the median, mean, and $90$-th percentile translation
errors. 
\begin{figure*}[!t]
    \centering
    \includegraphics[width=0.92\textwidth]{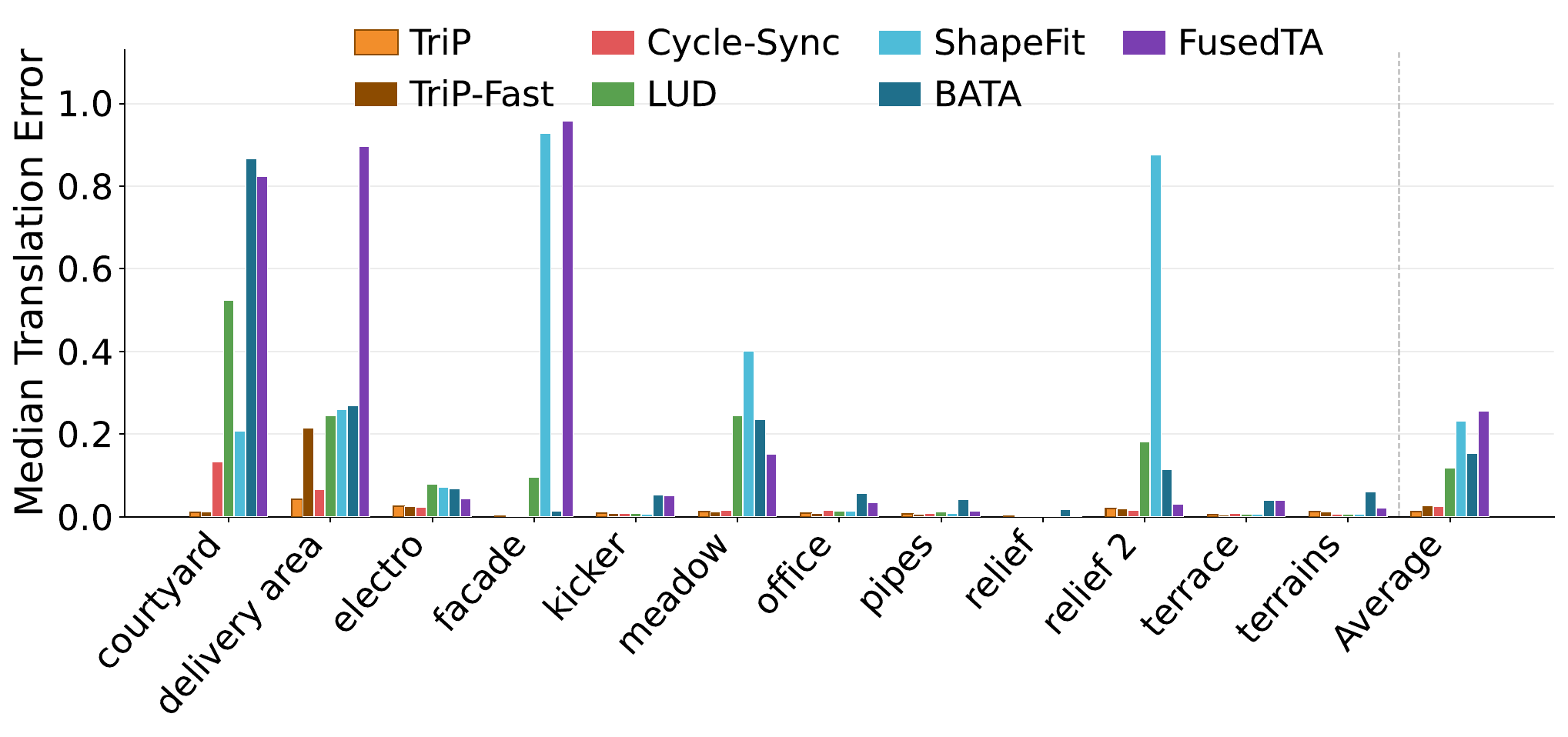}

    \vspace{0.25em}

    \includegraphics[width=0.92\textwidth]{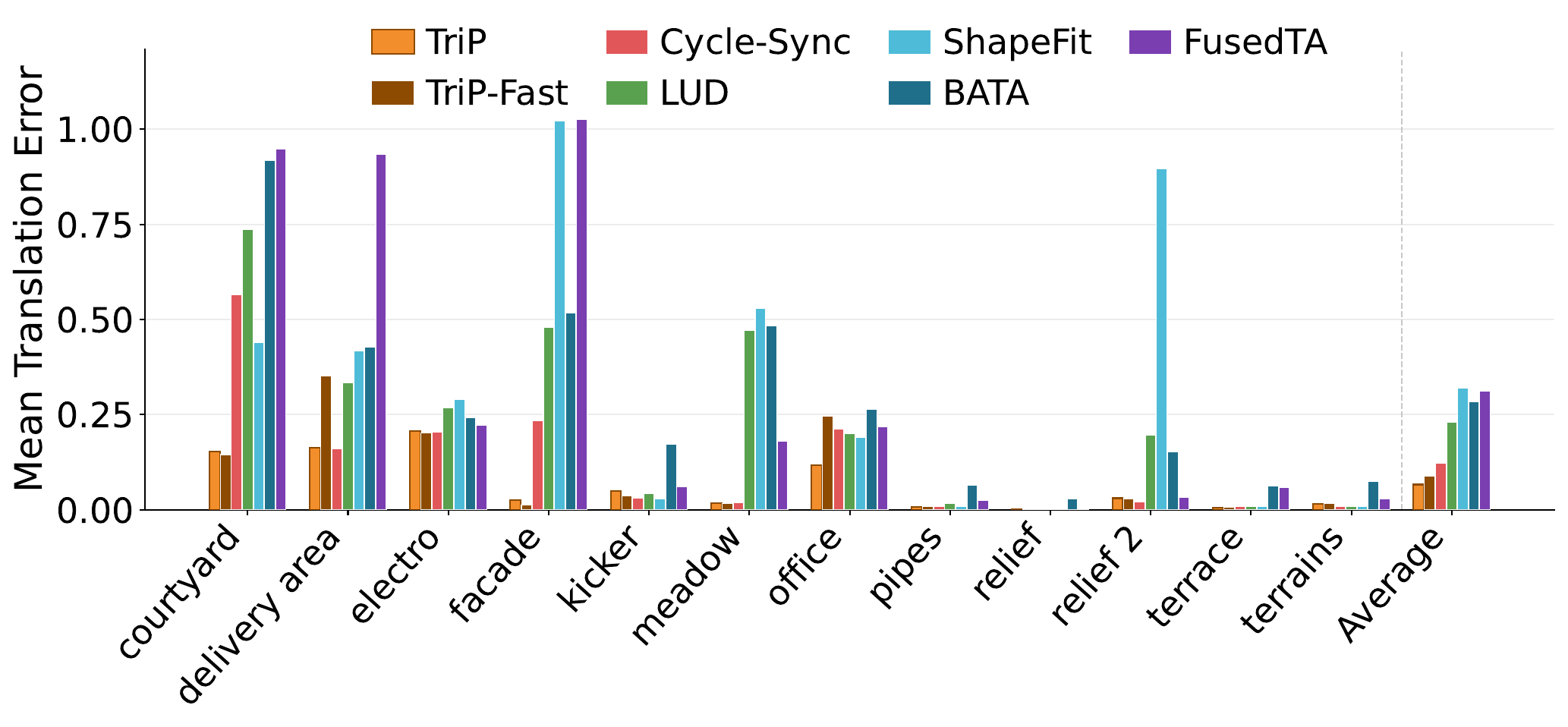}

    \caption{
    \textbf{Real-data performance at full coverage ($\gamma=1.0$).}
    Top: median translation error per dataset.
    Bottom: mean translation error per dataset.
    Lower is better. TriP achieves the lowest average error under both
    median and mean evaluation.}
    \label{fig:real_cov100_stacked}
\end{figure*}

\paragraph{Results Across Coverage Levels}

Across all coverage levels, TriP achieves the lowest
dataset-averaged median and mean translation errors among the
compared methods. As coverage increases, the evaluation becomes
harder because more challenging cameras are included; TriP
remains stable in this high-coverage regime. For median translation error, TriP obtains
$0.0093$, $0.0101$, $0.0102$, and $0.0124$ at
coverage levels $0.7$, $0.8$, $0.9$, and $1.0$,
respectively. The strongest baseline on average, Cycle-Sync,
obtains $0.0109$, $0.0120$, $0.0193$, and $0.0248$.
The trend is even clearer for mean translation error:
TriP obtains $0.0284$, $0.0425$, $0.0441$, and $0.0655$,
whereas Cycle-Sync obtains $0.0338$, $0.0521$, $0.0745$,
and $0.1227$. These results show that TriP is not only effective at selecting
recoverable subsets at low coverage, but also remains accurate
as increasingly difficult cameras are included. The improvement
in mean error further indicates that TriP reduces large
localization errors, rather than only improving the typical camera.

\paragraph{Full-Coverage Comparison}

The full-coverage case is especially important because it removes
any ambiguity associated with partial evaluation: all cameras are
included. At $\gamma=1.0$, TriP achieves the lowest average
median error ($0.0124$) and mean error ($0.0655$). By
comparison, Cycle-Sync obtains $0.0248$ median error and
$0.1227$ mean error, while the remaining baselines have larger
average errors. TriP also improves the tail of the error distribution. Its average
$p90$ translation error is $0.1351$, compared with $0.4286$ for
Cycle-Sync. Thus, the gain is not limited to the median camera;
TriP also reduces large reconstruction errors. At the dataset level, TriP is best or near-best on many scenes.
Its advantage is especially clear on \texttt{courtyard},
\texttt{facade}, \texttt{meadow}, \texttt{office}, and
\texttt{terrace}, where several baselines exhibit large error
growth. Cycle-Sync remains competitive on some scenes, including
\texttt{electro}, \texttt{kicker}, \texttt{relief},
\texttt{relief\_2}, and \texttt{terrains}, but the aggregate trend
consistently favors TriP. Overall, the real-data results support the main claim of the paper:
TriP identifies geometrically reliable parts of the measurement
graph and produces stable camera locations even in the presence
of difficult nodes and structured outliers.

\paragraph{Ablation Studies}

We conduct three ablation studies to understand why the components of TriP work
together. First, we compare TriP's triangle ranking with TAAB-style and random
selection baselines, testing whether the synchronization residual is a meaningful
reliability score rather than only a graph-size reduction heuristic. The results
show that TriP selects cleaner triangles earlier.

Second, we evaluate the estimated pairwise distances, since the downstream
displacement solver depends heavily on reliable initial edge lengths. Inaccurate
or collapsed distances can prevent the final Cauchy iteration from recovering
camera locations. This ablation shows that TriP produces more accurate and stable
distances with fewer collapses.

Third, we test solver compatibility on the selected graph, checking whether the
gain comes only from an easier subgraph or from the full pipeline: reliable
triangle selection, median distance aggregation on the induced graph, and Cauchy
displacement refinement. The results show that this combination is most effective.

Together, these ablations suggest that TriP's real-data gains come from the
interaction of cleaner triangle selection, reliable distance initialization, and
a downstream solver that exploits the selected graph structure. Additional plots
and per-dataset results are provided in the supplementary material.

\section{Conclusion and Limitations}

In this paper we proposed TriP, a triangle-based framework for robust
translation averaging. The key idea is to exploit triangle structures
in the viewing graph to infer relative edge-length ratios from
direction-only measurements. By synchronizing triangle scales across
the graph and aggregating the resulting edge-length proposals, the
method introduces an intermediate geometric layer between pairwise
direction observations and global camera location estimation.
Experiments on both synthetic and real-world datasets demonstrate
that the proposed approach improves robustness and accuracy compared
with several existing translation averaging methods.

Despite its effectiveness, the current framework has several
limitations. First, the method relies on the availability of sufficient
triangle structures in the viewing graph; performance may degrade when
the graph is extremely sparse. Second, the triangle synchronization
step assumes reasonably reliable direction measurements, and highly
adversarial noise may still affect the quality of the recovered scales.
Exploring more robust triangle selection and synchronization strategies
remains an interesting direction for future work.
\section*{Acknowledgment}
This work is supported by NSF award DMS-2514152.
\bibliographystyle{plain}
\bibliography{trip}

%%%%%%%%%%%%%%%%%%%%%%%%%%%%%%%%%%%%%%%%%%%%%%%%%%%%%%%%%%%%
\clearpage
\onecolumn
\appendix

\begin{center}
{\Large \textbf{Appendix}}
\end{center}

\section{More Results for ETH3D}
\label{sec:app-eth3d}

This section provides detailed per-dataset ETH3D \cite{schoeps2017cvpr} results complementary to the
main paper. All methods are aligned to the ground-truth camera locations and
evaluated on the same TriP-selected node set at full coverage, making the
comparison consistent across methods. Across the ETH3D scenes, TriP achieves the
strongest overall accuracy, while TriP-Fast preserves comparable accuracy with
substantially lower runtime. These results show that the proposed triangle-based
selection and estimation pipeline is effective not only in average performance,
but also consistently across individual real scenes.

\input{real_data/tables/table_real_cov1_accuracy}

Figure~\ref{fig:app-eth3d-camera-positions-3d} provides qualitative 3D
camera-location comparisons on representative ETH3D scenes. The estimates are
aligned to the ground truth by robust similarity alignment. Compared with
Cycle-Sync, TriP more consistently preserves the global camera geometry,
especially on scenes where Cycle-Sync exhibits larger spatial deviations.

\begin{figure}[H]
    \centering
    \includegraphics[width=0.98\textwidth]{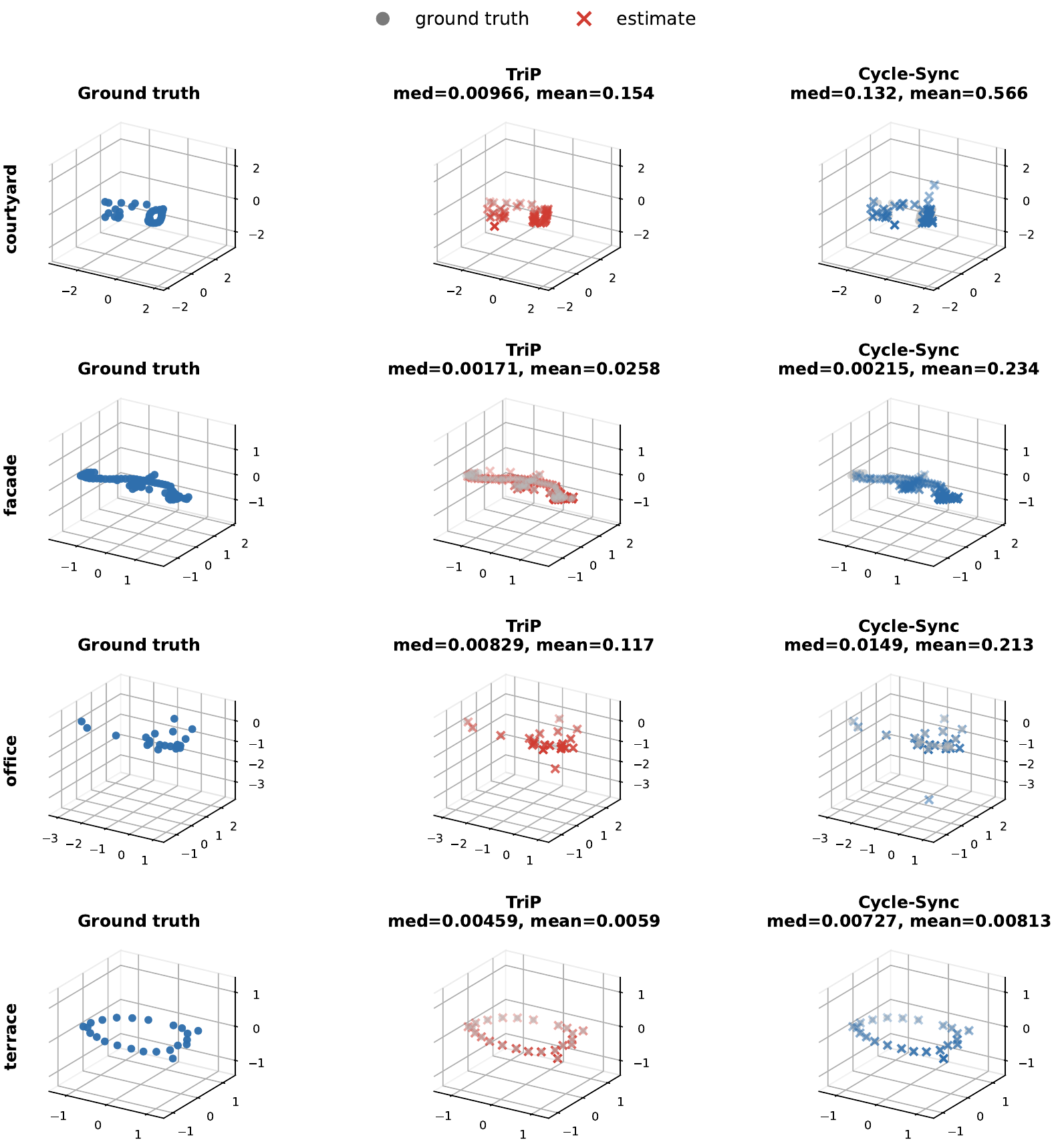}
    \caption{\textbf{3D camera-location visualizations on representative ETH3D
    scenes.} We compare ground-truth camera locations with TriP and Cycle-Sync
    estimates after robust similarity alignment. TriP better preserves the
    scene geometry on representative datasets where Cycle-Sync shows larger
    deviations.}
    \label{fig:app-eth3d-camera-positions-3d}
\end{figure}

\FloatBarrier

\section{More Results for Synthetic Data}
\label{sec:app-synthetic}
\subsection{Synthetic Error Curves}
\label{sec:app-synthetic-curves}

We first report the full synthetic error curves on the two structured camera
geometries used in the main paper. Figure~\ref{fig:app-synthetic-grid-curves}
shows the grid geometry, and Figure~\ref{fig:app-synthetic-torus-curves} shows
the torus geometry. In each figure, the four panels compare spatially uniform
and clustered coherent corruption under both noiseless directions and noisy
directions. These plots show the same general trend as the main synthetic
experiments: TriP remains stable over a wider range of corruption levels, while
the baselines often degrade rapidly once the coherent corrupted subgraph becomes
sufficiently dense.

\begin{figure*}[t]
    \centering
    \includegraphics[width=\textwidth]{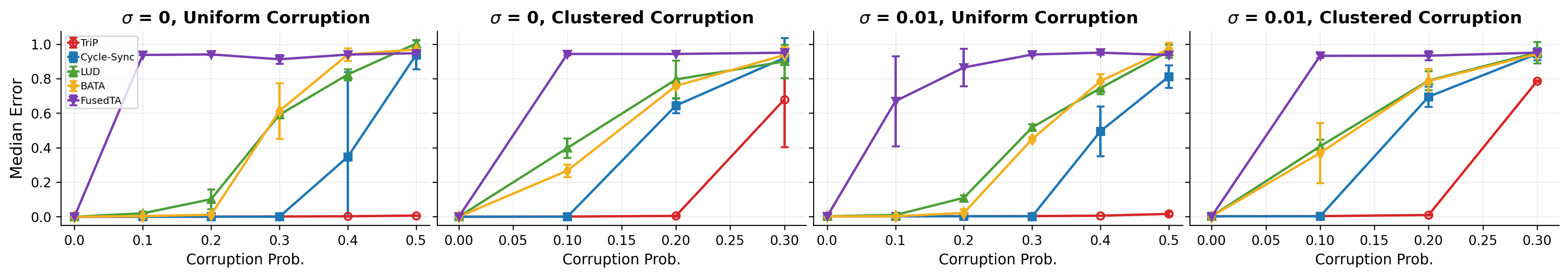}
    \vspace{-0.5em}
    \caption{\textbf{Synthetic error curves on the grid geometry.}
    We report median translation error versus corruption probability at full
    coverage. From left to right, the four panels show:
    (1) noiseless spatially uniform coherent corruption ($\sigma=0$),
    (2) noiseless clustered coherent corruption ($\sigma=0$),
    (3) noisy spatially uniform coherent corruption ($\sigma=0.01$), and
    (4) noisy clustered coherent corruption ($\sigma=0.01$).
    Uniform corruption spreads corrupted edges broadly across the graph, while
    clustered corruption concentrates them around bad nodes. Lower is better.
    TriP remains accurate over a wider range of corruption levels than the
    baselines.}
    \label{fig:app-synthetic-grid-curves}
    \vspace{-0.8em}
\end{figure*}

\begin{figure*}[t]
    \centering
    \includegraphics[width=\textwidth]{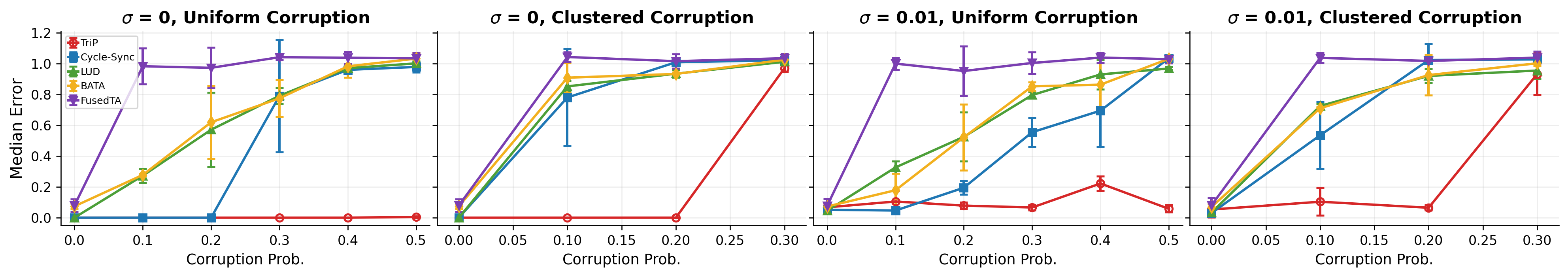}
    \vspace{-0.5em}
    \caption{\textbf{Synthetic error curves on the torus geometry.}
    We report median translation error versus corruption probability at full
    coverage. From left to right, the four panels show:
    (1) noiseless spatially uniform coherent corruption ($\sigma=0$),
    (2) noiseless clustered coherent corruption ($\sigma=0$),
    (3) noisy spatially uniform coherent corruption ($\sigma=0.01$), and
    (4) noisy clustered coherent corruption ($\sigma=0.01$).
    The torus geometry models a loop-like non-planar camera trajectory.
    Clustered corruption is especially challenging because corrupted
    measurements are localized and can mutually reinforce an incorrect geometry.
    Lower is better. TriP remains more stable than the baselines across both
    corruption patterns.}
    \label{fig:app-synthetic-torus-curves}
    \vspace{-0.8em}
\end{figure*}

\FloatBarrier

\subsection{Synthetic Camera-Location Visualizations}
\label{sec:app-synthetic-visualizations}

We visualize representative synthetic examples under uniform corruption with
corruption level $q=0.4$ and directional noise $\sigma=0.01$. Blue circles
indicate ground-truth camera locations, while red crosses indicate aligned
estimates. The grid example tests a structured planar layout, while the torus
example tests a trajectory-like non-planar layout. Under stronger simultaneous
outliers and directional noise, TriP remains well aligned with the ground truth
on both geometries, whereas several baselines drift away from the true
configuration or collapse to highly concentrated estimates. This supports the
main synthetic trend: TriP is robust when corrupted measurements and noisy
directions are both present.

\begin{figure*}[t]
    \centering
    \includegraphics[width=1\textwidth]{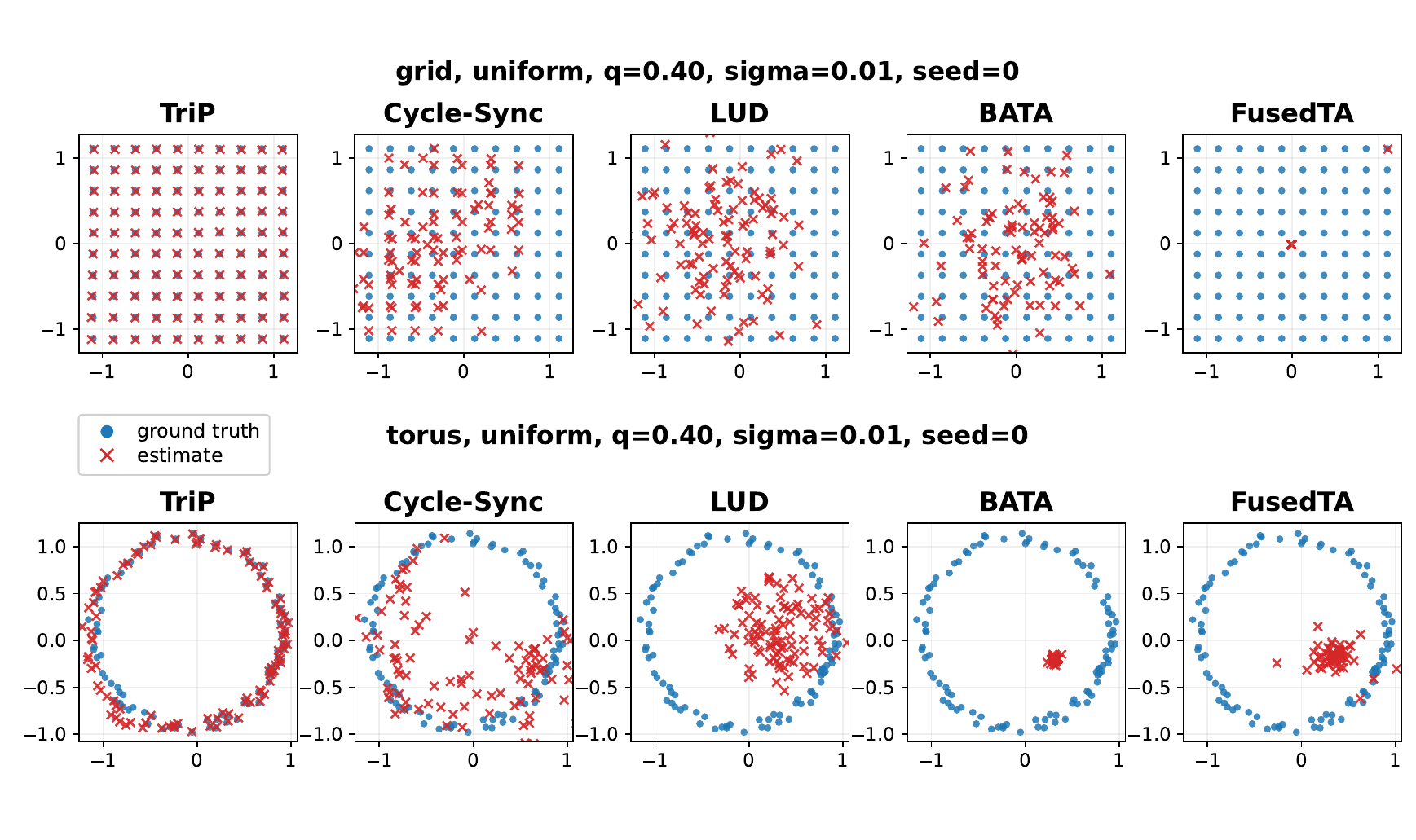}
    \vspace{-0.6em}
    \caption{\textbf{Synthetic camera-location visualizations under noisy
    uniform corruption.} Top: grid layout. Bottom: torus layout. We compare
    estimated locations against ground truth for $n=100$, uniform corruption
    level $q=0.4$, and $\sigma=0.01$.}
    \label{fig:app-synthetic-visualizations-noisy}
    \vspace{-1.0em}
\end{figure*}

\FloatBarrier

We visualize representative synthetic examples under uniform corruption with
corruption level $q=0.5$ and no directional noise ($\sigma=0$). Blue circles
indicate ground-truth camera locations, while red crosses indicate aligned
estimates. The grid example tests a structured planar layout, while the torus
example tests a trajectory-like non-planar layout. Even at this high corruption
level, TriP remains well aligned with the ground truth on both geometries,
whereas several baselines collapse to incorrect or highly concentrated
configurations. This supports the main synthetic trend: TriP is especially
robust when corrupted measurements become dense.

\begin{figure}[H]
    \centering
    \includegraphics[width=0.98\linewidth]{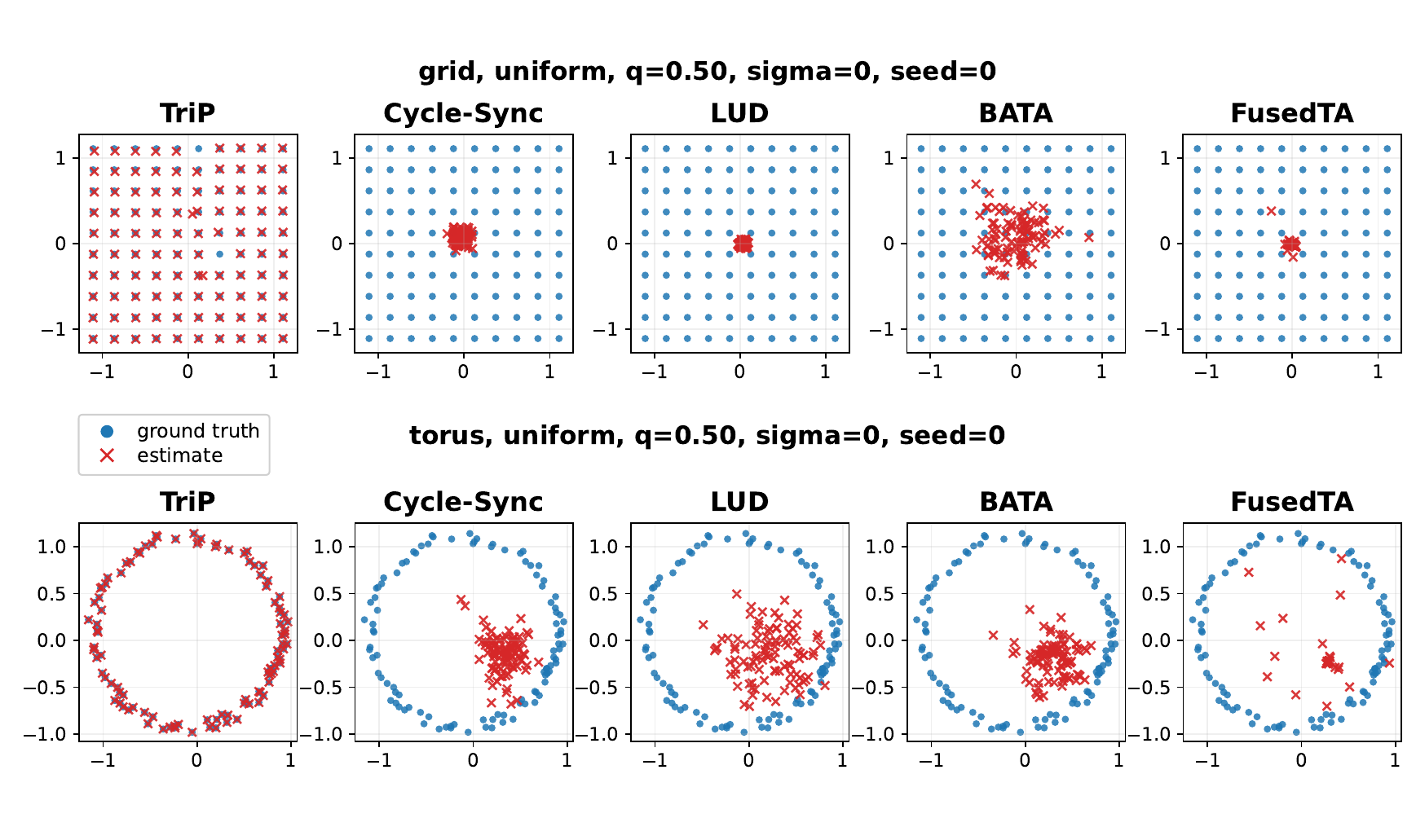}
    \vspace{-0.6em}
    \caption{\textbf{Synthetic camera-location visualizations under high
    corruption.} Top: grid layout. Bottom: torus layout. We compare estimated
    locations against ground truth for $n=100$, uniform corruption level
    $q=0.5$, and $\sigma=0$.}
    \label{fig:app-synthetic-visualizations-high-corruption}
    \vspace{-1.0em}
\end{figure}

\FloatBarrier
\vspace{-0.5em}

\section{Runtime}
\label{sec:app-runtime}
\vspace{-0.6em}

This section reports runtime results on ETH3D real-data scenes. All timings were
measured on a MacBook Air with an Apple M4 chip, 10 CPU cores (4 performance
cores and 6 efficiency cores), and 16 GB memory, without GPU acceleration. The
runtime comparison follows the same evaluation protocol as the accuracy
comparison. The results show a clear speed advantage for TriP-Fast: it is
consistently faster than the exact TriP implementation while preserving
comparable localization accuracy.

\subsection{Runtime on Real Data}
\label{sec:app-runtime-real}
\vspace{-0.4em}

On ETH3D, TriP-Fast gives the best overall runtime. The speedup is most visible
on larger scenes, where the C++ implementation reduces the cost of the core
triangle-selection and location-estimation steps.

\input{real_data/tables/table_real_cov1_runtime.tex}

\FloatBarrier

\section{Additional Experiments on 1DSfM Datasets}
\label{sec:app-1dsfm}

We additionally evaluate the proposed method on large-scale 1DSfM-style datasets.\cite{wilson2014robust, snavely2006photo}
These scenes contain substantially larger and more challenging measurement graphs
than ETH3D. We report translation accuracy and runtime separately. On these
larger scenes, TriP remains highly competitive in accuracy, and TriP-Fast
provides a clear runtime advantage. This indicates that the method is not tuned
only for small benchmark scenes, but also transfers to larger and more difficult
SfM graphs.

\subsection{Accuracy on 1DSfM Datasets}
\label{sec:app-1dsfm-accuracy}
The accuracy table reports mean and median translation errors for each scene.
TriP is among the best-performing methods on the aggregate results, and it is
especially strong in median error, indicating stable performance across cameras
within each scene.
\input{big_data/tables/table_big_cov1_accuracy.tex}

\subsection{Runtime on 1DSfM Datasets}
\label{sec:app-1dsfm-runtime}

The runtime table shows that TriP-Fast substantially reduces computation time
compared with both exact TriP and the classical baselines. This is particularly
important for the largest scenes, where several competing solvers become much
slower.

\input{big_data/tables/table_big_cov1_runtime.tex}

\FloatBarrier

\section{Ablation Studies}
\label{sec:app-ablation}

The ablation studies isolate the contribution of the main components of TriP:
distance estimation, triangle selection, and the downstream location solver.
The goal is not only to show that each component improves the numerical results,
but also to clarify why these components are necessary in the full pipeline.
Together, the experiments show that TriP works because reliable triangles yield
good initial edge-length estimates, and these estimates are well matched to the
final robust displacement solver.

The top panel of Figure~\ref{fig:app-ablation-studies} studies distance
estimation. The horizontal axis is the scale-aligned relative distance error,
and the vertical axis is its empirical cumulative distribution function (ECDF),
that is, the fraction of shared edges whose relative error is below a given
threshold. Curves that rise faster and stay closer to the top-left corner
therefore indicate more accurate and more concentrated distance estimates. This
ablation is important because the downstream displacement solver depends strongly
on the quality of the initial edge lengths: if many distances are collapsed or
severely biased, the final Cauchy iteration cannot reliably recover the correct
camera locations. The results show that TriP produces more accurate and more
stable edge-length estimates, indicating that triangle scale synchronization and
median aggregation provide a strong distance initialization.

The middle panel studies solver transfer on the selected graph. The horizontal
axis is the target coverage level, and the vertical axis is the resulting median
translation error after running each downstream solver on the graph induced by
TriP-selected triangles. Lower bars indicate better final location recovery.
This ablation tests whether the gain comes only from selecting an easier
subgraph, or from the full design of the pipeline: selecting reliable
triangles, aggregating median edge lengths on the induced graph, and then
applying a robust Cauchy displacement solver. The results show that our solver
performs best on this selected graph, suggesting that the selected structure and
the downstream optimization are particularly well matched.

\begin{figure}[t]
    \centering
    \includegraphics[width=1\textwidth]{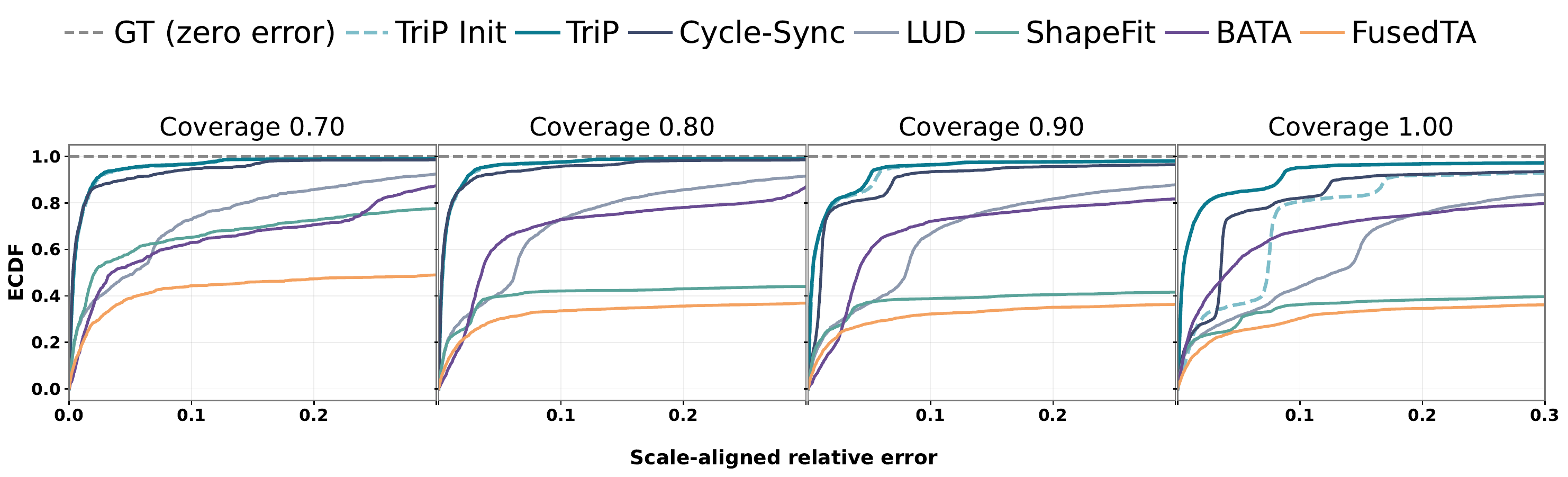}

    \vspace{0.45em}
    \includegraphics[width=1\textwidth]{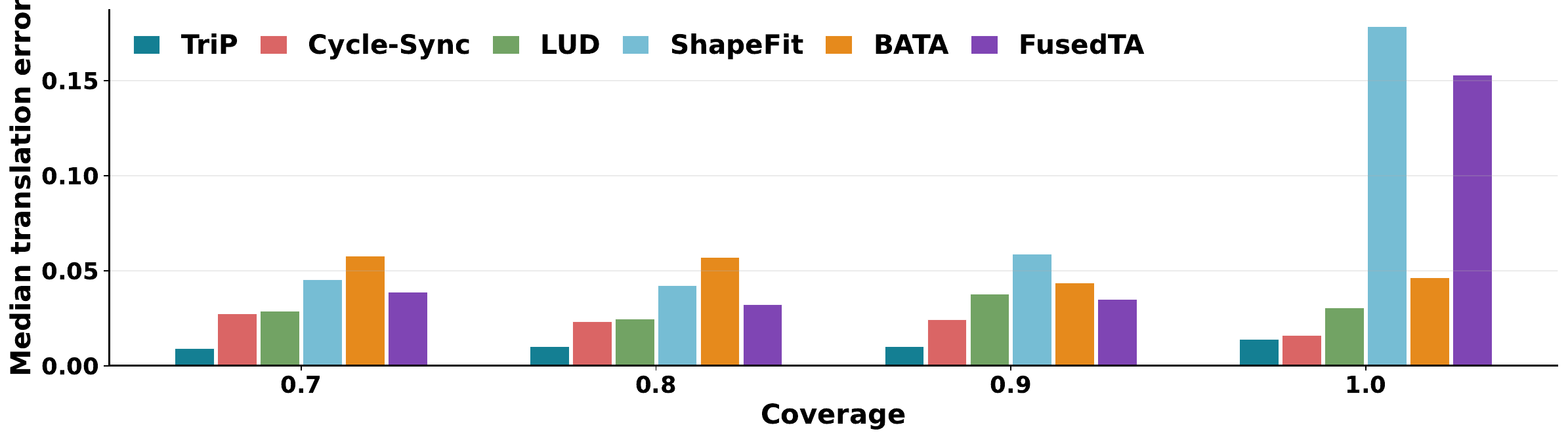}
    \vspace{0.45em}
    \includegraphics[width=1\textwidth]{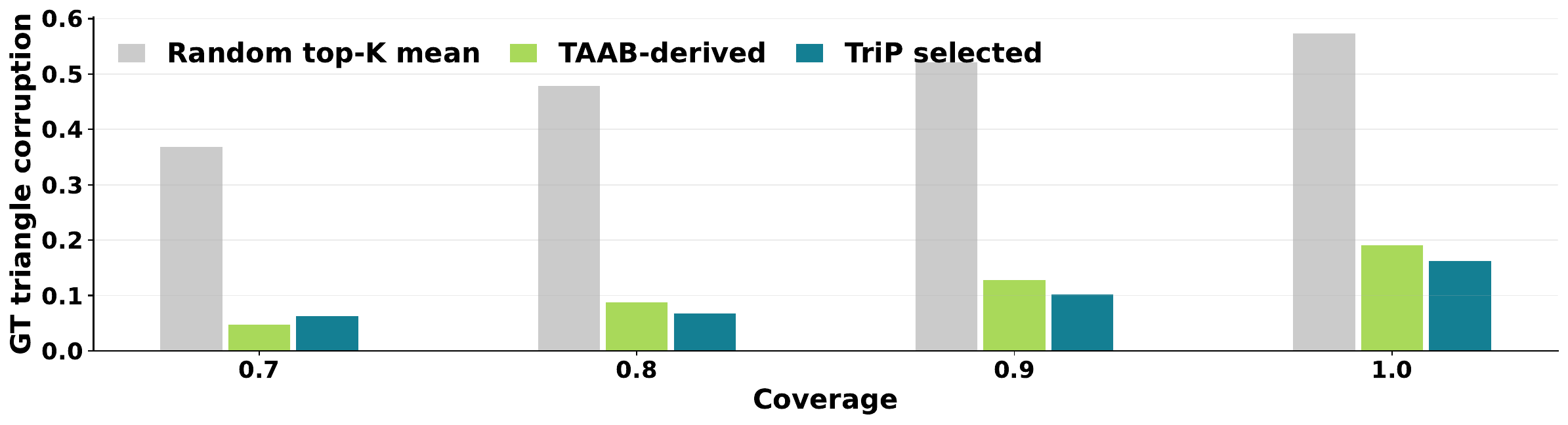}
        
    \caption{\textbf{Ablation studies.}
    Top: distance-estimation ablation using the ECDF of scale-aligned relative
    distance errors. Middle: solver-transfer ablation on graphs induced by
    TriP-selected triangles. Bottom: triangle-selection ablation using top-$k$
    ranked triangle quality.}
    \label{fig:app-ablation-studies}
\end{figure}

\FloatBarrier
The bottom panel studies triangle selection. The horizontal axis again denotes
the target coverage level, while the vertical axis measures the ground-truth
triangle corruption rate among the selected top-$k$ triangles. Lower values mean
that the selected triangle pool is cleaner. We compare TriP-selected triangles
with random top-$k$ selection and a TAAB-derived ranking. This ablation is
meaningful because the later distance and location stages can only be reliable
if the early triangle pool is sufficiently clean. The results show that the
synchronization residual used by TriP identifies cleaner triangles earlier in
the ranking, providing a better input set for scale synchronization and
edge-length aggregation.

\input{supply_theory}

\end{document}

%% file: real_data/tables/table_real_cov1_accuracy.tex
\begin{table}[H]
\centering
\caption{Translation accuracy on ETH3D at full coverage. $\bar{t}$ and $\hat{t}$ denote mean and median translation errors, respectively. Lower values are better.}
\label{tab:real_cov1_accuracy}
\small
\setlength{\tabcolsep}{3.0pt}
\renewcommand{\arraystretch}{1.08}
\begin{tabular}{lcccccccccccccc}
\toprule
\multirow{2}{*}{Scene} & \multicolumn{2}{c}{\textbf{TriP}} & \multicolumn{2}{c}{\textbf{TriP-Fast}} & \multicolumn{2}{c}{Cycle-Sync} & \multicolumn{2}{c}{LUD} & \multicolumn{2}{c}{BATA} & \multicolumn{2}{c}{ShapeFit} & \multicolumn{2}{c}{FusedTA} \\
\cmidrule(lr){2-3} \cmidrule(lr){4-5} \cmidrule(lr){6-7} \cmidrule(lr){8-9} \cmidrule(lr){10-11} \cmidrule(lr){12-13} \cmidrule(lr){14-15}
 & $\bar{t}$ & $\hat{t}$ & $\bar{t}$ & $\hat{t}$ & $\bar{t}$ & $\hat{t}$ & $\bar{t}$ & $\hat{t}$ & $\bar{t}$ & $\hat{t}$ & $\bar{t}$ & $\hat{t}$ & $\bar{t}$ & $\hat{t}$ \\
\midrule
courtyard & \underline{0.15} & \textbf{0.01} & \textbf{0.14} & \textbf{0.01} & 0.57 & \underline{0.13} & 0.74 & 0.52 & 0.92 & 0.87 & 0.44 & 0.21 & 0.95 & 0.82 \\
delivery area & \textbf{0.16} & \textbf{0.04} & 0.35 & 0.21 & \textbf{0.16} & \underline{0.07} & \underline{0.33} & 0.25 & 0.43 & 0.27 & 0.42 & 0.26 & 0.93 & 0.90 \\
electro & \underline{0.21} & \textbf{0.02} & \textbf{0.20} & \textbf{0.02} & \textbf{0.20} & \textbf{0.02} & 0.27 & 0.08 & 0.24 & 0.07 & 0.29 & 0.07 & 0.22 & \underline{0.04} \\
facade & \underline{0.03} & \textbf{0.00} & \textbf{0.01} & \textbf{0.00} & 0.23 & \textbf{0.00} & 0.48 & 0.10 & 0.52 & \underline{0.01} & 1.02 & 0.93 & 1.03 & 0.96 \\
kicker & 0.05 & \textbf{0.01} & \underline{0.04} & \textbf{0.01} & \textbf{0.03} & \textbf{0.01} & \underline{0.04} & \textbf{0.01} & 0.17 & \underline{0.05} & \textbf{0.03} & \textbf{0.01} & 0.06 & \underline{0.05} \\
meadow & \textbf{0.02} & \textbf{0.01} & \textbf{0.02} & \textbf{0.01} & \textbf{0.02} & \underline{0.02} & 0.47 & 0.24 & 0.48 & 0.24 & 0.53 & 0.40 & \underline{0.18} & 0.15 \\
office & \textbf{0.12} & \textbf{0.01} & 0.25 & \textbf{0.01} & 0.21 & \textbf{0.01} & 0.20 & \textbf{0.01} & 0.26 & 0.06 & \underline{0.19} & \textbf{0.01} & 0.22 & \underline{0.03} \\
pipes & \textbf{0.01} & \textbf{0.01} & \textbf{0.01} & \textbf{0.01} & \textbf{0.01} & \textbf{0.01} & \underline{0.02} & \textbf{0.01} & 0.06 & \underline{0.04} & \textbf{0.01} & \textbf{0.01} & \underline{0.02} & \textbf{0.01} \\
relief & \textbf{0.00} & \textbf{0.00} & \textbf{0.00} & \textbf{0.00} & \textbf{0.00} & \textbf{0.00} & \textbf{0.00} & \textbf{0.00} & \underline{0.03} & \underline{0.02} & \textbf{0.00} & \textbf{0.00} & \textbf{0.00} & \textbf{0.00} \\
relief 2 & \underline{0.03} & \textbf{0.02} & \underline{0.03} & \textbf{0.02} & \textbf{0.02} & \textbf{0.02} & 0.20 & 0.18 & 0.15 & 0.11 & 0.90 & 0.88 & \underline{0.03} & \underline{0.03} \\
terrace & \textbf{0.01} & \textbf{0.00} & \textbf{0.01} & \textbf{0.00} & \textbf{0.01} & \underline{0.01} & \textbf{0.01} & \underline{0.01} & \underline{0.06} & 0.04 & \textbf{0.01} & \underline{0.01} & \underline{0.06} & 0.04 \\
terrains & \underline{0.02} & \textbf{0.01} & \underline{0.02} & \textbf{0.01} & \textbf{0.01} & \textbf{0.01} & \textbf{0.01} & \textbf{0.01} & 0.07 & 0.06 & \textbf{0.01} & \textbf{0.01} & 0.03 & \underline{0.02} \\
\midrule
\textbf{Average} & \textbf{0.07} & \textbf{0.01} & \underline{0.09} & 0.03 & 0.12 & \underline{0.02} & 0.23 & 0.12 & 0.28 & 0.15 & 0.32 & 0.23 & 0.31 & 0.26 \\
\bottomrule
\end{tabular}
\end{table}

%% file: real_data/tables/table_real_cov1_runtime.tex
\begin{table}[H]
\centering
\caption{Runtime comparison on ETH3D at full coverage. Runtime is reported in seconds. Lower values are better.}
\label{tab:real_cov1_runtime}
\small
\setlength{\tabcolsep}{4.5pt}
\renewcommand{\arraystretch}{1.08}
\begin{tabular}{lccccccc}
\toprule
\multirow{2}{*}{Scene} & \multicolumn{1}{c}{\textbf{TriP}} & \multicolumn{1}{c}{\textbf{TriP-Fast}} & \multicolumn{1}{c}{Cycle-Sync} & \multicolumn{1}{c}{LUD} & \multicolumn{1}{c}{BATA} & \multicolumn{1}{c}{ShapeFit} & \multicolumn{1}{c}{FusedTA} \\
\cmidrule(lr){2-2} \cmidrule(lr){3-3} \cmidrule(lr){4-4} \cmidrule(lr){5-5} \cmidrule(lr){6-6} \cmidrule(lr){7-7} \cmidrule(lr){8-8}
 & Time(s) & Time(s) & Time(s) & Time(s) & Time(s) & Time(s) & Time(s) \\
\midrule
courtyard & \underline{0.019} & \textbf{0.0062} & 2.22 & 0.310 & 0.043 & 0.110 & 0.460 \\
delivery area & \underline{0.0075} & \textbf{0.0025} & 0.370 & 0.210 & 0.021 & 0.096 & 0.210 \\
electro & \underline{0.0097} & \textbf{0.0036} & 0.390 & 0.220 & 0.016 & 0.066 & 0.210 \\
facade & \underline{0.060} & \textbf{0.027} & 2.52 & 1.32 & 0.064 & 0.250 & 0.710 \\
kicker & \underline{0.0073} & \textbf{0.0032} & 0.320 & 0.160 & 0.019 & 0.065 & 0.130 \\
meadow & \underline{0.0013} & \textbf{2.7e-04} & 0.096 & 0.039 & 0.0070 & 0.0047 & 0.083 \\
office & \underline{0.0033} & \textbf{0.0012} & 0.140 & 0.079 & 0.0076 & 0.053 & 0.110 \\
pipes & \underline{0.0013} & \textbf{3.8e-04} & 0.073 & 0.041 & 0.0054 & 0.0058 & 0.084 \\
relief & \underline{0.0020} & \textbf{9.0e-04} & 0.081 & 0.054 & 0.0070 & 0.0031 & 0.061 \\
relief 2 & \underline{0.0055} & \textbf{0.0017} & 0.190 & 0.095 & 0.0082 & 0.019 & 0.120 \\
terrace & \underline{0.0047} & \textbf{0.0020} & 0.150 & 0.080 & 0.0081 & 0.052 & 0.120 \\
terrains & \underline{0.012} & \textbf{0.0041} & 0.460 & 0.270 & 0.018 & 0.093 & 0.220 \\
\midrule
\textbf{Average} & \underline{0.011} & \textbf{0.0044} & 0.580 & 0.240 & 0.019 & 0.068 & 0.210 \\
\bottomrule
\end{tabular}
\end{table}

%% file: big_data/tables/table_big_cov1_accuracy.tex
\begin{center}
\captionof{table}{\textbf{Accuracy on large-scale 1DSfM datasets.} For each method, $\bar{t}$ and $\hat{t}$ denote the mean and median translation errors, respectively. Lower values are better. Bold and underlined entries indicate the best and second-best results for each scene and metric.}
\label{tab:app-1dsfm-accuracy}
\footnotesize
\setlength{\tabcolsep}{2.1pt}
\renewcommand{\arraystretch}{1.12}
\begin{tabular}{lcccccccccccccc}
\toprule
\multirow{2}{*}{Scene} & \multicolumn{2}{c}{\textbf{TriP}} & \multicolumn{2}{c}{\textbf{TriP-Fast}} & \multicolumn{2}{c}{LUD} & \multicolumn{2}{c}{Cycle-Sync} & \multicolumn{2}{c}{BATA} & \multicolumn{2}{c}{FusedTA} & \multicolumn{2}{c}{ShapeFit} \\
\cmidrule(lr){2-3} \cmidrule(lr){4-5} \cmidrule(lr){6-7} \cmidrule(lr){8-9} \cmidrule(lr){10-11} \cmidrule(lr){12-13} \cmidrule(lr){14-15}
 & $\bar{t}$ & $\hat{t}$ & $\bar{t}$ & $\hat{t}$ & $\bar{t}$ & $\hat{t}$ & $\bar{t}$ & $\hat{t}$ & $\bar{t}$ & $\hat{t}$ & $\bar{t}$ & $\hat{t}$ & $\bar{t}$ & $\hat{t}$ \\
\midrule
Alamo & 2.36 & \textbf{0.41} & \underline{1.72} & \textbf{0.41} & \textbf{1.69} & \underline{0.44} & 1.83 & \textbf{0.41} & 2.34 & 0.59 & 1.98 & 0.63 & 1.78 & \textbf{0.41} \\
Ellis Island & 25.92 & \textbf{4.94} & 26.06 & \underline{9.80} & \underline{21.98} & 18.64 & \textbf{21.90} & 18.47 & 25.38 & 23.07 & 28.10 & 23.12 & 28.26 & 23.19 \\
Gendarmenmarkt & 43.44 & 24.62 & 45.26 & 26.96 & \textbf{38.17} & \underline{16.54} & \underline{38.57} & \textbf{16.04} & 44.69 & 19.09 & 50.08 & 28.91 & 57.85 & 33.34 \\
Madrid Metropolis & 6.78 & \textbf{1.42} & \underline{4.98} & \underline{1.45} & 5.94 & 1.84 & \textbf{4.61} & 1.50 & 13.54 & 2.61 & 12.53 & 2.37 & 23.28 & 1.46 \\
Montreal Notre Dame & 2.21 & \textbf{0.48} & 3.40 & 0.52 & \underline{1.21} & 0.53 & \textbf{1.01} & \underline{0.51} & 1.96 & 0.84 & 1.70 & 0.82 & 3.37 & 0.54 \\
NYC Library & 3.17 & \textbf{0.70} & \underline{2.96} & 0.78 & 6.53 & 2.52 & \textbf{2.76} & \underline{0.75} & 7.34 & 1.70 & 14.41 & 13.61 & 14.27 & 13.41 \\
Notre Dame & 0.98 & \textbf{0.22} & 1.07 & \underline{0.23} & \underline{0.84} & 0.28 & \textbf{0.75} & \textbf{0.22} & 1.71 & 0.36 & 4.03 & 0.41 & 0.96 & \underline{0.23} \\
Piazza del Popolo & 5.83 & \textbf{1.16} & 8.46 & 7.64 & \textbf{5.26} & 1.51 & \underline{5.54} & \underline{1.23} & 6.60 & 1.59 & 6.30 & 1.65 & 5.89 & 1.26 \\
Piccadilly & \underline{3.71} & \textbf{1.18} & 4.96 & \underline{1.27} & 9.88 & 1.65 & \textbf{3.37} & 1.34 & 4.12 & 1.56 & 7.95 & 1.77 & 15.03 & 14.21 \\
Roman Forum & \textbf{8.31} & \textbf{1.55} & 10.30 & 4.46 & \underline{8.32} & \underline{2.12} & 55.63 & 39.48 & 17.68 & 6.36 & 31.86 & 14.44 & 39.52 & 26.56 \\
Tower of London & \textbf{11.63} & \underline{2.42} & \underline{12.28} & 2.53 & 17.47 & 3.92 & 15.76 & 2.59 & 14.48 & 3.37 & 19.06 & 3.94 & 16.80 & \textbf{2.34} \\
Union Square & \textbf{10.98} & \textbf{5.41} & 13.95 & \underline{5.86} & 11.69 & 7.43 & \underline{11.58} & 7.19 & 12.14 & 7.58 & 17.86 & 11.84 & 19.19 & 13.22 \\
Vienna Cathedral & 15.39 & \textbf{1.78} & 17.04 & 2.45 & \underline{12.80} & 5.83 & \textbf{8.65} & \underline{2.03} & 15.63 & 4.84 & 36.95 & 28.69 & 36.72 & 28.24 \\
Yorkminster & 6.74 & \textbf{1.35} & 7.99 & 1.53 & \textbf{5.39} & 2.48 & \underline{5.95} & \underline{1.37} & 17.55 & 8.49 & 18.27 & 7.14 & 29.29 & 19.38 \\
\midrule
\textbf{Average} & \underline{10.53} & \textbf{3.40} & 11.46 & 4.71 & \textbf{10.51} & \underline{4.70} & 12.71 & 6.65 & 13.23 & 5.86 & 17.93 & 9.95 & 20.87 & 12.70 \\
\bottomrule
\end{tabular}
\end{center}

%% file: big_data/tables/table_big_cov1_runtime.tex
\begin{table}[H]
\centering
\caption{\textbf{Runtime on large-scale 1DSfM datasets.}
Runtime is reported in seconds. Lower values are better. Bold and underlined
entries indicate the fastest and second-fastest methods for each scene.}
\label{tab:app-1dsfm-runtime}

\small
\setlength{\tabcolsep}{4.8pt}
\renewcommand{\arraystretch}{1.12}
\begin{tabular}{lccccccc}
\toprule
Scene & \textbf{TriP} & \textbf{TriP-Fast} & BATA & FusedTA & ShapeFit & LUD & Cycle-Sync \\
\midrule
Alamo & 5.83 & \textbf{1.81} & \underline{2.60} & 27.79 & 31.39 & 138.34 & 163.76 \\
Ellis Island & 0.870 & \textbf{0.370} & \underline{0.650} & 6.73 & 4.14 & 10.19 & 22.60 \\
Gendarmenmarkt & 2.48 & \textbf{0.800} & \underline{1.48} & 15.56 & 21.15 & 58.47 & 80.49 \\
Madrid Metropolis & 0.820 & \textbf{0.340} & \underline{0.690} & 3.43 & 4.45 & 20.41 & 25.64 \\
Montreal Notre Dame & \underline{1.99} & \textbf{1.05} & \underline{1.99} & 21.55 & 18.81 & 70.90 & 76.85 \\
NYC Library & 0.860 & \textbf{0.310} & \underline{0.660} & 6.81 & 4.16 & 15.34 & 23.92 \\
Notre Dame & 5.34 & \textbf{2.32} & \underline{2.93} & 47.81 & 41.48 & 173.09 & 199.04 \\
Piazza del Popolo & 1.38 & \textbf{0.400} & \underline{0.850} & 5.20 & 5.88 & 20.43 & 28.21 \\
Piccadilly & 24.85 & \textbf{5.21} & \underline{10.97} & 142.16 & 347.36 & 571.12 & 1177.01 \\
Roman Forum & 3.88 & \textbf{0.930} & \underline{2.41} & 23.18 & 41.88 & 105.73 & 93.17 \\
Tower of London & \underline{0.980} & \textbf{0.330} & 1.04 & 9.21 & 9.00 & 19.12 & 27.03 \\
Union Square & 1.95 & \textbf{0.350} & \underline{1.15} & 9.28 & 15.29 & 22.23 & 29.54 \\
Vienna Cathedral & 8.06 & \textbf{2.27} & \underline{3.25} & 47.97 & 55.41 & 150.39 & 265.17 \\
Yorkminster & 1.29 & \textbf{0.440} & \underline{1.23} & 11.37 & 10.29 & 28.34 & 36.54 \\
\midrule
\textbf{Average} & 4.33 & \textbf{1.21} & \underline{2.28} & 27.00 & 43.62 & 100.29 & 160.64 \\
\bottomrule
\end{tabular}
\end{table}

%% file: supply_theory.tex
\makeatletter
\@ifundefined{theorem}{\newtheorem{theorem}{Theorem}[section]}{}
\@ifundefined{corollary}{\newtheorem{corollary}[theorem]{Corollary}}{}
\@ifundefined{proposition}{\newtheorem{proposition}[theorem]{Proposition}}{}
\@ifundefined{lemma}{\newtheorem{lemma}[theorem]{Lemma}}{}
\@ifundefined{definition}{\newtheorem{definition}[theorem]{Definition}}{}
\@ifundefined{remark}{\newtheorem{remark}[theorem]{Remark}}{}
\@ifundefined{claim}{\newtheorem{claim}[theorem]{Claim}}{}
\makeatother

\makeatletter
\let\TriPproof@saved@Call\Call
\providecommand{\Call}{}
\renewcommand{\Call}{C}
\makeatother

\providecommand{\vcentcolon}{\mathrel{\mathop:}}

\providecommand{\argmin}{\operatorname*{arg\,min}}
\providecommand{\spanop}{\operatorname{span}}
\providecommand{\osc}{\operatorname{osc}}
\providecommand{\range}{\operatorname{range}}
\providecommand{\Fix}{\operatorname{Fix}}
\providecommand{\one}{\mathbf{1}}
\providecommand{\defeq}{\vcentcolon=}
\providecommand{\Tzero}{T_{0}}
\providecommand{\Tb}{T_{\mathrm b}}
\providecommand{\Tall}{T}
\providecommand{\Czero}{C_{0}}
\providecommand{\Cb}{C_{\mathrm b}}
\providecommand{\Ezero}{E_{0}}
\providecommand{\Eb}{E_{\mathrm b}}
\providecommand{\dtri}{d^{\triangle}_{\mathrm b}}
\providecommand{\qmin}{q_{\min}}
\providecommand{\sig}{\sigma}
\providecommand{\psisig}{\psi_{\sigma}}
\providecommand{\err}{\mathcal E}
\providecommand{\basin}{\mathcal B}
\providecommand{\LS}{\mathrm{LS}}
\providecommand{\IRLS}{\mathrm{IRLS}}
\providecommand{\bad}{\mathrm b}
\providecommand{\good}{\mathrm g}
\providecommand{\wbase}{\bar w}
\providecommand{\Cauchy}{\rho_{\sigma}}
\providecommand{\Proj}{\mathsf P}
\providecommand{\Gconst}{\mathsf G}
\providecommand{\astar}{a_{\star}}
\providecommand{\mstar}{m_{\star}}
\providecommand{\mustar}{\mu_n}
\providecommand{\cstar}{c_{\star}}
\providecommand{\jstar}{J_{\star}}

\section{Theory for Annealed Cauchy IRLS}
\label{app:trip-scale-theory}
\label{app:complete-graph-max-degree-recovery-proximal-cauchy}

\paragraph{Overview.}
This section proves a deterministic exact-recovery guarantee for the annealed Cauchy log-scale synchronization stage of TriP on the complete camera graph.  The theorem is stated only in terms of the maximum corrupted camera-edge degree.  Quantities such as the clean witness number \(q_{\min}\), the clean-boundary bad degree, the clean Green response, and the clean basin are introduced only inside the proof and are derived from the max-degree condition.

The proof uses a proximal version of Cauchy IRLS.  At each scale and each frozen IRLS step we solve the quadratic subproblem with the additional term
\[
    \frac{\mustar}{2}\norm{z-z^m}_2^2,
    \qquad
    \mustar=\kappa n .
\]
The proximal term is used only to stabilize the frozen WLS map analyzed below.  At a fixed point it disappears from the first-order condition, so every fixed point of the proximal IRLS map is a stationary point of the original weighted Cauchy objective at that scale.  The damping also gives a uniform inverse bound for the nuisance block generated by non-clean triangles:
\[
    (C^\top\Theta C+\mustar I)^{-1}.
\]
This bound is what allows the influence of all non-clean triangle variables, including internal bad-bad shared-edge rows, to be controlled by local degree quantities.

Everything below is noiseless exact-recovery theory for annealed Cauchy scales \(\sig_k\downarrow0\).  No fixed-Cauchy exact-recovery theorem is asserted.

At each annealing scale, the output is defined through the exact proximal orbit warm-started from the previous-scale output.  The one-step estimate keeps the entire orbit in the clean basin, while the descent and compactness argument yields cluster points and shows that every cluster point is a fixed point.  The annealing induction uses one such basin fixed point.  No claim is made that every fixed point of the nonconvex Cauchy IRLS map lies in the clean basin.

\subsection{Clean theorem statement}

Let \(x_1^\star,\ldots,x_n^\star\in\R^3\) be ground-truth camera locations.  The observation graph is \(K_n\).  Its camera edges are partitioned as
\[
    E(K_n)=\Ezero\sqcup\Eb,
    \qquad
    \Delta_E\defeq\max_{i\in[n]}\deg_{\Eb}(i).
\]
For \(ij\in\Ezero\), the observed direction is exact,
\[
    d_{ij}=d_{ij}^\star
    \defeq
    \frac{x_i^\star-x_j^\star}{\norm{x_i^\star-x_j^\star}},
\]
and for \(ij\in\Eb\), the direction \(d_{ij}\) is arbitrary.  If an adversarial direction equals the true direction, that edge may be moved from \(\Eb\) to \(\Ezero\); this can only decrease \(\Delta_E\).  A graph triangle is called clean if all three of its camera edges are clean.  Let \(\Tzero\) be the retained clean triangles and let \(\Tb\) be the retained non-clean triangles.  The algorithm is not given this partition.

\begin{theorem}[Complete-graph max-degree theorem for proximal annealed Cauchy IRLS]\label{thm:main}
There are absolute constants \(\cstar>0\), \(n_0<\infty\), and \(\kappa>0\) such that the following holds for every \(n\ge n_0\).  One admissible explicit choice in the proof is \(\cstar=10^{-5}\) and \(\kappa=1/28\).  Assume that the clean locations are generic and that every clean graph triangle is nondegenerate.  Assume only the complete-graph max bad-degree condition
\begin{equation}\label{eq:main-degree-condition}
    \Delta_E\le \cstar n .
\end{equation}
Run the ideal noiseless TriP log-scale synchronization stage on \(K_n\) with all nondegenerate graph triangles as variables and all shared-edge constraints between them, with normalized triangle reliability scores satisfying
\[
    0\le \pi_t\le1,
    \qquad
    \pi_t=1\quad\text{for every exact clean retained triangle},
\]
and shared-edge base weights \(\wbase_{tu}=\sqrt{\pi_t\pi_u}\).  This includes the unweighted case \(\pi_t\equiv1\).  Initialize by weighted least squares, choose the first scale \(\sig_0\) as in \eqref{eq:sigma0-choice}, and at every scale run the proximal Cauchy-IRLS iteration with
\[
    \mustar=\kappa n,
    \qquad
    \sig_{k+1}=\frac12\sig_k.
\]
Each proximal WLS subproblem is solved exactly.  At each annealing scale, warm-start the exact proximal IRLS map from the previous-scale output and define the stage output to be any cluster point of that exact inner orbit.  Then the clean triangle log-scales converge uniformly to the true clean log-scales up to the unavoidable global additive gauge:
\begin{equation}\label{eq:main-conv}
    \min_{\alpha\in\R}
    \max_{t\in\Tzero}
    \abs{z_t^{(k)}-z_t^\star-\alpha}
    \longrightarrow0 .
\end{equation}
More quantitatively,
\begin{equation}\label{eq:main-rate}
    \min_{\alpha\in\R}
    \max_{t\in\Tzero}
    \abs{z_t^{(k)}-z_t^\star-\alpha}
    \le C_\star \Delta_E\,\frac{\sig_k}{n}
    \le C_\star \cstar \sig_k,
\end{equation}
where \(C_\star\) is a universal constant.  In particular, \(\sig_k\to0\) implies exact clean-scale recovery.
\end{theorem}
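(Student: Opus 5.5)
The argument is an induction over annealing scales, maintained by a \emph{clean basin} invariant. At scale \(\sig\), fix the gauge by taking \(\alpha\) to be the optimal shift, and define the clean error
\[
\err(z)=\min_{\alpha}\max_{t\in\Tzero}\abs{z_t-z_t^\star-\alpha}.
\]
The inductive hypothesis is \(\err(z)\le a\sig\) with \(a=1/20\). The target is to show that the proximal IRLS orbit started inside the basin never leaves it, and that every cluster point satisfies the sharper bound \(\err\le C_\star(\Delta_E/n)\sig\le a\sig/2\). After halving the scale this becomes \(\err\le a\sig_{k+1}\), which closes the induction.

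The base case comes from the weighted least-squares initialization. Its error is controlled by the same Green-response estimate used below, now with the quadratic score, which is bounded by the magnitude of the bad residuals. Choosing \(\sig_0\) at least a constant multiple of \(\max\abs{g_{tu}}\) over the relevant rows puts the initialization in the basin; this is what \eqref{eq:sigma0-choice} is for.

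Before the induction I need combinatorial facts about \(K_n\) under \eqref{eq:main-degree-condition}.

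\textbf{(i) Witness count.} Every clean edge \(ij\) lies in at least \(q_{\min}\ge n-2-2\Delta_E\) clean triangles.

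\textbf{(ii) Bad triangles per clean triangle.} Each clean triangle shares an edge with at most \(6\Delta_E\) non-clean triangles.

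\textbf{(iii) Expansion of the clean overlap graph.} The clean triangle-overlap graph is an expander: its edge boundary satisfies \(\abs{\partial S}\ge (n/2-6\Delta_E)\min(\abs S,\abs{S^c})\). It also has a Laplacian spectral gap of order \(n\), and I would prove that gap by comparison with the line-graph structure of \(K_n\). From the gap I derive an \(\ell_\infty\) "Green response" bound. If the clean block is perturbed by a source vector \(b\), with each clean triangle receiving forcing of magnitude at most \(\beta\), then the potential oscillation is at most \(\Gconst\beta/n\), up to gauge. This is the constant \(\Gconst_a\). Genericity and nondegeneracy guarantee that on clean rows \(g_{tu}=z_u^\star-z_t^\star\) exactly, so clean residuals equal differences of the error vector.

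The one-step estimate is as follows. Freeze the MM weights \(\Theta\) at the current iterate. On a clean row the residual is at most \(2a\sig\), so the Cauchy weight is at least \(1/(1+4a^2)\); this gives the clean weight floor. Bad rows have weights in \([0,1]\), and their weighted residual contribution is bounded by the score bound \(\abs{\psi_\sig}\le K\sig\). Write the proximal WLS normal equations in clean/non-clean block form and eliminate the nuisance block using the bound \(\norm{(C^\top\Theta C+\mustar I)^{-1}}\le 1/\mustar=28/n\). The nuisance variables then feed back into each clean triangle's equation as a forcing term. By (ii), that forcing has size at most \(6\Delta_E\cdot K\sig\) plus a Schur-complement perturbation, which is small because each clean row touches \(O(\Delta_E)\) bad rows and \(\mustar\) is of order \(n\). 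Combining the clean weight floor with the Green response gives
\[
\err(z^{m+1})\le \frac{\Gconst K\,6\Delta_E\sig}{n}+\frac{\mustar}{\mustar+\lambda_{\rm gap}}\,\err(z^m),
\]
which stays below \(a\sig\) once \(\cstar\) is small enough. This is the source of the explicit choices \(10^{-5}\) and \(\kappa=1/28\).

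For the fixed points, MM for the concave-in-\(r^2\) Cauchy loss, together with the proximal term, gives monotone descent of \(F_\sig+\) proximal energy. The orbit stays bounded: the clean block is controlled by the basin, and the non-clean block by the proximal damping together with descent. So cluster points exist. Continuity of the exact WLS map makes every cluster point a fixed point, and at a fixed point the proximal term vanishes, so the point is stationary for \(F_\sig\). At such a point, rerun the estimate with the \(\err\) contraction term absent, using the stationarity equations directly. This yields \(\err\le C_\star\Delta_E\sig/n\), which is \eqref{eq:main-rate}, and iterating over \(k\) gives \eqref{eq:main-conv}.

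The main obstacle is the Schur-complement step: controlling, uniformly in \(\ell_\infty\), how bad-bad rows inside the nuisance block propagate forcing back into the clean block. I would first try a Neumann-series or diagonal-dominance argument on \(C^\top\Theta C+\mustar I\), using the fact that each non-clean triangle has degree \(O(n)\) while \(\mustar\) is of order \(n\). If that turns out to be too lossy, I would bound row sums of the inverse by duality instead.
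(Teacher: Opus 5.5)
Your architecture matches the paper's in its main pieces:
\begin{itemize}
\item a clean basin of radius $\sigma/20$ in the gauge-quotient $\ell_\infty$ norm;
\item the proximal bound $\|(C^\top\Theta C+\mu_n I)^{-1}\|_{\infty\to\infty}\le 1/\mu_n$ via diagonal dominance;
\item invariance of the basin under each exact step;
\item cluster points of the warm-started orbit being fixed points;
\item a sharper fixed-point estimate in which nuisance stationarity $C^\top s=0$ removes the bad--bad forcing.
\end{itemize}

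The genuine gap is step (iii). Everything downstream rests on the quotient $\ell_\infty$ Green response $\|L_\Theta^\dagger y\|_{\infty/\mathbf 1}\le(\mathsf G/n)\|y\|_{\infty/\mathbf 1}$ for the clean triangle Laplacian. This does \emph{not} follow from a spectral gap of order $n$. The gap controls $L^\dagger$ in $\ell_2$, and transferring to $\ell_\infty$ costs a factor $\sqrt{|\mathcal T_0|}\sim n^{3/2}$. A random-walk or mixing argument that uses only the gap still loses a factor $\log|\mathcal T_0|$. That loss is real in general: blow-ups of cubic expanders have a gap proportional to the degree but an $\ell_\infty$ Green norm of order $\log N$ divided by the degree. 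With such a bound you would only get a sublinear threshold on $\Delta_E$, not $\Delta_E\le c_\star n$.

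The $O(1/n)$ bound is specific to the ambient graph, which is the Johnson graph $J(n,3)$, not a line graph of $K_n$. The paper proceeds in three steps:
\begin{itemize}
\item It uses distance-regularity of $J(n,3)$ to compute the four radial values of its Green kernel, and shows that the rows and the row differences have $\ell_1$ norm $O(1/n)$.
\item It passes to the induced clean subgraph through the harmonic Schur complement $L_{\mathrm{harm}}=L_0+M_\partial$, with $\|M_\partial\|_{\infty\to\infty}\le 12\Delta_E$, and a Neumann series.
\item A second Neumann series handles conductances in $[100/101,1]$.
\end{itemize}
You need this, or an equivalent argument that works directly in $\ell_\infty$.

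Relatedly, your one-step recurrence with factor $\mu_n/(\mu_n+\lambda_{\mathrm{gap}})$ is an $\ell_2$ heuristic. The gap does not give it in the quotient $\ell_\infty$ norm, and it cannot account for $\kappa=1/28$, since in that form any $\kappa$ works. In the paper, the proximal term enters the clean equation as the force $\mu_n(e^m-\delta_0)$, which is inverted through the Green response. This gives the factor $\mathsf G\kappa/(1-\mathsf G\kappa)=1/3$. That is why $\kappa=1/(4\mathsf G)$, balanced against the leakage factor $1+3/\kappa$.

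The recurrence also omits a term. In the frozen step, bad rows exert $\theta_f^m r_f(z^{m+1})$, not a score at $z^m$. Splitting around $\bar z^m=(z_0^\star+\alpha_m\mathbf 1,z_b^m)$ gives a lagged current bounded by $(\tfrac12+\tfrac1{20})\sigma$, plus a Schur operator $S_m$ acting on $\delta_0$. You must show that $S_m$ is a passive boundary response: an $M$-matrix with nonnegative row sums and $\|S_m\|_{\infty\to\infty}\le 2d^{\triangle}_{\mathrm b}$. Only then does a maximum principle give $\|\delta_0\|_\infty\le\|e^m\|_\infty+\|p_m\|_\infty/\mu_n$, and hence $\|S_m\delta_0\|_\infty=O(\Delta_E\sigma)$.

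Two smaller points.
\begin{itemize}
\item The theorem fixes $\sigma_0\ge 80R_{\mathrm{LS}}$, where $R_{\mathrm{LS}}$ is the largest least-squares residual on positive-weight rows. With that choice the base case is immediate, because the clean triangle graph has diameter at most six. Your Green-response route also works, but the bad scores at the least-squares point must be bounded by $R_{\mathrm{LS}}$, not by $\max|g_{tu}|$.
\item Orbit boundedness needs conservation of the mean on each positive-weight component under the proximal step. The basin bounds the clean block only modulo the additive gauge.
\end{itemize}
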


\begin{remark}[What the theorem does and does not assume]
The only corruption hypothesis is \(\Delta_E\le\cstar n\).  The theorem does not assume a clean-basin initialization, does not assume a lower bound on \(q_{\min}\), does not assume a clean Green-response condition, and does not assume that bad triangles receive tiny weights.  Those facts are either proved from \(\Delta_E\le\cstar n\) or are unnecessary because of the proximal damping.  There is also no edge filtering: every retained shared-edge row is used with its given positive or zero base weight.
\end{remark}

\begin{remark}[Why the fixed-point wording is used]
At a fixed point of the proximal IRLS map, the proximal term vanishes from the first-order condition, so the fixed point is a stationary point of the original weighted Cauchy objective at that scale.  The exact-solver assumption in the theorem refers to the quadratic WLS subproblem \eqref{eq:prox-wls}: no iterative linear-solver error is analyzed.  The theorem is an exact-recovery statement for the ideal annealed fixed-point solver.
\end{remark}

\begin{remark}[Fixed-point output convention used in the proof]\label{rem:fixed-point-output-convention}
The phrase ``take a fixed point'' is used in the algorithmic warm-started sense.  At scale \(\sig_k\), start the exact proximal map from the previous-scale output \(z^{k,0}\), form the exact inner orbit \(z^{k,m+1}=T_{\sig_k}(z^{k,m})\), and take a cluster-point fixed point of this orbit.  Proposition~\ref{prop:warm-started-basin-fixed-point} proves that such a cluster point exists, lies in the clean basin, and is a fixed point of the exact proximal IRLS map.  Thus the theorem does not require a global statement that every fixed point of the nonconvex Cauchy objective lies in the clean basin.
\end{remark}

\subsection{Log-scale model and Cauchy identities}

For a retained triangle \(t=\{i,j,k\}\), TriP computes positive side-ratio entries \(h_{t,ij},h_{t,jk},h_{t,ki}\).  If \(t\in\Tzero\), then there is a positive scale \(s_t^\star\) such that
\begin{equation}\label{eq:clean-local-scale}
    s_t^\star h_{t,e}=\ell_e^\star
    \qquad
    \text{for every camera edge }e\subset t,
\end{equation}
where \(\ell_{ij}^\star=\norm{x_i^\star-x_j^\star}\).  Put
\[
    z_t^\star\defeq\log s_t^\star,
    \qquad t\in\Tzero .
\]
If retained triangles \(t,u\) share a camera edge \(e\), the log-scale equation is
\begin{equation}\label{eq:shared-log-equation}
    z_u-z_t=g_{tu},
    \qquad
    g_{tu}=\log\frac{h_{t,e}}{h_{u,e}},
\end{equation}
with the sign determined by the row orientation.  If \(t,u\in\Tzero\), then \eqref{eq:clean-local-scale} gives the exact clean equation
\begin{equation}\label{eq:clean-log-equation}
    z_u^\star-z_t^\star=g_{tu}.
\end{equation}

Let \(\Call\) be the retained shared-edge row set.  Let \(\Czero\subset\Call\) be the rows with two clean endpoints and \(\Cb=\Call\setminus\Czero\).  Write \(z=(z_0,z_b)\in\R^{\Tzero}\times\R^{\Tb}\).  For \(c\in\Czero\),
\[
    r_c(z_0)=b_c^\top z_0-g_c,
    \qquad
    g_c=b_c^\top z_0^\star,
\]
where \(b_c\) is an oriented incidence row on clean triangle variables.  For \(f\in\Cb\), split the full incidence row as
\begin{equation}\label{eq:bad-row-split}
    r_f(z)=a_f^\top z_0+c_f^\top z_b-g_f.
\end{equation}
A bad row has either one clean endpoint and one non-clean endpoint, or two non-clean endpoints.  Hence
\begin{equation}\label{eq:row-norms}
    \norm{a_f}_1\le1,
    \qquad
    \norm{c_f}_1\le2.
\end{equation}
The bound \(\norm{a_f}_1\le1\) is used below; it holds because every row with two clean endpoints belongs to \(\Czero\).

The clean scales are identifiable only up to a global additive constant.  Define
\begin{equation}\label{eq:error}
    \err(z_0)
    \defeq
    \min_{\alpha\in\R}
    \norm{z_0-z_0^\star-\alpha\one}_{\infty}.
\end{equation}
Throughout the proof we use the fixed numerical basin radius
\begin{equation}\label{eq:a-star}
    \astar\defeq\frac1{20}.
\end{equation}
The scale-\(\sig\) clean basin is
\begin{equation}\label{eq:basin}
    \basin_\sig
    \defeq
    \{z=(z_0,z_b):\err(z_0)\le \astar\sig\}.
\end{equation}
The theorem does not assume that an iterate is in \(\basin_\sig\); the proof establishes it by induction.

The Cauchy loss, score, and IRLS weight are
\begin{equation}\label{eq:cauchy}
    \rho_\sig(r)=\frac{\sig^2}{2}\log\left(1+\frac{r^2}{\sig^2}\right),
    \quad
    \psi_\sig(r)=\rho_\sig'(r)=\frac{r}{1+(r/\sig)^2},
    \quad
    \omega_\sig(r)=\frac{1}{1+(r/\sig)^2}.
\end{equation}
We repeatedly use
\begin{equation}\label{eq:cauchy-basic-bounds}
    \abs{\psi_\sig(r)}\le\frac{\sig}{2},
    \qquad
    \omega_\sig(r)\abs r\le\frac{\sig}{2}.
\end{equation}
Indeed, writing \(r=\sig x\), both inequalities reduce to
\(\abs{x}/(1+x^2)\le1/2\).  If \(\abs r\le2\astar\sig=\frac{\sig}{10}\), then
\begin{equation}\label{eq:mstar}
    \omega_\sig(r)
    \ge
    \frac{1}{1+(1/10)^2}
    =\frac{100}{101}
    \defeq \mstar .
\end{equation}

\paragraph{Numerical constants used in the proof.}
The remaining numerical constants are chosen as
\begin{equation}\label{eq:numerical-constants}
    \Gconst=7,
    \qquad
    \kappa=\frac1{28},
    \qquad
    \jstar=\frac12+\astar=\frac{11}{20},
    \qquad
    \cstar=10^{-5}.
\end{equation}
The value \(\astar=1/20\) is intentionally round.  It keeps the clean Cauchy conductances in the interval \([100/101,1]\), gives a short weighted-Green perturbation argument, and still leaves enough margin for the proximal bad-network estimate.  The constants are not optimized; they are explicit conservative values for the deterministic max-degree proof.

\subsection{Combinatorial consequences of the max-degree bound}

The following estimates are deterministic and use only the complete graph and the max degree of corrupted camera edges.

\begin{lemma}[Clean triangles pass exact triangle prefiltering]\label{lem:clean-pass}
Every clean nondegenerate graph triangle satisfies the exact positive closure relation and hence is retained by the ideal noiseless triangle prefilter.
\end{lemma}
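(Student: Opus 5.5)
Let \(t=\{i,j,k\}\) with \(i<j<k\) be a clean nondegenerate triangle, and set \(\bm a=\bm d_{ij}\), \(\bm b=\bm d_{jk}\), \(\bm c=\bm d_{ki}\).
By cleanliness these equal the true directions.
The plan is to produce the exact positive closure relation directly from the ground-truth locations.
We then show that the cofactor vector \(\bm h_t\) is a positive multiple of the true side lengths.
This forces the closure residual \(r_t\) to vanish and the collinearity statistic to be bounded away from zero, so the ideal noiseless prefilter retains \(t\).

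\textbf{Closure.} Take \(\ell_1=\ell_{ij}^\star\), \(\ell_2=\ell_{jk}^\star\), \(\ell_3=\ell_{ki}^\star\). Then
\[
\ell_1\bm a+\ell_2\bm b+\ell_3\bm c=(\bm x_i^\star-\bm x_j^\star)+(\bm x_j^\star-\bm x_k^\star)+(\bm x_k^\star-\bm x_i^\star)=\bm 0 .
\]
All \(\ell\)'s are positive because the three points are distinct.

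\textbf{Identifying \(\bm h_t\).} Take the cross product of the closure relation with \(\bm c\). This gives \(\ell_1\,\bm a\times\bm c+\ell_2\,\bm b\times\bm c=\bm 0\), so
\[
\ell_1\|\bm c\times\bm a\|=\ell_2\|\bm b\times\bm c\| .
\]
Crossing with \(\bm a\) and with \(\bm b\) gives the analogous identities. Together these yield
\[
\frac{\|\bm b\times\bm c\|}{\ell_1}=\frac{\|\bm c\times\bm a\|}{\ell_2}=\frac{\|\bm a\times\bm b\|}{\ell_3}=\frac{2\,\mathrm{Area}(t)}{\ell_1\ell_2\ell_3}.
\]
The common value is twice the area over the product of the sides, which is just the law of sines.
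Nondegeneracy means the three points are not collinear, so the area is positive. Hence
\[
\bm h_t=\lambda(\ell_1,\ell_2,\ell_3),\qquad \lambda>0 .
\]

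\textbf{Consequences.} Substituting into the residual gives
\[
h_{t,1}\bm a+h_{t,2}\bm b+h_{t,3}\bm c=\lambda\,(\ell_1\bm a+\ell_2\bm b+\ell_3\bm c)=\bm 0,
\]
so \(r_t=0\). Moreover \(\min_e h_{t,e}>0\), so \(t\) passes every exact-closure test and is not discarded as collinear.
The relation \(s_t^\star h_{t,e}=\ell_e^\star\) with \(s_t^\star=1/\lambda\) is exactly \eqref{eq:clean-local-scale}, which is what is used afterwards.

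\textbf{Main obstacle.} The only delicate point is interpretive: "retained by the ideal prefilter" must mean exact zero residual plus strictly nondegenerate cross products, with no positive collinearity threshold. I would state this convention explicitly, consistent with the ideal full-triangle model in which every nondegenerate triangle is retained.
The orientation and sign convention (\(\bm c=\bm d_{ki}\)) is what makes the relation close with positive coefficients. I would check this once carefully, since a mis-oriented edge would flip a sign in the closure.
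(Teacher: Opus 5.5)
Your proposal is correct and follows the paper's proof: it uses the same telescoping identity \(\ell_{ij}^\star\bm d_{ij}^\star+\ell_{jk}^\star\bm d_{jk}^\star+\ell_{ki}^\star\bm d_{ki}^\star=\bm 0\) with positive coefficients, and nondegeneracy to exclude collinearity. Your extra cross-product (law-of-sines) step, showing that \(\bm h_t\) is a positive multiple of the true side lengths and hence \(r_t=0\) exactly, makes explicit what the paper only asserts in Step~1 and later uses for the clean local-scale relation.
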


\begin{proof}
For a clean triangle \(t=\{i,j,k\}\), oriented as \(ij,jk,ki\),
\[
    \ell_{ij}^\star d_{ij}^\star
    +\ell_{jk}^\star d_{jk}^\star
    +\ell_{ki}^\star d_{ki}^\star
    =(x_i^\star-x_j^\star)+(x_j^\star-x_k^\star)+(x_k^\star-x_i^\star)=0.
\]
All coefficients are positive.  Nondegeneracy excludes collinearity.  Thus the exact positive-closure test retains the triangle.
\end{proof}

\begin{lemma}[Clean witnesses from max degree]\label{lem:qmin}
For every clean camera edge \(ij\in\Ezero\), the number of clean witnesses
\[
    q_{ij}
    \defeq
    \#\{k\in[n]\setminus\{i,j\}:ik\in\Ezero,\ jk\in\Ezero\}
\]
satisfies
\begin{equation}\label{eq:qmin-bound}
    q_{ij}\ge n-2-\deg_{\Eb}(i)-\deg_{\Eb}(j)
    \ge n-2-2\Delta_E.
\end{equation}
Consequently, if \(\Delta_E\le\cstar n\) and \(\cstar\le10^{-3}\), then \(q_{ij}\ge n/2\) for all sufficiently large \(n\).
\end{lemma}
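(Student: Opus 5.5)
We count the complement of the clean witness set inside \([n]\setminus\{i,j\}\). Fix a clean edge \(ij\in\Ezero\). In \(K_n\) every \(k\in[n]\setminus\{i,j\}\) is adjacent to both \(i\) and \(j\), so there are exactly \(n-2\) candidate third vertices. A candidate \(k\) fails to be a clean witness only if \(ik\in\Eb\) or \(jk\in\Eb\). The set of \(k\neq j\) with \(ik\in\Eb\) has size at most \(\deg_{\Eb}(i)\), since these are distinct bad edges at \(i\); note that \(ij\) itself is clean, so it is not among them. Likewise, the set of \(k\ne i\) with \(jk\in\Eb\) has size at most \(\deg_{\Eb}(j)\).

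By a union bound, the number of failing \(k\) is at most \(\deg_{\Eb}(i)+\deg_{\Eb}(j)\). This gives
\[
q_{ij}\ge n-2-\deg_{\Eb}(i)-\deg_{\Eb}(j)\ge n-2-2\Delta_E,
\]
where the last step uses the definition of \(\Delta_E\) as the maximum bad degree.

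For the consequence, assume \(\Delta_E\le\cstar n\) with \(\cstar\le10^{-3}\). Then
\[
q_{ij}\ge n-2-2\cdot10^{-3}n=(1-2\cdot10^{-3})n-2.
\]
This is at least \(n/2\) whenever \((1/2-2\cdot10^{-3})n\ge2\), that is, for \(n\ge 5\) roughly, and in particular for all \(n\ge n_0\) with \(n_0\) chosen large.

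If clean witnesses are also needed as clean retained triangles, Lemma~\ref{lem:clean-pass} together with the standing nondegeneracy assumption ensures that each witness \(k\) yields a clean triangle \(\{i,j,k\}\in\Tzero\). That is not required for the inequality itself, though.

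There is no real obstacle here. The only point needing care is to exclude \(k\in\{i,j\}\) and the edge \(ij\) from the bad-degree counts, so that the subtraction of \(2\) and the two degree terms are not double-counted.
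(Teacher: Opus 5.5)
Your proof is correct and follows the paper's argument: a vertex $k\notin\{i,j\}$ fails to be a clean witness only if $ik\in E_{\mathrm b}$ or $jk\in E_{\mathrm b}$, and a union bound over the bad edges at $i$ and at $j$ gives the estimate, with the large-$n$ consequence immediate. Your explicit threshold ($0.498\,n\ge 2$, i.e.\ $n\ge 5$) is simply a concrete version of the paper's ``for all sufficiently large $n$''.
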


\begin{proof}
A vertex \(k\notin\{i,j\}\) fails to be a clean witness for the clean edge \(ij\) only if \(ik\in\Eb\) or \(jk\in\Eb\).  There are at most \(\deg_{\Eb}(i)+\deg_{\Eb}(j)\) such vertices.  This proves \eqref{eq:qmin-bound}; the final claim follows immediately for large \(n\).
\end{proof}

\begin{lemma}[Clean-boundary bad degree]\label{lem:boundary-degree}
Let
\[
    \dtri
    \defeq
    \max_{t\in\Tzero}\#\{f\in\Cb:\text{the row }f\text{ is incident to }t\}.
\]
Then
\begin{equation}\label{eq:dtri-bound}
    \dtri\le 6\Delta_E.
\end{equation}
\end{lemma}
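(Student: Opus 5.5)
Fix a clean triangle \(t=\{i,j,k\}\in\Tzero\). The plan is a direct counting argument: every row of \(\Cb\) incident to \(t\) is charged to one of the three camera edges of \(t\), and each edge is then handled by the degree bound on its endpoints.

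\textbf{Step 1 (structure of a row).} A row \(f\in\Cb\) incident to \(t\) is a shared-edge constraint between \(t\) and some retained triangle \(u\) sharing a camera edge \(e\subset t\) with \(t\). Since \(t\) is clean, \(f\notin\Czero\) forces \(u\in\Tb\). So \(u\) is non-clean, while its edge \(e\) is clean because \(e\subset t\).

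\textbf{Step 2 (count per edge).} Take \(e=ij\). Then \(u=\{i,j,m\}\) for some \(m\notin\{i,j,k\}\), and \(u\) non-clean means \(im\in\Eb\) or \(jm\in\Eb\). Hence the number of such \(m\) is at most
\[
\deg_{\Eb}(i)+\deg_{\Eb}(j)\le2\Delta_E .
\]
This is the complement of the witness count in Lemma~\ref{lem:qmin}. For a fixed ordered pair \((t,u)\), the two triangles share exactly one camera edge, since two distinct triangles share at most one edge. We also take the ideal model to contain one row per sharing pair (per orientation convention). Under these conventions each such \(u\) contributes exactly one row.

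\textbf{Step 3 (sum).} Summing over the three edges \(ij,jk,ki\) of \(t\) gives at most \(3\cdot2\Delta_E=6\Delta_E\) rows of \(\Cb\) incident to \(t\). Taking the maximum over \(t\in\Tzero\) yields \eqref{eq:dtri-bound}.

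The only delicate point is the bookkeeping convention for rows: each unordered pair of triangles sharing an edge must give a single row, not two oriented copies. If the row set instead duplicated orientations, the bound would double to \(12\Delta_E\). I would therefore state explicitly that \(\Call\) has one row per shared-edge pair, consistent with \eqref{eq:shared-log-equation}, where the sign is fixed by the row orientation. Retention filtering only removes triangles, so it can only decrease the count.
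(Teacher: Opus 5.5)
Your proof is correct and is essentially the paper's argument: charge each bad row incident to the clean triangle $t=\{i,j,k\}$ to a shared clean edge, bound the non-clean neighbors $\{i,j,m\}$ along that edge by $\deg_{\Eb}(i)+\deg_{\Eb}(j)\le 2\Delta_E$, and sum over the three edges. Your convention of one row per unordered sharing pair is also what the paper implicitly assumes when it counts non-clean neighbors rather than rows, and stating it explicitly is a reasonable clarification.
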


\begin{proof}
Fix a clean triangle \(t=\{i,j,k\}\).  A bad shared-edge row incident to \(t\) must share one of the clean camera edges \(ij,ik,jk\) with \(t\).  Consider the edge \(ij\).  A neighboring triangle along this edge has the form \(\{i,j,\ell\}\).  Since \(ij\) is clean, the neighboring triangle is non-clean only if \(i\ell\in\Eb\) or \(j\ell\in\Eb\).  Hence there are at most
\[
    \deg_{\Eb}(i)+\deg_{\Eb}(j)\le2\Delta_E
\]
non-clean neighbors of \(t\) along \(ij\).  The same argument applies to \(ik\) and \(jk\).  Summing over the three clean edges of \(t\) gives \(6\Delta_E\).
\end{proof}

\begin{lemma}[Clean triangle graph has bounded diameter]\label{lem:diam}
If \(\Delta_E\le10^{-3}n\) and \(n\) is sufficiently large, then the clean triangle-overlap graph \((\Tzero,\Czero)\) is connected and has graph diameter at most six.
\end{lemma}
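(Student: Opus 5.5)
To prove Lemma~\ref{lem:diam}, I would connect any two clean triangles by an explicit chain of clean triangles, where consecutive triangles share a clean camera edge. The basic move is \emph{pivoting}. Suppose \(t=\{i,j,k\}\in\Tzero\) and \(\ell\notin t\) is a vertex with \(i\ell,j\ell\in\Ezero\). Then \(\{i,j,\ell\}\) is clean, since \(ij\) is clean, and it shares the clean edge \(ij\) with \(t\). By Lemma~\ref{lem:clean-pass}, every clean triangle is retained (genericity/nondegeneracy is assumed), so this adjacency lies in \(\Czero\). Each pivot replaces one vertex of the current triangle while keeping a clean edge.

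The plan is to move from \(t=\{i,j,k\}\) to \(t'=\{i',j',k'\}\) by swapping vertices one at a time. The key tool is a counting fact from the max-degree bound. For any set \(S\) of at most five vertices, the number of vertices \(\ell\) that are clean-adjacent to all of \(S\) is at least \(n-|S|-|S|\Delta_E\ge n-5-5\cdot10^{-3}n>0\) for large \(n\). This is the same argument as Lemma~\ref{lem:qmin}, applied to more endpoints.

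\textbf{Step 1.} Pick \(\ell_1\) clean-adjacent to \(i,j,k,i',j'\) (and distinct from them). Pivot \(\{i,j,k\}\to\{i,j,\ell_1\}\) via edge \(ij\). Then pivot \(\to\{i,\ell_1,i'\}\) via edge \(i\ell_1\); this needs \(ii'\) clean.

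Because \(ii'\) may be corrupted, I would instead route entirely through fresh common-neighbor vertices, which avoids needing any particular edge to be clean:
\[
\{i,j,k\}\to\{i,j,a\}\to\{i,a,b\}\to\{a,b,c\}\to\{b,c,i'\}\to\{c,i',j'\}\to\{i',j',k'\}.
\]
That is six pivots. The intermediate vertices are chosen as follows:
\begin{itemize}
\item \(a\) clean-adjacent to \(i,j\);
\item \(b\) clean-adjacent to \(i,a\);
\item \(c\) clean-adjacent to \(a,b,i'\), and also to \(j'\);
\item \(b\) must additionally be clean-adjacent to \(i'\), and \(c\) to \(j'\), so that the needed edges \(bi',ci',cj'\) are clean, while \(i'j'\) is clean because \(t'\) is clean.
\end{itemize}

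To make these choices in a consistent order, I would choose:
\begin{itemize}
\item \(a\) clean-adjacent to \(\{i,j\}\);
\item \(b\) clean-adjacent to \(\{i,a,i'\}\);
\item \(c\) clean-adjacent to \(\{a,b,i',j'\}\).
\end{itemize}
Each choice also avoids the at most seven vertices already named. The counting bound guarantees each choice exists, since every requirement involves at most four prescribed neighbors plus a few exclusions.

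\textbf{Step 2.} Verify that each intermediate triangle is clean:
\begin{itemize}
\item \(\{i,j,a\}\): \(ij\) is clean from \(t\), and \(ia,ja\) are clean by the choice of \(a\).
\item \(\{i,a,b\}\): \(ia\) is clean, and \(ib,ab\) are clean by the choice of \(b\).
\item \(\{a,b,c\}\): \(ab\) is clean, and \(ac,bc\) are clean by the choice of \(c\).
\item \(\{b,c,i'\}\): \(bc\), \(bi'\), and \(ci'\) are all clean.
\item \(\{c,i',j'\}\): \(ci'\), \(cj'\), and \(i'j'\) are all clean.
\end{itemize}
Consecutive triangles share the edges \(ij\), \(ia\), \(ab\), \(bc\), \(ci'\), and \(i'j'\) respectively, all clean. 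The walk therefore has length six, which gives both connectivity and diameter at most six.

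\textbf{Main obstacle.} The delicate part is only bookkeeping. Each chosen vertex must be distinct from the relevant earlier vertices so that every triple is a genuine triangle. Degenerate cases, such as \(t\) and \(t'\) sharing vertices, must be handled; they only shorten the path, or the same construction still works after excluding coincidences. Since the exclusion set always has constant size and each vertex has at most \(10^{-3}n\) bad neighbors, large \(n\) absorbs everything.
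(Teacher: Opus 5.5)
Your proof is correct and is essentially the paper's argument. The paper also picks three auxiliary vertices greedily by the max-degree count and uses exactly your six-step chain of clean triangles, with your $a,b,c$ playing the roles of its $s_1,s_2,s_3$; the paper requires each auxiliary vertex to be clean to all vertices of both triangles and all earlier auxiliaries, whereas your adjacency requirements are the minimal ones (and there are eight, not seven, named vertices to exclude before choosing $c$, which large $n$ absorbs anyway).
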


\begin{proof}
Let \(t=\{a,b,c\}\) and \(u=\{p,q,r\}\) be two clean triangles.  Since each camera vertex is incident to at most \(\Delta_E\) corrupted edges, fewer than \(6\Delta_E+6\) vertices are either in \(t\cup u\) or fail to be cleanly connected to all vertices in \(t\cup u\).  For large \(n\), there is a vertex \(s_1\notin t\cup u\) cleanly connected to all vertices in \(t\cup u\).  Similarly, one can choose \(s_2\) cleanly connected to every vertex in \(t\cup u\cup\{s_1\}\), and then \(s_3\) cleanly connected to every vertex in \(t\cup u\cup\{s_1,s_2\}\).  This is possible because at each step we exclude only \(O(\Delta_E)+O(1)<n\) vertices.

Then the sequence
\[
\{a,b,c\},
\{a,b,s_1\},
\{a,s_1,s_2\},
\{s_1,s_2,s_3\},
\{p,s_2,s_3\},
\{p,q,s_3\},
\{p,q,r\}
\]
consists entirely of clean triangles.  Consecutive triangles share two camera vertices, hence share one clean camera edge, so they are adjacent in \((\Tzero,\Czero)\).  This gives a clean path of length six between arbitrary clean triangles.
\end{proof}

\begin{lemma}[Maximum triangle-graph degree]\label{lem:triangle-degree}
Every graph triangle in \(K_n\) is adjacent, in the full shared-edge triangle graph, to at most
\begin{equation}\label{eq:Dmax}
    D_n\defeq 3(n-3)
\end{equation}
other graph triangles.  Therefore every retained triangle has retained shared-edge degree at most \(D_n\).
\end{lemma}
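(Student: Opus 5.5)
The bound follows from a direct count in $K_n$, done separately for each of the three camera edges of a triangle. Fix a graph triangle $t=\{i,j,k\}$ in $K_n$. Two distinct graph triangles are adjacent in the shared-edge triangle graph exactly when they share a camera edge. Two distinct triangles cannot share two camera edges, since two edges of a triangle already determine all three of its vertices. Hence every neighbor $u\neq t$ shares exactly one of the three edges $ij$, $jk$, $ki$ with $t$, and the neighbors are partitioned according to which edge is shared.

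Consider first the edge $ij$. Every graph triangle containing $ij$ has the form $\{i,j,\ell\}$ with $\ell\in[n]\setminus\{i,j\}$. This gives $n-2$ such triangles, one of which is $t$ itself ($\ell=k$). So there are at most $n-3$ other triangles along $ij$. The same count applies to $jk$ and to $ki$.

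Summing over the three edges gives at most $3(n-3)=D_n$ neighbors of $t$ in the full shared-edge triangle graph. Because the sets are disjoint, this is in fact the exact degree in $K_n$, and the stated inequality certainly holds.

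For the retained statement, the retained shared-edge triangle graph is the subgraph induced on the retained triangles. Its rows are exactly the shared-edge constraints between retained triangles, as in the ideal full-triangle model. Degrees can only decrease when passing to an induced subgraph, so every retained triangle has retained degree at most $D_n$.

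If a shared edge $e$ were allowed to contribute several parallel rows, the count would instead be per pair $(t,u)$. In the ideal model each pair contributes one row, so the count is unchanged.

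There is no substantive obstacle here. The only point needing care is the disjointness observation, which rules out double counting a neighbor $u$ across two edges. Even without it, the union bound already gives the stated upper bound.
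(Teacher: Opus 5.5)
Your proof is correct and matches the paper's argument: along each of the three camera edges there are exactly $n-3$ other triangles, summing gives $3(n-3)$, and restricting to retained triangles can only lower the degree. Your added observations are accurate but not needed for the upper bound: two distinct triangles share at most one edge, so the count is exact in $K_n$, and each pair contributes a single row.
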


\begin{proof}
A triangle has three camera edges.  Along a fixed camera edge, there are \(n-3\) other graph triangles containing that edge.  Summing over the three edges gives \(3(n-3)\).  Retaining fewer triangles can only decrease this degree.
\end{proof}

\subsection{Clean Green response from the max-degree bound}

This section proves the clean inverse-response estimate needed later.  The theorem statement does not assume it; it follows from the complete graph and the max-degree bound.  We keep the proof explicit because this is the point where \(q_{\min}\) disappears from the theorem statement.  We also record the passive-network/Kron-reduction estimate used twice below: first for the clean Johnson graph after deleting non-clean triangles, and later for the bad Schur-complement operator created by a frozen WLS step.

\begin{lemma}[Passive boundary response under Kron reduction]\label{lem:passive-boundary-response}
Let \(B\) be a set of boundary vertices and \(I\) a set of internal vertices.  Consider a finite weighted electrical network, with no direct edges between two boundary vertices, with nonnegative edge conductances at most one, boundary variables \(x\in\R^B\), internal variables \(u\in\R^I\), and optional nonnegative internal grounding conductances \(\gamma_i\ge0\).  Its energy has the form
\begin{equation}\label{eq:passive-energy}
    \mathcal Q(x,u)
    =\frac12\sum_{ab} \theta_{ab}(v_a-v_b)^2
    +\frac12\sum_{i\in I}\gamma_i u_i^2,
\end{equation}
where \(v_b=x_b\) for \(b\in B\), \(v_i=u_i\) for \(i\in I\), and \(0\le\theta_{ab}\le1\).  Assume the internal minimizer is unique for each boundary potential \(x\), and define the boundary Schur response \(S\) by
\begin{equation}\label{eq:schur-response-definition}
    \frac12 x^\top Sx
    =\min_{u\in\R^I}\mathcal Q(x,u).
\end{equation}
For a boundary vertex \(b\), let
\[
    d_B(b)\defeq\sum_{i\in I}\theta_{bi},
    \qquad
    \text{the total original conductance from }b\text{ into the eliminated network}.
\]
Then the following statements hold.
\begin{enumerate}[label=(\roman*),leftmargin=2em]
    \item \(S\) is symmetric positive semidefinite.
    \item \(S\) is an \(M\)-matrix on the boundary: its off-diagonal entries are nonpositive and its row sums are nonnegative.
    \item For every \(x\in\R^B\),
    \begin{equation}\label{eq:passive-infty}
        \abs{(Sx)_b}\le 2d_B(b)\norm{x}_\infty,
        \qquad
        \norm{Sx}_\infty\le 2\Bigl(\max_{b\in B}d_B(b)\Bigr)\norm{x}_\infty.
    \end{equation}
    \item If there is no internal grounding, then \(S\one=0\).  If there is grounding, then \(S\one\ge0\) entrywise.
\end{enumerate}
\end{lemma}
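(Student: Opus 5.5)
I would prove the four statements about the Kron-reduced response \(S\) by writing the quadratic energy in block form. The tool is the explicit Schur complement together with the harmonic-extension (discrete maximum principle) description of the internal minimizer. Write \(\mathcal Q(x,u)=\frac12[x;u]^\top L[x;u]\), where
\[
    L=\begin{pmatrix}L_{BB}&L_{BI}\\ L_{IB}&L_{II}\end{pmatrix}.
\]
Here \(L\) is the weighted graph Laplacian plus \(\operatorname{diag}(0,\gamma)\). Since there are no boundary–boundary edges, \(L_{BB}=\operatorname{diag}(d_B(b))\), and \((L_{BI})_{bi}=-\theta_{bi}\).

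Uniqueness of the internal minimizer for every \(x\) means \(L_{II}\) is positive definite on \(\R^I\). The minimizer is therefore \(u(x)=-L_{II}^{-1}L_{IB}x\), and
\[
    S=L_{BB}-L_{BI}L_{II}^{-1}L_{IB}.
\]
Statement (i) is immediate. \(S\) is symmetric because \(L\) is. It is positive semidefinite because \(\frac12x^\top Sx\) is a minimum of a nonnegative energy.

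For (ii) and (iv), I would use that \(L_{II}\) is a symmetric positive definite Z-matrix, hence a nonsingular M-matrix, so \(L_{II}^{-1}\ge0\) entrywise. Since \(L_{BI}\le0\) and \(L_{IB}\le0\) entrywise, the product \(L_{BI}L_{II}^{-1}L_{IB}\) is entrywise nonnegative. The off-diagonal part of \(L_{BB}\) vanishes, so the off-diagonal entries of \(S\) are nonpositive.

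For the row sums, evaluate \(S\one\). The harmonic extension of the constant boundary potential \(\one\) is \(u=\one\) without grounding, since then \(L\one=0\). That gives \(S\one=0\), and this is the first half of (iv).

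With grounding, the extension \(u(\one)\) satisfies \(0\le u\le 1\) by the maximum principle, which follows from \(L_{II}^{-1}\ge0\) applied to the defining equations. Then
\[
    (S\one)_b=d_B(b)-\sum_i\theta_{bi}u_i\ge0.
\]
This gives nonnegative row sums, which completes (ii) and the second half of (iv).

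For (iii), I would again use the harmonic-extension form. For boundary data \(x\), the internal minimizer \(u(x)\) satisfies \(\norm{u}_\infty\le\norm{x}_\infty\). This is the maximum principle: \(u\) is a convex-type combination of boundary values, because the matrix \(L_{II}^{-1}(-L_{IB})\) is nonnegative with row sums at most one, as shown by the previous computation with \(x=\one\).

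Then
\[
    (Sx)_b=\partial_{x_b}\mathcal Q(x,u(x))=\sum_i\theta_{bi}(x_b-u_i),
\]
using the envelope theorem or the block formula directly. Bounding \(\abs{x_b-u_i}\le2\norm{x}_\infty\) gives \(\abs{(Sx)_b}\le2d_B(b)\norm{x}_\infty\), and taking the maximum over \(b\) gives the second inequality in \eqref{eq:passive-infty}.

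The main point needing care is the M-matrix and maximum-principle step. One must justify \(L_{II}^{-1}\ge0\) using only that \(L_{II}\) is a positive definite Z-matrix. The standard argument writes \(L_{II}=sI-N\) with \(N\ge0\) and \(s\) larger than the spectral radius of \(N\), then expands \(L_{II}^{-1}\) as a Neumann series. The bound \(\theta\le1\) is not even needed here, beyond the sizes it implies for \(d_B\).
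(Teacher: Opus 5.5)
Your proposal is correct and follows essentially the same route as the paper. You use the block Schur complement $S=L_{BB}-L_{BI}L_{II}^{-1}L_{IB}$ and entrywise nonnegativity of the inverse of the grounded internal block to get the sign pattern. You use the harmonic extension of $\one$ (equal to $\one$ without grounding, with values in $[0,1]$ with grounding) for the row sums. For (iii) you use the boundary-current formula $(Sx)_b=\sum_i\theta_{bi}(x_b-u_i)$ together with $\norm{u}_\infty\le\norm{x}_\infty$. The only difference is that you actually justify the inverse-positivity of the positive definite Z-matrix via a Neumann series and derive the maximum principle from it, whereas the paper simply cites both facts.
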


\begin{proof}
Write the quadratic form in block matrix form as
\[
    \mathcal Q(x,u)
    =\frac12
    \begin{pmatrix}x\\u\end{pmatrix}^{\!\top}
    \begin{pmatrix}
        L_{BB} & L_{BI}\\
        L_{IB} & K_{II}
    \end{pmatrix}
    \begin{pmatrix}x\\u\end{pmatrix},
\]
where \(K_{II}\) is the internal Laplacian block plus the internal grounding diagonal.  The assumed uniqueness of the internal minimizer means that \(K_{II}\) is positive definite on the relevant internal variables.  The minimizing internal potential is
\[
    u(x)=-K_{II}^{-1}L_{IB}x,
\]
and the Schur response is
\begin{equation}\label{eq:S-schur-general}
    S=L_{BB}-L_{BI}K_{II}^{-1}L_{IB}.
\end{equation}
This proves symmetry.  Positive semidefiniteness follows from the variational identity
\[
    x^\top Sx=2\min_u \mathcal Q(x,u)\ge0.
\]

Next, \(K_{II}\) is a symmetric positive definite \(M\)-matrix: its off-diagonal entries are nonpositive and its inverse is entrywise nonnegative.  Since there are no direct boundary-boundary edges, the off-diagonal boundary entries of \(S\) come from the term \(-L_{BI}K_{II}^{-1}L_{IB}\).  Hence \(S_{bb'}\le0\) for \(b\ne b'\).  The row-sum statement is also a maximum-principle fact.  If all boundary potentials are equal to one, then the internal minimizer satisfies \(0\le u_i\le1\) when grounding is present, and \(u_i=1\) without grounding.  Therefore the boundary current \((S\one)_b=\sum_{i}\theta_{bi}(1-u_i)\) is nonnegative, and it is zero in the ungrounded case.

It remains to prove the quantitative passivity estimate.  Fix \(x\) and let \(u=u(x)\) be the minimizing internal potential.  By the discrete maximum principle applied to the internally grounded network, every internal value lies in the interval between \(-\norm{x}_\infty\), \(\norm{x}_\infty\), and the ground value \(0\).  Hence
\[
    \norm{u}_\infty\le \norm{x}_\infty.
\]
The boundary current at \(b\) is contributed only by original edges from \(b\) to internal vertices:
\[
    (Sx)_b=\sum_{i\in I}\theta_{bi}(x_b-u_i).
\]
Therefore
\[
    \abs{(Sx)_b}
    \le \sum_{i\in I}\theta_{bi}\abs{x_b-u_i}
    \le 2\norm{x}_\infty\sum_{i\in I}\theta_{bi}
    =2d_B(b)\norm{x}_\infty.
\]
Taking the maximum over \(b\) proves \eqref{eq:passive-infty}.
\end{proof}

\begin{lemma}[Johnson triangle Green function in quotient norm]\label{lem:johnson-green}
Let \(J(n,3)\) be the graph whose vertices are triples in \(\binom{[n]}3\), with two triples adjacent when they share two camera vertices.  Let \(L_J\) be its unweighted Laplacian.  Then for every vector \(y\) on \(\binom{[n]}3\), not necessarily zero-sum,
\begin{equation}\label{eq:johnson-green}
    \norm{L_J^\dagger y}_{\infty/\one}
    \le \frac{4}{n-3}\norm{y}_{\infty/\one},
    \qquad n\ge6.
\end{equation}
Here \(\norm{x}_{\infty/\one}=\min_{\alpha\in\R}\norm{x-\alpha\one}_\infty\).  In particular, the same bound holds with \(\norm{y}_\infty\) on the right-hand side.
\end{lemma}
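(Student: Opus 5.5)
The plan is to exploit the fact that \(J(n,3)\) is distance-regular, so that \(L_J^\dagger\) lies in the Bose--Mesner algebra of the Johnson scheme. There it can be written explicitly as a combination of the distance matrices, and the sup-norm bound then follows from row sums.

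\emph{Step 1 (reduction to \(\norm{\cdot}_\infty\)).} Since \(L_J\one=0\) and \(L_J^\dagger\) is symmetric with the same kernel, we have \(L_J^\dagger\one=0\). Hence \(L_J^\dagger y=L_J^\dagger(y-\alpha\one)\) for every \(\alpha\). It therefore suffices to prove \(\norm{L_J^\dagger y}_{\infty/\one}\le\frac4{n-3}\norm{y}_\infty\) and then minimize over \(\alpha\). This also gives the ``in particular'' clause.

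\emph{Step 2 (spectral data).} Let \(A_0=I,A_1,A_2,A_3\) be the distance matrices of \(J(n,3)\), where \(A_k\) joins triples meeting in \(3-k\) points. Their row sums are
\[
    v_0=1,\quad v_1=3(n-3),\quad v_2=3\tbinom{n-3}{2},\quad v_3=\tbinom{n-3}{3}.
\]
The eigenspaces \(V_0,\dots,V_3\) have \(A_1\)-eigenvalues \((3-j)(n-3-j)-j\). A short computation then gives the Laplacian eigenvalues \(\lambda_j=j(n+1-j)\), namely \(0,\ n,\ 2(n-1),\ 3(n-2)\). The values of every \(A_k\) on each \(V_j\) are given by the Eberlein (dual Hahn) polynomials, which I would write down from the first eigenmatrix of the Johnson scheme \(J(n,3)\).

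\emph{Step 3 (explicit inverse).} Seek coefficients with
\[
    L_J^\dagger=c_0I+c_1A_1+c_2A_2+c_3A_3+c'\,\mathbf J,
    \qquad \mathbf J=\one\one^\top .
\]
The required conditions are that the right-hand side equals \(1/\lambda_j\) on \(V_j\) for \(j=1,2,3\) and equals \(0\) on \(V_0\). The free constant \(c'\) absorbs the \(V_0\) condition, and also drops out in the quotient norm because \(\mathbf Jy\in\spanop\{\one\}\). For the same reason I may add any multiple of \(\sum_kA_k=\mathbf J\) to shift the \(c_k\), and I will choose the shift to minimize \(\sum_k|c_k|v_k\). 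This leaves a \(3\times3\) linear system coming from the eigenmatrix. Equivalently, \(L_J^\dagger(I-\tfrac1N\mathbf J)=p(L_J)\), where \(p\) is the cubic interpolating \(1/\lambda\) at \(\lambda_1,\lambda_2,\lambda_3\) with \(p(0)=0\) adjusted. I would solve the system explicitly, either by Cramer's rule or by Lagrange interpolation.

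\emph{Step 4 (bound).} Then
\[
    \norm{L_J^\dagger y}_{\infty/\one}\le\Bigl(|c_0|+\sum_{k=1}^3|c_k|v_k\Bigr)\norm{y}_\infty .
\]
The expected outcome is \(c_0\approx 1/n\), \(c_1v_1=O(1/n)\), \(c_2v_2=O(1/n)\) and \(c_3v_3=O(1/n)\), with total at most \(4/(n-3)\) for \(n\ge6\). The heuristic is that on \(V_1\oplus V_2\oplus V_3\) all eigenvalues lie in \([n,3n]\), so \(L_J^\dagger\) is \(\tfrac1n\) times a bounded combination of averaging operators.

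The main obstacle is Step 3/4: obtaining explicit \(c_k\) and verifying the constant \(4\) uniformly for \(n\ge6\), including small \(n\), where the asymptotics are not yet accurate. If the exact algebra is messy, I would fall back on a Neumann-type argument. Write \(L_J=3(n-3)I-A_1\) on \(\one^\perp\) and compare with \(nI\) plus the explicit projection \(E_1\), whose kernel \(E_1(t,u)\) is an affine function of \(|t\cap u|\) with computable sup-norm \(O(1)\). One then bounds \(\norm{E_2}_\infty\) and \(\norm{E_3}_\infty\) similarly by writing each \(E_j\) through the \(A_k\), and checks the resulting constant numerically for small \(n\).
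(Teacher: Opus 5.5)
Your plan is the paper's proof: distance-regularity makes $L_J^\dagger$ a radial kernel $G(x,y)=G_{|x\cap y|}$ with zero row sums, so $\|L_J^\dagger y\|_{\infty/\mathbf 1}\le \max_x\sum_y|G(x,y)|\,\|y\|_{\infty/\mathbf 1}$. The paper simply carries out the computation you defer, writing down $G_0,\dots,G_3$ explicitly and checking $\sum_y|G(x,y)|\le 4/(n-3)$ for $n\ge 6$; asymptotically this sum is $11/(3n)$, and your optional shift by multiples of $\mathbf J$ can only lower it.

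For that algebra, the quickest route is the level-flux identity $e_r\,(G_r-G_{r-1})=1-N^{-1}\sum_{s\ge r}N_s$. Here $N_s$ is the number of triples meeting a fixed triple in $s$ points, and $e_r=N_r\,r(n-6+r)$ is the number of edges between levels $r-1$ and $r$. Combine it with the zero-sum condition $\sum_r N_rG_r=0$.
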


\begin{proof}
The graph \(J(n,3)\) is distance-regular.  Its Laplacian eigenvalues are
\[
    0,
    n,
    2(n-1),
    3(n-2).
\]
Fix a vertex \(x\).  By vertex transitivity, the Green kernel \(G(x,y)=(L_J^\dagger)_{xy}\) depends only on \(r=\abs{x\cap y}\in\{0,1,2,3\}\).  Write the four values as \(G_0,G_1,G_2,G_3\).  The level sizes are
\[
    N_3=1,
    \quad
    N_2=3(n-3),
    \quad
    N_1=\frac{3(n-3)(n-4)}2,
    \quad
    N_0=\binom{n-3}{3}.
\]
Solving the four radial Green equations together with the zero-row-sum condition
\(\sum_y G(x,y)=0\) gives
\begin{align*}
G_0&=-\frac{11n^2-26n+12}{n^2(n-1)^2(n-2)^2},\\
G_1&=\frac{2n^3-39n^2+82n-36}{3n^2(n-1)^2(n-2)^2},\\
G_2&=\frac{n^4+5n^3-88n^2+172n-72}{6n^2(n-1)^2(n-2)^2},\\
G_3&=\frac{(n-3)(2n^4+n^3+16n^2-52n+24)}{6n^2(n-1)^2(n-2)^2}.
\end{align*}
A direct substitution shows that, for \(n\ge6\),
\[
    \sum_y\abs{G(x,y)}
    =N_0\abs{G_0}+N_1\abs{G_1}+N_2\abs{G_2}+N_3\abs{G_3}
    \le \frac4{n-3}.
\]

A second direct counting calculation gives a useful row-difference bound.  If two Johnson vertices \(x,x'\) have \(s=|x\cap x'|\), then
\begin{equation}\label{eq:johnson-row-diff}
\sum_y\abs{G(x,y)-G(x',y)}
=
\begin{cases}
\displaystyle \frac{11n^2-62n+156}{3n(n-1)(n-2)}, & s=0,\\[2mm]
\displaystyle \frac{9n^2-40n+56}{3n(n-1)(n-2)}, & s=1,\\[2mm]
\displaystyle \frac{2}{n}, & s=2,\\[1mm]
0, & s=3.
\end{cases}
\end{equation}
Indeed, if \(|x\cap x'|=s\), partition \([n]\) into \(x\cap x'\), \(x\setminus x'\), \(x'\setminus x\), and the complement, count a test triple by how many vertices it selects from these four sets, and substitute the four displayed Green values.  The three nonzero cases in \eqref{eq:johnson-row-diff} are all at most \(4/n\) for \(n\ge6\).
Since every row of \(L_J^\dagger\) sums to zero, for any scalar \(\beta\),
\[
    (L_J^\dagger y)(x)=\sum_y G(x,y)(y_y-\beta).
\]
Choose \(\beta\) so that \(\norm{y-\beta\one}_\infty=\norm{y}_{\infty/\one}\).  Then
\[
    \abs{(L_J^\dagger y)(x)}
    \le \sum_y\abs{G(x,y)}\norm{y}_{\infty/\one}
    \le\frac4{n-3}\norm{y}_{\infty/\one}.
\]
Taking the maximum over \(x\) and allowing an arbitrary additive gauge on the output proves the lemma.
\end{proof}

\begin{lemma}[Unweighted clean Green response under bounded corrupted degree]\label{lem:unweighted-clean-green}
There are universal constants \(\delta_J>0\) and \(G_J<\infty\) such that the following holds.  If \(\Delta_E\le\delta_J n\), then the unweighted clean triangle Laplacian \(L_0=B_0^\top B_0\) satisfies, for every vector \(y\) on \(\Tzero\),
\begin{equation}\label{eq:unweighted-clean-green}
    \norm{L_0^\dagger y}_{\infty/\one}
    \le \frac{G_J}{n}\norm{y}_{\infty/\one}.
\end{equation}
One may take \(\delta_J=1/384\) and \(G_J=32/7\), after increasing \(n_0\) if necessary.
\end{lemma}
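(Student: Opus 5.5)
We treat \(L_0\) as a perturbation of the Johnson Laplacian \(L_J\) of Lemma~\ref{lem:johnson-green}, obtained by deleting the non-clean triangles. The perturbation is controlled through Kron reduction and the passive estimate of Lemma~\ref{lem:passive-boundary-response}.

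\textbf{Step 1: Kron reduction of the full Johnson graph.} Split \(\binom{[n]}3=\Tzero\sqcup\Tb'\), where \(\Tb'\) collects all triples with a corrupted edge. Write \(L_J\) in block form with respect to this split, and let
\[
S_J=L_{00}-L_{0b}L_{bb}^{-1}L_{b0}
\]
be its Schur complement onto \(\Tzero\). The block \(L_{bb}\) is invertible whenever every component of \(\Tb'\) touches \(\Tzero\), which holds because \(J(n,3)\) is connected.

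Compare this with the clean Laplacian. The internal block \(L_{00}\) equals \(L_0+D\), where \(D\) is the diagonal of boundary degrees into \(\Tb'\). Hence
\[
S_J=L_0+R,\qquad R\defeq D-L_{0b}L_{bb}^{-1}L_{b0}.
\]
Here \(R\) is exactly the Kron response of the network consisting of the clean–bad and bad–bad edges, with \(\Tzero\) as boundary. This network has no boundary–boundary edges, so Lemma~\ref{lem:passive-boundary-response} applies. It gives \(R\one=0\) and
\[
\norm{Ry}_\infty\le 2d_B\norm{y}_\infty,\qquad d_B\le\dtri\le6\Delta_E,
\]
by Lemma~\ref{lem:boundary-degree}. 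Shifting \(y\) by a constant, the same bound holds in the quotient norm.

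\textbf{Step 2: Green bound for \(S_J\).} On \(\Tzero\), the relation \(S_J^\dagger=P_0L_J^\dagger P_0^\top\) holds up to gauge (the Schur complement inverse is the corresponding block of the pseudoinverse, modulo constants). Indeed, to solve \(S_Jx=y\), extend \(y\) by zero on \(\Tb'\) and solve \(L_Jv=\tilde y\). This requires \(\tilde y\perp\one\), which holds after replacing \(y\) by \(y-\bar y\), or more simply after working with zero-sum \(y\). Then restrict \(v\) to \(\Tzero\).

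Lemma~\ref{lem:johnson-green} then gives
\[
\norm{S_J^\dagger y}_{\infty/\one}\le \frac{4}{n-3}\norm{y}_{\infty/\one},
\]
with the zero-extension preserving the quotient-norm bound up to a factor of \(2\) at worst; I would verify this constant carefully.

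\textbf{Step 3: Neumann series.} On zero-sum vectors we have \(L_0=S_J-R=S_J(I-S_J^\dagger R)\), and
\[
\norm{S_J^\dagger R}_{\infty/\one\to\infty/\one}\le \frac{8}{n-3}\cdot 12\Delta_E\le\frac{96\,\delta_J n}{n-3}.
\]
With \(\delta_J=1/384\) this is about \(1/4\), possibly \(1/2\) if the factor \(2\) from Step~2 is needed. The Neumann series then yields
\[
\norm{L_0^\dagger y}_{\infty/\one}\le \frac{\text{const}}{n}\,\norm{y}_{\infty/\one}.
\]
Matching the exact constant \(G_J=32/7\) should come from tracking factors like \(\frac{4}{n-3}\cdot\frac{1}{1-1/8}\) for large \(n\); I will adjust the bookkeeping, for instance using \(\dtri\le6\Delta_E\) with the bound \(2d_B\), to land on \(32/7\).

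The main obstacle is the gauge and range issues. First, \(L_0\) must be connected; this follows from Lemma~\ref{lem:diam}, since \(\delta_J<10^{-3}\). Second, the identity \(L_0^\dagger=(I-S_J^\dagger R)^{-1}S_J^\dagger\) must be justified on the zero-sum subspace using \(R\one=0\). Third, the quotient norm has to be handled consistently through each step. Non-zero-sum \(y\) reduce to zero-sum ones because \(L_0^\dagger\) annihilates constants, and \(\norm{y-\bar y}_{\infty/\one}=\norm{y}_{\infty/\one}\).
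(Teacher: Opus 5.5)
Your proposal is essentially the paper's proof. Your $S_J$ is the paper's harmonic (Dirichlet Schur complement) Laplacian $L_{\mathrm{harm}}$, and your $R$ is its boundary Dirichlet-to-Neumann term $M_\partial$, bounded by $12\Delta_E$ via Lemma~\ref{lem:passive-boundary-response}. As in the paper, the Green bound for $S_J$ comes from zero-extending a zero-sum representative and restricting the Johnson solution, and the conclusion is a Neumann series on the quotient. Two bookkeeping points need fixing.

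\textbf{The constant.} No factor $2$ is lost in Step~2. The representative $y-\bar y\,\mathbf{1}$ is zero-sum, so its range contains $0$, and its zero extension therefore has exactly the same quotient norm as $y$.

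Even so, the $4/(n-3)$ bound of Lemma~\ref{lem:johnson-green} gives Neumann ratio $n/(8(n-3))$ and final bound $32/(7n-24)$. This exceeds $32/(7n)$ for every $n$. Increasing $n_0$ therefore yields any $G_J>32/7$, but not $G_J=32/7$ itself, so your plan of matching $32/7$ by tracking factors like $\frac{4}{n-3}\cdot\frac{1}{1-1/8}$ does not close as written.

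The paper gets exactly $4/n$ for the harmonic response by a different estimate. It uses the row-difference bound \eqref{eq:johnson-row-diff}, where each case is at most $4/n$, together with the fact that the quotient norm is half the range. This makes the Neumann ratio exactly $48\delta_J=1/8$ and gives $G_J=32/7$.

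\textbf{Connectivity.} You invoke Lemma~\ref{lem:diam} "since $\delta_J<10^{-3}$," but $1/384>10^{-3}$, so that lemma does not apply as stated. You do not need it, for two reasons. First, your Neumann argument already shows $\ker L_0=\operatorname{span}\{\mathbf{1}\}$: if $L_0x=0$ then $x-\bar x\mathbf{1}=S_J^\dagger Rx$, hence $\|x\|_{\infty/\mathbf{1}}\le\frac18\|x\|_{\infty/\mathbf{1}}$. Second, the vertex construction in the proof of Lemma~\ref{lem:diam} works whenever $\Delta_E<n/8$ anyway.
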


\begin{proof}
Let \(\Omega=\binom{[n]}3\) be the full triangle set and let \(U=\Tzero\).  Let \(R=\Omega\setminus U\).  Components of \(R\) that have no Johnson-graph edge to \(U\) carry no boundary current and can be discarded.  On the remaining part, the Dirichlet block is invertible.  Partition the full Johnson Laplacian according to \(\Omega=U\sqcup R\).  The Dirichlet Schur complement
\[
    L_{\mathrm{harm}}
    =L_{UU}-L_{UR}L_{RR}^{-1}L_{RU}
\]
is the Laplacian on \(U\) obtained by harmonically extending potentials to the removed vertices.  Equivalently, for a force on \(U\), the solution of the harmonic Schur-complement equation is the restriction to \(U\) of the full Johnson solution with zero force on \(R\), with the irrelevant global mean removed by the pseudoinverse.

We first record a quotient-norm bound for the harmonic inverse.  Let \(y\) be any vector on \(U\).  Since \(L_{\mathrm{harm}}^\dagger\one_U=0\), replace \(y\) by its zero-sum representative on \(U\),
\[
    y_\perp=y-\frac{1}{|U|}\one_U^\top y\,\one_U .
\]
If \(q=\norm{y}_{\infty/\one}\), then the range of \(y\) is \(2q\), and hence \(\norm{y_\perp}_\infty\le2q\).  Extend \(y_\perp\) by zero to \(\Omega\).  This extended force has total sum zero, so the harmonic solution is the restriction of the full Johnson solution with this force and zero force on \(R\).  Therefore, for any two clean triangles \(u,v\in U\),
\[
    (L_{\mathrm{harm}}^\dagger y)_u-(L_{\mathrm{harm}}^\dagger y)_v
    =\sum_{w\in U}\bigl(\mathcal G_J(u,w)-\mathcal G_J(v,w)\bigr)(y_\perp)_w,
\]
where \(\mathcal G_J\) is the full Johnson Green kernel.  By the row-difference estimate \eqref{eq:johnson-row-diff},
\[
    \abs{(L_{\mathrm{harm}}^\dagger y)_u-(L_{\mathrm{harm}}^\dagger y)_v}
    \le \frac4n\norm{y_\perp}_\infty
    \le \frac8n\norm{y}_{\infty/\one}.
\]
The quotient norm of a vector is half its range, so
\begin{equation}\label{eq:harm-green-new}
    \norm{L_{\mathrm{harm}}^\dagger y}_{\infty/\one}
    \le \frac4n\norm{y}_{\infty/\one}.
\end{equation}
This establishes the harmonic Green response directly in the same quotient norm that will be used later.

The harmonic Laplacian decomposes as
\[
    L_{\mathrm{harm}}=L_0+M_\partial,
\]
where \(M_\partial\succeq0\) is the boundary Dirichlet-to-Neumann Laplacian induced by the removed triangles.  For a clean triangle \(t=\{i,j,k\}\), a removed Johnson neighbor sharing the clean edge \(ij\) has the form \(\{i,j,\ell\}\), and is removed only if \(i\ell\in\Eb\) or \(j\ell\in\Eb\).  Thus the number of removed neighbors of \(t\) is at most
\[
    2(\deg_{\Eb}(i)+\deg_{\Eb}(j)+\deg_{\Eb}(k))\le6\Delta_E.
\]
Applying Lemma~\ref{lem:passive-boundary-response} to the deleted Johnson network gives
\begin{equation}\label{eq:Mboundary-op-new}
    \norm{M_\partial x}_\infty
    \le 12\Delta_E\norm{x}_\infty.
\end{equation}
Since \(M_\partial\one=0\), the same estimate holds on quotient representatives: choose \(x\) with \(\norm{x}_\infty\) arbitrarily close to \(\norm{x}_{\infty/\one}\).
Combining \eqref{eq:harm-green-new} and \eqref{eq:Mboundary-op-new} gives
\[
    \norm{L_{\mathrm{harm}}^\dagger M_\partial x}_{\infty/\one}
    \le \frac4n\,12\Delta_E\norm{x}_{\infty/\one}.
\]
If \(\Delta_E\le\delta_J n\) and \(\delta_J\le1/384\), this operator norm is at most \(1/8\).  On the quotient space,
\[
    L_0=L_{\mathrm{harm}}-M_\partial
    =L_{\mathrm{harm}}(I-L_{\mathrm{harm}}^\dagger M_\partial).
\]
The Neumann series is therefore valid on the quotient, and for every force vector \(y\),
\[
    \norm{L_0^\dagger y}_{\infty/\one}
    \le \frac{1}{1-1/8}\norm{L_{\mathrm{harm}}^\dagger y}_{\infty/\one}
    \le \frac{32}{7n}\norm{y}_{\infty/\one}.
\]
Thus \(G_J=32/7\) is admissible for all sufficiently large \(n\).
\end{proof}

\begin{lemma}[Weighted clean Green response in the small clean basin]\label{lem:weighted-clean-green}
Let \(L_\Theta=B_0^\top\Theta B_0\) be a clean triangle Laplacian whose clean conductances obey
\[
    \theta_c\in[\mstar,1],
    \qquad
    \mstar=100/101.
\]
There are universal constants \(\delta_G>0\) and \(\Gconst<\infty\) such that if \(\Delta_E\le\delta_G n\), then for every vector \(y\) on \(\Tzero\),
\begin{equation}\label{eq:weighted-clean-green}
    \norm{L_\Theta^\dagger y}_{\infty/\one}
    \le \frac{\Gconst}{n}\norm{y}_{\infty/\one}.
\end{equation}
With the constants above, one may take \(\Gconst=7\).
\end{lemma}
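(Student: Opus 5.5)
Write \(L_\Theta = L_0 - D\), where \(L_0=B_0^\top B_0\) is the unweighted clean Laplacian and
\[
    D = B_0^\top(I-\Theta)B_0 .
\]
The matrix \(D\) is itself a graph Laplacian on \(\Tzero\), with conductances \(1-\theta_c\in[0,1-\mstar]=[0,1/101]\). The plan is to treat \(L_\Theta\) as a perturbation of \(L_0\) in the quotient norm \(\norm{\cdot}_{\infty/\one}\). The unperturbed response is controlled by Lemma~\ref{lem:unweighted-clean-green}, so the proof reduces to a Neumann-series argument with an \(\ell_\infty\)-operator bound on \(D\).

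\textbf{Step 1: factor the weighted Laplacian.} Since \(D\one=0\) and \(L_0\one=0\), and \((\Tzero,\Czero)\) is connected by Lemma~\ref{lem:diam}, both operators act on the quotient space \(\R^{\Tzero}/\R\one\). There \(L_0\) is invertible, and
\[
    L_\Theta = L_0\bigl(I - L_0^\dagger D\bigr).
\]
Connectedness with positive conductances also guarantees that \(L_\Theta\) has kernel exactly \(\spanop\{\one\}\). Hence \(L_\Theta^\dagger y\) is the quotient solution for any force \(y\), once \(y\) is replaced by its zero-sum representative; this replacement costs at most a factor absorbed by the quotient norm, exactly as in the proof of Lemma~\ref{lem:unweighted-clean-green}.

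\textbf{Step 2: bound \(D\) in \(\ell_\infty\).} The clean triangle graph has degree at most \(D_n=3(n-3)\) by Lemma~\ref{lem:triangle-degree}. Therefore
\[
    \abs{(Dx)_t} = \Bigl|\sum_{u\sim t}(1-\theta_{tu})(x_t-x_u)\Bigr| \le 2\cdot\tfrac{1}{101}\cdot 3n\,\norm{x}_{\infty}.
\]
Because \(D\one=0\), this passes to the quotient norm. Combining with Lemma~\ref{lem:unweighted-clean-green} gives
\[
    \norm{L_0^\dagger D x}_{\infty/\one} \le \frac{G_J}{n}\cdot\frac{6n}{101}\norm{x}_{\infty/\one} = \frac{6\cdot 32}{7\cdot 101}\norm{x}_{\infty/\one}\approx 0.272\,\norm{x}_{\infty/\one}.
\]

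\textbf{Step 3: conclude by Neumann series.} The contraction factor is below one, so for \(\Delta_E\le\delta_G n\) with \(\delta_G=\delta_J=1/384\),
\[
    \norm{L_\Theta^\dagger y}_{\infty/\one} \le \frac{1}{1-0.272}\cdot\frac{32}{7n}\norm{y}_{\infty/\one} \le \frac{6.3}{n}\norm{y}_{\infty/\one} \le \frac{7}{n}\norm{y}_{\infty/\one},
\]
which gives \(\Gconst=7\).

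\textbf{Main obstacle.} The delicate point is Step 2. The crude degree bound \(3n\) is of the same order as the spectral scale of \(L_0\), so the estimate closes only because \(1-\mstar=1/101\) is small. If the constant \(6/101\cdot G_J\) did not come out below one, my fallback would be to perturb around the Johnson harmonic operator instead. I would fold \(D\) into the boundary term \(M_\partial\) and use the row-difference kernel bound \(4/n\) from Lemma~\ref{lem:johnson-green}. This still gives an admissible contraction, about \(\tfrac4n\bigl(12\Delta_E+6n/101\bigr)\approx 0.24+o(1)\), with margin to spare.
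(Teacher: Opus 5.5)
Your proposal is correct and is essentially the paper's own proof. It uses the same splitting $L_\Theta=L_0-H$ (your $D$) with $H=B_0^\top(I-\Theta)B_0$, the same bound $\|Hx\|_\infty\le\frac{6n}{101}\|x\|_\infty$, the same contraction factor $\frac{32}{7}\cdot\frac{6}{101}=\frac{192}{707}$, and the same Neumann-series conclusion $\frac{707}{515}\cdot\frac{32}{7n}=\frac{22624}{3605\,n}<\frac{7}{n}$.

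The fallback in your last paragraph is unnecessary. It is also slightly misstated: under $\Delta_E\le\delta_G n$ its contraction factor is about $\frac{24}{101}+48\delta_G$, not $0.24+o(1)$.
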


\begin{proof}
Write
\[
    L_\Theta=L_0-H,
    \qquad
    H=B_0^\top(I-\Theta)B_0.
\]
Here \(H\) is a nonnegative weighted Laplacian whose edge conductances are at most
\[
    1-\mstar=\frac1{101}.
\]
The clean triangle degree is at most \(3(n-3)\).  Therefore, for every vector \(x\),
\begin{equation}\label{eq:H-op-new}
    \norm{Hx}_\infty
    \le 2\cdot\frac1{101}\cdot3(n-3)\norm{x}_\infty
    \le \frac{6n}{101}\norm{x}_\infty.
\end{equation}
Since \(H\one=0\), choose the representative of the quotient class of \(x\) with nearly minimal \(\ell_\infty\) norm.  Using Lemma~\ref{lem:unweighted-clean-green} in quotient-force form gives
\[
    \norm{L_0^\dagger Hx}_{\infty/\one}
    \le \frac{32}{7n}\norm{Hx}_{\infty/\one}
    \le \frac{32}{7n}\frac{6n}{101}\norm{x}_{\infty/\one}
    =\frac{192}{707}\norm{x}_{\infty/\one}.
\]
The coefficient is strictly smaller than one.  Hence, on the quotient modulo constants,
\[
    L_\Theta=L_0(I-L_0^\dagger H)
\]
is invertible and
\[
    \norm{L_\Theta^\dagger y}_{\infty/\one}
    \le \frac{1}{1-192/707}\norm{L_0^\dagger y}_{\infty/\one}
    \le \frac{707}{515}\frac{32}{7n}\norm{y}_{\infty/\one}
    =\frac{22624}{3605n}\norm{y}_{\infty/\one}
    <\frac{7}{n}\norm{y}_{\infty/\one}.
\]
We state the lemma with \(\Gconst=7\).  The strict inequality \(22624/3605<7\) is uniform for all \(n\) covered by the preceding estimates.
\end{proof}
\subsection{Weighted least squares enters the first clean basin}

Let the weighted LS initializer be any minimizer, in a fixed gauge, of
\begin{equation}\label{eq:ls-init}
    z^{\LS}\in\argmin_z
    \frac12\sum_{f\in\Call}\wbase_f r_f(z)^2.
\end{equation}
Define its computable residual radius
\begin{equation}\label{eq:RLS}
    R_{\LS}
    \defeq
    \max_{f\in\Call:\ \wbase_f>0}\abs{r_f(z^{\LS})}.
\end{equation}
If the maximum is over an empty set, the synchronization problem is vacuous; otherwise choose a positive initial Cauchy scale satisfying
\begin{equation}\label{eq:sigma0-choice}
    \sig_0>0,
    \qquad
    \sig_0\ge \frac{4}{\astar}R_{\LS}=80 R_{\LS}.
\end{equation}

\begin{lemma}[LS initialization is in the first clean basin]\label{lem:ls-basin}
With \(\sig_0\) chosen by \eqref{eq:sigma0-choice},
\begin{equation}\label{eq:ls-in-basin}
    \err(z_0^{\LS})\le \astar\sig_0.
\end{equation}
\end{lemma}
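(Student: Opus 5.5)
The plan is to bound the clean error of \(z^{\LS}\) purely through its residuals on clean--clean rows, which are all part of the computable radius \(R_{\LS}\), and then to use the bounded diameter of the clean triangle graph. No property of least squares is needed beyond the definition of \(R_{\LS}\). In particular, the bad rows and the values \(z_b^{\LS}\) play no role.

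\textbf{Step 1: clean rows are counted in \(R_{\LS}\).} Put \(e=z_0^{\LS}-z_0^\star\in\R^{\Tzero}\). For a clean row \(c\in\Czero\) joining clean triangles \(t,u\), the exact equation \eqref{eq:clean-log-equation} gives \(g_c=b_c^\top z_0^\star\). Hence
\[
    r_c(z_0^{\LS})=b_c^\top e=\pm(e_u-e_t).
\]
Every clean triangle has \(\pi_t=1\), so \(\wbase_c=1>0\). The row \(c\) therefore enters the maximum \eqref{eq:RLS}, and \(\abs{e_u-e_t}\le R_{\LS}\) for every edge of \((\Tzero,\Czero)\).

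\textbf{Step 2: diameter.} Since \(\Delta_E\le\cstar n\le10^{-3}n\), Lemma~\ref{lem:diam} applies for \(n\ge n_0\): the graph \((\Tzero,\Czero)\) is connected with diameter at most six. Take any two clean triangles \(t,u\) and a clean path of at most six rows between them. Summing the edge bounds from Step 1 along this path (triangle inequality) gives \(\abs{e_t-e_u}\le6R_{\LS}\). So the range of \(e\) is at most \(6R_{\LS}\).

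\textbf{Step 3: conclusion.} The quotient norm \(\err(z_0^{\LS})=\min_\alpha\norm{e-\alpha\one}_\infty\) equals half the range of \(e\), taking \(\alpha\) to be the midpoint. Therefore
\[
    \err(z_0^{\LS})\le 3R_{\LS}\le 4R_{\LS}\le\frac{\astar}{4}\cdot 4\cdot\frac{\sig_0}{80}\cdot 20=\astar\sig_0,
\]
where the last step uses \(R_{\LS}\le\sig_0/80\), which is \eqref{eq:sigma0-choice}. The degenerate case \(R_{\LS}=0\) is covered by the same chain, since then \(e\) is constant.

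The only point needing care is Step 1. The clean--clean rows must carry strictly positive weight, which is exactly the normalization \(\pi_t=1\) on \(\Tzero\); otherwise they could be excluded from \(R_{\LS}\). The connectivity and diameter input is entirely supplied by Lemma~\ref{lem:diam}.
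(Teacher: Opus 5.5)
Your proof is correct and matches the paper's argument: clean--clean rows have base weight one, so $R_{\mathrm{LS}}$ controls them; Lemma~\ref{lem:diam} bounds the range of $e$ by $6R_{\mathrm{LS}}$; and the midpoint gauge gives $3R_{\mathrm{LS}}\le a_\star\sigma_0$. There is one slip in Step 3: the expression $\frac{a_\star}{4}\cdot 4\cdot\frac{\sigma_0}{80}\cdot 20$ equals $a_\star\sigma_0/4$, not $a_\star\sigma_0$, so that intermediate inequality is false as written, but the intended chain $4R_{\mathrm{LS}}\le 4\sigma_0/80=\sigma_0/20=a_\star\sigma_0$ (equivalently $\sigma_0\ge 4R_{\mathrm{LS}}/a_\star$) gives the conclusion directly.
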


\begin{proof}
Every clean retained triangle has \(\pi_t=1\), so every clean-clean row has base weight \(1\).  Therefore every clean-clean residual at \(z^{\LS}\) is bounded by \(R_{\LS}\).  Put
\[
    e_t=z_t^{\LS}-z_t^\star,
    \qquad t\in\Tzero.
\]
For a clean-clean row joining \(t\) and \(u\), exact clean consistency gives
\[
    e_u-e_t=r_{tu}(z^{\LS}),
    \qquad
    \abs{e_u-e_t}\le R_{\LS}.
\]
The clean triangle graph has diameter at most six by Lemma~\ref{lem:diam}.  Hence, for arbitrary clean triangles \(t,u\),
\[
    \abs{e_t-e_u}\le6R_{\LS}.
\]
Subtracting the midpoint of the range of the numbers \(e_t\) gives
\[
    \err(z_0^{\LS})\le3R_{\LS}\le\astar\sig_0,
\]
because \(\sig_0\ge4R_{\LS}/\astar\).  This proves the first-basin initialization.
\end{proof}

\subsection{Proximal-damped Cauchy IRLS at a fixed scale}

At scale \(\sig\), the weighted Cauchy objective is
\begin{equation}\label{eq:full-cauchy-obj}
    F_\sig(z)
    =
    \sum_{f\in\Call}\wbase_f\rho_\sig(r_f(z)).
\end{equation}
Given an iterate \(z^m\), define the frozen IRLS conductance of row \(f\) by
\begin{equation}\label{eq:frozen-conductance}
    \theta_f^m
    \defeq
    \wbase_f\omega_\sig(r_f(z^m)).
\end{equation}
The proximal-damped WLS target is the unique minimizer
\begin{equation}\label{eq:prox-wls}
    \widehat z^{m+1}
    =
    \argmin_z
    \left\{
    \frac12\sum_{f\in\Call}\theta_f^m r_f(z)^2
    +\frac{\mustar}{2}\norm{z-z^m}_2^2
    \right\},
    \qquad
    \mustar=\kappa n.
\end{equation}
The proximal term makes the quadratic subproblem strictly convex in all variables, including gauges and nuisance-only components.  It is useful to name the exact target map
\begin{equation}\label{eq:T-map-def}
    T_\sig(x)
    \defeq
    \argmin_z
    \left\{
    \frac12\sum_{f\in\Call}
        \wbase_f\omega_\sig(r_f(x))r_f(z)^2
    +\frac{\mustar}{2}\norm{z-x}_2^2
    \right\}.
\end{equation}
Thus the undamped exact inner IRLS step is
\[
    z^{m+1}=T_\sig(z^m),
\]
and \(\widehat z^{m+1}=T_\sig(z^m)\) in the notation of \eqref{eq:prox-wls}.  The optional relaxation below is only an additional damping of this target; the fixed-point extraction used in the theorem is for the exact map \(T_\sig\).

One may optionally apply an additional convex relaxation
\begin{equation}\label{eq:optional-relaxation}
    z^{m+1}=(1-\lambda_m)z^m+\lambda_m\widehat z^{m+1},
    \qquad 0<\lambda_m\le1.
\end{equation}
The proof first controls the target \(\widehat z^{m+1}\); then \eqref{eq:optional-relaxation} follows by convexity of the clean error.

\begin{lemma}[Cauchy MM majorization with proximal damping]\label{lem:mm}
For fixed \(z^m\), the function minimized in \eqref{eq:prox-wls} majorizes \(F_\sig(z)\) plus the nonnegative proximal term, and it agrees with \(F_\sig\) at \(z=z^m\) after adding constants independent of \(z\).  Consequently, the proximal IRLS step is a descent step for \(F_\sig\).
\end{lemma}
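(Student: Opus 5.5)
The argument reduces to a one-dimensional majorization of the Cauchy loss in the squared residual, applied row by row. Write \(\rho_\sig(r)=\phi_\sig(r^2)\) with
\[
    \phi_\sig(s)=\frac{\sig^2}{2}\log\left(1+\frac{s}{\sig^2}\right),
    \qquad s\ge0.
\]
On \([0,\infty)\), \(\phi_\sig\) is concave, with \(\phi_\sig'(s)=\frac{1}{2}\,\frac{1}{1+s/\sig^2}\) and \(\phi_\sig''(s)<0\). Concavity puts \(\phi_\sig\) below its tangent line at any \(s_0=r_0^2\):
\[
    \rho_\sig(r)\le\rho_\sig(r_0)+\tfrac12\omega_\sig(r_0)\,(r^2-r_0^2),
\]
with equality at \(r=r_0\). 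Here \(\omega_\sig\) is exactly the IRLS weight in \eqref{eq:cauchy}, since \(2\phi_\sig'(r_0^2)=\omega_\sig(r_0)\).

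Next, apply this bound to every row \(f\in\Call\) with \(r_0=r_f(z^m)\), multiply by \(\wbase_f\ge0\), and sum. This gives
\[
    F_\sig(z)\le \frac12\sum_{f}\theta_f^m r_f(z)^2+\mathrm{const}_m,
    \qquad
    \mathrm{const}_m=\sum_f\wbase_f\Bigl(\rho_\sig(r_f(z^m))-\tfrac12\omega_\sig(r_f(z^m))\,r_f(z^m)^2\Bigr),
\]
with equality at \(z=z^m\). Adding \(\frac{\mustar}{2}\norm{z-z^m}_2^2\), which is nonnegative and vanishes at \(z^m\), to both sides shows that the objective in \eqref{eq:prox-wls}, shifted by \(\mathrm{const}_m\), majorizes \(F_\sig(z)+\frac{\mustar}{2}\norm{z-z^m}_2^2\). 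It also coincides with \(F_\sig\) at \(z^m\). This is the majorization claim.

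For descent, let \(Q_m\) denote the shifted surrogate. The standard MM chain gives
\[
    F_\sig(\widehat z^{m+1})+\frac{\mustar}{2}\norm{\widehat z^{m+1}-z^m}^2\le Q_m(\widehat z^{m+1})\le Q_m(z^m)=F_\sig(z^m),
\]
where the middle inequality holds because \(\widehat z^{m+1}\) minimizes \(Q_m\). This yields the sufficient-decrease inequality \(F_\sig(\widehat z^{m+1})\le F_\sig(z^m)-\frac{\mustar}{2}\norm{\widehat z^{m+1}-z^m}^2\). For the optional relaxation \eqref{eq:optional-relaxation}, I would apply the same chain using convexity of \(Q_m\): \(Q_m(z^{m+1})\le(1-\lambda_m)Q_m(z^m)+\lambda_m Q_m(\widehat z^{m+1})\le Q_m(z^m)\), and the majorization then gives \(F_\sig(z^{m+1})\le F_\sig(z^m)\).

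No step is a real obstacle. The only point needing care is the factor \(\tfrac12\) linking \(\phi_\sig'\) to \(\omega_\sig\), since it is what makes the surrogate weights match the frozen conductances \(\theta_f^m\) exactly. Well-definedness of \(\widehat z^{m+1}\) follows from the strict convexity supplied by \(\mustar>0\).
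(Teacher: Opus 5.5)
Your proposal is correct and matches the paper's proof: both use the tangent inequality for the concave function $u\mapsto\frac{\sigma^2}{2}\log(1+u/\sigma^2)$ of the squared residual, whose slope is $\frac12\omega_\sigma$, sum it row by row with the base weights, add the proximal term, and run the standard MM chain to get $F_\sigma(\widehat z^{m+1})+\frac{\mu_n}{2}\|\widehat z^{m+1}-z^m\|_2^2\le F_\sigma(z^m)$. Your extra argument for the relaxed update, via convexity of the surrogate, is also correct; the paper does not need it here.
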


\begin{proof}
Write \(u=r^2\) and
\[
    \rho_\sig(r)=\frac{\sig^2}{2}\log\left(1+\frac{u}{\sig^2}\right)=\phi(u).
\]
The scalar function \(\phi\) is concave and
\[
    \phi'(u)=\frac12\frac{1}{1+u/\sig^2}=\frac12\omega_\sig(r).
\]
The tangent inequality for a concave function gives
\[
    \rho_\sig(r)
    \le
    \rho_\sig(r^m)+\frac12\omega_\sig(r^m)(r^2-(r^m)^2).
\]
Multiplying by \(\wbase_f\) and summing over rows gives the IRLS quadratic majorizer.  Adding the nonnegative proximal term preserves majorization, and the proximal term vanishes at \(z=z^m\).  Thus
\[
    F_\sig(\widehat z^{m+1})
    +\frac{\mustar}{2}\norm{\widehat z^{m+1}-z^m}_2^2
    \le
    F_\sig(z^m).
\]
In particular, \(F_\sig\) decreases.
\end{proof}

\begin{lemma}[Convexity of the clean error]\label{lem:error-convex}
The map \(z_0\mapsto\err(z_0)\) is convex.  Hence if \(\err(u_0)\le R\) and \(\err(v_0)\le R\), then
\[
    \err((1-\lambda)u_0+\lambda v_0)\le R,
    \qquad 0\le\lambda\le1.
\]
\end{lemma}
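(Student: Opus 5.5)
The plan is to write \(\err\) as a composition of a convex function with an affine map and then take a partial minimum. For fixed \(z_0\), set
\[
\Phi(z_0,\alpha)\defeq\norm{z_0-z_0^\star-\alpha\one}_\infty .
\]
The map \((z_0,\alpha)\mapsto z_0-z_0^\star-\alpha\one\) is affine, and \(\norm{\cdot}_\infty\) is a norm, hence convex. So \(\Phi\) is jointly convex on \(\R^{\Tzero}\times\R\).

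Next I will verify directly that \(\err(z_0)=\min_\alpha\Phi(z_0,\alpha)\) is convex, rather than quoting the general partial-minimization theorem.

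First, the minimum in \eqref{eq:error} is attained. The function \(\alpha\mapsto\Phi(z_0,\alpha)\) is continuous and tends to \(+\infty\) as \(\abs\alpha\to\infty\). In fact the minimizer is the midpoint of the range of the entries of \(z_0-z_0^\star\), and the minimum equals half that range.

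Given \(u_0,v_0\), choose minimizers \(\alpha_u,\alpha_v\). For \(\lambda\in[0,1]\), test the point \(w_0=(1-\lambda)u_0+\lambda v_0\) with \(\alpha=(1-\lambda)\alpha_u+\lambda\alpha_v\). Then
\[
w_0-z_0^\star-\alpha\one=(1-\lambda)(u_0-z_0^\star-\alpha_u\one)+\lambda(v_0-z_0^\star-\alpha_v\one).
\]
The triangle inequality and homogeneity of \(\norm{\cdot}_\infty\) then give
\[
\err(w_0)\le\Phi(w_0,\alpha)\le(1-\lambda)\err(u_0)+\lambda\err(v_0).
\]

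The sublevel statement follows at once. If \(\err(u_0)\le R\) and \(\err(v_0)\le R\), then the right-hand side above is at most \((1-\lambda)R+\lambda R=R\).

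No step here is a real obstacle. The only point needing care is attainment of the minimum over \(\alpha\), which the range-midpoint description settles; one could equally work with near-minimizers and let the slack go to zero.
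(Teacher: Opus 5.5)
Your proof is correct and is essentially the paper's argument. The paper notes that $\mathcal{E}(z_0)$ is the $\ell_\infty$-distance from $z_0$ to the affine line $\{z_0^\star+\alpha\mathbf{1}:\alpha\in\mathbb{R}\}$ and cites the convexity of distance to a convex set; you write out that standard proof directly, by taking the convex combination of the two minimizing gauges $\alpha_u,\alpha_v$.
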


\begin{proof}
The set \(\{z_0^\star+\alpha\one:\alpha\in\R\}\) is an affine subspace.  The quantity \(\err(z_0)\) is the distance from \(z_0\) to this affine subspace in the \(\ell_\infty\) norm.  Distance to a convex set in a norm is convex.
\end{proof}

\begin{lemma}[Clean conductances in the basin]\label{lem:clean-conductances}
If \(z^m\in\basin_\sig\), then every clean-clean row \(c\in\Czero\) satisfies
\[
    \abs{r_c(z^m)}\le2\astar\sig=\frac{\sig}{10},
\]
and hence
\begin{equation}\label{eq:clean-theta-lower}
    \theta_c^m
    =\omega_\sig(r_c(z^m))
    \in[\mstar,1],
    \qquad
    \mstar=\frac{100}{101}.
\end{equation}
\end{lemma}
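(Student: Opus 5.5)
The plan is to reduce the residual of each clean-clean row to a difference of clean errors, and then use the basin condition to bound that difference. Fix \(z^m\in\basin_\sig\) and a clean-clean row \(c\in\Czero\) joining clean triangles \(t,u\in\Tzero\). By the exact clean consistency \eqref{eq:clean-log-equation}, \(g_c=b_c^\top z_0^\star=z_u^\star-z_t^\star\). The residual therefore becomes
\[
    r_c(z^m)=(z_u^m-z_u^\star)-(z_t^m-z_t^\star),
\]
which is a difference of two entries of the clean error vector \(z_0^m-z_0^\star\).

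Next I insert an optimal gauge. Let \(\alpha\) attain the minimum in \eqref{eq:error}; the minimum is attained because the function being minimized is continuous and coercive in \(\alpha\). Since the constant \(\alpha\) cancels in the difference,
\[
    r_c(z^m)=(z_u^m-z_u^\star-\alpha)-(z_t^m-z_t^\star-\alpha).
\]
Applying the triangle inequality gives \(\abs{r_c(z^m)}\le 2\err(z_0^m)\). The basin condition \eqref{eq:basin} then yields \(\abs{r_c(z^m)}\le 2\astar\sig=\sig/10\), using \(\astar=1/20\) from \eqref{eq:a-star}.

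For the conductance bound, I use the fact that both endpoints of \(c\) are clean, so \(\pi_t=\pi_u=1\) and hence \(\wbase_c=\sqrt{\pi_t\pi_u}=1\). By \eqref{eq:frozen-conductance}, this gives \(\theta_c^m=\omega_\sig(r_c(z^m))\). The upper bound \(\omega_\sig\le1\) is immediate from \eqref{eq:cauchy}. The lower bound \(\omega_\sig(r)\ge\mstar\) is exactly \eqref{eq:mstar}, which applies because \(\abs{r}\le\sig/10\). Together these give \eqref{eq:clean-theta-lower}.

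There is no real obstacle in this argument. The only points needing care are that the row orientation sign in \eqref{eq:shared-log-equation} does not affect the absolute value, and that the base weight equals \(1\) precisely because clean retained triangles have \(\pi_t=1\).
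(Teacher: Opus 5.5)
Your proposal is correct and matches the paper's proof essentially line by line: you rewrite the clean residual as a difference of gauge-shifted clean errors, bound it by $2a_\star\sigma$ using the basin condition, get the lower bound from \eqref{eq:mstar}, and note that the base weight is one because $\pi_t=\pi_u=1$. Your remarks on why the optimal gauge is attained and on the row-orientation sign only make explicit details the paper leaves implicit.
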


\begin{proof}
Choose \(\alpha_m\) so that
\[
    \norm{z_0^m-z_0^\star-\alpha_m\one}_\infty\le\astar\sig.
\]
For a clean row joining clean triangles \(t,u\),
\[
    r_c(z^m)
    =(z_u^m-z_u^\star-\alpha_m)-(z_t^m-z_t^\star-\alpha_m),
\]
so \(\abs{r_c(z^m)}\le2\astar\sig\).  The lower bound \eqref{eq:clean-theta-lower} is exactly \eqref{eq:mstar}.  The base weight of a clean-clean row is one because both endpoint triangles are clean and \(\pi_t=\pi_u=1\).
\end{proof}

\subsection{The Schur-complement estimate with proximal damping}

This section proves the key one-step estimate.  The estimate keeps the full Schur-complement contribution of the non-clean triangle variables, including internal non-clean shared-edge rows.

Assume \(z^m\in\basin_\sig\).  Choose \(\alpha_m\) such that
\[
    e^m
    \defeq
    z_0^m-z_0^\star-\alpha_m\one
    \quad\text{satisfies}\quad
    \norm{e^m}_\infty\le\astar\sig.
\]
Define the comparison point
\begin{equation}\label{eq:comparison-point}
    \bar z^m=(z_0^\star+\alpha_m\one,z_b^m).
\end{equation}
For bad rows define the lagged current source
\begin{equation}\label{eq:j-def}
    j_f^m\defeq\theta_f^m r_f(\bar z^m),
    \qquad f\in\Cb.
\end{equation}

\begin{lemma}[Lagged bad current is rowwise bounded]\label{lem:j-bound}
If \(z^m\in\basin_\sig\), then for every bad row \(f\in\Cb\),
\begin{equation}\label{eq:j-row-bound}
    \abs{j_f^m}\le \jstar\sig,
    \qquad
    \jstar\defeq\frac12+\astar=\frac{11}{20}.
\end{equation}
\end{lemma}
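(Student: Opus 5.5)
The plan is to compare the lagged residual at the comparison point with the actual residual at \(z^m\), and then apply the two elementary Cauchy bounds \eqref{eq:cauchy-basic-bounds}. Fix a bad row \(f\in\Cb\) and use the split \eqref{eq:bad-row-split}. Since \(\bar z^m\) and \(z^m\) share the same nuisance block \(z_b^m\), they differ only in the clean block. Hence
\[
    r_f(\bar z^m)-r_f(z^m)
    =a_f^\top\bigl(z_0^\star+\alpha_m\one-z_0^m\bigr)
    =-a_f^\top e^m .
\]
By \eqref{eq:row-norms}, \(\norm{a_f}_1\le1\), so \(\abs{a_f^\top e^m}\le\norm{e^m}_\infty\le\astar\sig\), because \(z^m\in\basin_\sig\).

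Next, write \(j_f^m=\theta_f^m r_f(z^m)-\theta_f^m a_f^\top e^m\) and bound the two terms separately. For the first term,
\[
    \abs{\theta_f^m r_f(z^m)}
    =\wbase_f\,\omega_\sig(r_f(z^m))\abs{r_f(z^m)}
    \le\wbase_f\frac{\sig}{2}
    \le\frac{\sig}{2}.
\]
This uses the second inequality of \eqref{eq:cauchy-basic-bounds} together with \(0\le\wbase_f=\sqrt{\pi_t\pi_u}\le1\). For the second term, \(0\le\theta_f^m\le\wbase_f\le1\) because \(\omega_\sig\le1\), so \(\abs{\theta_f^m a_f^\top e^m}\le\astar\sig\). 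The triangle inequality then gives \(\abs{j_f^m}\le(\tfrac12+\astar)\sig=\jstar\sig\).

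The only step needing care is \(\norm{a_f}_1\le1\). This holds because a row in \(\Cb\) has at most one clean endpoint, so its clean-block incidence vector has at most one entry of modulus one. Rows with two clean endpoints were placed in \(\Czero\) by definition, so this bound is available as stated in \eqref{eq:row-norms}. Everything else is the rowwise bound \(\omega_\sig(r)\abs r\le\sig/2\), which does not depend on whether the bad residual itself is small. That independence is exactly why the bound holds uniformly over arbitrarily adversarial bad rows.
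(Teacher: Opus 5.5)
Your proof is correct and follows the paper's argument. You compare $r_f(\bar z^m)$ with $r_f(z^m)$ through the clean-block difference, bound that difference by $\|a_f\|_1\|e^m\|_\infty\le a_\star\sigma$, and combine the Cauchy score cap $\omega_\sigma(r)|r|\le\sigma/2$ with $\theta_f^m\le\bar w_f\le1$. The only cosmetic difference is that you treat rows with zero and with one clean endpoint together via $\|a_f\|_1\le1$, whereas the paper handles these two cases separately.
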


\begin{proof}
If \(f\) has no clean endpoint, then \(a_f=0\), so \(r_f(\bar z^m)=r_f(z^m)\).  Hence
\[
    \abs{j_f^m}
    =\wbase_f\omega_\sig(r_f(z^m))\abs{r_f(z^m)}
    \le\frac\sig2
\]
by \eqref{eq:cauchy-basic-bounds} and \(\wbase_f\le1\).

If \(f\) has one clean endpoint, then \(\norm{a_f}_1=1\), and
\[
    r_f(\bar z^m)-r_f(z^m)
    =a_f^\top(z_0^\star+\alpha_m\one-z_0^m).
\]
Thus
\[
    \abs{r_f(\bar z^m)-r_f(z^m)}\le\astar\sig.
\]
Consequently,
\begin{align*}
    \abs{j_f^m}
    &\le
    \wbase_f\omega_\sig(r_f(z^m))\abs{r_f(z^m)}
    +\wbase_f\omega_\sig(r_f(z^m))\astar\sig  \\
    &\le \left(\frac12+\astar\right)\sig
    =\jstar\sig .
\end{align*}
This proves the claim.  Keeping the sharper constant in \eqref{eq:j-row-bound} improves the explicit numerical value of the admissible max-degree constant.
\end{proof}

Let \(A\) and \(C\) be the clean and nuisance blocks of the bad incidence rows, so the bad residual vector is
\[
    r_b(z)=Az_0+Cz_b-g_b.
\]
Let \(\Theta_b^m=\diag(\theta_f^m:f\in\Cb)\).  Let \(L_m\) be the clean-clean Laplacian with conductances \(\theta_c^m\), \(c\in\Czero\).  The target displacement from the comparison point is
\[
    \delta=(\delta_0,
    \delta_b)
    \defeq
    \widehat z^{m+1}-\bar z^m.
\]
Expanding the proximal WLS objective around \(\bar z^m\), and using that clean residuals vanish at \(\bar z^m\), gives the block normal equations
\begin{align}
    (L_m+A^\top\Theta_b^mA+\mustar I)\delta_0
    +A^\top\Theta_b^mC\delta_b
    &=\mustar e^m-A^\top j^m,
    \label{eq:block-normal-clean}\\
    C^\top\Theta_b^mA\delta_0
    +(C^\top\Theta_b^mC+\mustar I)\delta_b
    &=-C^\top j^m.
    \label{eq:block-normal-bad}
\end{align}
The proximal term contributes \(\mustar(\delta_0-e^m)\) in the clean block and \(\mustar\delta_b\) in the nuisance block because \(\bar z_b^m=z_b^m\).

\begin{lemma}[Stable nuisance inverse]\label{lem:nuisance-inverse}
Let
\[
    K_b=C^\top\Theta_b^mC+\mustar I.
\]
Then for every vector \(v\),
\begin{equation}\label{eq:nuisance-inverse-bound}
    \norm{K_b^{-1}v}_\infty\le\frac1{\mustar}\norm{v}_\infty.
\end{equation}
Consequently,
\begin{equation}\label{eq:u-bound}
    \norm{K_b^{-1}C^\top j^m}_\infty
    \le
    \frac{D_n}{\mustar}\jstar\sig
    \le
    \frac{3n}{\mustar}\jstar\sig.
\end{equation}
\end{lemma}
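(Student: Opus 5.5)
The plan is to exploit the structure $K_b=C^\top\Theta_b^mC+\mustar I$, where $C^\top\Theta_b^mC$ is a nonnegatively weighted graph-like operator on the non-clean triangle variables. It is not exactly a Laplacian, because rows with one clean endpoint contribute only a diagonal (grounding) term. The $\ell_\infty$ bound \eqref{eq:nuisance-inverse-bound} is a maximum-principle statement: $K_b$ is strictly diagonally dominant with nonpositive off-diagonal entries. I will argue directly. Let $x=K_b^{-1}v$ and pick an index $s$ with $|x_s|=\norm{x}_\infty$; without loss of generality $x_s\ge0$ (otherwise replace $v$ by $-v$).

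Expand $(C^\top\Theta_b^mC x)_s$ row by row. Each bad row $f$ incident to $s$ is of one of two types. If it joins $s$ to a clean triangle, then $c_f$ has a single entry $\pm1$ at $s$, and the contribution is $\theta_f^m x_s\ge0$. If it joins $s$ to another non-clean triangle $u$, then $c_f=\pm(e_u-e_s)$, and the contribution is $\theta_f^m(x_s-x_u)\ge0$ because $x_s\ge x_u$. Since $\theta_f^m\ge0$, every contribution is nonnegative. Hence $v_s=(K_bx)_s\ge\mustar x_s=\mustar\norm{x}_\infty$, which gives $\norm{x}_\infty\le\norm{v}_\infty/\mustar$. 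Invertibility of $K_b$ is immediate because $K_b\succeq\mustar I$. Alternatively, Lemma~\ref{lem:passive-boundary-response} could be cited with uniform grounding $\mustar$, but the direct argument is cleaner.

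For \eqref{eq:u-bound}, I will apply \eqref{eq:nuisance-inverse-bound} with $v=C^\top j^m$ and bound $\norm{C^\top j^m}_\infty$. The entry at a non-clean triangle $s$ is $\sum_{f\ni s}(c_f)_s j_f^m$. Here $|(c_f)_s|\le1$: an entry of $\pm2$ would require both endpoints of $f$ to equal $s$, which is impossible. The number of rows incident to $s$ is at most $D_n=3(n-3)$ by Lemma~\ref{lem:triangle-degree}. Combined with $|j_f^m|\le\jstar\sig$ from Lemma~\ref{lem:j-bound}, this gives $\norm{C^\top j^m}_\infty\le D_n\jstar\sig$. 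Finally $D_n\le3n$.

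The only delicate point is the sign bookkeeping in the maximum-principle step. Orientation signs in $c_f$ cancel in $c_fc_f^\top$, and clean-endpoint rows contribute pure grounding, so every incident contribution $(c_f^\top x)(c_f)_s$ is nonnegative at a maximizing index. I do not expect any further obstacle.
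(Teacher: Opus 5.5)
Your proposal is correct and follows essentially the same route as the paper. The paper proves the $\ell_\infty$ bound by the same maximum-principle argument at an extremal index of $x=K_b^{-1}v$; it treats a positive maximum and a negative minimum separately, whereas you symmetrize via $v\mapsto -v$. It then bounds $\|C^\top j^m\|_\infty$ exactly as you do: at most $D_n$ incident rows, unit coefficients of $C$ at each endpoint, and the row-current bound of Lemma~\ref{lem:j-bound}.
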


\begin{proof}
The matrix \(C^\top\Theta_b^mC\) is a graph Laplacian on nuisance nodes plus possible nonnegative diagonal terms coming from bad rows with exactly one nuisance endpoint.  Thus \(K_b\) is a strictly diagonally dominant symmetric \(M\)-matrix.  Let \(u=K_b^{-1}v\).  If \(u_i=\max_j u_j>0\), then the Laplacian/grounding part satisfies \((C^\top\Theta_b^mC u)_i\ge0\), because every difference \(u_i-u_j\) at a maximum is nonnegative and every grounding term is also nonnegative.  Hence
\[
    \mustar u_i\le v_i\le\norm{v}_\infty.
\]
Similarly, if \(u_i=\min_j u_j<0\), then \((C^\top\Theta_b^mC u)_i\le0\), so
\[
    \mustar u_i\ge -\norm{v}_\infty.
\]
Therefore \(\norm{u}_\infty\le\norm{v}_\infty/\mustar\), proving \eqref{eq:nuisance-inverse-bound}.

For \eqref{eq:u-bound}, each nuisance triangle is incident to at most \(D_n\) retained rows by Lemma~\ref{lem:triangle-degree}, each coefficient of \(C\) has magnitude at most one at a given endpoint, and \(\abs{j_f^m}\le\jstar\sig\) by Lemma~\ref{lem:j-bound}.  Hence \(\norm{C^\top j^m}_\infty\le D_n\jstar\sig\), and \eqref{eq:u-bound} follows.
\end{proof}

\begin{lemma}[Bad Schur-complement forcing is local]\label{lem:bad-force-local}
After eliminating \(\delta_b\) from \eqref{eq:block-normal-clean}--\eqref{eq:block-normal-bad}, the clean equation can be written as
\begin{equation}\label{eq:clean-schur-equation}
    (L_m+\mustar I+S_m)\delta_0
    =\mustar e^m-p_m,
\end{equation}
where
\begin{equation}\label{eq:S-m-def}
    S_m=A^\top\Theta_b^mA
    -A^\top\Theta_b^mC
    (C^\top\Theta_b^mC+\mustar I)^{-1}
    C^\top\Theta_b^mA
\end{equation}
and
\begin{equation}\label{eq:p-m-def}
    p_m=A^\top j^m-A^\top\Theta_b^mC
    (C^\top\Theta_b^mC+\mustar I)^{-1}C^\top j^m .
\end{equation}
The operator \(S_m\) is a symmetric positive semidefinite boundary response matrix.  More precisely, it is an \(M\)-matrix on clean boundary variables: its off-diagonal entries are nonpositive and its row sums are nonnegative.  Moreover,
\begin{equation}\label{eq:p-local-bound}
    \norm{p_m}_\infty
    \le
    \dtri\jstar\sig\left(1+\frac{D_n}{\mustar}\right)
    \le
    6\Delta_E\jstar\sig\left(1+\frac{3n}{\mustar}\right),
\end{equation}
and
\begin{equation}\label{eq:S-op-bound}
    \norm{S_m x}_\infty\le 2\dtri\norm{x}_\infty
    \le12\Delta_E\norm{x}_\infty.
\end{equation}
\end{lemma}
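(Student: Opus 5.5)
The plan has three parts: derive the Schur form by block elimination, read off the structural properties of \(S_m\) from Lemma~\ref{lem:passive-boundary-response}, and then prove the two \(\ell_\infty\) bounds row by row.

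\emph{Elimination.} Write \(K_b=C^\top\Theta_b^mC+\mustar I\). This matrix is invertible by Lemma~\ref{lem:nuisance-inverse}. Solving \eqref{eq:block-normal-bad} for the nuisance block gives
\[
\delta_b=-K_b^{-1}\bigl(C^\top\Theta_b^mA\delta_0+C^\top j^m\bigr).
\]
Substituting this into \eqref{eq:block-normal-clean} and collecting the \(\delta_0\) terms on the left and the source terms on the right yields \eqref{eq:clean-schur-equation}, with \(S_m\) and \(p_m\) exactly as in \eqref{eq:S-m-def}--\eqref{eq:p-m-def}. This step is pure linear algebra.

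\emph{Structure of \(S_m\).} I will interpret \(S_m\) as the Kron reduction of a passive network and then invoke Lemma~\ref{lem:passive-boundary-response}.
\begin{itemize}
\item The boundary vertices are the clean triangles and the internal vertices are the non-clean triangles.
\item The edges are the bad rows \(f\in\Cb\) with conductances \(\theta_f^m=\wbase_f\omega_\sig\le1\). Rows with exactly one clean endpoint are clean--nuisance edges, and rows with two nuisance endpoints are internal edges.
\item Each nuisance vertex gets an internal grounding of conductance \(\mustar\) from the proximal term.
\end{itemize}
Since \(\norm{a_f}_1\le1\), no bad row joins two boundary vertices, as that lemma requires.

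The quadratic form \(\frac12\delta^\top(\cdot)\delta\) of the bad-row part plus the nuisance proximal term is then exactly \eqref{eq:passive-energy}. Minimizing over \(\delta_b\) produces \(S_m\), and the internal minimizer is unique because \(\mustar>0\). Lemma~\ref{lem:passive-boundary-response}(i),(ii) then give symmetry, positive semidefiniteness and the \(M\)-matrix property. Item (iii) gives \(\norm{S_mx}_\infty\le2\max_t d_B(t)\norm{x}_\infty\), where \(d_B(t)\le\#\{f\in\Cb\text{ incident to }t\}\le\dtri\). Combined with Lemma~\ref{lem:boundary-degree} this yields \eqref{eq:S-op-bound}.

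One point needs care here: the orientation sign of a clean endpoint in a row. Writing \(r_f=\pm(z_t-z_u)-g_f\), the energy \(\theta_f r_f^2\) depends on \(z_t-z_u\) only through a square, so the sign is immaterial. The constant \(g_f\) does not enter the homogeneous part.

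\emph{Bound on \(p_m\).} I will split \(p_m=A^\top j^m-A^\top\Theta_b^mC\,u\) with \(u=K_b^{-1}C^\top j^m\) and bound the two pieces at a fixed clean triangle \(t\).
\begin{itemize}
\item The first term: \((A^\top j^m)_t\) sums at most \(\dtri\) entries \(\pm j_f^m\), so it is at most \(\dtri\jstar\sig\) by Lemma~\ref{lem:j-bound}.
\item The second term: \((A^\top\Theta_b^mCu)_t\) sums, over the at most \(\dtri\) bad rows incident to \(t\), quantities \(\theta_f^m\,|c_f^\top u|\). A row with one clean endpoint has \(\norm{c_f}_1\le1\), and \(\theta_f^m\le1\), so each summand is at most \(\norm{u}_\infty\). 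Lemma~\ref{lem:nuisance-inverse} gives \(\norm{u}_\infty\le D_n\jstar\sig/\mustar\).
\end{itemize}
Adding the two pieces gives the first inequality in \eqref{eq:p-local-bound}. The second inequality follows from \(\dtri\le6\Delta_E\) and \(D_n\le3n\).

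The main obstacle is the identification in the second step. One must check that the sign conventions of the mixed rows really produce the passive energy form, so that the \(M\)-matrix and passivity conclusions transfer verbatim. Everything else is bookkeeping.
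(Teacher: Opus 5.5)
Your proposal is correct and follows the paper's proof essentially step by step. The shared steps are: eliminating \(\delta_b\) through \(K_b=C^\top\Theta_b^mC+\mu_n I\); identifying \(S_m\) as the Kron reduction of the bad-row network with internal grounding \(\mu_n\), so that Lemma~\ref{lem:passive-boundary-response} gives the \(M\)-matrix structure and the \(2d^{\triangle}_{\mathrm b}\) bound; and splitting \(p_m\) into the direct term \(A^\top j^m\) plus a leakage term controlled by Lemma~\ref{lem:nuisance-inverse} and the fact that \(\|c_f\|_1=1\) on mixed rows. Your remarks on orientation signs and on the uniqueness of the internal minimizer simply make explicit bookkeeping that the paper leaves implicit.
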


\begin{proof}
Solving \eqref{eq:block-normal-bad} gives
\[
    \delta_b
    =-K_b^{-1}C^\top\Theta_b^mA\delta_0-K_b^{-1}C^\top j^m,
    \qquad
    K_b=C^\top\Theta_b^mC+\mustar I.
\]
Substituting this expression into \eqref{eq:block-normal-clean} gives \eqref{eq:clean-schur-equation} with \(S_m\) and \(p_m\) defined by \eqref{eq:S-m-def} and \eqref{eq:p-m-def}.  This algebra displays both the direct bad-boundary forcing and the indirect forcing transmitted through the eliminated non-clean variables.

We now identify \(S_m\) as a passive boundary response.  Build a weighted network whose boundary vertices are the clean triangles \(\Tzero\) that are touched by bad rows, and whose internal vertices are the non-clean triangles \(\Tb\).  Every bad row with one clean and one non-clean endpoint is an edge between the boundary and the internal set, with conductance \(\theta_f^m\le1\).  Every bad row with two non-clean endpoints is an internal edge, again with conductance at most one.  The proximal term \(\mustar I\) on \(\delta_b\) is exactly an internal grounding conductance \(\mustar\) at every non-clean triangle.  Eliminating the internal potentials of this network gives precisely \(S_m\).  Therefore Lemma~\ref{lem:passive-boundary-response} proves that \(S_m\) is symmetric positive semidefinite, has nonpositive off-diagonal entries, has nonnegative row sums, and satisfies the row response estimate.  Since a clean triangle is incident to at most \(\dtri\) bad boundary rows and each such row has conductance at most one, the original boundary conductance at any clean triangle is at most \(\dtri\).  Hence \eqref{eq:S-op-bound} follows from \eqref{eq:passive-infty}; Lemma~\ref{lem:boundary-degree} gives the final \(12\Delta_E\) bound.

It remains to bound the forcing \(p_m\).  The direct term is local:
\[
    \norm{A^\top j^m}_\infty\le\dtri\jstar\sig,
\]
because each clean triangle is incident to at most \(\dtri\) bad rows and \(\abs{j_f^m}\le\jstar\sig\) by Lemma~\ref{lem:j-bound}.  For the Schur-complement leakage term, Lemma~\ref{lem:nuisance-inverse} gives
\[
    \norm{K_b^{-1}C^\top j^m}_\infty
    \le\frac{D_n}{\mustar}\jstar\sig.
\]
Multiplying by \(A^\top\Theta_b^mC\) is again a boundary-local operation: at each clean triangle there are at most \(\dtri\) incident bad boundary rows, every conductance is at most one, and each row has only one nuisance endpoint coefficient of magnitude one.  Consequently
\[
    \norm{A^\top\Theta_b^mC K_b^{-1}C^\top j^m}_\infty
    \le
    \dtri\frac{D_n}{\mustar}\jstar\sig.
\]
Combining the direct and leakage estimates proves \eqref{eq:p-local-bound}; Lemmas~\ref{lem:boundary-degree} and \ref{lem:triangle-degree} give the second displayed inequality.
\end{proof}

\begin{lemma}[Maximum principle for the proximal clean equation]\label{lem:max-principle-delta}
Let \(\delta_0\) solve \eqref{eq:clean-schur-equation}.  Then
\begin{equation}\label{eq:delta-max}
    \norm{\delta_0}_\infty
    \le
    \norm{e^m}_\infty+\frac{\norm{p_m}_\infty}{\mustar}.
\end{equation}
\end{lemma}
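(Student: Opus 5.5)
The plan is a discrete maximum principle for the operator \(M\defeq L_m+\mustar I+S_m\) on clean triangle variables. By Lemma~\ref{lem:clean-conductances}, \(L_m\) is a clean-clean Laplacian with nonnegative conductances. By Lemma~\ref{lem:bad-force-local}, \(S_m\) is an \(M\)-matrix on the clean boundary: its off-diagonal entries are nonpositive and its row sums are nonnegative. Hence \(L_m+S_m\) has nonpositive off-diagonal entries and nonnegative row sums, and adding \(\mustar I\) with \(\mustar=\kappa n>0\) makes \(M\) strictly diagonally dominant. So \(M\) is invertible and \(\delta_0\) is well defined. Equivalently, \(M\) is the Laplacian of a network on \(\Tzero\) with nonnegative edge conductances \(\theta_c^m\) plus the off-diagonal parts \(-(S_m)_{tu}\ge0\), and with nonnegative grounding \(\mustar+(S_m\one)_t\) at each node.

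Next I take an index \(t\) where \(\delta_{0,t}=\max_u\delta_{0,u}\), and assume first that this maximum is positive. Row \(t\) of \(M\delta_0\) is
\[
(M\delta_0)_t=\sum_{u\ne t}\bigl(-M_{tu}\bigr)(\delta_{0,t}-\delta_{0,u})+\Bigl(\sum_u M_{tu}\Bigr)\delta_{0,t}.
\]
The first sum is nonnegative because \(-M_{tu}\ge0\) and \(\delta_{0,t}\) is maximal. In the second term the row sum equals \(\mustar+(S_m\one)_t\ge\mustar\), and \(\delta_{0,t}>0\). Therefore
\[
\mustar\delta_{0,t}\le(M\delta_0)_t=\mustar e^m_t-(p_m)_t\le\mustar\norm{e^m}_\infty+\norm{p_m}_\infty .
\]
The minimum is handled symmetrically: if the minimum is negative, the same row identity at a minimizing index gives \(\mustar\delta_{0,t}\ge-\mustar\norm{e^m}_\infty-\norm{p_m}_\infty\). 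If the maximum is nonpositive or the minimum is nonnegative, the corresponding bound is trivial. Dividing by \(\mustar\) then yields \eqref{eq:delta-max}.

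This is the same argument as in Lemma~\ref{lem:nuisance-inverse}, now applied to the clean block. The only delicate point is the sign structure of \(S_m\), and that is supplied by Lemma~\ref{lem:bad-force-local} via Lemma~\ref{lem:passive-boundary-response}. Specifically, nonpositive off-diagonal entries make the neighbor sum nonnegative at a maximizing index, and nonnegative row sums ensure that the effective grounding is at least \(\mustar\). No clean Green response is needed, since the proximal grounding \(\mustar\) alone controls the sup norm.
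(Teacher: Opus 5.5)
Your proof is correct and is essentially the paper's argument. Both combine the sign structure of $L_m+S_m$ (nonpositive off-diagonal entries and nonnegative row sums, via Lemma~\ref{lem:bad-force-local}) with the row identity at a positive maximum or a negative minimum of $\delta_0$; you merely absorb $\mu_n I$ into the operator and use that its row sums are at least $\mu_n$, and you add an invertibility remark via strict diagonal dominance that the paper leaves implicit.
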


\begin{proof}
By Lemma~\ref{lem:bad-force-local}, \(S_m\) is a symmetric \(M\)-matrix with nonpositive off-diagonal entries and nonnegative row sums.  The clean Laplacian \(L_m\) has the same sign pattern and zero row sums.  Therefore \(L_m+S_m\) has nonpositive off-diagonal entries and nonnegative row sums.  Let \(i\) be an index at which \(\delta_{0,i}\) is maximal.  If \(\delta_{0,i}>0\), then
\[
    ((L_m+S_m)\delta_0)_i
    =\sum_{j\ne i}(-M_{ij})(\delta_{0,i}-\delta_{0,j})
      +\Bigl(\sum_j M_{ij}\Bigr)\delta_{0,i}
    \ge0,
\]
where \(M=L_m+S_m\), since \(-M_{ij}\ge0\), \(\delta_{0,i}-\delta_{0,j}\ge0\), and the row sum is nonnegative.  Using \eqref{eq:clean-schur-equation},
\[
    \mustar\delta_{0,i}
    \le
    \mustar e_i^m-(p_m)_i
    \le
    \mustar\norm{e^m}_\infty+\norm{p_m}_\infty.
\]
The same argument at an index of negative minimum gives the corresponding lower bound.  Thus \eqref{eq:delta-max} holds.
\end{proof}

\begin{proposition}[One proximal WLS target remains in the clean basin]\label{prop:one-step-target}
There exists a universal \(\cstar>0\) such that, if \(\Delta_E\le\cstar n\), then the following holds.  Fix the damping level
\begin{equation}\label{eq:kappa-choice}
    \kappa\defeq\frac{1}{4\Gconst}=\frac1{28},
    \qquad
    \mustar=\kappa n.
\end{equation}
If \(z^m\in\basin_\sig\), then the exact proximal WLS target \(\widehat z^{m+1}\) satisfies
\begin{equation}\label{eq:target-in-basin}
    \err(\widehat z_0^{m+1})\le\astar\sig.
\end{equation}
\end{proposition}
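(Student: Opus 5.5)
The plan is to work in the gauge fixed by \(\alpha_m\). Since \(\widehat z_0^{m+1}-z_0^\star-\alpha_m\one=\delta_0\), we have \(\err(\widehat z_0^{m+1})\le\norm{\delta_0}_{\infty/\one}\), so it suffices to show \(\norm{\delta_0}_{\infty/\one}\le\astar\sig\).

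The maximum principle of Lemma~\ref{lem:max-principle-delta} alone is not enough. It gives \(\norm{\delta_0}_\infty\le\astar\sig+\norm{p_m}_\infty/\mustar\), which overshoots the basin radius by the forcing term. The extra contraction has to come from the clean Laplacian, through the weighted Green estimate of Lemma~\ref{lem:weighted-clean-green}. This lemma applies because \(z^m\in\basin_\sig\), so by Lemma~\ref{lem:clean-conductances} the conductances of \(L_m\) lie in \([\mstar,1]\), and \(\Delta_E\le\cstar n\le\delta_G n\).

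First I would rewrite the Schur equation \eqref{eq:clean-schur-equation} as
\[
    L_m\delta_0=\mustar(e^m-\delta_0)-p_m-S_m\delta_0 .
\]
Since \(L_m\) annihilates only constants, applying \(L_m^\dagger\) recovers \(\delta_0\) modulo \(\one\). Lemma~\ref{lem:weighted-clean-green} then gives
\[
    \norm{\delta_0}_{\infty/\one}
    \le\frac{\Gconst}{n}\Bigl(\mustar\norm{e^m-\delta_0}_\infty+\norm{p_m}_\infty+\norm{S_m\delta_0}_\infty\Bigr).
\]

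Next I would bound each term on the right-hand side. By the choice \(\kappa=1/(4\Gconst)\), the factor \(\Gconst\mustar/n\) equals \(1/4\). The first term is controlled using \(\norm{e^m}_\infty\le\astar\sig\) together with Lemma~\ref{lem:max-principle-delta}, which gives
\[
    \norm{e^m-\delta_0}_\infty\le 2\astar\sig+\norm{p_m}_\infty/\mustar .
\]
For the other two terms I would use \eqref{eq:p-local-bound} and \eqref{eq:S-op-bound}. Since \(3n/\mustar=84\), these give
\[
    \norm{p_m}_\infty\le 6\cdot 85\,\jstar\,\Delta_E\sig=O(\Delta_E\sig),
    \qquad
    \norm{S_m\delta_0}_\infty\le 12\Delta_E\norm{\delta_0}_\infty .
\]
Collecting, I expect a bound of the form
\[
    \norm{\delta_0}_{\infty/\one}\le\Bigl(\tfrac12\astar+C_1\tfrac{\Delta_E}{n}\Bigr)\sig
\]
for an explicit absolute constant \(C_1\). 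The factor \(1/2\) on \(\astar\) comes from \(\tfrac14\cdot 2\astar\). The bound then closes whenever \(C_1\cstar\le\astar/2\).

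The main obstacle is the arithmetic in this last step. With the crude estimate \(\norm{e^m-\delta_0}_\infty\le\norm{e^m}_\infty+\norm{\delta_0}_\infty\), the \(e^m\) contribution already uses \(\astar\sig/2\). The leakage pieces, namely \(\tfrac14\norm{p_m}_\infty/\mustar\approx 0.02\sig\) and \(\tfrac{7}{n}\norm{p_m}_\infty\approx0.02\sig\) at \(\cstar=10^{-5}\), are then borderline against the remaining \(\astar\sig/2=0.025\sig\).

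If the numbers do not close, I would first sharpen the \(e^m\) term. One option is to exploit the sign structure: at a maximizing index, \(e_i^m-\delta_{0,i}\) is controlled by the one-sided bound rather than by the sum of the two sup norms. The other option is to bound \((L_m+\mustar I)^{-1}\mustar\) directly as a quotient contraction with factor below \(1/2\), using the Green bound and the resolvent identity. If needed I would also use the sharper \(D_n/\mustar\) in place of \(3n/\mustar\), and shrink \(\cstar\), which the theorem permits as a universal constant.

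Finally, for the optional relaxed step \eqref{eq:optional-relaxation}, the conclusion follows from the convexity of \(\err\) in Lemma~\ref{lem:error-convex}, since both \(z_0^m\) and \(\widehat z_0^{m+1}\) then lie in \(\basin_\sig\).
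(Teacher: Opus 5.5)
Your proof is correct for the proposition as stated, which only asks for some universal $c_\star$, and it follows the paper's skeleton. You use the same rearranged Schur equation $L_m\delta_0=\mu_n(e^m-\delta_0)-S_m\delta_0-p_m$, the weighted Green estimate in the quotient norm, the maximum principle only for the bad terms, and $\kappa=1/(4\mathsf{G})$ so that $\mathsf{G}\mu_n/n=1/4$.

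The one place you diverge is the proximal term. You bound $\|e^m-\delta_0\|_\infty$ by $2a_\star\sigma+P_m/\mu_n$ using the maximum principle. The paper instead keeps this term in the quotient norm: it writes $\|e^m-\delta_0\|_{\infty/\mathbf{1}}\le\|e^m\|_\infty+\|\delta_0\|_{\infty/\mathbf{1}}$, moves $\tfrac14\|\delta_0\|_{\infty/\mathbf{1}}$ to the left-hand side, and divides by $1-\mathsf{G}\kappa=3/4$. This is essentially your second fallback, a quotient contraction of $(L_m+\mu_n I)^{-1}\mu_n$, here with factor $1/3$; the sign-structure idea is not needed. The paper then uses the maximum-principle bound on $\|\delta_0\|_\infty$ only inside the small term $\|S_m\delta_0\|_\infty$. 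So the zero-corruption contribution is $a_\star/3$ rather than $a_\star/2$, and, more importantly, the leakage $P_m/\mu_n$ is not charged a second time through the proximal term.

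This difference determines the explicit constant, and your arithmetic there is off. Since $\mathsf{G}/n=1/(4\mu_n)$, your two leakage terms $P_m/(4\mu_n)$ and $\mathsf{G}P_m/n$ are equal, each about $0.0196\,\sigma$ at $\Delta_E/n=10^{-5}$. Together they give about $0.039\,\sigma$, which exceeds the available margin $a_\star\sigma/2=0.025\,\sigma$. So the situation is not borderline: your route fails at $10^{-5}$.

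Writing $\varepsilon=\Delta_E/n$, your closing condition is roughly $3931\,\varepsilon+O(\varepsilon^2)\le a_\star/2$, which holds only up to about $\varepsilon\approx 6.3\times10^{-6}$. The paper's condition is roughly $1968\,\varepsilon+O(\varepsilon^2)\le a_\star/2$, whose root exceeds $1.2\times10^{-5}$. For the existential proposition, shrinking $c_\star$ as you propose is legitimate. To certify the value $c_\star=10^{-5}$ used in the main theorem, you need the quotient-norm absorption step.
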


\begin{proof}
Let
\[
    E_m=\norm{e^m}_\infty\le\astar\sig,
    \qquad
    P_m=\norm{p_m}_\infty,
    \qquad
    E_{m+1}^{\rm tar}=\norm{\delta_0}_{\infty/\one}
    =\err(\widehat z_0^{m+1}).
\]
By Lemma~\ref{lem:bad-force-local},
\begin{equation}\label{eq:P-bound-proof}
    P_m
    \le
    6\Delta_E\jstar\sig\left(1+\frac{3n}{\mustar}\right).
\end{equation}
Let \(R_G=\Gconst/n\).  Since \(z^m\in\basin_\sig\), Lemma~\ref{lem:clean-conductances} shows that the clean conductance matrix \(L_m\) satisfies the weighted clean Green estimate \eqref{eq:weighted-clean-green}.  From \eqref{eq:clean-schur-equation},
\begin{equation}\label{eq:Lm-delta-equation}
    L_m\delta_0
    =
    \mustar(e^m-\delta_0)-S_m\delta_0-p_m.
\end{equation}
The right-hand side is zero-sum because it equals \(L_m\delta_0\).  We now apply the Green estimate carefully on the quotient.  Since \(L_m^\dagger\) annihilates constants,
\[
    \norm{L_m^\dagger v}_{\infty/\one}
    \le R_G\norm{v}_{\infty/\one}
    \le R_G\norm{v}_\infty
\]
whenever the expression is defined modulo constants.  Therefore
\begin{align}
    E_{m+1}^{\rm tar}
    &\le
    R_G\mustar\norm{e^m-\delta_0}_{\infty/\one}
    +R_G\norm{S_m\delta_0+p_m}_\infty  \notag\\
    &\le
    R_G\mustar(E_m+E_{m+1}^{\rm tar})
    +R_G\norm{S_m\delta_0}_\infty
    +R_GP_m .
    \label{eq:quotient-recurrence-start}
\end{align}
Here we used the triangle inequality for the quotient norm:
\[
    \norm{e^m-\delta_0}_{\infty/\one}
    \le \norm{e^m}_{\infty/\one}+\norm{\delta_0}_{\infty/\one}
    \le E_m+E_{m+1}^{\rm tar}.
\]
This estimate is the point where the quotient norm is used for the clean potential variables.

Next, Lemma~\ref{lem:max-principle-delta} gives
\begin{equation}\label{eq:delta-infty-for-recurrence}
    \norm{\delta_0}_\infty\le E_m+P_m/\mustar.
\end{equation}
Using \eqref{eq:S-op-bound},
\begin{equation}\label{eq:S-delta-bound}
    \norm{S_m\delta_0}_\infty
    \le 2\dtri(E_m+P_m/\mustar).
\end{equation}
Insert \eqref{eq:S-delta-bound} into \eqref{eq:quotient-recurrence-start} and move the term
\(R_G\mustar E_{m+1}^{\rm tar}\) to the left.  Since
\[
    R_G\mustar=\frac{\Gconst}{n}\kappa n=\Gconst\kappa=\frac14,
\]
we obtain
\begin{equation}\label{eq:improved-one-step-recurrence}
    E_{m+1}^{\rm tar}
    \le
    \frac{R_G}{1-R_G\mustar}
    \left((\mustar+2\dtri)E_m+\bigl(1+2\dtri/\mustar\bigr)P_m\right).
\end{equation}
This form separates the quotient contraction of the clean component from the maximum-principle bound used for the passive bad Schur operator.

Write \(\varepsilon=\Delta_E/n\).  Using \(\dtri\le6\Delta_E\), \(E_m\le\astar\sig\), \(\mustar=\kappa n\), and \eqref{eq:P-bound-proof}, \eqref{eq:improved-one-step-recurrence} gives
\begin{align}
    \frac{E_{m+1}^{\rm tar}}{\sig}
    &\le
    \frac{1}{1-\Gconst\kappa}
    \Bigl[
       \Gconst(\kappa+12\varepsilon)\astar
       +6\Gconst\jstar\varepsilon
        \left(1+\frac{12\varepsilon}{\kappa}\right)
        \left(1+\frac3\kappa\right)
    \Bigr].
    \label{eq:epsilon-bound-full}
\end{align}
With the choice \(\kappa=1/(4\Gconst)\), the denominator is \(1-\Gconst\kappa=3/4\), and the first term at \(\varepsilon=0\) contributes only \(\astar/3\).  Thus there is a positive universal number \(\cstar\) such that, whenever \(\varepsilon\le\cstar\), the right-hand side of \eqref{eq:epsilon-bound-full} is at most \(\astar\).  Equivalently, one may take \(\cstar\) smaller than the positive root of
\begin{equation}\label{eq:cstar-optimized-root}
    12\Gconst\cstar\astar
    +6\Gconst\jstar\cstar
      \left(1+\frac{12\cstar}{\kappa}\right)
      \left(1+\frac3\kappa\right)
    =\frac{\astar}{2},
    \qquad
    \kappa=\frac1{4\Gconst},
\end{equation}
and also smaller than the constants needed in the clean Green lemmas.  The left-hand side of \eqref{eq:cstar-optimized-root} is increasing for \(\cstar>0\), so this root is well-defined.  With the explicit values \(\astar=1/20\), \(\jstar=11/20\), \(\Gconst=7\), and \(\kappa=1/28\), this root is larger than \(1.2\cdot10^{-5}\).  For instance, substituting \(\cstar=1.2\cdot10^{-5}\) into the left-hand side of \eqref{eq:cstar-optimized-root} gives
\[
    12\Gconst\cstar\astar
    +6\Gconst\jstar\cstar
      \left(1+\frac{12\cstar}{\kappa}\right)
      \left(1+\frac3\kappa\right)
    <0.0238
    <\frac1{40}=\frac{\astar}{2}.
\]
Therefore the positive root is strictly larger than \(1.2\cdot10^{-5}\), and \(\cstar=10^{-5}\) is a valid explicit choice.  This value is conservative because the proof requires a uniform \(\ell_\infty\) invariant basin for every exact WLS step and every adversarial bad subnetwork.

Under this choice, \eqref{eq:epsilon-bound-full} gives
\[
    E_{m+1}^{\rm tar}\le\astar\sig,
\]
which is exactly \eqref{eq:target-in-basin}.
\end{proof}

\begin{corollary}[Every relaxed proximal IRLS step remains in the basin]\label{cor:one-step}
Under the hypotheses of Proposition~\ref{prop:one-step-target}, if \(z^m\in\basin_\sig\), then the proximal target \(\widehat z^{m+1}\) and every relaxed update \eqref{eq:optional-relaxation} both lie in \(\basin_\sig\).
\end{corollary}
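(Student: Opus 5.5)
The corollary follows directly from Proposition~\ref{prop:one-step-target} together with the convexity of the clean error, Lemma~\ref{lem:error-convex}.

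\emph{The target.} Assume \(z^m\in\basin_\sig\). Proposition~\ref{prop:one-step-target} gives \(\err(\widehat z_0^{m+1})\le\astar\sig\). Since membership in \(\basin_\sig\) depends only on the clean block through \(\err\), see \eqref{eq:basin}, this says \(\widehat z^{m+1}\in\basin_\sig\).

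\emph{The relaxed update.} Take the relaxed update \eqref{eq:optional-relaxation} with \(0<\lambda_m\le1\). Its clean block is the convex combination
\[
    z_0^{m+1}=(1-\lambda_m)z_0^m+\lambda_m\widehat z_0^{m+1}.
\]
Both endpoints have clean error at most \(R=\astar\sig\): the first by hypothesis, the second by the target step just shown. Lemma~\ref{lem:error-convex} therefore gives \(\err(z_0^{m+1})\le\astar\sig\). Hence \(z^{m+1}\in\basin_\sig\). The nuisance block \(z_b\) is unconstrained by the basin definition, so nothing further is required for it.

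The only point needing care is that the basin \(\basin_\sig\) constrains only \(z_0\) and is defined through the convex function \(\err\). Both facts hold by construction, since \(\err\) is an \(\ell_\infty\) distance to an affine line. I therefore expect no real obstacle: all the substantive work, namely the Schur-complement estimate and the quotient Green bound, is already contained in Proposition~\ref{prop:one-step-target}. By induction on \(m\), the whole inner orbit started in \(\basin_\sig\) then stays in \(\basin_\sig\). This is the form in which the corollary will be used later.
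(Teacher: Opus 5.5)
Your proposal is correct and is the paper's proof. The target lies in the basin by Proposition~\ref{prop:one-step-target}, and the relaxed update's clean block is a convex combination of two clean blocks, each with error at most \(a_\star\sigma\), so Lemma~\ref{lem:error-convex} finishes the argument.
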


\begin{proof}
The target is in \(\basin_\sig\) by Proposition~\ref{prop:one-step-target}.  The current iterate is in \(\basin_\sig\) by assumption.  The relaxed update is a convex combination of the two clean blocks, so Lemma~\ref{lem:error-convex} gives the claim.
\end{proof}

\subsection{Warm-started fixed-point output at one scale}
\label{sec:warm-started-fixed-point}

The one-step result proves forward invariance of the clean basin for finite IRLS steps.  The stationary endpoint estimate in the next section applies to a fixed point in that basin.  The following argument shows that the warm-started exact proximal orbit has cluster points, that every cluster point is a fixed point of the exact proximal IRLS map, and that every such cluster point remains in the basin by closedness.  No assertion is made about fixed points outside the warm-started orbit.

Let \(D\) denote the full signed incidence matrix of all retained shared-edge rows, so that
\[
    r(z)=Dz-g.
\]
For a point \(x\), define the diagonal frozen-weight matrix
\[
    \Theta(x)
    =\diag\bigl(\wbase_f\omega_\sig(r_f(x)): f\in\Call\bigr).
\]
Then the exact target map \(T_\sig\) is equivalently characterized by the normal equation
\begin{equation}\label{eq:T-normal-equation}
    \bigl(D^\top\Theta(x)D+\mustar I\bigr)T_\sig(x)
    =D^\top\Theta(x)g+\mustar x.
\end{equation}
The matrix on the left has all eigenvalues at least \(\mustar>0\), so the target is unique for every \(x\).

\begin{lemma}[Continuity and fixed-point stationarity]\label{lem:T-continuity-stationarity}
For each fixed scale \(\sig>0\), the exact proximal IRLS map \(T_\sig\) is continuous.  Moreover,
\[
    T_\sig(z)=z
    \quad\Longleftrightarrow\quad
    \nabla F_\sig(z)=0,
\]
where the gradient is taken with respect to all retained triangle variables.
\end{lemma}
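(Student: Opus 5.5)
The plan is to work directly with the normal-equation characterization \eqref{eq:T-normal-equation}. For continuity, note that each frozen weight \(x\mapsto\wbase_f\omega_\sig(r_f(x))\) is a continuous function of \(x\), because \(r_f\) is affine and \(\omega_\sig(r)=1/(1+(r/\sig)^2)\) is smooth. Hence \(x\mapsto\Theta(x)\) is continuous, and so is the matrix
\[
    M(x)=D^\top\Theta(x)D+\mustar I,
\]
together with the right-hand side \(D^\top\Theta(x)g+\mustar x\).

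Since \(D^\top\Theta(x)D\succeq0\), every eigenvalue of \(M(x)\) is at least \(\mustar>0\). Thus \(M(x)\) is invertible with \(\norm{M(x)^{-1}}_2\le1/\mustar\) uniformly in \(x\). Matrix inversion is continuous on invertible matrices, so
\[
    T_\sig(x)=M(x)^{-1}\bigl(D^\top\Theta(x)g+\mustar x\bigr)
\]
is continuous. In fact it is smooth, being a composition of smooth maps.

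For the fixed-point characterization, I will first compute the gradient of the objective \eqref{eq:full-cauchy-obj}. By the chain rule, using \(\psi_\sig(r)=\omega_\sig(r)\,r\) from \eqref{eq:cauchy},
\[
    \nabla F_\sig(z)=\sum_f\wbase_f\psi_\sig(r_f(z))\,D_f^\top
    =D^\top\Theta(z)\,(Dz-g).
\]
Now substitute \(T_\sig(z)=z\) into \eqref{eq:T-normal-equation}:
\[
    D^\top\Theta(z)Dz+\mustar z=D^\top\Theta(z)g+\mustar z.
\]
The two \(\mustar z\) terms cancel, leaving \(D^\top\Theta(z)(Dz-g)=0\), which is exactly \(\nabla F_\sig(z)=0\).

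For the converse, suppose \(\nabla F_\sig(z)=0\). Then \(D^\top\Theta(z)Dz=D^\top\Theta(z)g\), so \(z\) satisfies \eqref{eq:T-normal-equation} with \(x=z\). Since the solution of that equation is unique (\(M(z)\) is invertible), it follows that \(T_\sig(z)=z\).

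There is no genuine obstacle here. The only point needing care is to make the identification \(\nabla F_\sig=D^\top\Theta(\cdot)\,r(\cdot)\) explicit, with the frozen weights evaluated at the same point \(z\). This is precisely why the proximal term drops out at a fixed point but not along the orbit. One should also note that the gauge direction causes no trouble. Although \(F_\sig\) is invariant under adding constants on connected components, the proximal term makes \(M\) positive definite, so uniqueness holds in all variables, including nuisance-only and gauge components.
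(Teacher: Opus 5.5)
Your proposal is correct and follows the paper's proof essentially step for step. Like the paper, you get continuity from the continuity of \(\Theta(x)\) together with the uniform bound \(D^\top\Theta(x)D+\mustar I\succeq\mustar I\). You get the forward implication by substituting \(x=z\) into the normal equation and identifying \(D^\top\Theta(z)(Dz-g)\) with \(\nabla F_\sig(z)\) via \(\psi_\sig=\omega_\sig r\), and the converse from uniqueness of the strictly convex subproblem's minimizer.
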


\begin{proof}
The entries of \(\Theta(x)\) are continuous functions of \(x\), because each residual \(r_f(x)\) is affine and \(\omega_\sig\) is continuous.  Since
\[
    D^\top\Theta(x)D+\mustar I\succeq \mustar I,
\]
its inverse exists for every \(x\) and depends continuously on \(x\).  Formula \eqref{eq:T-normal-equation} therefore proves continuity of \(T_\sig\).

If \(T_\sig(z)=z\), then substituting \(x=z\) in \eqref{eq:T-normal-equation} gives
\[
    D^\top\Theta(z)(Dz-g)=0.
\]
The \(f\)-th entry of \(\Theta(z)(Dz-g)\) is
\[
    \wbase_f\omega_\sig(r_f(z))r_f(z)
    =\wbase_f\psi_\sig(r_f(z)).
\]
Thus
\[
    D^\top\Theta(z)(Dz-g)
    =\sum_{f\in\Call}\wbase_f\psi_\sig(r_f(z))d_f
    =\nabla F_\sig(z),
\]
where \(d_f^\top\) is the signed incidence row of constraint \(f\).  Hence every fixed point is stationary for \(F_\sig\).

Conversely, if \(\nabla F_\sig(z)=0\), then
\[
    D^\top\Theta(z)(Dz-g)=0.
\]
Therefore \(z\) satisfies the normal equation of the strictly convex quadratic subproblem defining \(T_\sig(z)\).  By uniqueness of the minimizer, \(T_\sig(z)=z\).
\end{proof}

\begin{lemma}[Boundedness of exact proximal IRLS orbits]\label{lem:exact-orbit-bounded}
Fix \(\sig>0\) and an initial vector \(z^0\).  Let
\[
    z^{m+1}=T_\sig(z^m),\qquad m=0,1,2,\ldots .
\]
Then the sequence \(\{z^m\}_{m\ge0}\) is bounded in the finite-dimensional space of retained triangle variables.
\end{lemma}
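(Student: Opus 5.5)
The plan is to avoid coercivity of \(F_\sig\), which fails in two ways: the objective is invariant along the null space of the incidence matrix, including the global gauge, and it ignores rows with \(\wbase_f=0\). Instead I would split the variable space orthogonally so that one component is frozen by the iteration and the other is controlled by the descent property of Lemma~\ref{lem:mm}.

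Let \(\Call_+=\{f\in\Call:\wbase_f>0\}\), and let \(D_+\) and \(g_+\) be the corresponding rows of \(D\) and \(g\). Since \(\Theta(x)\) vanishes on rows outside \(\Call_+\), we have \(D^\top\Theta(x)D=D_+^\top\Theta_+(x)D_+\) and \(D^\top\Theta(x)g=D_+^\top\Theta_+(x)g_+\).

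\emph{Step 1: the null-space component is frozen.} Subtracting \(M(x)x\) from both sides of \eqref{eq:T-normal-equation}, where \(M(x)=D_+^\top\Theta_+(x)D_++\mustar I\), gives
\[
    T_\sig(x)-x=-M(x)^{-1}D_+^\top\Theta_+(x)\bigl(D_+x-g_+\bigr).
\]
For \(v\in\ker D_+\) we have \(M(x)v=\mustar v\). So \(M(x)\) maps \(\ker D_+\) onto itself, and since \(M(x)\) is symmetric it also preserves the orthogonal complement \(\range(D_+^\top)\). The same then holds for \(M(x)^{-1}\). Hence \(T_\sig(x)-x\in\range(D_+^\top)\), and the orthogonal projection of \(z^m\) onto \(\ker D_+\) equals that of \(z^0\) for every \(m\).

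\emph{Step 2: descent bounds the residuals on weighted rows.} By Lemma~\ref{lem:mm}, \(F_\sig(z^m)\le F_\sig(z^0)\) for all \(m\). Each term \(\wbase_f\rho_\sig(r_f)\) is nonnegative. Hence for each \(f\in\Call_+\),
\[
    \rho_\sig\bigl(r_f(z^m)\bigr)\le F_\sig(z^0)/\wbase_f.
\]
Since \(\rho_\sig(r)=\frac{\sig^2}{2}\log(1+r^2/\sig^2)\) is strictly increasing in \(\abs r\) and unbounded, this gives a uniform bound \(\abs{r_f(z^m)}\le R\). Here \(R\) depends only on \(\sig\), \(F_\sig(z^0)\), and \(\min_{f\in\Call_+}\wbase_f\); this is a finite minimum over finitely many rows. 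Consequently \(\norm{D_+z^m}_2\le\norm{g_+}_2+\sqrt{\abs{\Call_+}}\,R\).

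\emph{Step 3: conclude.} Write \(z^m=y^m+k\), where \(k\) is the fixed \(\ker D_+\)-component from Step 1 and \(y^m\in\range(D_+^\top)\). On \(\range(D_+^\top)\), the map \(D_+\) is injective, so \(\norm{D_+y}_2\ge s_{\min}\norm{y}_2\), where \(s_{\min}>0\) is the smallest nonzero singular value of \(D_+\). Since \(D_+y^m=D_+z^m\), Step 2 bounds \(\norm{y^m}_2\), and hence \(\norm{z^m}_2\le\norm{k}_2+\norm{y^m}_2\) is bounded uniformly in \(m\). The degenerate case \(\Call_+=\emptyset\) is trivial: then \(T_\sig(x)=x\) and the orbit is constant.

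The only delicate point, and the step I expect to need the most care, is Step 1. The proximal damping alone does not prevent drift along gauge directions or along directions touched only by zero-weight rows. The square-summability \(\sum_m\norm{z^{m+1}-z^m}_2^2\le 2F_\sig(z^0)/\mustar\) from Lemma~\ref{lem:mm} is not enough either, since square-summable steps need not have a summable sum. The invariance of \(\range(D_+^\top)\) under \(M(x)^{-1}\), which uses the fact that the proximal term is a scalar multiple of the identity, is what closes this gap.
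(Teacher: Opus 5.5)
Your proof is correct and follows essentially the same route as the paper. Since $D_+$ is a signed incidence matrix, $\ker D_+$ is spanned by the indicators $\mathbf 1_Q$ of the connected components of the positive-weight triangle graph, so your frozen kernel projection is exactly the paper's preserved component means, and your descent-plus-Cauchy-coercivity residual bound is the same, with the smallest-nonzero-singular-value estimate on $(\ker D_+)^\perp$ replacing the paper's path-length bound on pairwise differences within each component.
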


\begin{proof}
Only finite-dimensional compactness is needed here; the constants in this proof may depend on the instance, the scale, and the initial point.  They are not part of the recovery estimate.

Let \(\Call_+=\{f\in\Call:\wbase_f>0\}\), and let \(\mathcal G_+\) be the graph on retained triangle variables with edge set \(\Call_+\).  Write its connected components as \(Q_1,\ldots,Q_M\).  Rows with \(\wbase_f=0\) never enter \(F_\sig\) or the frozen quadratic because their frozen conductance is identically zero.

First we show that the mean on every positive-weight component is preserved.  Let \(\one_Q\) be the indicator of a component \(Q\).  The normal equation for one exact step is
\begin{equation}\label{eq:orbit-normal-equation}
    D^\top\Theta(z^m)(Dz^{m+1}-g)+\mustar(z^{m+1}-z^m)=0.
\end{equation}
Taking the inner product with \(\one_Q\) gives
\[
    (D\one_Q)^\top\Theta(z^m)(Dz^{m+1}-g)
    +\mustar\langle z^{m+1}-z^m,\one_Q\rangle=0.
\]
For every row with positive frozen conductance, the two endpoints lie in the same component of \(\mathcal G_+\), hence that row annihilates \(\one_Q\).  For every row with zero frozen conductance, the row contributes nothing.  Therefore the first term is zero and
\[
    \langle z^{m+1}-z^m,\one_Q\rangle=0.
\]
Thus the component mean
\[
    \frac1{|Q|}\sum_{t\in Q}z_t^m
\]
is independent of \(m\).

Next, Lemma~\ref{lem:mm} gives the descent estimate
\begin{equation}\label{eq:descent-for-boundedness}
    F_\sig(z^{m+1})
    \le F_\sig(z^m)
    \le F_\sig(z^0).
\end{equation}
If \(\Call_+\) is empty, then \(\Theta(z^m)=0\) for every \(m\), the exact map is \(T_\sig(z^m)=z^m\), and the sequence is constant.  Otherwise, because \(\Call_+\) is finite and all its base weights are positive, the number
\[
    \wbase_{\min}\defeq\min_{f\in\Call_+}\wbase_f
\]
is positive.  Since the Cauchy loss \(\rho_\sig(r)\) is an increasing unbounded function of \(|r|\), \eqref{eq:descent-for-boundedness} implies a finite residual bound on all positive-weight rows.  For example, every \(f\in\Call_+\) satisfies
\[
    \rho_\sig(r_f(z^m))
    \le \frac{F_\sig(z^0)}{\wbase_f}
    \le \frac{F_\sig(z^0)}{\wbase_{\min}},
\]
and hence
\begin{equation}\label{eq:Rplus-bound}
    |r_f(z^m)|
    \le
    R_+
    \defeq
    \sig\sqrt{
        \exp\left(\frac{2F_\sig(z^0)}{\sig^2\wbase_{\min}}\right)-1
    }
    <\infty .
\end{equation}
Let
\[
    G_+\defeq \max_{f\in\Call_+}|g_f|,
\]
with \(G_+=0\) if \(\Call_+=\emptyset\).  If \(t,u\) lie in the same component \(Q\), choose a path in \(\mathcal G_+\) from \(t\) to \(u\).  Along an edge \(f\) of this path,
\[
    |z^m_{t'}-z^m_t|
    \le |r_f(z^m)|+|g_f|
    \le R_+ + G_+ .
\]
A simple path has length at most \(|Q|-1\), so all pairwise differences inside \(Q\) are bounded uniformly in \(m\).  Since the mean on \(Q\) is fixed, every coordinate in \(Q\) is bounded uniformly in \(m\).  There are finitely many components, so the full sequence \(\{z^m\}\) is bounded.
\end{proof}

\begin{proposition}[Warm-started basin fixed point at a scale]\label{prop:warm-started-basin-fixed-point}
Fix a scale \(\sig>0\).  Assume \(\Delta_E\le\cstar n\) and \(n\) is large enough for Proposition~\ref{prop:one-step-target}.  Let \(z^0\in\basin_\sig\), and run the exact warm-started proximal IRLS orbit
\[
    z^{m+1}=T_\sig(z^m).
\]
Then:
\begin{enumerate}[label=(\roman*)]
    \item every finite iterate lies in the clean basin: \(z^m\in\basin_\sig\) for all \(m\ge0\);
    \item the orbit has at least one cluster point;
    \item every cluster point \(z^\infty\) of the orbit satisfies \(z^\infty\in\basin_\sig\) and \(T_\sig(z^\infty)=z^\infty\).
\end{enumerate}
Consequently, the warm-started exact inner loop always provides at least one basin fixed point of the exact proximal IRLS map.
\end{proposition}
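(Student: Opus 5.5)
The three claims follow by combining forward invariance of the basin, compactness of the orbit, and continuity of \(T_\sig\).

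\emph{Claim (i).} We argue by induction on \(m\). The base case is the hypothesis \(z^0\in\basin_\sig\). For the inductive step, suppose \(z^m\in\basin_\sig\). Since \(z^{m+1}=T_\sig(z^m)=\widehat z^{m+1}\) is exactly the undamped proximal target (\(\lambda_m=1\) in \eqref{eq:optional-relaxation}), Proposition~\ref{prop:one-step-target} gives \(\err(z_0^{m+1})\le\astar\sig\). The constraints \(\Delta_E\le\cstar n\), \(\mustar=\kappa n\) and \(n\) large are exactly those assumed there. Hence \(z^{m+1}\in\basin_\sig\), and every finite iterate stays in the basin.

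\emph{Claim (ii).} Lemma~\ref{lem:exact-orbit-bounded} shows the orbit \(\{z^m\}\) is bounded in the finite-dimensional space of retained triangle variables. By Bolzano--Weierstrass it has a convergent subsequence \(z^{m_k}\to z^\infty\), so at least one cluster point exists.

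\emph{Claim (iii): membership in the basin.} Let \(z^\infty\) be any cluster point. The map \(z\mapsto\err(z_0)\) is a distance to an affine subspace in a norm, hence continuous (indeed 1-Lipschitz, cf.\ Lemma~\ref{lem:error-convex}). Therefore \(\basin_\sig\) is closed. Since every \(z^{m_k}\) lies in \(\basin_\sig\) by (i), so does \(z^\infty\).

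\emph{Claim (iii): fixed-point property.} This is the main step, and I would handle it with the standard MM/proximal-point argument. Lemma~\ref{lem:mm} gives, for every \(m\),
\[
    F_\sig(z^{m+1})+\frac{\mustar}{2}\norm{z^{m+1}-z^m}_2^2\le F_\sig(z^m).
\]
Since \(F_\sig\ge0\), summing over \(m\) telescopes to
\[
    \sum_m\frac{\mustar}{2}\norm{z^{m+1}-z^m}_2^2\le F_\sig(z^0)<\infty,
\]
so \(\norm{z^{m+1}-z^m}\to0\). Along the subsequence, \(z^{m_k}\to z^\infty\) and therefore also \(z^{m_k+1}=T_\sig(z^{m_k})\to z^\infty\). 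Continuity of \(T_\sig\) (Lemma~\ref{lem:T-continuity-stationarity}) then gives \(T_\sig(z^{m_k})\to T_\sig(z^\infty)\). Uniqueness of limits yields \(T_\sig(z^\infty)=z^\infty\).

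Finally, by Lemma~\ref{lem:T-continuity-stationarity} this fixed point is a stationary point of \(F_\sig\). This gives the concluding "consequently" statement.

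The only delicate point is the vanishing of successive differences. This is where the proximal damping \(\mustar>0\) is essential: without it, descent alone would not force \(z^{m+1}-z^m\to0\). Lemma~\ref{lem:mm} already supplies the quadratic decrease term, so this step is routine.
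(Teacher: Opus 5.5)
Your proposal is correct and follows the paper's proof essentially step for step: induction on $m$ via Proposition~\ref{prop:one-step-target} with $\lambda_m=1$, boundedness of the orbit from Lemma~\ref{lem:exact-orbit-bounded}, closedness of the basin $\mathcal B_\sigma$, vanishing successive differences from the proximal descent inequality of Lemma~\ref{lem:mm}, and continuity of $T_\sigma$ to conclude $T_\sigma(z^\infty)=z^\infty$. The paper only asserts that $\mathcal B_\sigma$ is closed. Your justification, that it is a sublevel set of the continuous (indeed $1$-Lipschitz) distance function $z\mapsto\mathcal E(z_0)$, supplies that missing detail.
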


\begin{proof}
The finite-step basin statement is exactly the induction supplied by Proposition~\ref{prop:one-step-target} with \(\lambda_m=1\): if \(z^m\in\basin_\sig\), then \(z^{m+1}=T_\sig(z^m)=\widehat z^{m+1}\in\basin_\sig\).  Since \(z^0\in\basin_\sig\), induction gives \(z^m\in\basin_\sig\) for every finite \(m\).

By Lemma~\ref{lem:exact-orbit-bounded}, the exact orbit is bounded.  Hence it has at least one convergent subsequence, say
\[
    z^{m_\ell}\to z^\infty .
\]
Because \(\basin_\sig\) is closed and every \(z^{m_\ell}\) lies in \(\basin_\sig\), the limit also satisfies
\[
    z^\infty\in\basin_\sig .
\]
It remains to show that this cluster point is a fixed point.

The descent estimate from Lemma~\ref{lem:mm} gives, for the exact target step,
\[
    F_\sig(z^{m+1})
    \le
    F_\sig(z^m)-\frac{\mustar}{2}\norm{z^{m+1}-z^m}_2^2 .
\]
Summing from \(m=0\) to \(M\) and using \(F_\sig\ge0\) gives
\[
    \frac{\mustar}{2}\sum_{m=0}^{M}\norm{z^{m+1}-z^m}_2^2
    \le F_\sig(z^0)-F_\sig(z^{M+1})
    \le F_\sig(z^0).
\]
Therefore
\[
    \norm{z^{m+1}-z^m}_2\to0.
\]
Along the convergent subsequence, this implies
\[
    z^{m_\ell+1}\to z^\infty .
\]
But \(z^{m_\ell+1}=T_\sig(z^{m_\ell})\).  By continuity of \(T_\sig\), proved in Lemma~\ref{lem:T-continuity-stationarity},
\[
    T_\sig(z^\infty)
    =\lim_{\ell\to\infty}T_\sig(z^{m_\ell})
    =\lim_{\ell\to\infty}z^{m_\ell+1}
    =z^\infty .
\]
Thus every cluster point is a fixed point of the exact proximal IRLS map and, by the closed-basin argument above, lies in \(\basin_\sig\).
\end{proof}

\subsection{Stationary endpoint estimate at one Cauchy scale}

The one-step estimate proves basin invariance.  To anneal to a smaller scale, we need a sharper estimate at a fixed point.  This estimate is easier than the frozen-WLS estimate because at stationarity the nuisance score equation cancels internal bad-bad forces.

\begin{proposition}[Clean error of a basin stationary point]\label{prop:stationary}
Suppose \(z^\infty\in\basin_\sig\) is a fixed point of the proximal Cauchy-IRLS map, equivalently a stationary point of \(F_\sig\).  If \(\Delta_E\le\cstar n\) and \(n\) is sufficiently large, then
\begin{equation}\label{eq:stationary-bound}
    \err(z_0^\infty)
    \le
    \frac{3\Gconst\Delta_E}{n}\sig.
\end{equation}
In particular, after decreasing \(\cstar\) if necessary,
\begin{equation}\label{eq:stationary-next-basin}
    \err(z_0^\infty)
    \le
    \frac{\astar}{2}\sig
    =\astar\left(\frac\sig2\right).
\end{equation}
\end{proposition}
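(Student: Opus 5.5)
We work directly with the stationarity equation \(\nabla F_\sig(z^\infty)=0\) (Lemma~\ref{lem:T-continuity-stationarity}) and split it into the clean block and the nuisance block. The clean block gives, for every clean triangle,
\[
    B_0^\top\bigl(\wbase\,\psi_\sig(r_0)\bigr)+A^\top\bigl(\wbase\,\psi_\sig(r_b)\bigr)=0 .
\]
Since \(z^\infty\in\basin_\sig\), Lemma~\ref{lem:clean-conductances} shows that every clean row residual satisfies \(\abs{r_c}\le\sig/10\). Writing \(\psi_\sig(r_c)=\omega_\sig(r_c)r_c\), and using \(r_c=b_c^\top e\) with \(e=z_0^\infty-z_0^\star-\alpha\one\) (clean rows are exact), the clean block becomes
\[
    L_\Theta e=-A^\top\bigl(\wbase\,\psi_\sig(r_b)\bigr),
\]
where \(L_\Theta\) is the clean Laplacian with conductances \(\theta_c=\omega_\sig(r_c)\in[\mstar,1]\). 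The essential point is that, at stationarity, no proximal term and no frozen Schur complement appear: the nuisance variables influence the clean block only through the actual bad scores, and these are bounded rowwise by \(\abs{\psi_\sig}\le\sig/2\) and \(\wbase\le1\). The nuisance-block equation \(C^\top(\wbase\psi_\sig(r_b))=0\) is not needed for this bound. This is the sense in which ``internal bad-bad forces cancel.''

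Next, we bound the forcing. A clean triangle is incident to at most \(\dtri\le6\Delta_E\) bad rows by Lemma~\ref{lem:boundary-degree}, and each contributes a coefficient with \(\norm{a_f}_1\le1\). Hence
\[
    \norm{A^\top(\wbase\psi_\sig(r_b))}_\infty\le 6\Delta_E\cdot\frac{\sig}{2}=3\Delta_E\sig .
\]

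Then we apply the weighted clean Green estimate, Lemma~\ref{lem:weighted-clean-green}, which is valid because the conductances lie in \([\mstar,1]\) and \(\Delta_E\le\cstar n\le\delta_G n\). On the quotient modulo constants, \(e\) is determined by \(L_\Theta^\dagger\) applied to the right-hand side. Here one must check that the right-hand side lies in the range of \(L_\Theta\), i.e.\ is zero-sum; this holds automatically because it equals \(L_\Theta e\), and the clean graph is connected by Lemma~\ref{lem:diam}. This yields
\[
    \err(z_0^\infty)=\norm{e}_{\infty/\one}\le\frac{\Gconst}{n}\cdot3\Delta_E\sig ,
\]
which is \eqref{eq:stationary-bound}.

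Finally, \eqref{eq:stationary-next-basin} follows from \(3\Gconst\Delta_E/n\le21\cstar\le\astar/2=1/40\), which holds for \(\cstar=10^{-5}\); no further decrease of \(\cstar\) is needed. The main obstacle is justifying the linearization \(\psi_\sig(r_c)=\theta_c r_c\) with \(\theta_c\) in the admissible interval. This is exactly where membership of \(z^\infty\) in the basin (from Proposition~\ref{prop:warm-started-basin-fixed-point}) is used. Beyond that, the only subtlety is that the quotient-norm Green bound accepts the \(\ell_\infty\) norm of the force, and Lemma~\ref{lem:weighted-clean-green} explicitly allows this.
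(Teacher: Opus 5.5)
Your proposal is correct and follows the paper's proof essentially step for step: you rewrite the clean score as \(L_\Theta e\) with conductances \(\omega_\sig(r_c)\in[\mstar,1]\), bound the bad force by \(\dtri\,\sig/2\le 3\Delta_E\sig\), and apply Lemma~\ref{lem:weighted-clean-green}. The only deviation is that you get the zero-sum property of the bad force directly from the identity \(-A^\top s=L_\Theta e\), whereas the paper derives it from the nuisance equation \(C^\top s=0\); your route is valid and shows that equation is not needed for this bound, and your check that \(\cstar=10^{-5}\) already satisfies the paper's requirement \(\cstar\le\astar/(6\Gconst)\) is also correct.
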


\begin{proof}
Choose \(\alpha\) so that
\[
    e=z_0^\infty-z_0^\star-\alpha\one,
    \qquad
    \norm{e}_\infty=\err(z_0^\infty)\le\astar\sig.
\]
For every clean row \(c\), \(r_c(z^\infty)=b_c^\top e\), so \(\abs{r_c(z^\infty)}\le2\astar\sig=\frac{\sig}{10}\).  Define
\[
    \eta_c=
    \begin{cases}
    \psi_\sig(r_c(z^\infty))/r_c(z^\infty),&r_c(z^\infty)\ne0,\\
    1,&r_c(z^\infty)=0.
    \end{cases}
\]
Then \(\eta_c\in[\mstar,1]\), and the clean score equals \(L_\eta e\), where \(L_\eta\) is a clean Laplacian satisfying the weighted Green estimate.

For a bad row define
\[
    s_f=\wbase_f\psi_\sig(r_f(z^\infty)).
\]
By the Cauchy score cap and \(\wbase_f\le1\),
\begin{equation}\label{eq:bad-score-cap}
    \abs{s_f}\le\frac\sig2.
\end{equation}
Stationarity of \(F_\sig\) gives the block equations
\begin{equation}\label{eq:stationary-block-equations}
    L_\eta e+A^\top s=0,
    \qquad
    C^\top s=0.
\end{equation}
The second equation is the nuisance stationarity condition.  It is precisely what removes the internal bad-bad Schur-complement issue at a stationary point.  Since every full incidence row annihilates constants, \(A\one+C\one=0\), and therefore
\[
    \one^\top A^\top s=(A\one)^\top s=-(C\one)^\top s=-\one^\top C^\top s=0.
\]
Thus \(A^\top s\) is a zero-sum clean force.

For a fixed clean triangle, at most \(\dtri\le6\Delta_E\) bad rows touch it.  Hence
\[
    \norm{A^\top s}_\infty
    \le
    \dtri\frac\sig2
    \le
    3\Delta_E\sig.
\]
Applying the weighted clean Green estimate to
\(L_\eta e=-A^\top s\) gives
\[
    \err(z_0^\infty)
    =\norm{e}_{\infty/\one}
    \le
    \frac{\Gconst}{n}\,3\Delta_E\sig,
\]
which is \eqref{eq:stationary-bound}.  If \(\cstar\le \astar/(6\Gconst)\), then \(3\Gconst\Delta_E/n\le\astar/2\), proving \eqref{eq:stationary-next-basin}.
\end{proof}

\subsection{Induction over IRLS iterations and annealing scales}

We now prove the theorem.  The clean basin is an invariant proved inside the argument; it is not an assumption of the theorem.

\begin{proof}[Proof of Theorem~\ref{thm:main}]
Take \(\kappa=1/28\).  Take \(\cstar=10^{-5}\), or any smaller absolute constant if one wants to absorb additional finite-\(n\) margins into the statement.  By Lemmas~\ref{lem:unweighted-clean-green}--\ref{lem:weighted-clean-green}, Proposition~\ref{prop:one-step-target}, and Proposition~\ref{prop:stationary}, this value is below all smallness thresholds used in the argument for all sufficiently large \(n\).  Let \(n\ge n_0\) be large enough that those finite-\(n\) margins hold.

\paragraph{First scale.}
Set \(z^{0,0}=z^{\LS}\), and choose \(\sig_0\) by \eqref{eq:sigma0-choice}.  Lemma~\ref{lem:ls-basin} gives
\[
    z^{0,0}\in\basin_{\sig_0}.
\]
Thus the initial basin condition is obtained from the LS initializer and the initial scale choice.

\paragraph{Inner induction at a fixed scale.}
Fix \(k\) and suppose the warm start \(z^{k,0}\in\basin_{\sig_k}\).  If \(z^{k,m}\in\basin_{\sig_k}\), then Proposition~\ref{prop:one-step-target} gives
\[
    \widehat z^{k,m+1}\in\basin_{\sig_k}.
\]
If the implementation uses the optional relaxation \eqref{eq:optional-relaxation}, Corollary~\ref{cor:one-step} gives
\[
    z^{k,m+1}\in\basin_{\sig_k}.
\]
Without optional relaxation, this is the same statement with \(\lambda_m=1\).  By induction over \(m\), every inner proximal IRLS iterate at scale \(\sig_k\) remains in \(\basin_{\sig_k}\).

\paragraph{Endpoint estimate and annealing.}
For the ideal fixed-point solver in Theorem~\ref{thm:main}, use the exact inner map \(T_{\sig_k}\) warm-started at \(z^{k,0}\).  Proposition~\ref{prop:warm-started-basin-fixed-point} shows that this exact orbit has a cluster point \(z^k\) such that
\[
    z^k\in\basin_{\sig_k},
    \qquad
    T_{\sig_k}(z^k)=z^k .
\]
This is the fixed point taken at scale \(\sig_k\).  The point is not an arbitrary unrelated fixed point of the nonconvex Cauchy objective; it is a fixed point obtained as a cluster point of the warm-started basin-preserving orbit.  Since it lies in \(\basin_{\sig_k}\), Proposition~\ref{prop:stationary} applies and gives
\[
    \err(z_0^k)
    \le
    \astar\frac{\sig_k}{2}
    =\astar\sig_{k+1}.
\]
Therefore the warm start for the next scale, \(z^{k+1,0}=z^k\), lies in \(\basin_{\sig_{k+1}}\).  This proves the induction over all annealing scales.

\paragraph{Convergence.}
The stationary estimate \eqref{eq:stationary-bound} gives
\[
    \err(z_0^k)
    \le
    \frac{3\Gconst\Delta_E}{n}\sig_k.
\]
Since \(\sig_k=2^{-k}\sig_0\to0\), we obtain
\[
    \err(z_0^k)\to0.
\]
This is exactly \eqref{eq:main-conv}.  The quantitative estimate \eqref{eq:main-rate} holds with \(C_\star=3\Gconst\).
\end{proof}

\subsection{Role of the proximal term in the Schur-complement estimate}

The role of the proximal term is visible in the nuisance inverse.  Without proximal damping, eliminating nuisance variables produces the term
\[
    A^\top\Theta_b C(C^\top\Theta_b C)^\dagger C^\top j.
\]
The pseudoinverse can transmit currents through a large nearly singular non-clean subnetwork.  The clean-boundary degree \(\dtri\) controls how many non-clean rows touch each clean triangle, but it does not by itself control all internal non-clean paths.

With proximal damping, the term is instead
\[
    A^\top\Theta_b C(C^\top\Theta_b C+\mustar I)^{-1} C^\top j.
\]
The inverse is uniformly stable in \(\ell_\infty\):
\[
    \norm{(C^\top\Theta_b C+\mustar I)^{-1}v}_\infty
    \le\frac1{\mustar}\norm{v}_\infty.
\]
Moreover, \(\norm{C^\top j}_\infty\) is controlled by the ordinary triangle-graph degree \(D_n\le3n\), while the final multiplication by \(A^\top\Theta_b C\) is controlled by the clean-boundary bad degree \(\dtri\le6\Delta_E\).  Therefore
\[
    \norm{A^\top\Theta_b C(C^\top\Theta_b C+\mustar I)^{-1} C^\top j}_\infty
    \le
    \dtri\frac{D_n}{\mustar}\sig
    \lesssim
    \Delta_E\sig.
\]
This is the right scale: the clean Green response is \(O(1/n)\), so the resulting clean displacement is \(O((\Delta_E/n)\sig)\).  When \(\Delta_E\le\cstar n\) with \(\cstar\) sufficiently small, the WLS target cannot leave the clean basin.

\subsection{Consequences for clean lengths and locations}

The following corollaries record the corresponding consequences for clean shared-edge residuals, edge lengths, and camera locations.

\begin{corollary}[Clean shared-edge residuals vanish]\label{cor:clean-resid}
For every clean-clean shared-edge row \((t,u)\in\Czero\),
\[
    r_{tu}(z^k)\to0.
\]
\end{corollary}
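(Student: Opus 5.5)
The plan is to read the claim off the rate estimate \eqref{eq:main-rate} of Theorem~\ref{thm:main}, together with the exact clean consistency relation \eqref{eq:clean-log-equation}. Let $z^k$ be the stage output at scale $\sig_k$ produced in the proof of Theorem~\ref{thm:main}, and pick $\alpha_k\in\R$ attaining the minimum in \eqref{eq:error}. Put $e^k=z_0^k-z_0^\star-\alpha_k\one$. Then $\norm{e^k}_\infty=\err(z_0^k)\le C_\star\Delta_E\sig_k/n$.

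Now fix a clean-clean row $(t,u)\in\Czero$. Since both endpoints are clean, the row has no nuisance coefficient, and its residual is $r_{tu}(z^k)=z_u^k-z_t^k-g_{tu}$. Substituting $g_{tu}=z_u^\star-z_t^\star$ from \eqref{eq:clean-log-equation}, the gauge $\alpha_k$ cancels and
\[
    r_{tu}(z^k)=e^k_u-e^k_t .
\]
This is the same identity already used in Lemma~\ref{lem:clean-conductances}. It gives
\[
    \abs{r_{tu}(z^k)}\le 2\norm{e^k}_\infty\le 2C_\star\frac{\Delta_E}{n}\sig_k\le 2C_\star\cstar\sig_0 2^{-k}.
\]
The right-hand side tends to zero because $\sig_k=2^{-k}\sig_0$. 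The bound is uniform over all rows of $\Czero$, so every clean shared-edge residual vanishes, in fact geometrically fast.

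There is no real obstacle here. The only points needing care are the orientation sign convention in \eqref{eq:shared-log-equation}, which does not affect absolute values, and the fact that the bound must hold for whichever cluster-point fixed point $z^k$ is chosen at each stage. That second point is already covered, because the estimate of Proposition~\ref{prop:stationary} applies to any basin fixed point.
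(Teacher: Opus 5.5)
Your proposal is correct and matches the paper's proof: the paper also writes the clean-row residual as $(z_u^k-z_u^\star)-(z_t^k-z_t^\star)$, notes that the common gauge $\alpha_k$ cancels, and concludes from the uniform gauge-adjusted convergence in Theorem~\ref{thm:main}. Your version only adds an explicit rate by invoking \eqref{eq:main-rate}, which gives $|r_{tu}(z^k)|\le 2C_\star c_\star\sigma_0 2^{-k}$ uniformly over all clean-clean rows $C_0$.
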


\begin{proof}
For a clean row,
\[
    r_{tu}(z^k)
    =(z_u^k-z_u^\star)-(z_t^k-z_t^\star).
\]
Choose numbers \(\alpha_k\) such that \(\max_{t\in\Tzero}|z_t^k-z_t^\star-\alpha_k|\to0\).  Each parenthesized term differs from \(\alpha_k\) by a quantity tending uniformly to zero, so their difference tends to zero.
\end{proof}

\begin{corollary}[Clean edge-length proposals]\label{cor:lengths}
For every stage \(k\), define the clean length proposal
\[
    \widehat\ell_{t,e}^{(k)}=\exp(z_t^k)h_{t,e},
    \qquad t\in\Tzero,\ e\subset t.
\]
There exists a sequence of positive global scale factors \(\lambda_k\) such that
\begin{equation}\label{eq:clean-length-relative-convergence}
    \max_{\substack{t\in\Tzero\\ e\subset t}}
    \left|
       \frac{\widehat\ell_{t,e}^{(k)}}{\lambda_k\ell_e^\star}-1
    \right|
    \longrightarrow0.
\end{equation}
Thus all clean incident triangles give asymptotically consistent proposals for every clean edge, up to the unavoidable global scale at each stage.  Under any fixed additive gauge normalization of the clean log-scales, the factors \(\lambda_k\) may be fixed accordingly and the normalized proposals converge to the corresponding global scale multiple of the true lengths.
\end{corollary}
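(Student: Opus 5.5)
The statement reduces to the uniform clean log-scale estimate of Theorem~\ref{thm:main} and the exact clean consistency relation \eqref{eq:clean-local-scale}. Let \(\alpha_k\) be a minimizer in \eqref{eq:main-rate}, and define \(\lambda_k\defeq\exp(\alpha_k)>0\). Put
\[
    \varepsilon_k\defeq\max_{t\in\Tzero}\abs{z_t^k-z_t^\star-\alpha_k}.
\]
By \eqref{eq:main-rate}, \(\varepsilon_k\le C_\star\Delta_E\sig_k/n\le C_\star\cstar\sig_0 2^{-k}\to0\).

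For a clean triangle \(t\) and an edge \(e\subset t\), \eqref{eq:clean-local-scale} gives \(h_{t,e}=\ell_e^\star\exp(-z_t^\star)\). Hence
\[
    \frac{\widehat\ell_{t,e}^{(k)}}{\lambda_k\ell_e^\star}
    =\exp\bigl(z_t^k-z_t^\star-\alpha_k\bigr).
\]
The ratio does not depend on the geometry or on \(e\); only the exponent enters. For \(\abs{x}\le1\) we have \(\abs{e^x-1}\le e\abs{x}\). Therefore the maximum in \eqref{eq:clean-length-relative-convergence} is at most \(e^{\varepsilon_k}-1\le e\,\varepsilon_k\) once \(\varepsilon_k\le1\), and this tends to zero. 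The bound also gives an explicit rate of order \(\Delta_E\sig_k/n\).

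The consistency claim follows directly. Two clean triangles \(t,u\) containing a common clean edge \(e\) have proposals whose ratio is \(\exp\bigl((z_t^k-z_t^\star)-(z_u^k-z_u^\star)\bigr)\). This is within \(e^{2\varepsilon_k}\) of one, matching Corollary~\ref{cor:clean-resid}.

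For a fixed additive gauge, suppose the clean log-scales are normalized by a linear functional \(\phi(z_0)=0\) with \(\phi(\one)=1\) (for example the mean, or one pinned triangle), applied to both \(z^k\) and \(z^\star\). Then \(\alpha_k\) differs from \(\phi(z_0^k-z_0^\star)=0\) by at most \(\norm{\phi}\varepsilon_k\). Taking \(\lambda_k\equiv1\) after normalization changes the bound only by a factor \(e^{\norm{\phi}\varepsilon_k}\), which tends to one.

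The only point needing care is that \(\alpha_k\) is a genuine minimizer, which holds because the objective in \eqref{eq:error} is a continuous coercive function of \(\alpha\). Since the argument is a direct exponentiation, I expect no real obstacle.
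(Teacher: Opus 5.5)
Your proof is correct and follows the paper's own argument. You take \(\alpha_k\) from the uniform clean log-scale bound of Theorem~\ref{thm:main}, set \(\lambda_k=e^{\alpha_k}\), and use \eqref{eq:clean-local-scale} to reduce the ratio to \(\exp(z_t^k-z_t^\star-\alpha_k)\), which is bounded by \(e^{\varepsilon_k}-1\). Your explicit rate of order \(\Delta_E\sigma_k/n\) and the bound \(\abs{\alpha_k}\le\norm{\phi}\varepsilon_k\) for a fixed linear gauge are small additions: the paper asserts the gauge-normalization sentence without spelling out this argument.
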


\begin{proof}
Choose \(\alpha_k\in\R\) such that
\[
    \varepsilon_k
    \defeq
    \max_{t\in\Tzero}|z_t^k-z_t^\star-\alpha_k|
    \longrightarrow0,
\]
and set \(\lambda_k=e^{\alpha_k}\).  Using \eqref{eq:clean-local-scale},
\[
    \frac{\widehat\ell_{t,e}^{(k)}}{\lambda_k\ell_e^\star}
    =
    \exp(z_t^k-z_t^\star-\alpha_k).
\]
The exponent is bounded in absolute value by \(\varepsilon_k\), uniformly over clean triangles and their edges.  Therefore
\[
    \max_{\substack{t\in\Tzero\\ e\subset t}}
    \left|
       \frac{\widehat\ell_{t,e}^{(k)}}{\lambda_k\ell_e^\star}-1
    \right|
    \le e^{\varepsilon_k}-1
    \longrightarrow0.
\]
\end{proof}

\begin{corollary}[Location recovery from recovered clean lengths]\label{cor:locations}
If the downstream displacement averaging stage is supplied with the gauge-normalized clean edge lengths from Corollary~\ref{cor:lengths}, then the clean camera locations are recovered in the limit up to global translation and global scale.
\end{corollary}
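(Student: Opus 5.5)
The plan is to reduce location recovery to a rigidity statement for the clean displacement graph, and then pass to the limit using the uniform relative convergence from Corollary~\ref{cor:lengths}. First I fix the gauge. By Corollary~\ref{cor:lengths}, after dividing by \(\lambda_k\), every clean proposal \(\widehat\ell^{(k)}_{t,e}/\lambda_k\) converges uniformly to \(\ell_e^\star\). Hence any aggregation that lies between the minimum and maximum of the clean proposals also converges to \(\ell_e^\star\). This covers the median over clean incident triangles, or simply any single clean proposal. Call the resulting gauge-normalized lengths \(\widehat\ell_e^{(k)}\), and note that \(\widehat\ell_e^{(k)}\to\ell_e^\star\) for every clean edge \(e\in\Ezero\). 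The supplied displacements are then
\[
v_e^{(k)}=\widehat\ell_e^{(k)}d_e=\widehat\ell_e^{(k)}d_e^\star\longrightarrow x_i^\star-x_j^\star
\]
for every clean edge \(e=ij\). This holds because clean directions are exact.

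Second, I identify the limit problem. The clean camera graph \(([n],\Ezero)\) is connected. Indeed, by Lemma~\ref{lem:qmin} every clean edge has at least \(n/2\) clean witnesses, and each vertex has clean degree at least \(n-1-\Delta_E>n/2\), so any two vertices share a clean neighbour. With exact displacements \(v_e=x_i^\star-x_j^\star\) on a connected graph, the system \(x_i-x_j=v_{ij}\) for \(ij\in\Ezero\) has solution set exactly \(\{x^\star+\tau\one^\top:\tau\in\R^3\}\). The kernel of the incidence operator is spanned by constants, and \(x^\star\) is one solution. Every minimizer of a displacement-averaging objective whose loss vanishes only at zero residual is therefore \(x^\star\) up to translation. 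Undoing the normalization by \(\lambda_k\) gives recovery up to global scale as well.

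Third, I pass to the limit. For the least-squares solution in a fixed translation gauge (zero mean), the solution is \(x^{(k)}=L_E^\dagger B_E^\top v^{(k)}\), which is a fixed linear and hence continuous map of \(v^{(k)}\). So \(x^{(k)}\to x^\star-\bar x^\star\). For a robust Cauchy or weighted version, I would argue stability as follows. The objective converges uniformly on compacta to its value at the exact data. The exact-data objective has the unique minimizer class \(x^\star+\tau\), with zero loss, and is coercive modulo translation on the connected graph. A standard argmin-continuity argument then shows that minimizers converge modulo translation.

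I expect the main obstacle to be this last step for nonconvex robust losses, where minimizers need not be unique for perturbed data. I would first restrict the statement to least squares, or to the exact-displacement limit. Alternatively, I would note that as \(k\to\infty\) the residuals at \(x^\star\) tend to zero, so \(x^\star\) lies in a strongly convex basin of the Cauchy loss, and apply the implicit function theorem there.
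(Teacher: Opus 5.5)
Your proposal is correct and follows essentially the same argument as the paper. In both, the gauge-normalized clean displacements converge uniformly to \(\lambda_k(x_i^\star-x_j^\star)\), the clean camera graph is connected because its minimum degree is at least \(n-1-\Delta_E>n/2\), and exact displacements on a connected graph fix the locations up to one translation, with \(\lambda_k\) as the global scale. Your explicit stability argument for the downstream solver fills in a step the paper leaves implicit when it says the normalized displacement errors vanish. That argument uses the pseudoinverse formula for least squares, and argmin-continuity or a local strongly convex basin for Cauchy.
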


\begin{proof}
For every recovered clean edge \(ij\), Corollary~\ref{cor:lengths} gives a common positive scale \(\lambda_k\) such that the recovered displacement is
\[
    \lambda_k\ell_{ij}^\star d_{ij}^\star+o(\lambda_k\ell_{ij}^\star)
    =\lambda_k(x_i^\star-x_j^\star)+o(\lambda_k\ell_{ij}^\star),
\]
uniformly over clean edges after the same gauge normalization.  The clean camera graph is connected because its minimum degree is at least \(n-1-\Delta_E>n/2\) for large \(n\).  Therefore the clean displacement equations determine all camera locations up to one global translation.  The common factor \(\lambda_k\) is the unavoidable global scale ambiguity, and the normalized displacement errors vanish as \(k\to\infty\).
\end{proof}

\makeatletter
\let\Call\TriPproof@saved@Call
\makeatother

\section{Extension to General Redescending Losses}
\label{app:trip-general-redescending-losses}

\paragraph{Scope and relation to the Cauchy theorem.}
The preceding section proves a complete warm-started fixed-point theorem for
Cauchy IRLS.  The same framework extends to the redescending Cauchy, Welsch,
Tukey, and truncated-least-squares objectives.
The extension keeps exactly the same complete-graph camera model, the same
clean/non-clean triangle partition used only in the analysis, the same full
all-triangle shared-edge objective, the same base weights, the same proximal
WLS subproblems, and the same max-degree combinatorics.  In particular, the
algorithm is not given the clean partition, and every retained shared-edge row
is used.

There is one technical distinction that is necessary for mathematical
correctness.  The Cauchy loss is coercive in each residual, whereas Welsch,
Tukey, and truncated least squares are bounded.  For a bounded loss, descent of
the objective alone does not imply that the nuisance triangle variables remain
in a compact set, so the Cauchy compactness argument does not apply directly.
The proof below avoids any coercivity or compactness assumption.  At each fixed
scale it proves that the selected MM stationarity residual tends to zero, even
if some nuisance variables have no convergent subsequence.  A quantitative
clean-block estimate then permits a finite, observable stopping rule before the
next annealing scale is entered.  Thus all four redescending losses are covered
without an unjustified bounded-orbit assumption.

\subsection{Comparison of the four redescending losses}
\label{sec:genred-comparison}

All losses in Table~\ref{tab:genred-comparison} are normalized to have unit
quadratic curvature at the origin.  Put
\[
    x=\frac{r}{\sig},
    \qquad
    a=\frac1{20}.
\]
The column \(m_a\) is the lower bound for the IRLS weight on the clean window
\(|r|\le 2a\sig=\sig/10\).  The score cap is
\(|\omega_\sig(r)r|\le K\sig\).  The rounded clean Green constant is denoted by
\(\mathsf G_a\), and the proximal parameter is
\(\mustar=n/(4\mathsf G_a)\).  The column \(c_{\rm MM}\) gives an explicit,
non-optimized max-degree fraction for the complete warm-started proximal-MM
theorem proved below.  The column \(\tau_{\min}\) is the corresponding lower
bound on a geometric annealing ratio
\(\sig_{k+1}=\tau\sig_k\); in every row \(\tau=1/2\) is admissible.  The final
column gives the larger threshold available from the stationary-point estimate
alone when \(\tau=1/2\); that column assumes that a clean-basin stationary point
is supplied at every scale and therefore is not the unconditional proximal-MM
constant.

\begin{table}[t]
\centering
\scriptsize
\setlength{\tabcolsep}{3.2pt}
\renewcommand{\arraystretch}{1.35}
\resizebox{\textwidth}{!}{\begin{tabular}{lcccccccc}
\toprule
Loss
& normalized loss \(\rho_\sig(r)\)
& IRLS weight \(\omega_\sig(r)\)
& \(m_a\) at \(a=1/20\)
& score cap \(K\)
& rounded \(\mathsf G_a\)
& explicit \(c_{\rm MM}\)
& \(\tau_{\min}\) for \(c_{\rm MM}\)
& stationary-only \(c\) for \(\tau=1/2\)
\\
\midrule
Cauchy
& \(\frac{\sig^2}{2}\log(1+x^2)\)
& \((1+x^2)^{-1}\)
& \(100/101\)
& \(1/2\)
& \(7\)
& \(1.0\!\times\!10^{-5}\)
& \(0.0042\)
& \(1.19\!\times\!10^{-3}\)
\\
Welsch
& \(\frac{\sig^2}{2}(1-e^{-x^2})\)
& \(e^{-x^2}\)
& \(e^{-1/100}\)
& \(1/\sqrt{2e}\)
& \(7\)
& \(1.2\!\times\!10^{-5}\)
& \(0.00433\)
& \(1.38\!\times\!10^{-3}\)
\\
Tukey
& \(\frac{\sig^2}{6}[1-(1-x^2)^3]\) for \(|x|\le1\), constant otherwise
& \((1-x^2)_+^2\)
& \((99/100)^2\)
& \(16/(25\sqrt5)\)
& \(11\)
& \(8.0\!\times\!10^{-6}\)
& \(0.00303\)
& \(1.32\!\times\!10^{-3}\)
\\
Truncated LS
& \(\frac12\min\{r^2,\sig^2\}\)
& \(\mathbf 1_{\{|x|<1\}}\), with any value in \([0,1]\) at \(|x|=1\)
& \(1\)
& \(1\)
& \(5\)
& \(1.0\!\times\!10^{-5}\)
& \(0.0060\)
& \(8.33\!\times\!10^{-4}\)
\\
\bottomrule
\end{tabular}}
\caption{Comparison of the normalized redescending losses covered by
Theorem~\ref{thm:genred-main}.  The displayed constants are rigorous sufficient
values for the proof in this section, not optimized phase-transition constants.
The stationary-only column uses
\(c<a\tau/(6\mathsf G_aK)\) with \(a=1/20\) and \(\tau=1/2\).}
\label{tab:genred-comparison}
\end{table}

\subsection{A redescending MM-admissible loss class}
\label{sec:genred-admissible-class}

For each \(\sig>0\), write an even loss as
\begin{equation}\label{eq:genred-q-representation}
    \rho_\sig(r)=q_\sig(r^2),
    \qquad
    q_\sig:[0,\infty)\longrightarrow[0,\infty).
\end{equation}
For a concave function \(q\), define its superdifferential at \(u\ge0\) by
\begin{equation}\label{eq:genred-superdiff}
    \partial^+q(u)
    \defeq
    \left\{\gamma\in\R:
    q(v)\le q(u)+\gamma(v-u)
    \text{ for every }v\ge0\right\}.
\end{equation}
At a differentiability point this set contains only \(q'(u)\).

\begin{definition}[Redescending MM-admissible scale family]
\label{def:genred-admissible}
Fix a clean-basin radius \(a>0\).  A family
\(\{\rho_\sig\}_{\sig>0}\) is called \emph{redescending MM-admissible at
radius \(a\)} if, for every \(\sig>0\), the representation
\eqref{eq:genred-q-representation} and an even weight selector
\(\omega_\sig:\R\to[0,1]\) can be chosen so that the following conditions
hold with constants \(m_a\in(0,1]\) and \(K<\infty\) independent of
\(\sig\).
\begin{enumerate}[label=(R\arabic*),leftmargin=3em]
    \item The function \(q_\sig\) is continuous, nondecreasing, concave, and
    satisfies \(q_\sig(0)=0\).

    \item For every \(r\in\R\), the selected weight is a valid tangent slope:
    \begin{equation}\label{eq:genred-selector-supergradient}
        \frac12\omega_\sig(r)
        \in
        \partial^+q_\sig(r^2).
    \end{equation}
    Equivalently, for every \(v\ge0\),
    \begin{equation}\label{eq:genred-tangent-majorizer}
        q_\sig(v)
        \le
        q_\sig(r^2)
        +\frac12\omega_\sig(r)(v-r^2).
    \end{equation}

    \item The clean-window weights have a uniform positive floor:
    \begin{equation}\label{eq:genred-clean-floor}
        \omega_\sig(r)\ge m_a
        \qquad
        \text{whenever }|r|\le2a\sig.
    \end{equation}

    \item The selected score is globally bounded at the annealing scale:
    \begin{equation}\label{eq:genred-score-cap}
        |\omega_\sig(r)r|
        \le K\sig
        \qquad
        \text{for every }r\in\R.
    \end{equation}

    \item The score is redescending:
    \begin{equation}\label{eq:genred-redescending}
        \omega_\sig(r)r\longrightarrow0
        \qquad
        \text{as }|r|\longrightarrow\infty.
    \end{equation}
\end{enumerate}
At a kink, different admissible weight selections are permitted.  Every claim
below holds for every sequence of selections satisfying (R2)--(R4).
\end{definition}

\begin{remark}[Which assumptions drive the proof]
Condition (R2) gives the global quadratic MM majorizer.  Conditions (R3) and
(R4) are the two quantitative properties used in the recovery argument:
(R3) preserves the clean restoring Laplacian, while (R4) caps every adversarial
row force by \(O(\sig)\).  The redescending limit (R5) describes the intended
loss class and is useful algorithmically, but the deterministic error estimate
uses only the stronger uniform cap (R4).  No coercivity assumption is made.
\end{remark}

\begin{proposition}[The four losses are redescending MM-admissible]
\label{prop:genred-four-losses}
The normalized Cauchy, Welsch, Tukey, and truncated-least-squares families in
Table~\ref{tab:genred-comparison} satisfy
Definition~\ref{def:genred-admissible}.  Their clean-window floors and score
caps are
\begin{align*}
    m_a^{\rm Cau}&=\frac{1}{1+4a^2},
    &K_{\rm Cau}&=\frac12,\\
    m_a^{\rm Wel}&=e^{-4a^2},
    &K_{\rm Wel}&=\frac{1}{\sqrt{2e}},\\
    m_a^{\rm Tuk}&=(1-4a^2)^2,
    &K_{\rm Tuk}&=\frac{16}{25\sqrt5},
    \qquad 0<a<\frac12,\\
    m_a^{\rm TLS}&=1,
    &K_{\rm TLS}&=1,
    \qquad 0<a<\frac12.
\end{align*}
\end{proposition}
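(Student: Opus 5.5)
The plan is to verify (R1)--(R5) loss by loss, working throughout in the normalized variable \(u=r^2\) and \(x=r/\sig\). In each case I will write \(q_\sig(u)\) explicitly, identify the weight selector as \(\omega_\sig(r)=2q_\sig'(r^2)\) (or a supergradient at kinks), and reduce every quantitative claim to a one-variable inequality in \(x\).

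For (R1)--(R2), the argument is the same as in the Cauchy MM lemma (Lemma~\ref{lem:mm}): concavity of \(q_\sig\) makes the tangent line a global majorizer, which is exactly \eqref{eq:genred-tangent-majorizer}. Concretely:
\begin{itemize}
\item Cauchy: \(q(u)=\frac{\sig^2}{2}\log(1+u/\sig^2)\), with derivative \(\frac12(1+u/\sig^2)^{-1}\).
\item Welsch: \(q(u)=\frac{\sig^2}{2}(1-e^{-u/\sig^2})\), with derivative \(\frac12e^{-u/\sig^2}\).
\item Tukey: \(q(u)=\frac{\sig^2}{6}[1-(1-u/\sig^2)^3]\) for \(u\le\sig^2\) and constant afterwards. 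The derivative \(\frac12(1-u/\sig^2)^2\) is nonincreasing and continuous, vanishing at \(u=\sig^2\), so \(q\) is \(C^1\) and concave.
\item Truncated LS: \(q(u)=\frac12\min\{u,\sig^2\}\) is a minimum of affine functions, hence concave. Its superdifferential at \(u=\sig^2\) is \([0,\frac12]\), which matches the allowed selection \(\omega\in[0,1]\) at \(|x|=1\).
\end{itemize}
Continuity, monotonicity and \(q(0)=0\) are immediate in all four cases. Since each \(q'\) is nonincreasing and nonnegative, the derivative is a supergradient, so (R2) holds.

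For (R3), each weight is a nonincreasing function of \(|x|\). The floor on \(|x|\le2a\) is therefore attained at \(x=2a\):
\begin{itemize}
\item Cauchy: \((1+4a^2)^{-1}\).
\item Welsch: \(e^{-4a^2}\).
\item Tukey: \((1-4a^2)^2\), which requires \(2a<1\), i.e.\ \(a<\tfrac12\).
\item Truncated LS: the weight is \(1\) for \(|x|<1\), valid since \(2a<1\).
\end{itemize}

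For (R4), I will maximize the score \(|\omega_\sig(r)r|=\sig\,|x|\,\omega(x)\) over \(x\ge0\):
\begin{itemize}
\item Cauchy: \(x/(1+x^2)\) peaks at \(x=1\) with value \(\frac12\).
\item Welsch: \(xe^{-x^2}\) peaks at \(x=1/\sqrt2\) with value \(1/\sqrt{2e}\).
\item Tukey: \(x(1-x^2)^2\) on \([0,1]\) has derivative \((1-x^2)(1-5x^2)\), so it peaks at \(x=1/\sqrt5\) with value \(\frac{1}{\sqrt5}\cdot\frac{16}{25}\).
\item Truncated LS: \(x\cdot\mathbf 1_{\{x<1\}}\), together with the selection at \(x=1\), is at most \(1\).
\end{itemize}
For (R5), each score tends to zero as \(|x|\to\infty\): it is \(O(1/x)\) for Cauchy, decays exponentially for Welsch, and is identically zero for \(|x|>1\) for Tukey and truncated LS.

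The only step needing care is the treatment of the kinks and the piecewise definitions. I must check that Tukey's \(q\) stays concave across \(u=\sig^2\), which follows from the \(C^1\) matching with zero slope. I must also check that for truncated LS the entire interval of selections at \(|x|=1\) remains admissible in both (R2) and (R4). Everything else is elementary calculus.
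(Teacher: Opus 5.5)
Your proposal is correct and follows essentially the same route as the paper. For each family you write $q_\sigma$, get concavity and the tangent majorizer from a nonincreasing derivative (with the full superdifferential $[0,\tfrac12]$ at the truncated-LS kink), take the clean floor at $|x|=2a$, and maximize the normalized score in one variable. The only items you leave implicit are that each selector is even and takes values in $[0,1]$, both of which are immediate from the formulas.
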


\begin{proof}
We verify the four families separately.

\paragraph{Cauchy.}
For
\[
    \rho_\sig^{\rm Cau}(r)
    =\frac{\sig^2}{2}\log\left(1+\frac{r^2}{\sig^2}\right),
\]
we have
\[
    q_\sig^{\rm Cau}(u)
    =\frac{\sig^2}{2}\log\left(1+\frac{u}{\sig^2}\right),
    \qquad
    \omega_\sig^{\rm Cau}(r)
    =\frac{1}{1+(r/\sig)^2}.
\]
The derivative of \(q_\sig^{\rm Cau}\) is positive and decreasing, so the
function is nondecreasing and concave, and (R2) holds with equality at the
tangent point.  If \(|r|\le2a\sig\), then
\[
    \omega_\sig^{\rm Cau}(r)
    \ge\frac{1}{1+4a^2}.
\]
Writing \(x=|r|/\sig\),
\[
    \frac{|\omega_\sig^{\rm Cau}(r)r|}{\sig}
    =\frac{x}{1+x^2}
    \le\frac12,
\]
because the maximum occurs at \(x=1\).  The same expression tends to zero as
\(x\to\infty\), proving (R5).

\paragraph{Welsch.}
For
\[
    \rho_\sig^{\rm Wel}(r)
    =\frac{\sig^2}{2}\left(1-e^{-r^2/\sig^2}\right),
\]
we have
\[
    q_\sig^{\rm Wel}(u)
    =\frac{\sig^2}{2}\left(1-e^{-u/\sig^2}\right),
    \qquad
    \omega_\sig^{\rm Wel}(r)=e^{-r^2/\sig^2}.
\]
Again \(q_\sig^{\rm Wel}\) is nondecreasing and concave because its derivative
is positive and decreasing.  On the clean window,
\[
    \omega_\sig^{\rm Wel}(r)\ge e^{-4a^2}.
\]
For \(x=|r|/\sig\), the normalized score is \(xe^{-x^2}\).  Its derivative is
\(e^{-x^2}(1-2x^2)\), so its maximum is attained at
\(x=1/\sqrt2\) and equals \(1/\sqrt{2e}\).  It tends to zero at infinity.

\paragraph{Tukey.}
Use the normalized Tukey biweight loss
\begin{equation}\label{eq:genred-tukey-loss}
    \rho_\sig^{\rm Tuk}(r)
    =
    \begin{cases}
    \displaystyle
    \frac{\sig^2}{6}
    \left[1-\left(1-\frac{r^2}{\sig^2}\right)^3\right],
    & |r|\le\sig,\\[2mm]
    \displaystyle \frac{\sig^2}{6},
    & |r|>\sig.
    \end{cases}
\end{equation}
Then
\[
    \omega_\sig^{\rm Tuk}(r)
    =\left(1-\frac{r^2}{\sig^2}\right)_+^2.
\]
As a function of \(u=r^2\), the derivative is
\(\frac12(1-u/\sig^2)^2\) for \(0<u<\sig^2\) and zero for
\(u>\sig^2\).  This derivative is nonnegative, continuous, and nonincreasing,
so the corresponding \(q_\sig^{\rm Tuk}\) is nondecreasing and concave.  If
\(0<a<1/2\) and \(|r|\le2a\sig\), then
\[
    \omega_\sig^{\rm Tuk}(r)
    \ge(1-4a^2)^2.
\]
For \(0\le x\le1\), the normalized score is
\(x(1-x^2)^2\), whose derivative is
\((1-x^2)(1-5x^2)\).  The maximum is attained at \(x=1/\sqrt5\) and equals
\[
    \frac1{\sqrt5}\left(1-\frac15\right)^2
    =\frac{16}{25\sqrt5}.
\]
The score is identically zero for \(|r|\ge\sig\), so it is redescending.

\paragraph{Truncated least squares.}
Let
\begin{equation}\label{eq:genred-tls-loss}
    \rho_\sig^{\rm TLS}(r)
    =\frac12\min\{r^2,\sig^2\}.
\end{equation}
Then
\[
    q_\sig^{\rm TLS}(u)=\frac12\min\{u,\sig^2\}.
\]
This function is continuous, nondecreasing, and concave: its slope is
\(1/2\) below \(u=\sig^2\) and zero above it.  Choose
\begin{equation}\label{eq:genred-tls-selector}
    \omega_\sig^{\rm TLS}(r)
    =
    \begin{cases}
    1,&|r|<\sig,\\
    0,&|r|>\sig,
    \end{cases}
\end{equation}
with any value in \([0,1]\) when \(|r|=\sig\).  At the kink this is exactly
the full interval of valid supergradient slopes after multiplication by two,
so (R2) holds for every such selection.  If \(a<1/2\), every clean-window
residual satisfies \(|r|<\sig\), hence \(m_a=1\).  Finally,
\[
    |\omega_\sig^{\rm TLS}(r)r|\le\sig,
\]
and the selected score is zero outside the truncation interval.  This proves
all four claims.
\end{proof}

\subsection{Weighted clean Green response for a general loss}
\label{sec:genred-weighted-green}

The unweighted clean Green estimate has already been proved in
Lemma~\ref{lem:unweighted-clean-green}.  We record the perturbation needed for
a general clean-window weight floor.  Put
\begin{equation}\label{eq:genred-GJ-deltaJ}
    G_J\defeq\frac{32}{7},
    \qquad
    \delta_J\defeq\frac1{384}.
\end{equation}
For a floor \(m_a\), define
\begin{equation}\label{eq:genred-beta-Ga}
    \beta_a\defeq6G_J(1-m_a),
    \qquad
    \mathsf G_a\ge\frac{G_J}{1-\beta_a}.
\end{equation}
The second expression is used only when \(\beta_a<1\).

\begin{lemma}[General weighted clean Green response]
\label{lem:genred-weighted-green}
Assume \(\Delta_E\le\delta_Jn\), and let
\(L_\Theta=B_0^\top\Theta B_0\) be a clean triangle Laplacian whose
conductances lie in \([m_a,1]\).  If \(\beta_a<1\), then for every vector
\(y\) on \(\Tzero\),
\begin{equation}\label{eq:genred-weighted-green-bound}
    \norm{L_\Theta^\dagger y}_{\infty/\one}
    \le
    \frac{\mathsf G_a}{n}\norm{y}_{\infty/\one}.
\end{equation}
\end{lemma}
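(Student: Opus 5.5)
The plan is to copy the proof of Lemma~\ref{lem:weighted-clean-green}, with the fixed floor \(100/101\) replaced by a general floor \(m_a\), and to start from the unweighted estimate of Lemma~\ref{lem:unweighted-clean-green}, which holds because \(\Delta_E\le\delta_J n\).

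First, split the weighted Laplacian as
\[
    L_\Theta=L_0-H,
    \qquad
    H=B_0^\top(I-\Theta)B_0 .
\]
Here \(H\) is a weighted clean Laplacian whose conductances \(1-\theta_c\) lie in \([0,1-m_a]\). Every clean triangle has at most \(D_n=3(n-3)\) incident clean rows (Lemma~\ref{lem:triangle-degree}), and each row contributes at most \((1-\theta_c)\cdot 2\norm{x}_\infty\) to the row sum. Hence
\[
    \norm{Hx}_\infty\le 6(n-3)(1-m_a)\norm{x}_\infty\le 6n(1-m_a)\norm{x}_\infty .
\]
Since \(H\one=0\), the same bound holds with quotient norms on both sides: apply it to a near-optimal representative of \(x\) modulo constants, and use \(\norm{\cdot}_{\infty/\one}\le\norm{\cdot}_\infty\) on the output.

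Next, compose with \(L_0^\dagger\). Lemma~\ref{lem:unweighted-clean-green} gives
\[
    \norm{L_0^\dagger Hx}_{\infty/\one}
    \le\frac{G_J}{n}\,6n(1-m_a)\norm{x}_{\infty/\one}
    =\beta_a\norm{x}_{\infty/\one}.
\]
Now work on the quotient space \(\R^{\Tzero}/\spanop\{\one\}\), equivalently on zero-sum forces and potentials modulo constants. The clean triangle graph is connected by Lemma~\ref{lem:diam}, and \(\theta_c\ge m_a>0\), so the kernels of \(L_0\) and \(L_\Theta\) are both exactly \(\spanop\{\one\}\). On this quotient we can factor
\[
    L_\Theta=L_0\bigl(I-L_0^\dagger H\bigr).
\]
Because \(\beta_a<1\), the Neumann series gives
\[
    \norm{(I-L_0^\dagger H)^{-1}}_{\infty/\one}\le\frac{1}{1-\beta_a}.
\]
It follows that \(L_\Theta^\dagger y\equiv(I-L_0^\dagger H)^{-1}L_0^\dagger y\) modulo constants, and therefore
\[
    \norm{L_\Theta^\dagger y}_{\infty/\one}
    \le\frac{G_J}{(1-\beta_a)n}\norm{y}_{\infty/\one}
    \le\frac{\mathsf G_a}{n}\norm{y}_{\infty/\one}.
\]

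The main obstacle I expect is justifying this quotient factorization when \(y\) is not zero-sum. Both pseudoinverses annihilate \(\one\), so one first replaces \(y\) by its zero-sum projection \(y_\perp\). It then has to be checked that passing to \(y_\perp\) does not worsen the bound: the quotient norm is half the range, and subtracting a constant leaves the range unchanged. Lemma~\ref{lem:unweighted-clean-green} was already stated with quotient norms on both sides, so the same bookkeeping used in Lemma~\ref{lem:weighted-clean-green} goes through verbatim.
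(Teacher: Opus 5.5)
Your proposal is correct and follows the paper's proof essentially step for step: the splitting $L_\Theta=L_0-H$, the row bound $\norm{Hx}_\infty\le 6n(1-m_a)\norm{x}_\infty$, composition with Lemma~\ref{lem:unweighted-clean-green} to obtain the quotient contraction factor $\beta_a$, and the Neumann series for $L_\Theta=L_0(I-L_0^\dagger H)$, which yields $G_J/((1-\beta_a)n)\le\mathsf G_a/n$. Your explicit check that both kernels equal $\spanop\{\one\}$ is a useful addition that the paper leaves implicit, but Lemma~\ref{lem:diam} is stated under $\Delta_E\le10^{-3}n$, which is stronger than your hypothesis $\Delta_E\le n/384$; connectedness still holds here, e.g.\ from $h_0\ge n/2-6\Delta_E>0$ in Lemma~\ref{lem:l1-complete-graph-combinatorics}, or from the paper's own remark that the construction of Lemma~\ref{lem:diam} works whenever $\Delta_E<n/8$ and $n$ is large.
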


\begin{proof}
Write
\[
    L_\Theta=L_0-H,
    \qquad
    H=B_0^\top(I-\Theta)B_0.
\]
The matrix \(H\) is a clean-triangle weighted Laplacian whose edge
conductances are at most \(1-m_a\).  A clean triangle has degree at most
\(3(n-3)\), so
\begin{equation}\label{eq:genred-H-infty}
    \norm{Hx}_\infty
    \le 2(1-m_a)3(n-3)\norm{x}_\infty
    \le6n(1-m_a)\norm{x}_\infty.
\end{equation}
Since \(H\one=0\), choose a representative of the quotient class of \(x\)
whose \(\ell_\infty\) norm is arbitrarily close to
\(\norm{x}_{\infty/\one}\).  Lemma~\ref{lem:unweighted-clean-green} then gives
\[
    \norm{L_0^\dagger Hx}_{\infty/\one}
    \le
    \frac{G_J}{n}\norm{Hx}_{\infty/\one}
    \le
    6G_J(1-m_a)\norm{x}_{\infty/\one}
    =\beta_a\norm{x}_{\infty/\one}.
\]
Thus \(I-L_0^\dagger H\) is invertible on the quotient modulo constants, with
inverse norm at most \((1-\beta_a)^{-1}\).  Since
\[
    L_\Theta=L_0(I-L_0^\dagger H)
\]
on that quotient, we obtain
\[
    \norm{L_\Theta^\dagger y}_{\infty/\one}
    \le
    \frac{G_J}{(1-\beta_a)n}\norm{y}_{\infty/\one}.
\]
The choice of \(\mathsf G_a\) in \eqref{eq:genred-beta-Ga} proves the lemma.
\end{proof}

\subsection{General proximal MM iteration}
\label{sec:genred-proximal-mm}

Let \(\mathcal C\) be the retained shared-edge row set, let \(D\) be its full signed incidence matrix, and write
\[
    r(z)=Dz-g.
\]
For a redescending MM-admissible loss, define the full objective
\begin{equation}\label{eq:genred-full-objective}
    F_\sig^\rho(z)
    \defeq
    \sum_{f\in\mathcal C}\wbase_f\rho_\sig(r_f(z)).
\end{equation}
Given an inner iterate \(z^m\), choose an admissible weight
\(\omega_f^m=\omega_\sig(r_f(z^m))\), and set
\begin{equation}\label{eq:genred-frozen-conductance}
    \theta_f^m=\wbase_f\omega_f^m,
    \qquad
    \Theta^m=\diag(\theta_f^m:f\in\mathcal C).
\end{equation}
The exact proximal MM target is
\begin{equation}\label{eq:genred-proximal-target}
    z^{m+1}
    =\argmin_z
    \left\{
    \frac12\sum_{f\in\mathcal C}\theta_f^m r_f(z)^2
    +\frac{\mustar}{2}\norm{z-z^m}_2^2
    \right\}.
\end{equation}
The proximal term makes this quadratic strictly convex for every weight
selection, including the zero weights produced by Tukey and truncated least
squares.

\begin{lemma}[MM descent for smooth and truncated losses]
\label{lem:genred-mm-descent}
For every admissible weight selection, the exact target
\eqref{eq:genred-proximal-target} satisfies
\begin{equation}\label{eq:genred-descent}
    F_\sig^\rho(z^{m+1})
    \le
    F_\sig^\rho(z^m)
    -\frac{\mustar}{2}\norm{z^{m+1}-z^m}_2^2.
\end{equation}
Consequently,
\begin{equation}\label{eq:genred-square-summable-steps}
    \sum_{m=0}^\infty\norm{z^{m+1}-z^m}_2^2<\infty,
    \qquad
    \norm{z^{m+1}-z^m}_2\longrightarrow0.
\end{equation}
\end{lemma}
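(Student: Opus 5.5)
The plan mirrors the proof of Lemma~\ref{lem:mm}, with the derivative of \(q_\sig\) replaced by the selected supergradient from (R2). Then sum the descent estimate to get the square-summability consequence.

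\emph{Step 1: majorize each row.} Fix \(z^m\) and a row \(f\). Apply the tangent inequality \eqref{eq:genred-tangent-majorizer} with \(r=r_f(z^m)\) and \(v=r_f(z)^2\ge0\). This gives, for every \(z\),
\[
    \rho_\sig(r_f(z))
    \le
    \rho_\sig(r_f(z^m))
    +\tfrac12\omega_f^m\bigl(r_f(z)^2-r_f(z^m)^2\bigr).
\]
Multiply by \(\wbase_f\ge0\) and sum over \(f\in\mathcal C\). The result is
\[
    F_\sig^\rho(z)\le F_\sig^\rho(z^m)+Q_m(z)-Q_m(z^m),
    \qquad
    Q_m(z)\defeq\tfrac12\sum_f\theta_f^m r_f(z)^2 .
\]
This step uses only (R2). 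It therefore holds for every admissible selection, including kink values of truncated least squares and zero Tukey weights. No differentiability is needed, because the superdifferential inequality \eqref{eq:genred-superdiff} holds globally on \([0,\infty)\).

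\emph{Step 2: use optimality of the target.} Let \(\Phi_m(z)=Q_m(z)+\frac{\mustar}{2}\norm{z-z^m}_2^2\). This function is \(\mustar\)-strongly convex, and \(z^{m+1}\) is its exact minimizer. Comparing with the feasible point \(z^m\) gives \(\Phi_m(z^{m+1})\le\Phi_m(z^m)=Q_m(z^m)\). For the stated constant \(\mustar/2\) this crude comparison already suffices. (Strong convexity would even give \(\Phi_m(z^m)-\Phi_m(z^{m+1})\ge\frac{\mustar}{2}\norm{z^{m+1}-z^m}^2\), hence the constant \(\mustar\), but that is not needed.) Since \(\frac{\mustar}{2}\norm{z^{m+1}-z^m}^2=\Phi_m(z^{m+1})-Q_m(z^{m+1})\), we get
\[
    Q_m(z^{m+1})-Q_m(z^m)\le-\tfrac{\mustar}{2}\norm{z^{m+1}-z^m}_2^2 .
\]
Inserting this into Step 1 at \(z=z^{m+1}\) yields \eqref{eq:genred-descent}.

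\emph{Step 3: summability.} By (R1), \(q_\sig\ge q_\sig(0)=0\), so \(F_\sig^\rho\ge0\). Telescope \eqref{eq:genred-descent} from \(m=0\) to \(M\):
\[
    \frac{\mustar}{2}\sum_{m\le M}\norm{z^{m+1}-z^m}_2^2\le F_\sig^\rho(z^0)<\infty .
\]
Let \(M\to\infty\) to obtain \eqref{eq:genred-square-summable-steps}. The terms of a convergent series tend to zero, so \(\norm{z^{m+1}-z^m}_2\to0\).

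No step is a genuine obstacle. The one point to handle with care is that the weights \(\omega_f^m\) may be arbitrary kink selections that change from iteration to iteration. The argument is nevertheless valid because each step uses the majorizer built at the current \(z^m\) only. Boundedness of the orbit is never invoked, consistent with the absence of coercivity for bounded losses.
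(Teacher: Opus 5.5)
Your proposal is correct and follows essentially the same route as the paper. The (R2) tangent inequality, summed with the base weights, gives a quadratic majorizer that is exact at $z^m$; comparing the proximal objective at its exact minimizer $z^{m+1}$ with its value at $z^m$ yields the descent inequality; and telescoping with $F_\sigma^\rho\ge 0$ gives square-summability (your side remark that strong convexity would sharpen the constant to $\mu_n$ is also correct but unnecessary).
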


\begin{proof}
Apply the tangent inequality \eqref{eq:genred-tangent-majorizer} with
\(v=r_f(z)^2\) and tangent point \(r_f(z^m)^2\):
\[
    \rho_\sig(r_f(z))
    \le
    \rho_\sig(r_f(z^m))
    +\frac12\omega_f^m
      \bigl(r_f(z)^2-r_f(z^m)^2\bigr).
\]
Multiplying by \(\wbase_f\) and summing gives a global quadratic majorizer
which is exact at \(z=z^m\).  The point \(z^{m+1}\) minimizes this majorizer
plus the nonnegative proximal term.  Comparing its value with the value at
\(z^m\), where the proximal term is zero, gives
\eqref{eq:genred-descent}.  Summing that inequality and using
\(F_\sig^\rho\ge0\) proves \eqref{eq:genred-square-summable-steps}.
\end{proof}

For the general loss, use the basin
\begin{equation}\label{eq:genred-basin}
    \basin_\sig(a)
    \defeq
    \{z=(z_0,z_b):\err(z_0)\le a\sig\}.
\end{equation}
The comparison point associated with \(z^m\in\basin_\sig(a)\) is exactly as in
the Cauchy proof:
\[
    \bar z^m=(z_0^\star+\alpha_m\one,z_b^m),
    \qquad
    \norm{z_0^m-z_0^\star-\alpha_m\one}_\infty\le a\sig.
\]
Define the lagged bad current
\begin{equation}\label{eq:genred-lagged-current}
    j_f^m
    \defeq
    \theta_f^m r_f(\bar z^m),
    \qquad f\in\Cb.
\end{equation}

\begin{lemma}[Clean conductances and bad current for a general loss]
\label{lem:genred-clean-current}
If \(z^m\in\basin_\sig(a)\), then every clean row has conductance in
\([m_a,1]\), and every bad-row current satisfies
\begin{equation}\label{eq:genred-current-cap}
    |j_f^m|\le J_a\sig,
    \qquad
    J_a\defeq K+a.
\end{equation}
\end{lemma}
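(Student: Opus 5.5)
The plan is to repeat the proofs of Lemma~\ref{lem:clean-conductances} and Lemma~\ref{lem:j-bound}, replacing the Cauchy constants by the abstract constants $m_a$ and $K$ from Definition~\ref{def:genred-admissible}.

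\textbf{Clean conductances.} Since $z^m\in\basin_\sig(a)$, we can fix $\alpha_m$ with $e^m=z_0^m-z_0^\star-\alpha_m\one$ and $\norm{e^m}_\infty\le a\sig$. For a clean row $c$ joining clean triangles $t,u$, the exact clean equation \eqref{eq:clean-log-equation} gives
\[
r_c(z^m)=e^m_u-e^m_t ,
\]
so $|r_c(z^m)|\le 2a\sig$. Both endpoints are clean, hence $\pi_t=\pi_u=1$ and $\wbase_c=1$. This gives $\theta_c^m=\omega_c^m$. The lower bound $\theta_c^m\ge m_a$ is condition (R3), and the upper bound $\theta_c^m\le 1$ holds because the selector takes values in $[0,1]$. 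At a kink, any admissible selection works, since (R3) is imposed on every selection.

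\textbf{Bad-row currents.} We split into the two cases used in Lemma~\ref{lem:j-bound}.

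\emph{Case 1: no clean endpoint.} Then $a_f=0$, so $r_f(\bar z^m)=r_f(z^m)$. Using $\wbase_f\le1$ and (R4),
\[
|j_f^m|=\wbase_f\,|\omega_f^m r_f(z^m)|\le K\sig .
\]

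\emph{Case 2: exactly one clean endpoint.} By \eqref{eq:row-norms} we have $\norm{a_f}_1\le1$, and
\[
r_f(\bar z^m)-r_f(z^m)=-a_f^\top e^m ,
\]
which has absolute value at most $a\sig$. The triangle inequality then gives
\[
|j_f^m|\le \wbase_f|\omega_f^m r_f(z^m)|+\wbase_f\omega_f^m a\sig\le K\sig+a\sig=J_a\sig ,
\]
using (R4) again together with $\omega_f^m\le1$ and $\wbase_f\le1$.

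Every step is elementary, so there is no real obstacle. The only point to check is that (R4) is stated for the selected weight at every $r$, including kink points such as $|r|=\sig$ for truncated least squares. This makes the bound uniform over all admissible selections.
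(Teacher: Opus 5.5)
Your proposal is correct and follows the paper's proof essentially step for step. Both arguments use exact clean consistency with base weight one plus (R3) and $\omega_\sigma\le 1$ for the clean conductances. For the bad rows, both split into the no-clean-endpoint case, bounded by (R4) with $\bar w_f\le1$, and the one-clean-endpoint case, where the shift $|a_f^\top e^m|\le a\sigma$ is added to the (R4) cap.
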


\begin{proof}
For a clean row \(c\), exact clean consistency gives
\[
    |r_c(z^m)|
    \le2\err(z_0^m)
    \le2a\sig.
\]
Its base weight equals one, so (R3) and the upper bound
\(\omega_\sig\le1\) give a conductance in \([m_a,1]\).

For a bad row with no clean endpoint,
\(r_f(\bar z^m)=r_f(z^m)\), and therefore (R4) gives
\[
    |j_f^m|
    =\wbase_f|\omega_f^m r_f(z^m)|
    \le K\sig.
\]
For a bad row with one clean endpoint,
\[
    |r_f(\bar z^m)-r_f(z^m)|\le a\sig.
\]
Using \(0\le\wbase_f\omega_f^m\le1\) and (R4),
\[
    |j_f^m|
    \le
    \wbase_f|\omega_f^m r_f(z^m)|
    +\wbase_f\omega_f^m a\sig
    \le(K+a)\sig.
\]
This proves the lemma.
\end{proof}

\begin{lemma}[General Schur-complement bounds]
\label{lem:genred-schur-bounds}
Assume \(z^m\in\basin_\sig(a)\), and let
\(\delta=z^{m+1}-\bar z^m=(\delta_0,\delta_b)\).  Let \(L_m\) be the
clean-clean Laplacian with the frozen clean conductances.  Then
\begin{equation}\label{eq:genred-clean-schur-equation}
    (L_m+\mustar I+S_m)\delta_0
    =\mustar e^m-p_m,
\end{equation}
where \(e^m=z_0^m-z_0^\star-\alpha_m\one\), and where \(S_m\) is a passive
boundary response satisfying
\begin{equation}\label{eq:genred-S-bound}
    \norm{S_mx}_\infty
    \le2\dtri\norm{x}_\infty.
\end{equation}
The forcing satisfies
\begin{equation}\label{eq:genred-p-bound}
    \norm{p_m}_\infty
    \le
    \dtri J_a\sig
    \left(1+\frac{D_n}{\mustar}\right).
\end{equation}
Finally,
\begin{equation}\label{eq:genred-delta-max}
    \norm{\delta_0}_\infty
    \le
    \norm{e^m}_\infty
    +\frac{\norm{p_m}_\infty}{\mustar}.
\end{equation}
\end{lemma}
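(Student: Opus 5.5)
The plan is to replay the Cauchy argument of Lemmas~\ref{lem:bad-force-local} and \ref{lem:max-principle-delta} essentially verbatim. The only change is that the Cauchy-specific current bound $\jstar\sig$ is replaced by $J_a\sig$ from Lemma~\ref{lem:genred-clean-current}. Nothing in those proofs used the specific form of $\omega_\sig$ beyond three facts: conductances lie in $[0,1]$, the lagged currents are rowwise bounded, and the proximal grounding is present.

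\textbf{Block normal equations.} First I will write the normal equations of the proximal target \eqref{eq:genred-proximal-target} expanded around the comparison point $\bar z^m$. Clean residuals vanish at $\bar z^m$, and $\bar z^m_b=z^m_b$. Together these give the same two block equations \eqref{eq:block-normal-clean}--\eqref{eq:block-normal-bad}, now with $\Theta_b^m=\diag(\theta_f^m)$ and $0\le\theta_f^m\le1$. Because $\mustar>0$, the nuisance matrix $K_b=C^\top\Theta_b^mC+\mustar I$ is invertible even when Tukey or TLS produce zero weights. Eliminating $\delta_b=-K_b^{-1}(C^\top\Theta_b^mA\delta_0+C^\top j^m)$ then yields \eqref{eq:genred-clean-schur-equation}, with $S_m$ and $p_m$ given by the formulas \eqref{eq:S-m-def} and \eqref{eq:p-m-def}.

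\textbf{Bound on $S_m$.} I will identify $S_m$ as the Kron reduction of a passive network. Clean triangles touched by bad rows form the boundary, non-clean triangles form the interior, and every bad row is an edge of conductance $\theta_f^m\in[0,1]$. The proximal term supplies an internal grounding $\mustar$, which guarantees a unique internal minimizer. Lemma~\ref{lem:passive-boundary-response} then gives the $M$-matrix structure. Since each clean triangle has boundary conductance at most $\dtri$, the same lemma gives \eqref{eq:genred-S-bound}.

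\textbf{Bound on $p_m$.} For the direct term, each clean triangle touches at most $\dtri$ bad rows, each carrying current at most $J_a\sig$, so $\norm{A^\top j^m}_\infty\le\dtri J_a\sig$. For the leakage term, the grounded $M$-matrix maximum-principle argument of Lemma~\ref{lem:nuisance-inverse} applies unchanged and gives $\norm{K_b^{-1}v}_\infty\le\norm v_\infty/\mustar$. Since $\norm{C^\top j^m}_\infty\le D_nJ_a\sig$ by Lemma~\ref{lem:triangle-degree}, the leakage is bounded by $\dtri D_nJ_a\sig/\mustar$, because applying $A^\top\Theta_b^mC$ is local with entries of size at most one. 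Adding the two estimates gives \eqref{eq:genred-p-bound}.

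\textbf{Bound on $\delta_0$.} Finally, $L_m+S_m$ has nonpositive off-diagonal entries and nonnegative row sums. Evaluating \eqref{eq:genred-clean-schur-equation} at a positive maximum (respectively a negative minimum) of $\delta_0$ gives $\mustar\abs{\delta_{0,i}}\le\mustar\norm{e^m}_\infty+\norm{p_m}_\infty$, which is \eqref{eq:genred-delta-max}.

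The only step needing real care is the validity of the passive-network identification when some conductances are exactly zero. Such zero-weight rows simply delete edges, and the grounding $\mustar$ keeps the internal block positive definite. Consequently the uniqueness hypothesis of Lemma~\ref{lem:passive-boundary-response} holds for every admissible weight selection, including the arbitrary value chosen at the TLS kink.
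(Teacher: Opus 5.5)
Your proposal is correct and follows the paper's proof essentially step for step. It uses the same block normal equations expanded around $\bar z^m$, the same Kron-reduction identification of $S_m$ via Lemma~\ref{lem:passive-boundary-response}, the same direct-plus-leakage bound on $p_m$ through the grounded maximum principle $\norm{K_b^{-1}v}_\infty\le\norm{v}_\infty/\mustar$, and the same maximum-principle argument for $\delta_0$, with $\jstar$ replaced by $J_a$ from Lemma~\ref{lem:genred-clean-current}.
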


\begin{proof}
Partition the bad-row incidence matrix as \([A\ C]\), and let
\(\Theta_b^m\) be the diagonal matrix of bad frozen conductances.  Expanding
the quadratic subproblem around \(\bar z^m\) gives
\begin{align*}
    (L_m+A^\top\Theta_b^mA+\mustar I)\delta_0
    +A^\top\Theta_b^mC\delta_b
    &=\mustar e^m-A^\top j^m,\\
    C^\top\Theta_b^mA\delta_0
    +(C^\top\Theta_b^mC+\mustar I)\delta_b
    &=-C^\top j^m.
\end{align*}
Put
\[
    K_b=C^\top\Theta_b^mC+\mustar I.
\]
The matrix \(K_b\) is a grounded graph Laplacian.  The maximum principle gives
\begin{equation}\label{eq:genred-nuisance-inverse}
    \norm{K_b^{-1}v}_\infty
    \le\frac1{\mustar}\norm v_\infty.
\end{equation}
Eliminating \(\delta_b\) yields \eqref{eq:genred-clean-schur-equation} with
\begin{align*}
    S_m
    &=A^\top\Theta_b^mA
    -A^\top\Theta_b^mC K_b^{-1}C^\top\Theta_b^mA,\\
    p_m
    &=A^\top j^m
    -A^\top\Theta_b^mC K_b^{-1}C^\top j^m.
\end{align*}
Exactly as in Lemma~\ref{lem:bad-force-local}, \(S_m\) is the
Dirichlet-to-Neumann response of the passive bad-row network after the
non-clean triangle variables are eliminated and grounded by \(\mustar I\).
Lemma~\ref{lem:passive-boundary-response} therefore gives
\eqref{eq:genred-S-bound}.

By Lemma~\ref{lem:genred-clean-current},
\[
    \norm{A^\top j^m}_\infty\le\dtri J_a\sig.
\]
Every nuisance triangle is incident to at most \(D_n\) retained rows, so
\[
    \norm{C^\top j^m}_\infty\le D_nJ_a\sig.
\]
Combining this with \eqref{eq:genred-nuisance-inverse}, and then multiplying
by the boundary-local matrix \(A^\top\Theta_b^mC\), gives
\[
    \norm{A^\top\Theta_b^mC K_b^{-1}C^\top j^m}_\infty
    \le
    \dtri\frac{D_n}{\mustar}J_a\sig.
\]
This proves \eqref{eq:genred-p-bound}.

Finally, \(L_m+S_m\) is an \(M\)-matrix with nonpositive off-diagonal entries
and nonnegative row sums.  Applying the same maximum-principle argument as in
Lemma~\ref{lem:max-principle-delta} to
\eqref{eq:genred-clean-schur-equation} gives
\eqref{eq:genred-delta-max}.
\end{proof}

\begin{proposition}[Loss-uniform invariant-basin condition]
\label{prop:genred-one-step}
Let the loss be redescending MM-admissible at radius \(a\), assume
\(\beta_a<1\), and choose \(\mathsf G_a\) as in
\eqref{eq:genred-beta-Ga}.  Set
\begin{equation}\label{eq:genred-kappa-choice}
    \kappa_a\defeq\frac{1}{4\mathsf G_a},
    \qquad
    \mustar=\kappa_an.
\end{equation}
Let \(c>0\) satisfy
\begin{equation}\label{eq:genred-c-smallness}
    c<\delta_J
\end{equation}
and
\begin{equation}\label{eq:genred-one-step-smallness}
    12\mathsf G_a c a
    +6\mathsf G_aJ_a c
      \left(1+\frac{12c}{\kappa_a}\right)
      \left(1+\frac3{\kappa_a}\right)
    <\frac a2.
\end{equation}
Then, for all sufficiently large \(n\), if \(\Delta_E\le cn\) and
\(z^m\in\basin_\sig(a)\), every exact proximal target
\eqref{eq:genred-proximal-target} satisfies
\begin{equation}\label{eq:genred-target-in-basin}
    z^{m+1}\in\basin_\sig(a).
\end{equation}
This holds for every admissible choice of a Tukey or truncated-LS zero/kink
weight.
\end{proposition}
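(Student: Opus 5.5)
The plan mirrors the proof of Proposition~\ref{prop:one-step-target}, with the Cauchy-specific constants replaced by their loss-uniform counterparts.

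\textbf{Step 1 (setup).} Fix \(z^m\in\basin_\sig(a)\) and choose \(\alpha_m\) with \(\norm{e^m}_\infty\le a\sig\). By Lemma~\ref{lem:genred-clean-current}, the frozen clean conductances lie in \([m_a,1]\), and every lagged bad current is bounded by \(J_a\sig\).

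\textbf{Step 2 (clean Green bound).} Since \(c<\delta_J\) and \(\beta_a<1\), Lemma~\ref{lem:genred-weighted-green} applies to \(L_m\) and gives
\[
\norm{L_m^\dagger y}_{\infty/\one}\le R_G\norm{y}_{\infty/\one},\qquad R_G=\mathsf G_a/n .
\]

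\textbf{Step 3 (Schur-complement equation).} Lemma~\ref{lem:genred-schur-bounds} supplies the clean Schur equation \eqref{eq:genred-clean-schur-equation}, the passive bound on \(S_m\), the forcing bound \(P_m\le \dtri J_a\sig(1+D_n/\mustar)\), and the maximum-principle bound \(\norm{\delta_0}_\infty\le E_m+P_m/\mustar\). Nothing in these steps depends on the particular weight selected at a kink, because each lemma holds for every admissible selection. This is what makes the result uniform over Tukey/TLS zero weights.

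\textbf{Step 4 (quotient recurrence).} Rewrite the equation as
\[
L_m\delta_0=\mustar(e^m-\delta_0)-S_m\delta_0-p_m ,
\]
whose right side is automatically zero-sum. Apply \(L_m^\dagger\) in the quotient norm and use the quotient triangle inequality \(\norm{e^m-\delta_0}_{\infty/\one}\le E_m+E^{\rm tar}_{m+1}\). Bound \(\norm{S_m\delta_0}_\infty\le 2\dtri(E_m+P_m/\mustar)\). Absorb the term \(R_G\mustar E^{\rm tar}_{m+1}\) on the left; with \(\kappa_a=1/(4\mathsf G_a)\) we have \(R_G\mustar=1/4\). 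The result is
\[
E^{\rm tar}_{m+1}\le \tfrac43 R_G\bigl((\mustar+2\dtri)E_m+(1+2\dtri/\mustar)P_m\bigr).
\]

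\textbf{Step 5 (substitution).} Insert \(\dtri\le 6\Delta_E\le 6cn\) (Lemma~\ref{lem:boundary-degree}), \(D_n\le 3n\), \(E_m\le a\sig\), and \(\mustar=\kappa_a n\). This yields
\[
\frac{E^{\rm tar}_{m+1}}{\sig}\le \frac43\Bigl[\tfrac a4+12\mathsf G_a c a+6\mathsf G_a J_a c\bigl(1+\tfrac{12c}{\kappa_a}\bigr)\bigl(1+\tfrac3{\kappa_a}\bigr)\Bigr].
\]
Hypothesis \eqref{eq:genred-one-step-smallness} makes the bracket smaller than \(a/4+a/2=3a/4\). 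Hence \(E^{\rm tar}_{m+1}<a\sig\), which is \eqref{eq:genred-target-in-basin}.

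The phrase ``for all sufficiently large \(n\)'' is needed only for the finite-\(n\) margins in Lemmas~\ref{lem:diam} and \ref{lem:unweighted-clean-green}.

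\textbf{Main obstacle.} The delicate step is the bookkeeping in Step 4. The maximum principle controls \(\delta_0\) only in \(\ell_\infty\), not in the quotient norm, and it is this \(\ell_\infty\) bound that is fed into the \(S_m\) term. I would therefore check carefully that \(L_m+S_m\) keeps the \(M\)-matrix sign pattern when some bad conductances vanish. It does: zero conductances simply delete edges from the passive network of Lemma~\ref{lem:passive-boundary-response}, and the grounding \(\mustar I\) keeps \(K_b\) invertible.
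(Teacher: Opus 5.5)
Your proposal is correct and follows the paper's proof essentially line for line. It applies the quotient Green estimate to $L_m\delta_0=\mu_n(e^m-\delta_0)-S_m\delta_0-p_m$, feeds the $\ell_\infty$ maximum-principle bound into the passive $S_m$ term, absorbs via $\mathsf G_a\kappa_a=1/4$, and substitutes $d^{\triangle}_{\mathrm b}\le 6\Delta_E$ and $D_n\le 3n$ to reach the bracket controlled by the one-step smallness hypothesis. The only small inaccuracy is that this one-step estimate does not use Lemma~\ref{lem:diam}; the large-$n$ requirement comes solely from the clean Green lemmas.
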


\begin{proof}
Let
\[
    E_m=\norm{e^m}_\infty\le a\sig,
    \qquad
    P_m=\norm{p_m}_\infty,
    \qquad
    E_{m+1}=\norm{\delta_0}_{\infty/\one}.
\]
The clean conductances lie in \([m_a,1]\), so
Lemma~\ref{lem:genred-weighted-green} applies to \(L_m\).  Rearranging
\eqref{eq:genred-clean-schur-equation},
\[
    L_m\delta_0
    =\mustar(e^m-\delta_0)-S_m\delta_0-p_m.
\]
Applying the quotient Green estimate and the triangle inequality gives
\begin{equation}\label{eq:genred-one-step-recurrence-a}
    E_{m+1}
    \le
    \frac{\mathsf G_a}{n}\mustar(E_m+E_{m+1})
    +\frac{\mathsf G_a}{n}
      \bigl(\norm{S_m\delta_0}_\infty+P_m\bigr).
\end{equation}
By \eqref{eq:genred-S-bound} and \eqref{eq:genred-delta-max},
\[
    \norm{S_m\delta_0}_\infty
    \le
    2\dtri\left(E_m+\frac{P_m}{\mustar}\right).
\]
Move the term containing \(E_{m+1}\) to the left.  Since
\(\mustar=\kappa_an\),
\begin{equation}\label{eq:genred-one-step-recurrence-b}
    E_{m+1}
    \le
    \frac{\mathsf G_a/n}{1-\mathsf G_a\kappa_a}
    \left[
       (\mustar+2\dtri)E_m
       +\left(1+\frac{2\dtri}{\mustar}\right)P_m
    \right].
\end{equation}
Write \(\varepsilon=\Delta_E/n\).  Use
\(\dtri\le6\Delta_E\), \(D_n\le3n\),
\eqref{eq:genred-p-bound}, and \(E_m\le a\sig\).  After division by
\(\sig\),
\begin{equation}\label{eq:genred-one-step-recurrence-final}
    \frac{E_{m+1}}{\sig}
    \le
    \frac{1}{1-\mathsf G_a\kappa_a}
    \left[
       \mathsf G_a(\kappa_a+12\varepsilon)a
       +6\mathsf G_aJ_a\varepsilon
        \left(1+\frac{12\varepsilon}{\kappa_a}\right)
        \left(1+\frac3{\kappa_a}\right)
    \right].
\end{equation}
The choice \(\kappa_a=1/(4\mathsf G_a)\) gives
\(\mathsf G_a\kappa_a=1/4\).  At zero corruption, the numerator in brackets
is \(a/4\).  Condition \eqref{eq:genred-one-step-smallness} bounds all
remaining terms by a number strictly smaller than \(a/2\).  Therefore the
right-hand side of \eqref{eq:genred-one-step-recurrence-final} is at most
\(a\) for all sufficiently large \(n\), as required by the clean Green lemmas.  Hence \(E_{m+1}\le a\sig\), which is
\eqref{eq:genred-target-in-basin}.
\end{proof}

\subsection{Asymptotic MM stationarity without coercivity}
\label{sec:genred-asymptotic-stationarity}

For the weights selected at \(z^m\), define the row-score vector
\begin{equation}\label{eq:genred-selected-score}
    s_f^m
    \defeq
    \wbase_f\omega_f^m r_f(z^m),
    \qquad
    s^m=(s_f^m)_{f\in\mathcal C},
\end{equation}
and define the selected MM stationarity residual
\begin{equation}\label{eq:genred-stationarity-residual}
    g^m\defeq D^\top s^m.
\end{equation}
For Cauchy, Welsch, and Tukey this is the ordinary gradient of
\(F_\sig^\rho\).  For truncated least squares it is the score balance
associated with the selected valid supergradient at every threshold row.

\begin{lemma}[The selected stationarity residual vanishes]
\label{lem:genred-residual-vanishes}
Fix \(n\), \(\sig>0\), and an arbitrary starting point.  For every exact
proximal-MM orbit \eqref{eq:genred-proximal-target},
\begin{equation}\label{eq:genred-g-to-zero}
    \norm{g^m}_2\longrightarrow0,
    \qquad
    \norm{g^m}_\infty\longrightarrow0.
\end{equation}
No boundedness of \(\{z^m\}\), and no coercivity of the loss, is required.
\end{lemma}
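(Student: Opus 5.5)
The plan is to read the vanishing of $g^m$ directly off the normal equation of the exact proximal target, and then combine it with the square-summability of steps from Lemma~\ref{lem:genred-mm-descent}. The first-order condition for \eqref{eq:genred-proximal-target} is
\[
    D^\top\Theta^m\bigl(Dz^{m+1}-g\bigr)+\mustar\bigl(z^{m+1}-z^m\bigr)=0 .
\]
Splitting $Dz^{m+1}-g=r(z^m)+D(z^{m+1}-z^m)$ and using $\Theta^m r(z^m)=s^m$, this becomes
\[
    g^m = D^\top s^m
    = -D^\top\Theta^m D\,(z^{m+1}-z^m)-\mustar(z^{m+1}-z^m).
\]
The identity is purely algebraic. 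It uses only the weights selected at $z^m$, so it holds for every admissible kink selection, including the Tukey and truncated-LS zero weights. It requires no information about the size of $z^m$ itself.

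Next I bound the operator uniformly in $m$. Every conductance satisfies $0\le\theta_f^m=\wbase_f\omega_f^m\le1$, since $\wbase_f\le1$ and the selector takes values in $[0,1]$. Hence $D^\top\Theta^mD\preceq D^\top D$, and therefore
\[
    \norm{D^\top\Theta^mD}_2\le\norm{D}_2^2\le 2D_n ,
\]
where the last step follows from the row structure (two entries $\pm1$ per row) and the degree bound of Lemma~\ref{lem:triangle-degree}. Any finite instance-dependent constant would also suffice here, because $n$ is fixed. Consequently
\[
    \norm{g^m}_2\le(\norm{D}_2^2+\mustar)\norm{z^{m+1}-z^m}_2 .
\]
The right-hand side tends to zero by \eqref{eq:genred-square-summable-steps}, which itself needs only $F^\rho_\sig\ge0$ together with the majorization inequality \eqref{eq:genred-descent}. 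The $\ell_\infty$ statement then follows from $\norm{\cdot}_\infty\le\norm{\cdot}_2$.

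The only point that needs care is to make sure nothing in this chain uses boundedness of the orbit. The descent inequality needs only nonnegativity of the loss, and the operator bound is uniform because the weights are confined to $[0,1]$ whatever the iterate is. No other step poses a real obstacle. A secondary check is that, for truncated least squares, $s^m$ is defined with the same selection that was used to form $\Theta^m$, so that the identity $\Theta^m r(z^m)=s^m$ is exact.
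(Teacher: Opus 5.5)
Your proposal is correct and follows essentially the same route as the paper: it rewrites the normal equation of the exact proximal target as \(g^m=-(D^\top\Theta^mD+\mustar I)(z^{m+1}-z^m)\), bounds the operator norm by \(2D_n+\mustar\) using \(0\le\theta_f^m\le1\), and concludes from the square-summable steps of Lemma~\ref{lem:genred-mm-descent}. Your additional checks, that the selection used in \(s^m\) matches the one in \(\Theta^m\) and that no step uses boundedness of the orbit, are exactly the points the paper relies on.
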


\begin{proof}
Let \(d^m=z^{m+1}-z^m\).  The normal equation of the quadratic subproblem is
\[
    D^\top\Theta^m(Dz^{m+1}-g)+\mustar d^m=0.
\]
Since
\(Dz^{m+1}-g=r(z^m)+Dd^m\), this becomes
\begin{equation}\label{eq:genred-g-step-identity}
    g^m
    =-\bigl(D^\top\Theta^mD+\mustar I\bigr)d^m.
\end{equation}
The weighted graph Laplacian \(D^\top\Theta^mD\) has maximum weighted degree
at most \(D_n\), because \(0\le\theta_f^m\le1\).  Hence its spectral norm is
at most \(2D_n\), and
\[
    \norm{g^m}_2
    \le(2D_n+\mustar)\norm{d^m}_2.
\]
For fixed \(n\), the coefficient is finite.  Lemma~\ref{lem:genred-mm-descent}
gives \(\norm{d^m}_2\to0\), proving the first limit.  The second follows from
\(\norm{g^m}_\infty\le\norm{g^m}_2\).
\end{proof}

The next estimate is the loss-uniform analogue of the stationary Cauchy
endpoint proposition, but it applies to every finite inner iterate and includes
its explicitly computable stationarity residual.

\begin{proposition}[Clean error from an approximate score balance]
\label{prop:genred-approx-endpoint}
Assume \(\Delta_E\le\delta_Jn\), \(\beta_a<1\), and
\(z^m\in\basin_\sig(a)\).  Then
\begin{equation}\label{eq:genred-approx-endpoint-bound}
    \err(z_0^m)
    \le
    \frac{\mathsf G_a}{n}\norm{g^m}_\infty
    +6\mathsf G_aK\frac{\Delta_E}{n}\sig.
\end{equation}
Consequently, along any basin-preserving exact proximal-MM orbit,
\begin{equation}\label{eq:genred-limsup-endpoint}
    \limsup_{m\to\infty}\err(z_0^m)
    \le
    6\mathsf G_aK\frac{\Delta_E}{n}\sig.
\end{equation}
\end{proposition}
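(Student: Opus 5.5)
The plan is to imitate the proof of Proposition~\ref{prop:stationary}, with the exact stationarity equation replaced by the approximate score balance \(g^m=D^\top s^m\). Since \(z^m\in\basin_\sig(a)\), choose \(\alpha_m\) with \(e^m=z_0^m-z_0^\star-\alpha_m\one\) and \(\norm{e^m}_\infty=\err(z_0^m)\le a\sig\). For every clean row \(c\), exact clean consistency gives \(r_c(z^m)=b_c^\top e^m\), hence \(|r_c(z^m)|\le 2a\sig\). Because clean rows have base weight one, Lemma~\ref{lem:genred-clean-current} (equivalently (R3)) shows that the selected clean conductances \(\eta_c\defeq\omega_c^m\) lie in \([m_a,1]\). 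Thus the clean part of \(s^m\) equals \(\eta_c r_c(z^m)\), and its contribution to the clean block of \(g^m\) is exactly \(L_\eta e^m\), where \(L_\eta=B_0^\top\diag(\eta)B_0\). This uses the selected weight directly rather than a ratio \(\psi/r\), so the argument also covers kinks.

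Next split \(g^m=(g_0^m,g_b^m)\) into clean and nuisance blocks. This gives \(g_0^m=L_\eta e^m+A^\top s_b^m\), where \(s_b^m\) collects the bad-row scores. By (R4) and \(\wbase_f\le1\), each bad score satisfies \(|s_f^m|\le K\sig\). By Lemma~\ref{lem:boundary-degree}, at most \(\dtri\le6\Delta_E\) bad rows touch any clean triangle, so \(\norm{A^\top s_b^m}_\infty\le 6\Delta_E K\sig\). Hence
\[
    L_\eta e^m=g_0^m-A^\top s_b^m .
\]
The right-hand side lies in the range of \(L_\eta\) because the left-hand side does. The nuisance block \(g_b^m\) is not needed at all, which is the point of the lemma: no control of the nuisance stationarity is required, unlike the identity \(C^\top s=0\) used in the Cauchy case.

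Now apply Lemma~\ref{lem:genred-weighted-green}, whose hypotheses \(\Delta_E\le\delta_Jn\) and \(\beta_a<1\) are assumed. Since \(L_\eta^\dagger L_\eta e^m\) equals \(e^m\) modulo constants (the clean overlap graph is connected by Lemma~\ref{lem:diam}, for large \(n\)), we get
\[
    \err(z_0^m)=\norm{e^m}_{\infty/\one}\le\frac{\mathsf G_a}{n}\norm{g_0^m-A^\top s_b^m}_{\infty/\one}.
\]
Bounding the quotient norm by the sup norm and using the triangle inequality with \(\norm{g_0^m}_\infty\le\norm{g^m}_\infty\) gives \eqref{eq:genred-approx-endpoint-bound}. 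The step I expect to be most delicate is exactly this application of the quotient Green estimate. One must check that the Green lemma is used for a force that is genuinely in the range, which holds automatically since it equals \(L_\eta e^m\), and that connectivity makes the kernel of \(L_\eta\) exactly the constants. Once that is verified, the step is routine.

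Finally, for the limsup statement, assume the orbit is basin-preserving (as supplied by Proposition~\ref{prop:genred-one-step}). Then the bound holds at every \(m\), and Lemma~\ref{lem:genred-residual-vanishes} gives \(\norm{g^m}_\infty\to0\). The first term therefore vanishes, leaving \(6\mathsf G_aK(\Delta_E/n)\sig\), which is \eqref{eq:genred-limsup-endpoint}. No boundedness of the nuisance variables is used anywhere.
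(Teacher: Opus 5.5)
Your proposal is correct and follows the paper's proof essentially step for step. You use the same clean-block identity $g_0^m=L_{\eta^m}e^m+A^\top s_b^m$, the same bound $\|A^\top s_b^m\|_\infty\le 6\Delta_E K\sigma$, the same weighted Green estimate of Lemma~\ref{lem:genred-weighted-green}, and the same limsup via Lemma~\ref{lem:genred-residual-vanishes}. One small citation point: Lemma~\ref{lem:diam} is stated only for $\Delta_E\le 10^{-3}n$, which does not cover $\Delta_E\le n/384$, but the connectivity you need still holds by the same auxiliary-vertex construction (as noted in the proof of Theorem~\ref{thm:genred-main}) or from $h_0\ge n/2-6\Delta_E>0$ in Lemma~\ref{lem:l1-complete-graph-combinatorics}.
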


\begin{proof}
Choose a gauge \(\alpha_m\) and put
\[
    e^m=z_0^m-z_0^\star-\alpha_m\one,
    \qquad
    \norm{e^m}_\infty=\err(z_0^m)\le a\sig.
\]
For a clean row \(c\), exact clean consistency gives
\(r_c(z^m)=b_c^\top e^m\), and its selected weight lies in \([m_a,1]\).
Let \(L_{\eta^m}\) be the clean Laplacian with those selected weights.  Split
the selected bad-row score vector from \eqref{eq:genred-selected-score} as
\(s_b^m\).  The clean block of \(g^m=D^\top s^m\) is
\begin{equation}\label{eq:genred-clean-gradient-block}
    g_0^m
    =L_{\eta^m}e^m+A^\top s_b^m.
\end{equation}
Thus
\[
    L_{\eta^m}e^m
    =g_0^m-A^\top s_b^m.
\]
At most \(\dtri\le6\Delta_E\) bad rows touch one clean triangle, and every
bad score satisfies \(|s_f^m|\le K\sig\).  Hence
\begin{equation}\label{eq:genred-bad-score-force}
    \norm{A^\top s_b^m}_\infty
    \le6\Delta_EK\sig.
\end{equation}
Apply Lemma~\ref{lem:genred-weighted-green} to
\eqref{eq:genred-clean-gradient-block}.  Since
\(\norm{g_0^m}_\infty\le\norm{g^m}_\infty\),
\begin{align*}
    \err(z_0^m)
    &=\norm{e^m}_{\infty/\one}\\
    &\le
    \frac{\mathsf G_a}{n}
    \norm{g_0^m-A^\top s_b^m}_{\infty/\one}\\
    &\le
    \frac{\mathsf G_a}{n}\norm{g^m}_\infty
    +6\mathsf G_aK\frac{\Delta_E}{n}\sig.
\end{align*}
This proves \eqref{eq:genred-approx-endpoint-bound}.  Combining it with
Lemma~\ref{lem:genred-residual-vanishes} gives
\eqref{eq:genred-limsup-endpoint}.
\end{proof}

\begin{remark}[Why bounded redescending losses cause no gap]
For Welsch, Tukey, and truncated least squares, a nuisance component can in
principle move far enough that all of its incident robust weights become tiny
or zero.  The full vector need not have a convergent subsequence.  The proof
above never assumes otherwise.  The clean block remains in a compact quotient
basin by Proposition~\ref{prop:genred-one-step}, the proximal descent estimate
forces successive steps to vanish, and identity
\eqref{eq:genred-g-step-identity} forces the selected score balance to vanish.
Proposition~\ref{prop:genred-approx-endpoint} then controls the clean variables
directly.  This is the step that permits a rigorous theorem for all four
redescending losses.
\end{remark}

\subsection{General annealing theorem and admissible schedules}
\label{sec:genred-main-theorem}

\begin{theorem}[Complete-graph recovery for general redescending proximal MM]
\label{thm:genred-main}
Assume the complete-graph noiseless model, triangle retention, and base-weight
conditions of Theorem~\ref{thm:main}.  Fix \(a>0\), and let
\(\{\rho_\sig\}_{\sig>0}\) be redescending MM-admissible at radius \(a\), with
constants \(m_a\) and \(K\).  Assume \(\beta_a<1\), choose
\(\mathsf G_a\) as in \eqref{eq:genred-beta-Ga}, set
\begin{equation}\label{eq:genred-main-mu}
    \mustar=\kappa_an,
    \qquad
    \kappa_a=\frac{1}{4\mathsf G_a},
\end{equation}
and choose \(c>0\) satisfying
\eqref{eq:genred-c-smallness} and
\eqref{eq:genred-one-step-smallness}.

Let \(\{\sig_k\}_{k\ge0}\) be any decreasing positive sequence such that
\begin{equation}\label{eq:genred-schedule}
    \sig_k\longrightarrow0,
    \qquad
    \frac{6\mathsf G_aKc}{a}
    <
    \frac{\sig_{k+1}}{\sig_k}
    <1
    \quad\text{for every }k.
\end{equation}
A geometric schedule \(\sig_{k+1}=\tau\sig_k\) is admissible whenever
\begin{equation}\label{eq:genred-geometric-schedule}
    \frac{6\mathsf G_aKc}{a}<\tau<1.
\end{equation}

Initialize with the same weighted least-squares solution as in
\eqref{eq:ls-init}, and choose
\begin{equation}\label{eq:genred-sigma0-choice}
    \sig_0\ge\frac{4}{a}R_{\LS}.
\end{equation}
At scale \(\sig_k\), warm-start the exact proximal-MM iteration
\eqref{eq:genred-proximal-target} from the previous-scale output.  Define the
positive schedule margin
\begin{equation}\label{eq:genred-schedule-margin}
    M_k
    \defeq
    a\sig_{k+1}-6\mathsf G_aKc\sig_k>0.
\end{equation}
Stop the inner iteration at any finite index \(m_k\) satisfying
\begin{equation}\label{eq:genred-stopping-rule}
    \norm{g^{k,m_k}}_\infty
    \le
    \frac{n}{\mathsf G_a}M_k,
\end{equation}
where \(g^{k,m}\) is the selected MM stationarity residual at the current
iterate.  Set \(z^{(k)}=z^{k,m_k}\) and use this full finite vector as the warm
start for scale \(\sig_{k+1}\).

Then, for all sufficiently large \(n\), the condition
\begin{equation}\label{eq:genred-main-degree}
    \Delta_E\le cn
\end{equation}
implies all of the following.
\begin{enumerate}[label=(\roman*),leftmargin=2.5em]
    \item Every inner iterate at every scale remains in its current clean
    basin \(\basin_{\sig_k}(a)\).

    \item The stopping rule \eqref{eq:genred-stopping-rule} is reached after
    finitely many inner iterations at every scale.

    \item Every stage output lies in the next clean basin:
    \begin{equation}\label{eq:genred-next-basin-output}
        \err(z_0^{(k)})\le a\sig_{k+1}.
    \end{equation}

    \item The clean triangle log-scales converge uniformly to ground truth up
    to global additive gauge:
    \begin{equation}\label{eq:genred-main-convergence}
        \min_{\alpha\in\R}
        \max_{t\in\Tzero}
        |z_t^{(k)}-z_t^\star-\alpha|
        \le a\sig_{k+1}
        \longrightarrow0.
    \end{equation}
\end{enumerate}
For a geometric schedule, the right-hand side in
\eqref{eq:genred-main-convergence} is \(a\sig_0\tau^{k+1}\).
\end{theorem}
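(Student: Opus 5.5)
The argument is a double induction: over inner proximal-MM iterations within one scale, and over annealing scales. It reuses the machinery of Section~\ref{sec:genred-proximal-mm} and Section~\ref{sec:genred-asymptotic-stationarity}.

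\emph{Base case.} I would first check that the weighted least-squares initializer lies in \(\basin_{\sig_0}(a)\). This is the proof of Lemma~\ref{lem:ls-basin} verbatim. Clean-clean rows have base weight one, so their residuals are at most \(R_{\LS}\). The clean triangle graph has diameter at most six by Lemma~\ref{lem:diam}, which applies since \(c<\delta_J<10^{-3}\). Hence \(\err(z_0^{\LS})\le3R_{\LS}\le a\sig_0\) by \eqref{eq:genred-sigma0-choice}.

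\emph{Inner step, item (i).} Assume the warm start \(z^{k,0}\in\basin_{\sig_k}(a)\). Proposition~\ref{prop:genred-one-step} applies because \(\Delta_E\le cn\) and \(c\) satisfies \eqref{eq:genred-c-smallness}--\eqref{eq:genred-one-step-smallness}. Induction over \(m\) then keeps every inner iterate in \(\basin_{\sig_k}(a)\), for every admissible weight selection at kinks. This establishes (i) at scale \(k\).

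\emph{Inner step, item (ii).} Lemma~\ref{lem:genred-residual-vanishes} gives \(\norm{g^{k,m}}_\infty\to0\) with no boundedness or coercivity assumption. The threshold in \eqref{eq:genred-stopping-rule} is strictly positive because the lower inequality in \eqref{eq:genred-schedule} gives \(M_k>0\). Hence some finite \(m_k\) satisfies the stopping rule.

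\emph{Inner step, item (iii).} The stopped iterate \(z^{k,m_k}\) lies in the current basin by (i). So Proposition~\ref{prop:genred-approx-endpoint} applies, and combined with \eqref{eq:genred-stopping-rule} and \(\Delta_E\le cn\) it gives
\[
\err(z_0^{(k)})\le \frac{\mathsf G_a}{n}\cdot\frac{n}{\mathsf G_a}M_k+6\mathsf G_aKc\,\sig_k=a\sig_{k+1}.
\]
This is (iii). It also says \(z^{(k)}\in\basin_{\sig_{k+1}}(a)\), which is exactly the warm-start hypothesis needed at the next scale. This closes the outer induction.

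\emph{Item (iv) and the geometric schedule.} Item (iv) is (iii) restated, and \(\sig_{k+1}\to0\) because \(\sig_k\to0\). For the geometric schedule, \(\sig_{k+1}=\tau^{k+1}\sig_0\). Condition \eqref{eq:genred-geometric-schedule} is then exactly \eqref{eq:genred-schedule}, so the margin \(M_k=\sig_k(a\tau-6\mathsf G_aKc)\) is positive.

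\emph{Main obstacle.} The delicate point is not any single estimate but making sure all hypotheses line up uniformly in \(k\) and \(n\). First, "sufficiently large \(n\)" must be fixed once, independently of \(\sig\). This is fine because Lemmas~\ref{lem:unweighted-clean-green}, \ref{lem:genred-weighted-green} and Proposition~\ref{prop:genred-one-step} are scale-free after dividing by \(\sig\). Second, Proposition~\ref{prop:genred-approx-endpoint} must be applied at the stopped iterate, not at a limit point. This is exactly why the stopping rule replaces the compactness argument used for Cauchy: for bounded losses the nuisance variables may diverge. Third, for truncated least squares, the residual \(g^{k,m}\) must be built from the same selected weights that define the MM step. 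Only then does identity \eqref{eq:genred-g-step-identity} hold, and this is what Lemma~\ref{lem:genred-residual-vanishes} uses.
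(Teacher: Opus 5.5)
Your proposal is correct and follows the paper's proof essentially step for step. The base case is the least-squares initializer, via the diameter-six clean triangle graph. Item (i) is induction on Proposition~\ref{prop:genred-one-step}, and item (ii) is Lemma~\ref{lem:genred-residual-vanishes} together with $M_k>0$. Item (iii) is Proposition~\ref{prop:genred-approx-endpoint} applied at the stopped finite iterate rather than at a limit point, and (iv) follows.

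One justification needs repair. You write $c<\delta_J<10^{-3}$, but $\delta_J=1/384\approx2.6\times10^{-3}$ is not below $10^{-3}$. So $c<\delta_J$ alone does not place you under the hypothesis $\Delta_E\le10^{-3}n$ of Lemma~\ref{lem:diam}.

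The paper handles this by re-running the auxiliary-vertex construction directly. The three choices $s_1,s_2,s_3$ exclude at most $6\Delta_E+O(1)$, $7\Delta_E+O(1)$ and $8\Delta_E+O(1)$ vertices. This is fewer than $n$ for large $n$ whenever $c<1/8$.

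Alternatively, you can observe that \eqref{eq:genred-one-step-smallness} already forces $c<10^{-3}$. Since $J_a\ge a$, $3/\kappa_a=12\mathsf G_a$ and $\mathsf G_a\ge G_J=32/7$, it implies $c<1/\bigl(12\mathsf G_a(1+12\mathsf G_a)\bigr)<4\times10^{-4}$. With either fix the rest of your argument goes through unchanged.
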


\begin{proof}
Because \(c<\delta_J=1/384<1/8\), the auxiliary-vertex construction
from Lemma~\ref{lem:diam} still gives a clean path of length at most six
between arbitrary clean triangles for all sufficiently large \(n\).  Indeed,
at the three choices one excludes at most \(6\Delta_E+O(1)\),
\(7\Delta_E+O(1)\), and \(8\Delta_E+O(1)\) vertices, respectively, and
each quantity is smaller than \(n\).  Repeating the proof of
Lemma~\ref{lem:ls-basin} with this diameter bound gives
\[
    \err(z_0^{\LS})\le3R_{\LS}
    \le a\sig_0
\]
under \eqref{eq:genred-sigma0-choice}.  Thus the first inner warm start lies in
\(\basin_{\sig_0}(a)\).

Fix a scale \(k\), and suppose its warm start belongs to
\(\basin_{\sig_k}(a)\).  Proposition~\ref{prop:genred-one-step} shows that one
exact proximal-MM target also lies in this basin.  Induction over the inner
index proves that every inner iterate remains in the basin, establishing (i).

At the fixed scale \(\sig_k\), Lemma~\ref{lem:genred-residual-vanishes} gives
\[
    \norm{g^{k,m}}_\infty\longrightarrow0.
\]
The right-hand side of \eqref{eq:genred-stopping-rule} is strictly positive by
\eqref{eq:genred-schedule-margin}.  Therefore some finite inner index satisfies
the stopping rule.  This proves (ii).

Apply Proposition~\ref{prop:genred-approx-endpoint} at that finite iterate:
\begin{align*}
    \err(z_0^{(k)})
    &\le
    \frac{\mathsf G_a}{n}\norm{g^{k,m_k}}_\infty
    +6\mathsf G_aK\frac{\Delta_E}{n}\sig_k\\
    &\le
    M_k+6\mathsf G_aKc\sig_k
    =a\sig_{k+1}.
\end{align*}
This proves (iii).  In particular, the finite full output at scale \(k\) is a
valid warm start for scale \(k+1\), closing the induction over all scales.
Finally, \(\sig_{k+1}\to0\) proves (iv).  Under a geometric schedule,
\(\sig_{k+1}=\sig_0\tau^{k+1}\), giving the displayed rate.
\end{proof}

\begin{corollary}[Clean-basin stationary continuation]
\label{cor:genred-stationary-continuation}
Assume the complete-graph noiseless model and base-weight conditions of
Theorem~\ref{thm:main}.  Let the loss be redescending MM-admissible at radius
\(a\), assume \(\beta_a<1\), choose \(\mathsf G_a\) as in
\eqref{eq:genred-beta-Ga}, and assume only
\(\Delta_E\le cn\) with \(c<\delta_J\).  Suppose that, at every scale
\(\sig_k\), a clean-basin point is supplied whose selected score balance is
exact:
\[
    g^{(k)}=0,
    \qquad
    \err(z_0^{(k)})\le a\sig_k.
\]
Then
\begin{equation}\label{eq:genred-stationary-bound}
    \err(z_0^{(k)})
    \le
    6\mathsf G_aK\frac{\Delta_E}{n}\sig_k.
\end{equation}
Consequently, a schedule is valid whenever
\begin{equation}\label{eq:genred-stationary-schedule}
    \frac{6\mathsf G_aKc}{a}
    \le
    \frac{\sig_{k+1}}{\sig_k}
    <1.
\end{equation}
This stationary-continuation corollary does not require the stronger one-step
condition \eqref{eq:genred-one-step-smallness}; that condition is needed only
to prove that every finite proximal-MM iterate stays in the basin.
\end{corollary}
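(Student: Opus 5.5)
The plan is to apply Proposition~\ref{prop:genred-approx-endpoint} at each supplied point with \(g^{(k)}=0\), and then run the same scale-to-scale induction as in the proof of Theorem~\ref{thm:genred-main}, with the stopping-rule margin set to zero.

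\emph{Step 1: the stationary bound.} The hypotheses \(\Delta_E\le cn\) with \(c<\delta_J\), \(\beta_a<1\), and \(z^{(k)}\in\basin_{\sig_k}(a)\) are exactly those of Proposition~\ref{prop:genred-approx-endpoint}. I will check that the proof of that proposition never uses the orbit structure. It uses only:
\begin{itemize}
\item that clean rows at a basin point have residual at most \(2a\sig_k\), and hence weights in \([m_a,1]\) by (R3);
\item the score cap (R4) on bad rows;
\item the bound \(\dtri\le6\Delta_E\) from Lemma~\ref{lem:boundary-degree};
\item the weighted Green estimate, Lemma~\ref{lem:genred-weighted-green}.
\end{itemize}
All four hold at any basin point for any admissible weight selection. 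Inserting \(\norm{g^{(k)}}_\infty=0\) into \eqref{eq:genred-approx-endpoint-bound} gives \eqref{eq:genred-stationary-bound} directly.

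One subtlety needs care. The clean block of the score balance is \(L_{\eta}e+A^\top s_b=g_0\). Here \(A^\top s_b\) is automatically a zero-sum force once \(g=0\): the nuisance block gives \(C^\top s_b=0\), and \(A\one+C\one=0\), as in Proposition~\ref{prop:stationary}. So the pseudoinverse application is legitimate even without that observation, because the Green estimate is stated in quotient norm for arbitrary forces.

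\emph{Step 2: schedule validity.} Using \(\Delta_E\le cn\), Step~1 yields \(\err(z_0^{(k)})\le 6\mathsf G_aKc\,\sig_k\). The schedule condition \eqref{eq:genred-stationary-schedule} says \(6\mathsf G_aKc\,\sig_k\le a\sig_{k+1}\). Hence each stationary output lies in \(\basin_{\sig_{k+1}}(a)\) and is an admissible warm start for the next scale. The upper ratio \(<1\) together with the \(\sig_k\to0\) convention then gives \(\err(z_0^{(k)})\to0\). Non-strict inequality suffices here, unlike in Theorem~\ref{thm:genred-main}, because no positive stopping margin \(M_k\) is needed.

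\emph{Step 3: scope remark.} The final sentence of the statement follows by noting which hypotheses were used. Condition \eqref{eq:genred-one-step-smallness} entered only through Proposition~\ref{prop:genred-one-step}, which controls the frozen-WLS Schur forcing \(p_m\). That control is unnecessary at an exact score balance, since the nuisance equation cancels the internal bad-bad forces.

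The main obstacle is only bookkeeping. I must confirm that the constant \(\mathsf G_a\) in Lemma~\ref{lem:genred-weighted-green} needs just \(\Delta_E\le\delta_J n\) and \(\beta_a<1\), for all sufficiently large \(n\), and no smallness from the one-step condition. I must also confirm that the clean-window floor applies with the selected weight at kinks for truncated LS, which holds because \(2a<1\) keeps clean residuals strictly inside the truncation window.
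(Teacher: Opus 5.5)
Your proposal is correct and follows the paper's proof. You set the selected stationarity residual to zero in Proposition~\ref{prop:genred-approx-endpoint} to get the bound, then combine $\Delta_E\le cn$ with the schedule inequality to place each output in $\mathcal{B}_{\sigma_{k+1}}(a)$, which closes the continuation induction. The one caveat is that $\sigma_{k+1}/\sigma_k<1$ alone does not force $\sigma_k\to 0$, so your remark that $\mathcal{E}(z_0^{(k)})\to 0$ needs $\sigma_k\to 0$ as a separate assumption (the corollary itself only asserts basin continuation).
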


\begin{proof}
Set \(g^m=0\) in Proposition~\ref{prop:genred-approx-endpoint}.  This gives
\eqref{eq:genred-stationary-bound}.  If \(\Delta_E\le cn\), then the right-hand
side is at most \(6\mathsf G_aKc\sig_k\), which is at most
\(a\sig_{k+1}\) under \eqref{eq:genred-stationary-schedule}.  This closes the
clean-basin continuation induction.
\end{proof}

\subsection{Explicit specializations and numerical checks}
\label{sec:genred-specializations}

The numerical entries in Table~\ref{tab:genred-comparison} follow from the calculations below.
At \(a=1/20\), the four clean floors give
\begin{align*}
\beta_a^{\rm Cau}
&=6\frac{32}{7}\left(1-\frac{100}{101}\right)
=\frac{192}{707}<1,
&\frac{G_J}{1-\beta_a^{\rm Cau}}
&=\frac{22624}{3605}<7,\\
\beta_a^{\rm Wel}
&=6\frac{32}{7}\left(1-e^{-1/100}\right)<0.273,
&\frac{G_J}{1-\beta_a^{\rm Wel}}
&<6.29<7,\\
\beta_a^{\rm Tuk}
&=6\frac{32}{7}\left(1-\left(\frac{99}{100}\right)^2\right)<0.546,
&\frac{G_J}{1-\beta_a^{\rm Tuk}}
&<10.07<11,\\
\beta_a^{\rm TLS}
&=0,
&\frac{G_J}{1-\beta_a^{\rm TLS}}
&=\frac{32}{7}<5.
\end{align*}
Thus the rounded values
\[
    \mathsf G_a^{\rm Cau}=7,
    \qquad
    \mathsf G_a^{\rm Wel}=7,
    \qquad
    \mathsf G_a^{\rm Tuk}=11,
    \qquad
    \mathsf G_a^{\rm TLS}=5
\]
are valid.

With \(\kappa_a=1/(4\mathsf G_a)\), define the left-hand side of the one-step
condition by
\begin{equation}\label{eq:genred-Xi-definition}
    \Xi(c;\mathsf G_a,K)
    \defeq
    12\mathsf G_a c a
    +6\mathsf G_a(K+a)c
      \left(1+\frac{12c}{\kappa_a}\right)
      \left(1+\frac3{\kappa_a}\right).
\end{equation}
For the constants displayed in the table, direct substitution gives
\begin{align}
    \Xi(10^{-5};7,1/2)
    &<0.019743<0.025=\frac a2,
    \label{eq:genred-check-cauchy}\\
    \Xi(1.2\cdot10^{-5};7,1/\sqrt{2e})
    &<0.020649<0.025,
    \label{eq:genred-check-welsch}\\
    \Xi(8\cdot10^{-6};11,16/(25\sqrt5))
    &<0.023764<0.025,
    \label{eq:genred-check-tukey}\\
    \Xi(10^{-5};5,1)
    &<0.019292<0.025.
    \label{eq:genred-check-tls}
\end{align}
All four degree constants are also smaller than \(\delta_J=1/384\).  Hence
Theorem~\ref{thm:genred-main} applies with the displayed
\(c_{\rm MM}\) values.

The lower geometric schedule ratio is
\begin{equation}\label{eq:genred-tau-min}
    \tau_{\min}=\frac{6\mathsf G_aKc_{\rm MM}}{a}.
\end{equation}
The four values are, respectively,
\[
    0.0042,
    \qquad
    0.0043232,
    \qquad
    0.0030225,
    \qquad
    0.0060.
\]
Thus \(\tau=1/2\) has a very large schedule margin in every case.

For the stationary-continuation corollary with \(\tau=1/2\), the admissible
fraction is
\begin{equation}\label{eq:genred-cstat-half}
    c
    <
    \frac{a/2}{6\mathsf G_aK}.
\end{equation}
This gives the last column of Table~\ref{tab:genred-comparison}.  In
particular, the conditional clean-basin stationary theory permits
\(c=10^{-3}\) for Cauchy, Welsch, and Tukey with the half-scale schedule.  For
truncated least squares, \(c=10^{-3}\) is also allowed if the schedule is
slowed to any \(\tau>0.6\), for example \(\tau=2/3\).  These larger values do
not replace the smaller unconditional proximal-MM constants: the difference
is exactly the cost of proving a uniform invariant basin for every finite WLS
step against an arbitrary nuisance triangle network.

\subsection{Discussion of the four objectives}
\label{sec:genred-discussion}

\paragraph{Cauchy.}
Cauchy has a positive weight at every finite residual and an unbounded, slowly
growing loss.  The preceding Cauchy section therefore proves the stronger fact
that the full warm-started orbit is bounded and has a cluster-point fixed
point.  The general theorem in this section is consistent with that result but
uses only approximate score balance, so it also applies verbatim to the other
losses.

\paragraph{Welsch.}
Welsch is smooth and bounded.  Its weights are positive for every finite
residual but decay exponentially.  The score cap
\(K=1/\sqrt{2e}\) is smaller than the Cauchy cap, while its clean-window floor
at \(a=1/20\) is almost the same.  This produces a slightly larger explicit
\(c_{\rm MM}\) in the unoptimized table.

\paragraph{Tukey.}
Tukey is smooth at the cutoff and gives exactly zero weight to residuals with
\(|r|\ge\sig\).  Its maximum score is small, but its clean weight decreases
more rapidly as the basin radius grows.  At the common radius \(a=1/20\), a
rounded Green constant \(11\) is sufficient.  The proof permits arbitrary
zero-weight bad subnetworks because the proximal term keeps every quadratic
subproblem well posed and the clean endpoint argument does not require the
nuisance variables to converge.

\paragraph{Truncated least squares.}
Truncated least squares is nonsmooth only at \(|r|=\sig\).  The concave
squared-residual function has a full supergradient interval at the kink, and
any weight in \([0,1]\) gives a valid global MM tangent.  The theorem is
therefore independent of the tie-breaking rule.  Clean rows never encounter
the kink because \(2a<1\), so their weight is exactly one throughout the clean
basin.  The larger score cap \(K=1\) requires either a slightly smaller
stationary-only degree fraction for a half-scale schedule or a slower
annealing schedule.

\paragraph{Why annealing is essential.}
At a fixed positive scale, Proposition~\ref{prop:genred-approx-endpoint} gives
an error floor of order
\[
    \frac{\Delta_E}{n}\sig.
\]
The redescending shape suppresses large residuals, but exact recovery in this
deterministic adversarial model comes from driving the global score cap
\(K\sig\) to zero.  The schedule condition balances that shrinking bad force
against the clean basin at the next scale.

\paragraph{Geometric assumptions.}
This extension adds no distributional assumption on camera locations, no
well-distributed-location constant, and no quantitative lower bound on clean
triangle angles.  It uses exactly the same qualitative nondegeneracy and
complete-graph assumptions as Theorem~\ref{thm:main}.  Consequently, none of
the loss constants in Table~\ref{tab:genred-comparison} depends on a location
distribution parameter or a triangle-angle parameter.

\section{Theory for Global \texorpdfstring{$\ell_1$}{L1} Log-Scale Synchronization}
\label{app:trip-l1-scale-theory}

\paragraph{Overview.}
This section gives a complementary deterministic exact-recovery theorem for the
same ideal full all-triangle log-scale synchronization model, but with the
Cauchy objective replaced by a globally minimized weighted \(\ell_1\) objective.
The algorithmic and analytic distinction is important.  The Cauchy theorem above
tracks a warm-started fixed point of a nonconvex annealed IRLS map.  The
\(\ell_1\) objective below is convex, and the theorem concerns every global
minimizer.  Consequently, no basin, proximal damping, continuation path,
fixed-point extraction, or Green-response estimate is needed.  The proof is a
weighted robust-nullspace argument: every nonconstant perturbation of the clean
triangle scales creates more clean-clean \(\ell_1\) variation than all constraints
involving non-clean triangles can possibly remove.

The retained triangle set, the clean/non-clean partition, the exact clean local
scales, and the shared-edge equations are exactly those defined in the Cauchy
section above.  They are repeated where useful so that the \(\ell_1\) theorem can
be read independently.  The clean/non-clean partition is used only by the proof;
the optimization problem is not given this partition.

\subsection{Weighted all-triangle \texorpdfstring{$\ell_1$}{L1} objective}

Let
\[
    \Tall=\Tzero\sqcup\Tb
\]
be the retained triangle set.  Let \(\mathcal C^{(1)}\) be the same retained
shared-edge row set used by the Cauchy synchronization problem, and decompose it
as
\[
    \mathcal C^{(1)}=\Czero\sqcup\Cb,
\]
where \(\Czero\) contains the rows with two clean endpoints and \(\Cb\) contains
all remaining rows.  For \(c\in\Czero\), write
\[
    r_c(z_0)=b_c^\top z_0-g_c,
    \qquad
    g_c=b_c^\top z_0^\star.
\]
For \(f\in\Cb\), write
\[
    r_f(z_0,z_b)=a_f^\top z_0+c_f^\top z_b-g_f.
\]
As in \eqref{eq:row-norms}, every bad row has at most one clean endpoint, and
therefore
\begin{equation}\label{eq:l1-bad-row-clean-support}
    \norm{a_f}_1\le1.
\end{equation}

Assign a nonnegative weight \(w_f\) to every retained row.  The weighted
all-triangle \(\ell_1\) synchronization objective is
\begin{equation}\label{eq:trip-l1-full-objective}
\begin{aligned}
    F_1(z_0,z_b)
    &\defeq
    \sum_{c\in\Czero}w_c
       \abs{b_c^\top z_0-g_c}
    +
    \sum_{f\in\Cb}w_f
       \abs{a_f^\top z_0+c_f^\top z_b-g_f},\\
    (z_0^\sharp,z_b^\sharp)
    &\in
    \argmin_{z_0,z_b}F_1(z_0,z_b).
\end{aligned}
\end{equation}
The unweighted version has \(w_f\equiv1\).  The reliability-weighted version
used in the TriP notation has
\[
    w_{tu}=\sqrt{\pi_t\pi_u},
    \qquad
    0\le\pi_t\le1,
    \qquad
    \pi_t=1\quad(t\in\Tzero),
\]
so every clean-clean row has weight one and every other row has weight at most
one.

Because every row is a difference row, \(F_1\) is invariant under adding a common
constant to all triangle variables.  This is precisely the unavoidable global
log-scale gauge.  The objective is a finite convex polyhedral function.  After
fixing one additive gauge in every positive-weight connected component, it is a
linear program with a finite optimum, and hence a global minimizer exists.

\subsection{General deterministic recovery criterion}

Let
\[
    H_0=(\Tzero,\Czero)
\]
be the unweighted clean triangle-overlap graph.  Its edge-expansion constant is
\begin{equation}\label{eq:l1-clean-expansion-definition}
    h_0
    \defeq
    \min_{\substack{S\subset\Tzero\\0<|S|\le|\Tzero|/2}}
    \frac{|\partial_{H_0}S|}{|S|}.
\end{equation}
Let the clean-boundary bad row degree be
\begin{equation}\label{eq:l1-clean-boundary-degree-definition}
    d_b^\triangle
    \defeq
    \max_{t\in\Tzero}
    \sum_{f\in\Cb}|a_f(t)|.
\end{equation}
For the full shared-edge model this agrees with the quantity bounded in
Lemma~\ref{lem:boundary-degree}.

\begin{theorem}[General weighted \(\ell_1\) clean-scale recovery criterion]
\label{thm:l1-general-recovery}
Assume that \(H_0\) has at least two vertices.  Suppose the row weights satisfy
\begin{equation}\label{eq:l1-weight-bounds}
    w_c\ge w_{\mathrm g}>0
    \quad(c\in\Czero),
    \qquad
    0\le w_f\le w_{\mathrm b}<\infty
    \quad(f\in\Cb).
\end{equation}
If
\begin{equation}\label{eq:l1-general-strict-condition}
    w_{\mathrm g}h_0
    >
    w_{\mathrm b}d_b^\triangle,
\end{equation}
then every global minimizer of \eqref{eq:trip-l1-full-objective} recovers the
clean triangle log-scales up to one common additive gauge.  More precisely, for
every global minimizer \((z_0^\sharp,z_b^\sharp)\), there exists
\(\alpha\in\R\) such that
\begin{equation}\label{eq:l1-general-conclusion}
    z_t^\sharp=z_t^\star+\alpha,
    \qquad
    t\in\Tzero.
\end{equation}
No assertion is made about the absolute log-scales of non-clean triangles.
\end{theorem}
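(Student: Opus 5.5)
We will use a robust-nullspace (profiling) argument built on the level-set (coarea) representation of the clean \(\ell_1\) term. Fix a global minimizer \((z_0^\sharp,z_b^\sharp)\) and set \(e=z_0^\sharp-z_0^\star\) on \(\Tzero\). The goal is to show that \(e\) is constant on \(\Tzero\). Suppose not. We will build a competitor that keeps the non-clean block \(z_b^\sharp\) but replaces the clean block by \(z_0^\star+\alpha\one\) for a well-chosen \(\alpha\), and show that its objective value is strictly smaller. Since \(g_c=b_c^\top z_0^\star\), the clean term at \(z_0^\star+\alpha\one\) is zero. At the minimizer it equals \(\sum_{c\in\Czero}w_c\abs{b_c^\top e}\ge w_{\mathrm g}\sum_{(t,u)\in\Czero}\abs{e_u-e_t}\).

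For the bad rows, the triangle inequality gives \(\abs{r_f(z_0^\star+\alpha\one,z_b^\sharp)}\le\abs{r_f(z_0^\sharp,z_b^\sharp)}+\abs{a_f^\top(e-\alpha\one)}\). Summing with weights, and using \eqref{eq:l1-bad-row-clean-support} together with the definition of \(d_b^\triangle\), the increase in the bad term is at most \(w_{\mathrm b}\sum_{t\in\Tzero}\bigl(\sum_f\abs{a_f(t)}\bigr)\abs{e_t-\alpha}\le w_{\mathrm b}d_b^\triangle\norm{e-\alpha\one}_1\). Minimality of \((z_0^\sharp,z_b^\sharp)\) then yields
\[
w_{\mathrm g}\sum_{(t,u)\in\Czero}\abs{e_u-e_t}\le w_{\mathrm b}d_b^\triangle\norm{e-\alpha\one}_1 .
\]

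Next we choose \(\alpha\) to be a median of \(e\) over \(\Tzero\) and apply the \(\ell_1\) Cheeger inequality for \(H_0\). Writing \(e-\alpha\one=\int_{\R}(\one_{\{e>s\}}-\one_{\{s<\alpha\}})\,ds\) in coarea form gives, for every level \(s\), superlevel sets \(S_s=\{e>s\}\) (for \(s\ge\alpha\)) or sublevel sets (for \(s<\alpha\)) of size at most \(|\Tzero|/2\), by the median property. The clean variation then satisfies \(\sum_{\Czero}\abs{e_u-e_t}=\int_{\R}|\partial_{H_0}\{e>s\}|\,ds\ge h_0\int|S_s|\,ds=h_0\norm{e-\alpha\one}_1\). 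Combined with the previous display this gives \(w_{\mathrm g}h_0\norm{e-\alpha\one}_1\le w_{\mathrm b}d_b^\triangle\norm{e-\alpha\one}_1\). By the strict condition \eqref{eq:l1-general-strict-condition}, this forces \(\norm{e-\alpha\one}_1=0\), which is \eqref{eq:l1-general-conclusion}. Two details need care. First, the inequality must be checked to remain strict, or at least contradictory, in the nonconstant case; the argument above already gives \(\norm{e-\alpha\one}_1=0\) directly, so no separate strictness step is needed. Second, \(h_0>0\) follows from \eqref{eq:l1-general-strict-condition} when \(H_0\) has at least two vertices, which guarantees that \(H_0\) is connected.

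We expect the coarea and median step to be the main obstacle, specifically the requirement that the level sets on both sides have size at most \(|\Tzero|/2\), so that the definition of \(h_0\) applies. We will handle this by splitting \(\norm{e-\alpha\one}_1\) into its positive part and its negative part and treating each with superlevel sets or sublevel sets, respectively. The complete-graph corollary then follows by bounding \(d_b^\triangle\le6\Delta_E\) (Lemma~\ref{lem:boundary-degree}) and \(h_0\ge n/2-6\Delta_E\). The latter bound comes from a Johnson-graph expansion estimate minus the deleted edges, but it is not needed for the general criterion.
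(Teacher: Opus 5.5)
Your argument is correct and is essentially the paper's proof. You combine a median-gauge $\ell_1$ Poincar\'e (coarea/layer-cake) inequality on $H_0$, which bounds the clean term below by $w_{\mathrm g}h_0\|e-\alpha\mathbf{1}\|_1$, with a triangle-inequality bound showing the bad rows can offset at most $w_{\mathrm b}d_b^\triangle\|e-\alpha\mathbf{1}\|_1$; the only cosmetic difference is that you compare the minimizer directly with the competitor $(z_0^\star+\alpha\mathbf{1},z_b^\sharp)$, whereas the paper profiles out the non-clean block via $\Psi_1$ and its Lipschitz bound, which gives the same estimate.
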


\begin{remark}[What is easier in the \(\ell_1\) theorem]
Theorem~\ref{thm:l1-general-recovery} is a statement about a global minimizer of
a convex objective.  It does not need to prove that an iterative nonconvex solver
enters or remains in a clean basin.  It also does not need to control a nuisance
Schur complement through a proximal inverse.  The only two quantitative objects
are the clean cut expansion \(h_0\) and the number of bad rows that can touch one
clean triangle, \(d_b^\triangle\).
\end{remark}

\begin{lemma}[\(\ell_1\) Poincar\'e inequality from edge expansion]
\label{lem:l1-poincare}
Let \(H=(V,E)\) be a finite graph with \(|V|\ge2\) and edge expansion
\[
    h(H)=
    \min_{\substack{S\subset V\\0<|S|\le|V|/2}}
    \frac{|\partial_HS|}{|S|}.
\]
Then every vector \(x\in\R^V\) satisfies
\begin{equation}\label{eq:l1-poincare-inequality}
    \sum_{\{u,v\}\in E}|x_u-x_v|
    \ge
    h(H)\min_{\alpha\in\R}
    \sum_{v\in V}|x_v-\alpha|.
\end{equation}
\end{lemma}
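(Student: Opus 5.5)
The plan is the standard coarea (layer-cake) argument with $\alpha$ chosen as a median of $x$. Let $m$ be a median of the entries $\{x_v\}_{v\in V}$. For every threshold $\tau$ at least one of $\{v:x_v>\tau\}$ and $\{v:x_v<\tau\}$ has size at most $|V|/2$. Concretely, for $\tau\ge m$ the superlevel set $S_\tau=\{v:x_v>\tau\}$ satisfies $|S_\tau|\le|V|/2$. For $\tau<m$ the sublevel set $S'_\tau=\{v:x_v<\tau\}$ satisfies the same bound. I would fix this choice of $m$ first, since it is what makes the expansion constant applicable at every level.

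Next I would write both sides as integrals over thresholds. For the left-hand side, each edge contributes
\[
|x_u-x_v|=\int_{\R}\mathbf 1\{\tau\text{ lies between }x_u\text{ and }x_v\}\,d\tau,
\]
so summing over $E$ gives $\sum_{\{u,v\}\in E}|x_u-x_v|=\int_{\R}|\partial_H S_\tau|\,d\tau$. Here $|\partial_H S_\tau|$ counts edges with exactly one endpoint in $\{x>\tau\}$, and this equals $|\partial_H\{x\le\tau\}|$. Up to a measure-zero set of $\tau$ (the finitely many values $x_v$), this also equals $|\partial_H S'_\tau|$. For the right-hand side,
\[
\sum_v|x_v-m|=\int_m^\infty|S_\tau|\,d\tau+\int_{-\infty}^m|S'_\tau|\,d\tau .
\]

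The final step splits the left integral at $m$. On $\tau\ge m$, either $S_\tau$ is empty (contributing zero) or $0<|S_\tau|\le|V|/2$, and then $|\partial S_\tau|\ge h(H)|S_\tau|$. On $\tau<m$, apply the same bound to $S'_\tau$, using $|\partial S'_\tau|=|\partial S_\tau|$ away from the finitely many values $x_v$. Integrating gives
\[
\text{LHS}\ge h(H)\sum_v|x_v-m|\ge h(H)\min_\alpha\sum_v|x_v-\alpha| .
\]

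The only delicate point is the median/size bookkeeping at ties. When $|V|$ is even or several entries equal $m$, one must check that the strict inequalities in $\{x>\tau\}$ and $\{x<\tau\}$ keep both sets of size at most $|V|/2$ on the correct side of $m$. Any median works, because at most $|V|/2$ entries are strictly above it and at most $|V|/2$ are strictly below it. Everything else is routine Fubini on finitely many step functions.
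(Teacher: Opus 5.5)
Your proposal is correct and is essentially the paper's proof: choose $\alpha$ as a median, write both sides as layer-cake integrals using superlevel sets above the median and sublevel sets below it (each of size at most $|V|/2$), and apply the expansion bound level by level. The only cosmetic difference is that the paper shifts to $y=x-\alpha$ and integrates over $s\ge0$ with $V_s^+=\{y>s\}$ and $V_s^-=\{y<-s\}$, which makes the per-edge identity exact and avoids your almost-everywhere identification of $|\partial S_\tau|$ with $|\partial S'_\tau|$.
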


\begin{proof}
Let \(\alpha\) be a median of the entries of \(x\), and put
\(y_v=x_v-\alpha\).  For every \(s\ge0\), define the positive and negative
superlevel sets
\[
    V_s^+=\{v:y_v>s\},
    \qquad
    V_s^-=\{v:y_v<-s\}.
\]
Because \(\alpha\) is a median, both sets have cardinality at most \(|V|/2\).
For a single edge \(\{u,v\}\), the layer-cake identity gives
\[
    |y_u-y_v|
    =
    \int_0^\infty
       |\mathbf 1_{V_s^+}(u)-\mathbf 1_{V_s^+}(v)|\,ds
    +
    \int_0^\infty
       |\mathbf 1_{V_s^-}(u)-\mathbf 1_{V_s^-}(v)|\,ds.
\]
Summing over the edges and interchanging the finite sum and the integrals yields
\begin{align*}
    \sum_{\{u,v\}\in E}|y_u-y_v|
    &=
    \int_0^\infty|\partial_HV_s^+|\,ds
    +
    \int_0^\infty|\partial_HV_s^-|\,ds\\
    &\ge
    h(H)
    \left(
       \int_0^\infty|V_s^+|\,ds
       +
       \int_0^\infty|V_s^-|\,ds
    \right)\\
    &=
    h(H)\sum_{v\in V}|y_v|.
\end{align*}
A median minimizes the sum of absolute deviations, so
\(\sum_v|y_v|=\min_\alpha\sum_v|x_v-\alpha|\).  This proves
\eqref{eq:l1-poincare-inequality}.
\end{proof}

\begin{lemma}[Profiled bad \(\ell_1\) term is globally Lipschitz]
\label{lem:l1-profiled-bad-lipschitz}
For a clean perturbation \(e\in\R^{\Tzero}\), define
\begin{equation}\label{eq:l1-profiled-bad-definition}
    \Psi_1(e)
    \defeq
    \inf_{z_b\in\R^{\Tb}}
    \sum_{f\in\Cb}w_f
    \abs{a_f^\top e+c_f^\top z_b-\eta_f},
\end{equation}
where \(\eta_f\) is the fixed offset obtained after substituting the exact clean
block \(z_0^\star\) into row \(f\).  Under
\eqref{eq:l1-weight-bounds},
\begin{equation}\label{eq:l1-profiled-bad-lipschitz-bound}
    \Psi_1(e)
    \ge
    \Psi_1(0)
    -w_{\mathrm b}d_b^\triangle\norm{e}_1.
\end{equation}
\end{lemma}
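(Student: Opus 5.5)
The plan is a termwise perturbation argument inside the infimum, using only the triangle inequality and the local support of the clean blocks \(a_f\).

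Fix \(e\in\R^{\Tzero}\) and an arbitrary \(z_b\in\R^{\Tb}\). For every bad row \(f\in\Cb\), the reverse triangle inequality gives
\[
    \abs{a_f^\top e+c_f^\top z_b-\eta_f}
    \ge
    \abs{c_f^\top z_b-\eta_f}-\abs{a_f^\top e}.
\]
Multiplying by \(w_f\ge0\) and summing over \(f\in\Cb\) yields
\[
    \sum_{f\in\Cb}w_f\abs{a_f^\top e+c_f^\top z_b-\eta_f}
    \ge
    \sum_{f\in\Cb}w_f\abs{c_f^\top z_b-\eta_f}
    -\sum_{f\in\Cb}w_f\abs{a_f^\top e}.
\]
The first sum on the right is the objective defining \(\Psi_1(0)\), evaluated at this particular \(z_b\). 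Hence it is at least \(\Psi_1(0)\).

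The next step is to bound the correction term by an expression independent of \(z_b\). Using \(w_f\le w_{\mathrm b}\) and \(\abs{a_f^\top e}\le\sum_{t\in\Tzero}\abs{a_f(t)}\abs{e_t}\), and exchanging the order of summation, we get
\[
    \sum_{f\in\Cb}w_f\abs{a_f^\top e}
    \le
    w_{\mathrm b}\sum_{t\in\Tzero}\abs{e_t}\sum_{f\in\Cb}\abs{a_f(t)}
    \le
    w_{\mathrm b}\,d_b^\triangle\norm{e}_1 .
\]
The last inequality is exactly the definition \eqref{eq:l1-clean-boundary-degree-definition} of \(d_b^\triangle\).

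Combining the two displays gives, for every \(z_b\),
\[
    \sum_{f\in\Cb}w_f\abs{a_f^\top e+c_f^\top z_b-\eta_f}
    \ge
    \Psi_1(0)-w_{\mathrm b}d_b^\triangle\norm{e}_1 .
\]
The right-hand side does not depend on \(z_b\), so taking the infimum over \(z_b\) proves \eqref{eq:l1-profiled-bad-lipschitz-bound}.

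The only point needing care is that the infima may fail to be attained, or could a priori be \(-\infty\). Neither matters here. Every term is nonnegative, so \(\Psi_1(0)\) is finite and nonnegative, and the argument never requires a minimizer, only the bound for each fixed \(z_b\).

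Exchanging \(e\) and \(0\) in the same argument would also give the reverse inequality, hence two-sided Lipschitz continuity of \(\Psi_1\). That direction is not needed for the statement.
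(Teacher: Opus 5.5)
Your proof is correct and follows the paper's argument: the paper uses the same triangle inequality, written as $\sum_f w_f|c_f^\top z_b-\eta_f|\le\sum_f w_f|a_f^\top e+c_f^\top z_b-\eta_f|+\sum_f w_f|a_f^\top e|$, passes to the infimum over $z_b$, and bounds the correction by $w_{\mathrm b}d_b^\triangle\norm{e}_1$ exactly as you do. Proving the bound for each fixed $z_b$ before taking the infimum, as you do, is a slightly cleaner way to handle the infima than the paper's wording.
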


\begin{proof}
For every fixed nuisance vector \(z_b\), the triangle inequality gives
\begin{align*}
    \sum_{f\in\Cb}w_f
       \abs{c_f^\top z_b-\eta_f}
    &\le
    \sum_{f\in\Cb}w_f
       \abs{a_f^\top e+c_f^\top z_b-\eta_f}
    +
    \sum_{f\in\Cb}w_f|a_f^\top e|.
\end{align*}
Taking the infimum over \(z_b\) in the first two terms gives
\[
    \Psi_1(0)
    \le
    \Psi_1(e)
    +
    \sum_{f\in\Cb}w_f|a_f^\top e|.
\]
Using \(w_f\le w_{\mathrm b}\) and expanding by clean coordinates,
\begin{align*}
    \sum_{f\in\Cb}w_f|a_f^\top e|
    &\le
    w_{\mathrm b}
    \sum_{f\in\Cb}\sum_{t\in\Tzero}|a_f(t)|\,|e_t|\\
    &\le
    w_{\mathrm b}d_b^\triangle\sum_{t\in\Tzero}|e_t|
    =
    w_{\mathrm b}d_b^\triangle\norm{e}_1.
\end{align*}
Rearranging proves \eqref{eq:l1-profiled-bad-lipschitz-bound}.
\end{proof}

\begin{proof}[Proof of Theorem~\ref{thm:l1-general-recovery}]
Fix any feasible pair \((z_0,z_b)\).  Since all rows are difference rows, add a
common constant to all triangle variables so that
\[
    e\defeq z_0-z_0^\star
\]
has median zero on \(\Tzero\).  Then
\begin{equation}\label{eq:l1-median-gauge}
    \norm{e}_1
    =
    \min_{\alpha\in\R}
    \sum_{t\in\Tzero}|z_t-z_t^\star-\alpha|.
\end{equation}
For a clean-clean row joining \(t\) and \(u\), exact clean consistency gives
\[
    b_c^\top z_0-g_c=e_u-e_t.
\]
Consequently, Lemma~\ref{lem:l1-poincare} and the clean weight lower bound give
\begin{equation}\label{eq:l1-clean-objective-lower-bound}
\begin{aligned}
    \sum_{c\in\Czero}w_c|b_c^\top z_0-g_c|
    &\ge
    w_{\mathrm g}
    \sum_{\{t,u\}\in\Czero}|e_u-e_t|\\
    &\ge
    w_{\mathrm g}h_0\norm{e}_1.
\end{aligned}
\end{equation}
The bad-row contribution of the particular nuisance vector \(z_b\) is at least
its profiled value \(\Psi_1(e)\).  By
Lemma~\ref{lem:l1-profiled-bad-lipschitz},
\begin{equation}\label{eq:l1-bad-objective-lower-bound}
    \sum_{f\in\Cb}w_f
       |a_f^\top z_0+c_f^\top z_b-g_f|
    \ge
    \Psi_1(0)-w_{\mathrm b}d_b^\triangle\norm{e}_1.
\end{equation}
Adding \eqref{eq:l1-clean-objective-lower-bound} and
\eqref{eq:l1-bad-objective-lower-bound} yields the global objective gap
\begin{equation}\label{eq:l1-global-objective-gap}
    F_1(z_0,z_b)
    \ge
    \Psi_1(0)
    +
    \bigl(w_{\mathrm g}h_0-w_{\mathrm b}d_b^\triangle\bigr)
    \norm{e}_1.
\end{equation}
On the other hand, fixing \(z_0=z_0^\star\) gives
\[
    \inf_{z_b}F_1(z_0^\star,z_b)=\Psi_1(0),
\]
because every clean-clean residual is exactly zero.  Hence the global optimum of \(F_1\) is at most \(\Psi_1(0)\).  Under the strict condition
\eqref{eq:l1-general-strict-condition}, the coefficient of \(\norm{e}_1\) in
\eqref{eq:l1-global-objective-gap} is positive.  Therefore a global minimizer
must satisfy \(\norm{e}_1=0\).  In the median gauge this means
\(z_t=z_t^\star\) for every \(t\in\Tzero\).  Undoing the common gauge gives
\eqref{eq:l1-general-conclusion}.
\end{proof}

\subsection{Complete-graph expansion and the explicit max-degree constant}

The next two lemmas convert the abstract quantities \(h_0\) and
\(d_b^\triangle\) into the original-camera max-degree parameter \(\Delta_E\).
They are deterministic and do not use a distribution of camera locations.

\begin{lemma}[Expansion of the full Johnson triangle graph]
\label{lem:l1-johnson-expansion}
Let \(n\ge4\), and let \(J(n,3)\) be the graph on the triples in \(\binom{[n]}3\), with two triples
adjacent exactly when they share two camera vertices.  For every
\(S\subset\binom{[n]}3\) satisfying
\(0<|S|\le\frac12\binom n3\),
\begin{equation}\label{eq:l1-johnson-expansion}
    |\partial_{J(n,3)}S|
    \ge
    \frac n2|S|.
\end{equation}
\end{lemma}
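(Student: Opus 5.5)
The plan is a spectral (easy-direction Cheeger) argument using the Laplacian spectrum of \(J(n,3)\) already recorded in the proof of Lemma~\ref{lem:johnson-green}. Let \(N=\binom n3\) and let \(L_J\) be the unweighted Laplacian of \(J(n,3)\). The key fact I will use is that its smallest nonzero eigenvalue is \(\lambda_2=n\).

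First I will confirm \(\lambda_2=n\) for every \(n\ge4\), not just \(n\ge6\). I will use the standard Johnson-scheme adjacency eigenvalues \((3-j)(n-3-j)-j\), \(j=0,1,2,3\), together with the regular degree \(3(n-3)\). This gives Laplacian eigenvalues \(0,\ n,\ 2(n-1),\ 3(n-2)\). Since all of \(n,\ 2(n-1),\ 3(n-2)\) are positive for \(n\ge4\), the kernel is spanned by \(\one\). Moreover \(n\le 2(n-1)\le 3(n-2)\) for \(n\ge4\), so \(n\) is the smallest nonzero eigenvalue. The edge case \(n=4\) is also a direct check: \(J(4,3)=K_4\), whose nonzero Laplacian eigenvalue is \(4=n\). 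This spectral identification is the only real input, and it is the step I expect to need the most care. If one prefers not to cite the Johnson scheme, the radial (distance-regular) computation already used in Lemma~\ref{lem:johnson-green} gives the same eigenvalue list.

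Next, fix \(S\) with \(0<|S|\le N/2\) and set \(x=\one_S-\frac{|S|}{N}\one\). Since \(L_J\one=0\), we have \(x^\top L_Jx=\one_S^\top L_J\one_S=|\partial_{J(n,3)}S|\), because each cut edge contributes exactly \(1\). The vector \(x\) is orthogonal to \(\one\), so the Courant--Fischer characterization gives
\[
    |\partial_{J(n,3)}S|=x^\top L_Jx\ge \lambda_2\norm{x}_2^2=n\,\frac{|S|\,(N-|S|)}{N}.
\]

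Finally, the hypothesis \(|S|\le N/2\) gives \(N-|S|\ge N/2\). Substituting into the previous bound yields \(|\partial_{J(n,3)}S|\ge \frac n2|S|\), which is \eqref{eq:l1-johnson-expansion}. No other estimates are needed.
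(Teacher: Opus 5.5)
Your proposal is correct and is essentially the paper's own proof. The paper also takes the Laplacian spectral gap \(n\) of \(J(n,3)\) from the Johnson-scheme eigenvalues, applies the spectral-gap inequality to \(\mathbf 1_S-\frac{|S|}{N}\mathbf 1\), and uses \(\|f\|_2^2=|S|(N-|S|)/N\ge |S|/2\). Your extra check that \(n\) is the smallest nonzero eigenvalue for every \(n\ge4\), including the degenerate case \(J(4,3)=K_4\) where some eigenspaces are empty, is a small tightening that the paper leaves implicit.
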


\begin{proof}
The graph \(J(n,3)\) is \(3(n-3)\)-regular.  Its adjacency eigenvalues are
\[
    3(n-3),
    \qquad
    2n-9,
    \qquad
    n-7,
    \qquad
    -3,
\]
so its unnormalized Laplacian eigenvalues are
\[
    0,
    \qquad
    n,
    \qquad
    2(n-1),
    \qquad
    3(n-2).
\]
Thus the spectral gap is \(n\).  Put \(N=\binom n3\), \(s=|S|\), and
\[
    f=\mathbf 1_S-\frac{s}{N}\mathbf 1.
\]
Then \(f\perp\mathbf 1\), and the spectral-gap inequality gives
\[
    n\norm{f}_2^2
    \le
    f^\top L_Jf
    =
    \sum_{\{u,v\}\in E(J(n,3))}(f_u-f_v)^2.
\]
Only cut edges contribute to the final sum, and every cut edge contributes one.
Moreover,
\[
    \norm{f}_2^2=\frac{s(N-s)}{N}.
\]
Therefore
\[
    |\partial_{J(n,3)}S|
    \ge
    n\frac{s(N-s)}{N}
    \ge
    \frac n2s,
\]
because \(s\le N/2\).
\end{proof}

\begin{lemma}[Clean expansion and bad boundary degree from \(\Delta_E\)]
\label{lem:l1-complete-graph-combinatorics}
Assume the camera graph is \(K_n\), every clean graph triangle is nondegenerate
and retained, and all retained shared-edge rows are used.  Then
\begin{equation}\label{eq:l1-dtriangle-complete-bound}
    d_b^\triangle\le6\Delta_E
\end{equation}
and
\begin{equation}\label{eq:l1-h0-complete-bound}
    h_0\ge\frac n2-6\Delta_E.
\end{equation}
\end{lemma}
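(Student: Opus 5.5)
The first bound, \(d_b^\triangle\le6\Delta_E\), needs only one observation. Each row in \(\Cb\) has at most one clean endpoint with coefficient of magnitude one. So \(\sum_{f\in\Cb}|a_f(t)|\) is exactly the number of bad rows incident to the clean triangle \(t\), which is the quantity \(\dtri\) of Lemma~\ref{lem:boundary-degree}. That lemma then gives \(6\Delta_E\) directly: along each of the three clean edges of \(t\), at most \(2\Delta_E\) neighbors are non-clean.

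For the expansion bound we compare \(H_0\) with the full Johnson graph \(J(n,3)\). By Lemma~\ref{lem:clean-pass}, every clean triangle is retained, so \(H_0\) is the subgraph of \(J(n,3)\) induced on \(\Tzero\). Fix \(S\subset\Tzero\) with \(0<|S|\le|\Tzero|/2\). Then \(|S|\le\frac12\binom n3\), and Lemma~\ref{lem:l1-johnson-expansion} gives \(|\partial_{J}S|\ge\frac n2|S|\).

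Each Johnson cut edge leaving \(S\) goes either to \(\Tzero\setminus S\) or to a non-clean triple. The first kind are exactly the edges of \(\partial_{H_0}S\). The second kind are Johnson edges from a clean triangle to a non-clean triple sharing one of its clean edges. By the counting in Lemma~\ref{lem:boundary-degree}, equivalently the removed-neighbor count in Lemma~\ref{lem:unweighted-clean-green}, each clean \(t\) has at most \(6\Delta_E\) such neighbors. Hence
\[
|\partial_{H_0}S|\ge|\partial_J S|-6\Delta_E|S|\ge\Bigl(\frac n2-6\Delta_E\Bigr)|S|,
\]
and minimizing over \(S\) yields \(h_0\ge n/2-6\Delta_E\).

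The only delicate point is the size condition. Lemma~\ref{lem:l1-johnson-expansion} requires \(|S|\le\frac12\binom n3\), whereas \(h_0\) only guarantees \(|S|\le\frac12|\Tzero|\). This direction is harmless because \(|\Tzero|\le\binom n3\). One should also check that non-clean triples are genuinely outside \(H_0\), whether or not they are retained. Since \(\Czero\) contains only clean-clean rows, every Johnson edge to a non-clean triple is absent from \(H_0\) and is accounted for by the subtracted term. No nondegeneracy is needed beyond retention of the clean triangles. The bound can be vacuous or negative for large \(\Delta_E\), and it is still valid as stated.
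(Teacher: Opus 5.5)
Your proposal is correct and follows the paper's proof essentially step for step. For the first bound you identify \(\sum_{f\in C_{\mathrm b}}|a_f(t)|\) with the number of bad rows incident to \(t\) and reuse the counting from Lemma~\ref{lem:boundary-degree}. For the second you view \(H_0\) as the subgraph of \(J(n,3)\) induced on \(T_0\), apply Lemma~\ref{lem:l1-johnson-expansion} (valid because \(|S|\le|T_0|/2\le\frac12\binom n3\)), and subtract at most \(6\Delta_E|S\,|\) cut edges lost to non-clean triples.
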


\begin{proof}
The first inequality is the same counting argument as
Lemma~\ref{lem:boundary-degree}.  Fix a clean
triangle \(t=\{i,j,k\}\).  Along its clean camera edge \(ij\), a neighboring
triangle has the form \(\{i,j,\ell\}\).  Since \(ij\) is clean, this neighbor is
non-clean only if \(i\ell\in\Eb\) or \(j\ell\in\Eb\).  Hence there are at most
\(\deg_{\Eb}(i)+\deg_{\Eb}(j)\) non-clean neighbors along \(ij\).  Summing over
\(ij,ik,jk\) gives
\[
\begin{aligned}
&\bigl(\deg_{\Eb}(i)+\deg_{\Eb}(j)\bigr)
+\bigl(\deg_{\Eb}(i)+\deg_{\Eb}(k)\bigr)
+\bigl(\deg_{\Eb}(j)+\deg_{\Eb}(k)\bigr)\\
&\qquad
=2\bigl(\deg_{\Eb}(i)+\deg_{\Eb}(j)+\deg_{\Eb}(k)\bigr)
\le6\Delta_E.
\end{aligned}
\]
Each bad row has either one clean endpoint, in which case it contributes exactly
one unit to the clean-block incidence degree of that endpoint, or no clean
endpoint, in which case it contributes zero.  This proves
\eqref{eq:l1-dtriangle-complete-bound}.

For the expansion bound, let
\[
    \Omega=\binom{[n]}3,
    \qquad
    R=\Omega\setminus\Tzero.
\]
The graph \(H_0\) is the subgraph of \(J(n,3)\) induced by \(\Tzero\).  Fix
\(S\subset\Tzero\) with \(0<|S|\le|\Tzero|/2\).  Since
\(|\Tzero|\le|\Omega|\), we also have \(|S|\le|\Omega|/2\).  By
Lemma~\ref{lem:l1-johnson-expansion},
\[
    |\partial_{J(n,3)}S|\ge\frac n2|S|.
\]
Passing from the full Johnson graph to the induced clean graph removes only cut
edges from \(S\) into \(R\).  The first counting argument shows that every clean
triangle has at most \(6\Delta_E\) neighbors in \(R\).  Thus at most
\(6\Delta_E|S|\) full-Johnson cut edges are lost, and
\[
    |\partial_{H_0}S|
    \ge
    |\partial_{J(n,3)}S|-6\Delta_E|S|
    \ge
    \left(\frac n2-6\Delta_E\right)|S|.
\]
Taking the minimum over all admissible \(S\) gives
\eqref{eq:l1-h0-complete-bound}.
\end{proof}

\begin{theorem}[Complete-graph global \(\ell_1\) clean-scale recovery]
\label{thm:l1-complete-recovery}
Assume \(n\ge4\) and the complete-graph noiseless model of Theorem~\ref{thm:main}: camera edges
are partitioned as \(E(K_n)=\Ezero\sqcup\Eb\), clean directions are exact,
corrupted directions are arbitrary, every clean graph triangle is nondegenerate
and retained, and all retained shared-edge rows are used.  Consider the weighted
all-triangle \(\ell_1\) objective \eqref{eq:trip-l1-full-objective}, with weights
satisfying \eqref{eq:l1-weight-bounds}.  If
\begin{equation}\label{eq:l1-complete-weighted-condition}
    w_{\mathrm g}
    \left(\frac n2-6\Delta_E\right)
    >
    6w_{\mathrm b}\Delta_E,
\end{equation}
then every global minimizer recovers all clean triangle log-scales up to a common
additive gauge.

Equivalently, the explicit sufficient max-degree condition is
\begin{equation}\label{eq:l1-complete-weighted-ratio}
    \frac{\Delta_E}{n}
    <
    \frac{w_{\mathrm g}}
         {12(w_{\mathrm g}+w_{\mathrm b})}.
\end{equation}
In particular, for the unweighted objective, and also for TriP reliability
weights satisfying
\[
    \pi_t=1\quad(t\in\Tzero),
    \qquad
    0\le\pi_t\le1\quad(t\in\Tb),
    \qquad
    w_{tu}=\sqrt{\pi_t\pi_u},
\]
one may take \(w_{\mathrm g}=w_{\mathrm b}=1\).  The condition becomes
\begin{equation}\label{eq:l1-complete-unweighted-condition}
    \Delta_E<\frac n{24}.
\end{equation}
Thus a conservative explicit constant for global \(\ell_1\) log-scale
synchronization is
\begin{equation}\label{eq:l1-explicit-constant}
    c_{\mathrm{TriP},1}=\frac1{24}.
\end{equation}
\end{theorem}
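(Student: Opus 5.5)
The plan is to derive Theorem~\ref{thm:l1-complete-recovery} directly from the general criterion, Theorem~\ref{thm:l1-general-recovery}, by plugging in the two combinatorial bounds of Lemma~\ref{lem:l1-complete-graph-combinatorics}. First I check the hypotheses of the general theorem. By Lemma~\ref{lem:clean-pass}, every clean nondegenerate triangle passes the exact prefilter, so \(\Tzero\) is exactly the set of clean graph triangles. The weight bounds \eqref{eq:l1-weight-bounds} are assumed.

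The hypothesis that \(H_0\) has at least two vertices needs a short argument. Under \eqref{eq:l1-complete-weighted-ratio} we have \(\Delta_E<n/24\). Pick any vertex \(i\); it has more than \(n-1-n/24\) clean neighbours. Among those, choose \(j\), and then choose two distinct vertices \(k,k'\) that are cleanly joined to both \(i\) and \(j\). This gives the clean triangles \(\{i,j,k\}\) and \(\{i,j,k'\}\). For \(n\ge4\) the count of available vertices is positive because few vertices are excluded. If this count is degenerate for very small \(n\), I would note that \(\Delta_E<n/24<1\) forces \(\Delta_E=0\), so every edge is clean and \(|\Tzero|=\binom n3\ge4\).

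With the hypotheses in place, Lemma~\ref{lem:l1-complete-graph-combinatorics} gives \(d_b^\triangle\le6\Delta_E\) and \(h_0\ge n/2-6\Delta_E\). Therefore
\[
w_{\mathrm g}h_0-w_{\mathrm b}d_b^\triangle\ge w_{\mathrm g}\Bigl(\frac n2-6\Delta_E\Bigr)-6w_{\mathrm b}\Delta_E,
\]
and this is positive under \eqref{eq:l1-complete-weighted-condition}. So \eqref{eq:l1-general-strict-condition} holds, and Theorem~\ref{thm:l1-general-recovery} gives the conclusion for every global minimizer.

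The equivalence with \eqref{eq:l1-complete-weighted-ratio} is algebra. Rearranging gives \(w_{\mathrm g}n/2>6\Delta_E(w_{\mathrm g}+w_{\mathrm b})\), which is \(\Delta_E/n<w_{\mathrm g}/(12(w_{\mathrm g}+w_{\mathrm b}))\).

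For the reliability weights, each clean-clean row has \(w_{tu}=\sqrt{1\cdot1}=1\). Every other row has \(\sqrt{\pi_t\pi_u}\le1\). So \(w_{\mathrm g}=w_{\mathrm b}=1\) is valid, and the condition becomes \(\Delta_E<n/24\).

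I expect no substantive obstacle, since the heavy lifting (the Poincaré inequality, the profiled bad-term bound, and the Johnson expansion) is already done in the cited lemmas. The only delicate point is the nonemptiness of \(H_0\) at small \(n\), which I would handle with the \(\Delta_E=0\) case split above.
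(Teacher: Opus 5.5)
Your proposal is correct and is the paper's proof: you insert $d_b^\triangle\le 6\Delta_E$ and $h_0\ge n/2-6\Delta_E$ from Lemma~\ref{lem:l1-complete-graph-combinatorics} into the strict criterion $w_{\mathrm g}h_0>w_{\mathrm b}d_b^\triangle$ of Theorem~\ref{thm:l1-general-recovery} and rearrange. Your extra check that $H_0$ has at least two vertices, which the paper omits, is fine with one correction: the weighted condition only gives $\Delta_E<n/12$ in general, since $n/24$ requires $w_{\mathrm b}\ge w_{\mathrm g}$, but your counting argument works verbatim with $n/12$, and the case $|T_0|\le 1$ is trivial anyway.
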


\begin{proof}
Lemma~\ref{lem:l1-complete-graph-combinatorics} gives
\[
    h_0\ge\frac n2-6\Delta_E,
    \qquad
    d_b^\triangle\le6\Delta_E.
\]
Therefore \eqref{eq:l1-complete-weighted-condition} implies
\[
    w_{\mathrm g}h_0
    >
    w_{\mathrm b}d_b^\triangle.
\]
Theorem~\ref{thm:l1-general-recovery} then gives exact clean-scale recovery.
Solving \eqref{eq:l1-complete-weighted-condition} for \(\Delta_E/n\) gives
\eqref{eq:l1-complete-weighted-ratio}.  For
\(w_{\mathrm g}=w_{\mathrm b}=1\), this reduces to
\[
    \frac n2-6\Delta_E>6\Delta_E,
\]
which is equivalent to \(\Delta_E<n/24\).
\end{proof}

\begin{remark}[Strict inequality and uniqueness of the clean block]
The strict inequality in \eqref{eq:l1-complete-weighted-condition} guarantees
that every nonconstant clean perturbation produces a strictly positive objective
gap.  At equality, the proof gives only a nonnegative gap, and additional global
minimizers with an incorrect clean block are not ruled out.  The theorem claims
uniqueness only for the clean block modulo one additive gauge; nuisance
non-clean variables may remain nonunique.
\end{remark}

\begin{remark}[No location-distribution or quantitative angle assumption]
The constant \(1/24\) is graph-theoretic.  It does not depend on a probability
distribution for camera locations, a ShapeFit-style well-distribution constant,
or a quantitative lower bound on clean triangle angles.  The only geometric
requirement is qualitative nondegeneracy of each retained clean triangle, so that
its positive side ratios and exact clean log-scale equations are well-defined.
A noisy finite-precision theorem would require additional conditioning
parameters, but they do not enter this exact noiseless result.
\end{remark}

\begin{remark}[Comparison with the Cauchy theorem]
The constants \(10^{-5}\) and \(1/24\) quantify different claims.  The
first accompanies a warm-started fixed-point guarantee for an all-variable,
nonconvex, proximal annealed-Cauchy IRLS map.  The second accompanies every
global minimizer of a convex \(\ell_1\) objective.  The larger \(\ell_1\) constant
comes from the direct clean-cut domination argument and should not be read as a
constant for a local or prematurely terminated \(\ell_1\)-IRLS implementation.
\end{remark}

\subsection{Consequences for clean edge lengths and camera locations}

\begin{corollary}[Clean edge-length proposals for global \(\ell_1\) synchronization]
\label{cor:l1-clean-lengths}
Under the hypotheses of Theorem~\ref{thm:l1-complete-recovery}, let
\((z_0^\sharp,z_b^\sharp)\) be any global minimizer.  Then there exists a common
positive number \(\lambda=e^\alpha\) such that, for every clean triangle
\(t\in\Tzero\) and every clean camera edge \(e\subset t\),
\begin{equation}\label{eq:l1-clean-length-conclusion}
    \exp(z_t^\sharp)h_{t,e}
    =
    \lambda\ell_e^\star.
\end{equation}
Thus all clean incident triangles propose exactly the same globally scaled length
for every clean camera edge.
\end{corollary}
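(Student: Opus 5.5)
The corollary follows from Theorem~\ref{thm:l1-complete-recovery} combined with the clean local-scale identity \eqref{eq:clean-local-scale}. The proof only needs to transport the additive gauge in the log-domain to a multiplicative scale on lengths.

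\emph{Step 1: apply the recovery theorem.} Let \((z_0^\sharp,z_b^\sharp)\) be any global minimizer of \eqref{eq:trip-l1-full-objective}. The hypotheses of Theorem~\ref{thm:l1-complete-recovery} are assumed, so there is a single \(\alpha\in\R\) with
\[
z_t^\sharp=z_t^\star+\alpha \qquad \text{for all } t\in\Tzero.
\]
The important point is that \(\alpha\) does not depend on \(t\). Set \(\lambda=e^\alpha>0\).

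\emph{Step 2: substitute into the length proposal.} Fix \(t\in\Tzero\) and a camera edge \(e\subset t\). Since \(t\) is clean, all three of its edges lie in \(\Ezero\), so \(e\) is a clean edge. By definition \(z_t^\star=\log s_t^\star\), where \(s_t^\star\) is the positive scale in \eqref{eq:clean-local-scale}; its existence and positivity use the nondegeneracy of \(t\) (Lemma~\ref{lem:clean-pass}). Then
\[
\exp(z_t^\sharp)h_{t,e}=e^\alpha s_t^\star h_{t,e}=\lambda\ell_e^\star,
\]
where the last equality is exactly \eqref{eq:clean-local-scale}. This gives \eqref{eq:l1-clean-length-conclusion}.

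\emph{Step 3: consistency across triangles.} The right-hand side \(\lambda\ell_e^\star\) depends only on \(e\) and on the common \(\lambda\). Hence every clean triangle containing \(e\) proposes the same value, which is the final sentence of the corollary.

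No step is a real obstacle. The only point needing care is that the gauge \(\alpha\) is common to all clean triangles, and this is precisely what Theorem~\ref{thm:l1-complete-recovery} asserts: its clean triangle graph is connected, since \(h_0>0\).
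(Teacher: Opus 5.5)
Your proposal is correct and matches the paper's proof: apply Theorem~\ref{thm:l1-complete-recovery} to get a single gauge \(\alpha\) on all clean triangles, then use \eqref{eq:clean-local-scale} to write \(\exp(z_t^\sharp)h_{t,e}=e^\alpha s_t^\star h_{t,e}=e^\alpha\ell_e^\star\). Your closing remark on connectivity is a harmless side observation, since the theorem already supplies a common \(\alpha\).
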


\begin{proof}
Theorem~\ref{thm:l1-complete-recovery} gives
\(z_t^\sharp=z_t^\star+\alpha\) for every clean triangle.  Using
\eqref{eq:clean-local-scale},
\[
    \exp(z_t^\sharp)h_{t,e}
    =e^\alpha\exp(z_t^\star)h_{t,e}
    =e^\alpha\ell_e^\star.
\]
The factor \(e^\alpha\) is independent of both \(t\) and \(e\).
\end{proof}

\begin{corollary}[Camera-location recovery from the global \(\ell_1\) scales]
\label{cor:l1-location-recovery}
Under the unweighted sufficient condition \(\Delta_E<n/24\), exact displacement
averaging on the recovered clean edges determines the ground-truth camera
locations up to one global translation and one global positive scale.
\end{corollary}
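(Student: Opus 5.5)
The argument combines Corollary~\ref{cor:l1-clean-lengths} with the linear-algebraic fact that exact displacements on a connected graph determine locations up to translation. Since \(n\ge4\) and \(\Delta_E<n/24\), Theorem~\ref{thm:l1-complete-recovery} applies with \(w_{\mathrm g}=w_{\mathrm b}=1\). Corollary~\ref{cor:l1-clean-lengths} then gives a single \(\lambda=e^\alpha>0\) with \(\exp(z_t^\sharp)h_{t,e}=\lambda\ell_e^\star\) for every clean triangle \(t\) and clean edge \(e\subset t\).

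\emph{Step 1: every clean edge receives a proposal.} Every clean camera edge \(ij\in\Ezero\) must lie in at least one clean triangle. By Lemma~\ref{lem:qmin}, \(q_{ij}\ge n-2-2\Delta_E>0\), because \(\Delta_E<n/24\) and \(n\ge4\). For \(n\ge4\) this needs a one-line check: \(\Delta_E<n/24<1\) forces \(\Delta_E=0\) whenever \(n<24\). By Lemma~\ref{lem:clean-pass}, such a clean triangle is retained. Hence every clean edge has a well-defined proposal, and all its proposals equal \(\lambda\ell_{ij}^\star\). The recovered clean displacement is therefore
\[
\hat v_{ij}=\lambda\ell_{ij}^\star d_{ij}^\star=\lambda(x_i^\star-x_j^\star).
\]

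\emph{Step 2: connectivity.} The clean camera graph \(([n],\Ezero)\) is connected. Each vertex has clean degree at least \(n-1-\Delta_E>(n-1)/2\), so any two vertices have a common clean neighbor or are adjacent.

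\emph{Step 3: exact displacement averaging.} Here I interpret "exact displacement averaging on the recovered clean edges" as solving the consistent linear system \(x_i-x_j=\hat v_{ij}\) for \(ij\in\Ezero\). Equivalently, it means taking any minimizer of a loss that vanishes exactly when all these equations hold. The configuration \(x=\lambda x^\star\) is a solution. If \(x\) and \(x'\) are two solutions, their difference \(y\) satisfies \(y_i=y_j\) on every clean edge, so \(y\) is constant by connectivity. Hence every solution has the form \(x_i=\lambda x_i^\star+\tau\) for some \(\tau\in\R^3\). This is the ground truth up to one global translation and one global positive scale, since \(\lambda>0\).

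The only delicate point is Step 1, namely that every clean edge actually receives a length. This covers all of \([n]\) and avoids leaving isolated clean edges uncovered. The remaining steps are routine.
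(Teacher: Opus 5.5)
Your proposal is correct and follows the same route as the paper (Corollary~\ref{cor:l1-clean-lengths}, connectivity of the clean camera graph from its minimum degree, and uniqueness of exact displacement solutions up to translation). It usefully makes explicit two points the paper leaves implicit: every clean edge lies in a retained clean triangle because $q_{ij}\ge n-2-2\Delta_E>0$, and connectivity holds for every $n\ge4$, not only for large $n$. One wording fix in Step 1: ``all its proposals'' must mean the proposals coming from clean triangles only, since the theorem gives no control over the log-scales of non-clean triangles that contain a clean edge.
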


\begin{proof}
By Corollary~\ref{cor:l1-clean-lengths}, every recovered clean displacement is
\[
    \lambda\ell_{ij}^\star d_{ij}^\star
    =
    \lambda(x_i^\star-x_j^\star)
\]
for the same \(\lambda>0\).  The clean camera graph has minimum degree at least
\[
    n-1-\Delta_E>\frac n2
\]
for all sufficiently large \(n\), and is therefore connected.  Exact relative
displacements on a connected graph determine all camera locations up to one
common translation.  The factor \(\lambda\) is the unavoidable global scale.
\end{proof}

\begin{remark}[Algorithmic interpretation]
The optimization problem \eqref{eq:trip-l1-full-objective} can be written as a
linear program by introducing one nonnegative slack variable per shared-edge
row.  The theorem applies to an exact global solution of that convex problem, or
to any algorithm for which global optimality is certified.  A practical IRLS,
ADMM, primal--dual, or distributed implementation may be used, but its numerical
stopping rule and optimization error require a separate stability analysis if
one wants an end-to-end approximate-recovery theorem.
\end{remark}